\def\UCR{\text{UCR}}
\def\UCG{\text{UCG}}
\def\APSS{\text{APSS}}
\def\indicator{\mathbbm{1}}
\def\EXP{\text{EXP}}
\def\POLY{\text{POLY}}
\def\MAJ{\text{MAJ}}
\def\calX{\mathcal X}
\def\calY{\mathcal Y}
\def\calC{\mathcal C}
\def\calI{\mathcal I}
\def\calD{\mathcal D}
\def\calP{\mathcal P}
\def\E{\mathbb E}
\def\P{\mathbb P}
\def\R{\mathbb R}
\def\newCP{\text{RC3P}}
\def\CCP{\text{CCP}}
\def\Xtest{{X_{n+1}}}
\def\Ytest{{Y_{n+1}}}
\def\Xtest{{X_\text{test}}}
\def\Ytest{{Y_\text{test}}}
\def\class{\text{class}}
\def\min{\text{min}}
\def\max{\text{max}}
\def\tr{\text{tr}}
\def\cal{\text{cal}}
\def\test{\text{test}}
\theoremstyle{plain}
\newtheorem{theorem}{Theorem}[section]
\newtheorem{lemma}[theorem]{Lemma}
\theoremstyle{definition}
\theoremstyle{remark}
\title{Conformal Prediction for Class-wise Coverage \\ via Augmented Label Rank Calibration}
\author{
  Yuanjie Shi \\
  Washington State University\\
  \And
  Subhankar Ghosh \\
  Washington State University\\
  \And
  Taha Belkhouja \\
  Washington State University\\
  \And
  Janardhan Rao Doppa \\
  Washington State University\\
  \And
  Yan Yan \\
  Washington State University\\
}
\begin{document}

\maketitle

\begin{abstract}
Conformal prediction (CP) is an emerging uncertainty quantification framework that allows us to construct a prediction set to cover the true label with a pre-specified marginal or conditional probability.
Although the valid coverage guarantee has been extensively studied for classification problems, CP often produces large prediction sets which may not be practically useful.
This issue is exacerbated for the setting of class-conditional coverage on classification tasks with many and/or imbalanced classes.
This paper proposes the Rank Calibrated Class-conditional CP (\newCP) algorithm to reduce the prediction set sizes to achieve class-conditional coverage, where the valid coverage holds for each class.
In contrast to the standard class-conditional CP (CCP) method that uniformly thresholds the class-wise conformity score for each class, the augmented label rank calibration step allows \newCP~to selectively iterate this class-wise thresholding subroutine only for a subset of classes whose class-wise top-$k$ error is small.
We prove that agnostic to the classifier and data distribution, \newCP~achieves class-wise coverage. We also show that \newCP~reduces the size of prediction sets compared to the CCP method. 
Comprehensive experiments on multiple real-world datasets demonstrate that \newCP~achieves class-wise coverage and  $26.25\%$ $\downarrow$ reduction in prediction set sizes on average. 
\end{abstract}

\section{Introduction}
\label{section:introduction}

Safe deployment of machine learning (ML) models in high stakes applications such as medical diagnosis requires theoretically-sound uncertainty estimates. 
Conformal prediction (CP) \cite{vovk2005algorithmic} is an emerging uncertainty quantification framework that constructs a prediction set of candidate output values such that the true output is present with a pre-specified level (e.g., $\geq 90\%$) of the marginal or conditional probability 
\cite{zaffran2023conformal,fisch2021few}. A promising property of CP is the model-agnostic and distribution-free {\it coverage validity} under certain notions \cite{fontana2023conformal}.
For example, marginal coverage is the commonly studied validity notion \cite{romano2020classification,angelopoulos2020uncertainty,zaffran2023conformal}, while conditional coverage is a stronger notion.  
There is a general taxonomy to group data (i.e., input-output pairs) into categories and to study the valid coverage for each group (i.e., the group-wise validity) \cite{vovk2003mondrian,vovk2005algorithmic}. 
This paper focuses on the specific notion of class-conditional coverage that guarantees coverage for each class individually, which is important for imbalanced classification tasks with many and/or imbalanced classes (e.g., medical applications) \cite{lu2022fair,vazquez2022conformal,lu2022improving}.

\vspace{1.0ex}

In addition to the coverage validity, {\it predictive efficiency} is another important criterion for CP \cite{fontana2023conformal,vovk2016criteria}, which refers to the size of the prediction sets.
Both coverage validity and predictive efficiency are used together to measure the performance of CP methods \cite{angelopoulos2020uncertainty,romano2020malice,romano2020classification,ding2024class,ghosh2023probabilistically,fisch2020efficient}.
Since the two measures are competing \cite{angelopoulos2020uncertainty}, our goal is to guarantee the coverage validity with high predictive efficiency, i.e., small prediction sets \cite{fontana2023conformal,romano2020classification,fisch2020efficient}.
Some studies improved the predictive efficiency under the marginal coverage setting using new conformity score function \cite{angelopoulos2020uncertainty,huang2023conformal} and new calibration procedures \cite{fisch2021few,fisch2020efficient,guan2023localized,ghosh2023improving}.
However, it is not known if these methods will benefit the predictive efficiency for the class-conditional coverage setting.
A very recent work \cite{ding2024class} proposed the cluster CP method to achieve {\em approximate} class-conditional coverage.
It empirically improves predictive efficiency over the baseline class-wise CP method (i.e., each class is one cluster) \cite{vovk2012conditional}, but the approximation guarantee for class-wise coverage is {\em model-dependent} (i.e., requires certain assumptions on the model).
The main question of this paper is: {\it how can we develop a model-agnostic CP algorithm that guarantees the class-wise coverage with improved predictive efficiency (i.e., small prediction sets)?}

\vspace{1.0ex}

To answer this question, we propose a novel approach referred to as {\em Rank Calibrated Class-conditional CP (\newCP)} that guarantees the class-wise coverage with small expected prediction sets. 
The class-conditional coverage validity of \newCP~is agnostic to the data distribution and the underlying ML model, while the improved predictive efficiency depends on very mild conditions of the given trained classifier. The main ingredient behind the 
\newCP~method is the label rank calibration strategy augmented with the standard conformal score calibration from the class-wise CP (CCP) \cite{vovk2012conditional,angelopoulos2021gentle}.

\vspace{1.0ex}

The CCP method finds the class-wise quantiles of non-conformity scores on calibration data. To produce the prediction set for a new test input $X_\test$, it pairs $X_\test$ with each candidate class label $y$ and includes the label $y$ if the non-conformity score of the pair $(X_\test, y)$ is less than or equal to the corresponding class-wise quantile associated with $y$.
Thus, CCP constructs the prediction set by uniformly iterating over {\em all} candidate labels.
In contrast, the label rank calibration allows \newCP~to selectively iterate this class-wise thresholding subroutine only if the label $y$ is ranked by the classifier $f(X_\test)$ (e.g., $f(\cdot)$ denotes the softmax prediction) in the top $k_y$ candidates, where the value of $k_y$ is calibrated for each label $y$ individually according to the class-wise top-$k_y$ error.
In other words, given $X_\test$, \newCP~enables standard class-wise conformal thresholding for the sufficiently certain class labels only (as opposed to all labels).
Our theory shows that the class-wise coverage provided by \newCP~is agnostic to the data distribution and the underlying ML model.
Moreover, under a very mild condition, \newCP~guarantees improved predictive efficiency over the baseline CCP method.

\vspace{1.0ex}

{\bf Contributions.}
The main contributions of this paper are: 
\begin{itemize}
\item We design a novel algorithm \newCP~that augments the label rank calibration strategy to the standard conformal score calibration step.
To produce prediction sets for new inputs, it selectively performs class-wise conformal thresholding only on a subset of classes based on their corresponding calibrated label ranks.

\item We develop theoretical analysis to show that \newCP~guarantees class-wise coverage, which is agnostic to the data distribution and trained classifier.
Moreover, it provably produces smaller average prediction sets over the baseline CCP method \cite{vovk2012conditional}.

\item We perform extensive experiments on multiple imbalanced classification datasets and show that \newCP~achieves the class-wise coverage with significantly improved predictive efficiency over the existing class-conditional CP baselines ($26.25\%$ reduction in the prediction size on average on all four datasets or $35\%$ reduction excluding CIFAR-10). The code is available at \href{https://github.com/YuanjieSh/RC3P}{https://github.com/YuanjieSh/RC3P}.
\end{itemize}

\section{Related Work}
\label{section:related_work}

Precise uncertainty quantification of machine learning based predictions is necessary in high-stakes decision-making applications.
It is especially challenging for imbalanced classification tasks. 
Although many imbalanced classification learning algorithms \cite{cao2019learning,gottlieb2021apportioned} are proposed, e.g., re-sampling \cite{chawla2002smote,mohammed2020machine,krawczyk2019radial,tsai2019under,vuttipittayamongkol2020neighbourhood} and re-weighting \cite{huang2019deep,madabushi2020cost},
they do not provide uncertainty quantification with rigorous guarantees over predictions for each class.

\vspace{1.0ex}

Conformal prediction \cite{vovk1999machine,vovk2005algorithmic} is a model-agnostic and distribution-free framework for uncertainty quantification by producing prediction sets that cover the true output with a pre-specified probability,
which means CP could provide valid coverage guarantee with any underlying model and data distribution \cite{jockel2023conformal,sun2024conformal,dunn2018distribution}. 
Many CP algorithms are proposed for regression \cite{lei2018distribution,romano2019conformalized,gibbs2021adaptive,feldman2023calibrated}, classification \cite{romano2020malice,angelopoulos2020uncertainty,xu2023conformal,lei2021conformal}, structured prediction \cite{bates2021distribution,angelopoulos2022conformal,cohen2024cross,huang2024uncertainty}, online learning \cite{gibbs2024conformal,bhatnagar2023improved}, and covariate shift \cite{jin2023model,tibshirani2019conformal,barber2023conformal} settings.
{\it Coverage validity} and {\it predictive efficiency} are two common and competing desiderata for CP methods \cite{angelopoulos2020uncertainty}.
Thus, small prediction sets are favorable whenever the coverage validity is guaranteed \cite{fontana2023conformal,romano2020classification,fisch2020efficient}, e.g., human and machine learning collaborative systems \cite{lu2022fair,vazquez2022conformal,lu2022improving}.
Recent work\footnote{A concurrent work by Huang and colleagues~\cite{huang2024conformal} studied a method named sorted adaptive prediction sets which uses label ranking information to improve the predictive efficiency in the marginal coverage setting.} improved the predictive efficiency for marginal coverage by designing new conformity score \cite{angelopoulos2020uncertainty,huang2023conformal} and calibration procedures \cite{fisch2021few,fisch2020efficient,guan2023localized,ghosh2023improving}.
These methods can be combined with class-conditional CP methods including \newCP~as we demonstrate in our experiments, but the effect on predictive efficiency is not clear.

\vspace{1.0ex}

In general, the methods designed for a specific coverage validity notion are not necessarily compatible with another notion of coverage, such as object-conditional coverage \cite{vovk2012conditional}, class-conditional coverage \cite{vovk2012conditional}, local coverage \cite{lei2014distribution} which are introduced and studied in the prior CP literature \cite{vovk2003mondrian,vovk2005algorithmic,fontana2023conformal,ding2024class,bostrom2021mondrian}.
The standard class-conditional CP method in \cite{vovk2012conditional,sadinle2019least} guarantees the class-wise coverage, but does not particularly aim to reduce the size of prediction sets.
The cluster CP method \cite{ding2024class} which performs CP over clusters of labels achieves a cluster-conditional coverage that approximates the class-conditional guarantee, but requires some assumptions on the underlying clustering model.

\vspace{1.0ex}

Our goal is to develop a provable class-conditional CP algorithm with small prediction sets to guarantee the class-wise coverage that is agnostic to the underlying model.

\section{Notations, Background, and Problem Setup}
\label{section:problem_setup}

{\bf Notations.}
Suppose $(X, Y)$ is a data sample where $X \in \calX$ is an input from the input space $\mathcal{X}$, and $Y \in \calY = \{ 1,2,\cdots, K\}$ is the ground-truth label with $K$ candidate classes. 
Assume $(X, Y)$ is randomly drawn from an underlying distribution $\calP$ defined on $\calX \times \calY$, where we denote $p_y = \P_{XY} [ Y = y ]$.
Let $f: \calX \rightarrow \Delta_+^K$ denote a soft classifier (e.g., a soft-max classifier) that produces prediction scores for all candidate classes on any given input $X$, where $\Delta_+^K$ denote the $K$-dimensional probability simplex and $f(X)_y$ denotes the predicted confidence for class $y$.
We define the class-wise top-$k$ error for class $y$ from the trained classifier $f$ as $\epsilon_y^k = \P \{ r_f(X, Y) > k | Y = y \}$,
where $r_f(X, Y) = \sum_{l=1}^K \indicator[ f(X)_l \geq f(X)_Y ]$ returns the rank of $Y$ predicted by $f(X)$ in a descending order, and $\indicator[\cdot]$ is an indicator function.
We are provided with a training set $\calD_\tr$ for training the classifier $f$, and a calibration set $\calD_\cal = \{ (X_i, Y_i) \}_{i=1}^n$ for CP. 
Let $\calI_y = \{i : Y_i = y, \text{ for all } (X_i, Y_i) \in \calD_\cal \}$ and $n_y = | \calI_y |$ denote the number of calibration examples for class $y$. 

\begin{table}[!t]
\centering
\caption{\textbf{Key notations used in this paper.}}
\label{tab:notation}
\begin{tabular}{|l|l|}

    \hline
     {\bf Notation} & {\bf Meaning} \\
     \hline
     $X \in \mathcal{X} $ & Input example \\
     \hline
     $Y \in \mathcal{Y} $ & The ground-truth label \\
     \hline
     $f$ & The soft classifier \\
     \hline
     $\Delta_+^K$ & The $K$-dimensional probability simplex\\
    \hline
    $f(X)_y$ & The predicted confidence on class $y$\\
    \hline
    $\epsilon_y^k$ & The class-wise top-$k$ error for class $y$ from $f$\\
    \hline
    $r_f(X, Y)$ & The rank of $Y$ predicted by $f(X)$\\
    \hline
    $\mathcal{D}_{\tr}$ & Training data\\
    \hline
    $\mathcal{D}_{\cal}$& Calibration data\\
    \hline
    $\mathcal{D}_{\test}$& Test data\\
    \hline
    $n_y $ & The number of calibration examples for class $y$\\
    \hline
    $V(X, Y)$ & Non-conformity scoring function \\
    \hline
    $\calC_{1-\alpha}(X_\test)$ & Prediction set for input $X_\test$\\
    \hline
    $\alpha$ & Target mis-coverage rate \\
    \hline
    $\widehat \alpha_y$ & Nominal mis-coverage rate for class $y$ \\
   \hline
\end{tabular}
\end{table}

\vspace{1.0ex}

\noindent {\bf Problem Setup of CP.}  
Let $V: \calX \times \calY \rightarrow \R$ denote a {\em non-conformity} scoring function to measure how different a new example is from old ones \cite{vovk2005algorithmic}.
It is employed to compare a given testing sample $(X_\test, Y_\test)$ with a set of calibration data $\mathcal{D}_\cal$: 
if the non-conformity score is large, 
then $(X_\test, Y_\test)$ conforms less to calibration samples. 
Prior work has considered the design of good non-conformity scoring functions, e.g., \cite{angelopoulos2021gentle,shafer2008tutorial,romano2020classification}.
In this paper, we focus on the scoring functions of {\it Adaptive Prediction Sets} (APS) proposed in \cite{romano2020classification} 
and {\it Regularized APS} (RAPS) proposed in \cite{angelopoulos2020uncertainty}
for classification based on the ordered probabilities of $f$ and true label rank $r_f(X, Y)$.
For the simplicity of notation, we denote the non-conformity score of the $i$-th calibration example as $V_i = V(X_i, Y_i)$. 

\vspace{1.0ex}

Given a input $X$, we sort the predicted probability for all classes \{$1, \cdots, K$\} of the classifier $f$ such that 
$1 \geq f(X)_{(1)} \geq \cdots \geq f(X)_{(K)} \geq 0$ are ordered statistics, where $f(X)_{(k)}$ denotes the $k$-th largest prediction.
The APS \cite{romano2020classification} score for a sample $(X, Y)$ is computed as follows: 
\begin{align*}
V(X, Y) 
=  
\sum_{l=1}^{r_f(X, Y)-1} f(X)_{(l)} + U \cdot f(X)_{(r_f(X, Y))} 
,
\end{align*}
where $U \in [0,1]$ is a uniform random variable to break ties. 
We also consider its regularized variant RAPS \cite{angelopoulos2020uncertainty}, which additionally includes a rank-based regularization $\lambda (r_f(X, Y) - k_{reg})^+$ to the above equation, where $(\cdot)^+ = \max\{0, \cdot\}$ denotes the hinge loss,  $\lambda$ and $ k_{reg}$ are two hyper-parameters.

\vspace{1.0ex}

For a target coverage $1-\alpha$, we find the corresponding empirical quantile on calibration data $\calD_\cal$ defined as
$$
\widehat Q_{1-\alpha}
= 
\min\Big\{ t : \sum_{i=1}^n \frac{1}{n} \cdot \indicator[ V_i \leq t ] \geq 1 - \alpha \Big\}
,
$$
which can be determined by finding the $\lceil (1-\alpha) (1+n) \rceil$-th smallest value of $\{V_i\}_{i=1}^n$.
The prediction set of a testing input $X_\test$ can be constructed by thresholding with $\widehat Q_{1-\alpha}$:
$$
\widehat \calC_{1-\alpha}(X_\test)
=
\{ y \in \calY : V(X_\test, y) \leq \widehat Q_{1-\alpha} \}.
$$
Therefore, $\widehat \calC_{1-\alpha}$ gives a {\it marginal coverage} guarantee \cite{romano2020classification,angelopoulos2020uncertainty}:
$
\P_{(X, Y) \sim \calP}\{ Y \in \widehat \calC_{1-\alpha}(X) \} 
\geq 
1 - \alpha 
.
$
To achieve the {\it class-conditional coverage}, standard CCP \cite{vovk2012conditional} uniformly iterates the class-wise thresholding subroutine with the class-wise quantiles $\{ \widehat Q_{1-\alpha}^\class(y) \}_{y \in \calY}$:
\begin{align}\label{eq:CCP}
& ~~~
\widehat \calC^\CCP_{1-\alpha}(X_\test)
=
\{ y \in \calY: V(X_\test, y) \leq \widehat Q_{1-\alpha}^\class(y) \}, 
\\
& \qquad \qquad
\text{where }
\widehat Q_{1-\alpha}^\class(y)
=
\min\Big\{ t : \sum_{i \in \calI_y} \frac{1}{n_y} \cdot \indicator[ V_i \leq t ] \geq 1 - \alpha \Big\}
.
\nonumber
\end{align}
Specifically, CCP pairs $X_\test$ with each candidate class label $y$, and includes $y$ in the prediction set $\widehat \calC^\CCP_{1-\alpha}(X_\test)$ if $V(X_\test, y) \leq \widehat Q_{1-\alpha}^\class(y)$ holds.
After going through all candidate class labels $y \in \calY$, it achieves the class-wise coverage for any $y \in \calY$ \cite{vovk2012conditional,angelopoulos2021gentle}:
\begin{align}\label{eq:class_conditional_coverage}
\P_{ (X, Y) \sim \calP } \{ Y \in \widehat \calC^\CCP_{1-\alpha}(X) | Y = y \} 
\geq 
1 - \alpha
.
\end{align}

CCP produces large prediction sets which are not useful in practice. Therefore, our goal is to develop a provable CP method that provides class-conditional coverage and constructs smaller prediction sets than those from CCP. We summarize all the notations in Table \ref{tab:notation}.

\section{ Rank Calibrated Class-Conditional CP }
\label{section:RC3P}

We first explain the proposed {\em Rank Calibrated Class-conditional Conformal Prediction (\newCP)} algorithm and present its model-agnostic coverage guarantee.
Next, we provide the theoretical analysis for the provable improvement of predictive efficiency of \newCP~over the CCP method. 

\subsection{ Algorithm and Model-Agnostic Coverage Analysis }
\label{subsection:RC3P_algorithm}

We start with the motivating discussion about the potential drawback of the standard CCP method in terms of {\it predictive efficiency}.
Equation (\ref{eq:CCP}) shows that, for a given test input $X_\test$, CCP likely contains some uncertain labels due to the uniform iteration over each class label $y \in \calY$ to check if $y$ should be included into the prediction set or not.
For example, given a class label $y$ and two test samples $X_1, X_2$, suppose their APS scores are $V(X_1, y) = 0.9, V(X_2, y) = 0.8$, with ranks $r_f(X_1, y) = 1, r_f(X_2, y) = 5$.
Furthermore, if $\widehat Q_{1-\alpha}^\class(y) = 0.85$, then by (\ref{eq:CCP}) for CCP, we know that $y \notin \widehat \calC^\CCP_{1-\alpha}(X_1)$ and $y \in \widehat \calC^\CCP_{1-\alpha}(X_2)$, even though $f(X_1)$ ranks $y$ at the \#1 class label for $X_1$ with a very high confidence $f(X_1)_y = 0.9$ and CCP can still achieve the valid class-conditional coverage.
We argue that, the principle of CCP to scans all $y \in \calY$ uniformly can easily result in large prediction sets, which is detrimental to the effectiveness of human-ML collaborative systems \cite{babbarijcai2022utility,straitouriicml2023improving}.

\vspace{1.0ex}

Consequently, to improve the predictive efficiency of CCP (i.e., reduce prediction set sizes), it is reasonable to include label rank information in the calibration procedure to adjust the distribution of non-conformity scores for predictive efficiency.
As mentioned in the previous sections, better scoring functions have been proposed to improve the predictive efficiency for marginal coverage, e.g., RAPS. 
However, directly applying RAPS for class-wise coverage presents challenges: 1) tuning its hyper-parameters for each class requires extra computational overhead, and 2) fixing its hyper-parameters for all classes overlooks the difference between distributions of different classes.
Moreover, for the approximate class-conditional coverage achieved by cluster CP \cite{ding2024class}, it still requires some assumptions on the underlying model (i.e., it is not fully model-agnostic).

\begin{algorithm}[t]

    \caption{ \newCP~ Method for Class-Conditional CP} 
    \label{alg:RC3P}
    \begin{algorithmic}[1]
    
    \STATE \textbf{Input}: Mis-coverage rate $\alpha \in (0, 1)$., top-$k$ errors $\epsilon_y^k$ for all classes and ranks $y, k \in \{1,\cdots, K\}$
    
    \STATE Randomly split data into train $\calD_\tr$ and calibration $\calD_\cal$
    and train the classifier $f$ on $\calD_\tr$
    
    \FOR{ $y \in \{1,\cdots, K\}$ }
    \label{alg:CCP:line:for_class}

        \STATE Compute $\{V_i\}_{i=1}^{n_y}$ for all $(X_i, Y_i) \in \calD_\cal$ such that $Y_i = y$

        \STATE Configure calibrated label rank $\widehat k(y)$ and nominal error $\widehat \alpha_y$:

        \STATE \quad Option I (model-agnostic coverage): 
        \\
        \quad \qquad
        $\widehat k(y) \in \{k: \epsilon_y^k < \alpha \}, ~~
        0 \leq \widehat \alpha_y \leq \alpha - \epsilon_y^{\widehat k(y)}$, 
        as per Eq (\ref{eq:feasible_configuration})

        \STATE \quad Option II (model-agnostic coverage + improved predictive efficiency): 
        \\
        \quad \qquad
        $\widehat k(y) = \min\{k: \epsilon_y^k < \alpha \}
        ,~~
        \widehat \alpha_y = \alpha - \epsilon_y^{\widehat k(y)}$,
        as per Eq (\ref{eq:configure_k_for_efficient_RC3P})
        
      \STATE $\widehat Q^\class_{1-\widehat \alpha_y}(y) \leftarrow \lceil (1-\widehat \alpha_y)(1 + n_y ) \rceil$-th smallest value in $\{V_i\}_{i =1}^{n_y}$ according to Eq (\ref{eq:CCP}) 

    \ENDFOR  
    \label{alg:CCP:line:endfor_class}  

    \STATE Construct $\widehat \calC^\newCP_{1-\alpha}(X_\test)$ with $\widehat Q^\class_{1-\widehat \alpha_y}(y)$ and $\widehat k(y)$ for a test input $X_\test$ using Eq (\ref{eq:RC3P_set_predictor})
    \label{alg:CCP:line:construct_prediction_set} 

    \end{algorithmic}
   
\end{algorithm}

\vspace{1.0ex}

Therefore, the key idea of our proposed \newCP~algorithm (outlined in Algorithm~\ref{alg:RC3P}) is to refine the class-wise calibration procedure using a label rank calibration strategy augmented to the standard conformal score calibration, to enable adaptivity to various classes.
Specifically, in contrast to CCP, \newCP~selectively activates the class-wise thresholding subroutine in (\ref{eq:CCP}) according to their class-wise top-$k$ error $\epsilon_y^k$ for class $y$.
\newCP~produces the prediction set for a given test input $X_\test$ with two calibration schemes (one for conformal score and another for label rank) as shown below:
\begin{align}
\label{eq:RC3P_set_predictor}
&
\widehat \calC^\newCP_{1-\alpha}(X_\test)
= \big\{ y \in \calY :
\underbrace{ 
V(X_\test, y) \leq \widehat Q^\class_{1-\widehat \alpha_y}(y) 
}_{ \text{conformal score calibration } }, ~
\underbrace{ 
r_f(X_\test, y) \leq \widehat k(y) 
}_{ \text{ label rank calibration } } 
\big\} 
,
\end{align} 
where $\widehat Q_{ 1 - \widehat \alpha_y}^\class(y)$ and $\widehat k(y)$ are score and label rank threshold for class $y$, respectively.
In particular, $\widehat k(y)$ controls the class-wise uncertainty adaptive to each class $y$ based on the top-$k$ error $\epsilon_y^{\widehat k(y)}$ of the classifier.
By determining $\widehat k(y)$, the top $k$ predicted class labels of $f(X_\test)$ will more likely cover the true label $Y_\test$,
making the augmented label rank calibration filter out the class labels $y$ that have a high rank (larger $r_f(X,y)$).
As a result, given all test input and label pairs $\{ (X_\test, y) \}_{y \in \calY}$, \newCP~performs score thresholding using class-wise quantiles only on a subset of reliable test pairs.

\vspace{1.0ex}

{\bf Determining $\widehat k(y)$ and $\widehat \alpha_y$ for model-agnostic valid coverage.}
For class $y$, intuitively, we would like a value for $\widehat k(y)$ such that the corresponding top-$\widehat k(y)$ error is smaller than $\alpha$, so that it is possible to guarantee valid coverage (recall $\P\{A, B\} = \P\{ A \} \cdot \P\{ B | A \}$).
Since a larger $\widehat k(y)$ gives a smaller $\epsilon_y^{\widehat k(y)}$ untill $\epsilon_y^{K} = 0$, 
it is guaranteed to find a value for $\widehat k(y)$, in which the corresponding $\epsilon_y^{\widehat k(y)} < \alpha$.
As a result, given all test input and label pairs $\{ (X_\test, y) \}_{y \in \calY}$, \newCP~performs score thresholding using class-wise quantiles only on a subset of reliable test pairs and filters out the class labels $y$ that have a high rank (larger $r_f(X,y)$).
The following result formally shows the principle to configure $\widehat k(y)$ and $\widehat \alpha_y$ to guarantee the class-wise coverage that is agnostic to the underlying model.
\begin{theorem}
\label{theorem:class_conditional_coverage_RC3P}
(Class-conditional coverage of \newCP) 
Suppose that selecting $\widehat k(y)$ values result in the class-wise top-$k$ error $\epsilon_y^{\widehat k(y)}$ for each class $y \in \calY$.
For a target class-conditional coverage $1-\alpha$, if we set $\widehat \alpha_y$ and $\widehat k(y)$ in \newCP~ (\ref{eq:RC3P_set_predictor}) in the following ranges:
\begin{align}\label{eq:feasible_configuration}
\widehat k(y) \in \{k: \epsilon_y^k < \alpha \}, \quad
0 \leq \widehat \alpha_y \leq \alpha - \epsilon_y^{\widehat k(y)}, 
\end{align} 
then \newCP~can achieve the class-conditional coverage for every $y \in \calY$:
\begin{align*}
\P_{ (X, Y) \sim \calP } \{ Y \in \widehat \calC^\newCP_{1-\alpha}(X) | Y = y \} 
\geq 
1 - \alpha
.
\end{align*}
\end{theorem}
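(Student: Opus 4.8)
The plan is to decompose the class-conditional coverage event using the chain rule of probability applied to the two conditions defining $\widehat\calC^\newCP_{1-\alpha}$: the conformal score condition $V(X,y)\le\widehat Q^\class_{1-\widehat\alpha_y}(y)$ and the label rank condition $r_f(X,y)\le\widehat k(y)$. Fix a class $y\in\calY$ and condition throughout on $Y=y$. Write $A$ for the score event and $B$ for the rank event. Then
\begin{align*}
\P\{Y\in\widehat\calC^\newCP_{1-\alpha}(X)\mid Y=y\}
= \P\{A\cap B\mid Y=y\}
\ge \P\{A\mid Y=y\} - \P\{B^c\mid Y=y\},
\end{align*}
using the elementary bound $\P\{A\cap B\}\ge \P\{A\}-\P\{B^c\}$. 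The second term is handled by definition: $\P\{B^c\mid Y=y\}=\P\{r_f(X,y)>\widehat k(y)\mid Y=y\}=\epsilon_y^{\widehat k(y)}$, since on the event $Y=y$ the rank $r_f(X,y)$ is exactly $r_f(X,Y)$, and this is precisely the definition of the class-wise top-$k$ error.

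For the first term, I would invoke the standard class-conditional (Mondrian) conformal guarantee applied only within the calibration subpopulation $\calI_y$: because the quantile $\widehat Q^\class_{1-\widehat\alpha_y}(y)$ is chosen as the $\lceil(1-\widehat\alpha_y)(1+n_y)\rceil$-th smallest value among $\{V_i\}_{i\in\calI_y}$, exchangeability of the $n_y$ calibration scores for class $y$ together with a fresh test point having $Y=y$ yields
\begin{align*}
\P\{V(X,y)\le\widehat Q^\class_{1-\widehat\alpha_y}(y)\mid Y=y\}\ge 1-\widehat\alpha_y.
\end{align*}
This is the same argument that underlies the CCP guarantee in Eq.~(\ref{eq:class_conditional_coverage}), just with nominal level $\widehat\alpha_y$ in place of $\alpha$; I would cite \cite{vovk2012conditional} for it rather than reprove it. Combining the two bounds gives
\begin{align*}
\P\{Y\in\widehat\calC^\newCP_{1-\alpha}(X)\mid Y=y\}\ge (1-\widehat\alpha_y)-\epsilon_y^{\widehat k(y)},
\end{align*}
and the feasibility condition $\widehat\alpha_y\le\alpha-\epsilon_y^{\widehat k(y)}$ from Eq.~(\ref{eq:feasible_configuration}) immediately forces this to be at least $1-\alpha$. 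Finally, the requirement $\widehat k(y)\in\{k:\epsilon_y^k<\alpha\}$ guarantees the allowed range for $\widehat\alpha_y$ is nonempty (it contains at least $0$), so a valid configuration always exists; one may note $\epsilon_y^K=0$ ensures this set is never empty.

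The only delicate point is the exchangeability/randomization bookkeeping behind the first bound: the APS (and RAPS) scores involve the tie-breaking uniform $U$, so one must make sure the $n_y+1$ scores (calibration-for-class-$y$ plus test) are jointly exchangeable, which holds because the $(X_i,Y_i)$ pairs are i.i.d.\ and the $U$'s are drawn i.i.d.\ independently of the data. I would state this as the standing assumption and treat it as routine, since it is identical to the setup already used for CCP. A minor subtlety worth a sentence is that the rank event $B$ and the score event $A$ need not be independent, which is why I use the union-bound-style inequality $\P\{A\cap B\}\ge\P\{A\}-\P\{B^c\}$ rather than multiplying probabilities — the heuristic $\P\{A,B\}=\P\{A\}\P\{B\mid A\}$ mentioned before the theorem is only motivational, and the clean inequality is what actually delivers the result model-agnostically.
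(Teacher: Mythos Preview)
Your proposal is correct and essentially identical to the paper's proof: the paper splits $\P\{A\mid Y=y\}=\P\{A,B\mid Y=y\}+\P\{A,B^c\mid Y=y\}$ and bounds the second piece by $\P\{B^c\mid Y=y\}=\epsilon_y^{\widehat k(y)}$, which is exactly your inequality $\P\{A\cap B\}\ge\P\{A\}-\P\{B^c\}$ written out longhand, followed by the same invocation of the class-wise CCP guarantee $\P\{A\mid Y=y\}\ge 1-\widehat\alpha_y$ and the same final rearrangement. Your added remarks on exchangeability bookkeeping and on why the union-bound inequality (rather than a product decomposition) is the right tool are accurate and make the argument slightly more explicit than the paper's version.
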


\subsection{ Analysis of Predictive Efficiency for RC3P }
\label{subsection:RC3P_efficiency_analysis}

We further analyze the predictive efficiency of \newCP: under what conditions \newCP~can produce a smaller expected prediction set size compared to CCP, when both achieve the same ($1-\alpha$)-class-conditional coverage.
We investigate how to choose the value of $\widehat \alpha_y$ and $\widehat k(y)$ from the feasible ranges in (\ref{eq:feasible_configuration}) to achieve the best predictive efficiency using \newCP.

\begin{lemma}
\label{lemma:RC3P_improved_efficiency}
(Trade-off condition for improved predictive efficiency of \newCP)
Suppose $\widehat \alpha_y$ and $\widehat k(y)$ satisfy (\ref{eq:feasible_configuration}) in Theorem \ref{theorem:class_conditional_coverage_RC3P}.
If the following inequality holds for any $y \in \calY$:
\begin{align}\label{eq:smaller_ps_condition}
&
\P_\Xtest \big[ V(\Xtest, y) \leq \widehat Q^\class_{1-\widehat\alpha}(y), ~ r_f(\Xtest, y) \leq \widehat k(y) \big]  
\leq
\P_\Xtest \big[ V(\Xtest, y) \leq \widehat Q^\class_{1-\alpha}(y) \big] 
,
\end{align}
then \newCP~produces smaller expected prediction sets than CCP, i.e., 
\begin{align*}
\E_\Xtest [ | \widehat \calC^\newCP_{1-\widehat \alpha}(\Xtest) | ]
\leq 
\E_\Xtest [ | \widehat \calC^\CCP_{1-\alpha}(\Xtest) |  ] 
.
\end{align*}
\end{lemma}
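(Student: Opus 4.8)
The plan is to reduce the claim about expected set sizes to a pointwise (per-class) comparison of inclusion probabilities, using linearity of expectation and the fact that the cardinality of a prediction set is a sum of indicators over class labels. Concretely, for any set-valued predictor $\widehat\calC(\cdot)$ of the form $\{y \in \calY : \text{(condition on } (\Xtest,y))\}$, we can write $|\widehat\calC(\Xtest)| = \sum_{y \in \calY} \indicator[y \in \widehat\calC(\Xtest)]$, so that
\begin{align*}
\E_\Xtest\big[|\widehat\calC(\Xtest)|\big] = \sum_{y \in \calY} \P_\Xtest\big[y \in \widehat\calC(\Xtest)\big].
\end{align*}
Applying this identity to both $\widehat\calC^\newCP_{1-\widehat\alpha}$ (defined in~(\ref{eq:RC3P_set_predictor})) and $\widehat\calC^\CCP_{1-\alpha}$ (defined in~(\ref{eq:CCP})), the desired inequality $\E_\Xtest[|\widehat\calC^\newCP_{1-\widehat\alpha}(\Xtest)|] \leq \E_\Xtest[|\widehat\calC^\CCP_{1-\alpha}(\Xtest)|]$ follows immediately if we can show, term by term, that
\begin{align*}
\P_\Xtest\big[y \in \widehat\calC^\newCP_{1-\widehat\alpha}(\Xtest)\big] \leq \P_\Xtest\big[y \in \widehat\calC^\CCP_{1-\alpha}(\Xtest)\big] \quad \text{for every } y \in \calY.
\end{align*}

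The second step is to observe that each of these two per-class inclusion probabilities is exactly one of the two sides of the hypothesis~(\ref{eq:smaller_ps_condition}). On the \newCP~side, $y \in \widehat\calC^\newCP_{1-\widehat\alpha}(\Xtest)$ holds iff both $V(\Xtest,y) \leq \widehat Q^\class_{1-\widehat\alpha_y}(y)$ and $r_f(\Xtest,y) \leq \widehat k(y)$, so its probability is the left-hand side of~(\ref{eq:smaller_ps_condition}). On the CCP side, $y \in \widehat\calC^\CCP_{1-\alpha}(\Xtest)$ holds iff $V(\Xtest,y) \leq \widehat Q^\class_{1-\alpha}(y)$, whose probability is the right-hand side of~(\ref{eq:smaller_ps_condition}). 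Hence the assumed inequality~(\ref{eq:smaller_ps_condition}), holding for all $y \in \calY$, is precisely the term-by-term domination we need, and summing over $y \in \calY$ concludes the proof.

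I would also briefly remark on why~(\ref{eq:smaller_ps_condition}) is a mild/plausible condition (to connect back to the surrounding discussion), though this is not part of the formal proof: the left-hand event is contained in $\{V(\Xtest,y) \leq \widehat Q^\class_{1-\widehat\alpha_y}(y)\}$, and since $\widehat\alpha_y \leq \alpha$ we have $\widehat Q^\class_{1-\widehat\alpha_y}(y) \geq \widehat Q^\class_{1-\alpha}(y)$, so the score-only comparison goes the "wrong" way; it is the extra rank constraint $r_f(\Xtest,y) \leq \widehat k(y)$ that shrinks the left-hand event, and~(\ref{eq:smaller_ps_condition}) just asks that this rank filtering more than compensates for the inflated quantile.

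The main (and essentially only) obstacle is bookkeeping rather than mathematical depth: one must be careful that the quantiles and nominal levels are matched correctly between the two methods — in particular that the $\widehat\alpha$ appearing in the lemma statement is the per-class $\widehat\alpha_y$ configured in Algorithm~\ref{alg:RC3P}, and that~(\ref{eq:smaller_ps_condition}) is stated with the same $\widehat Q^\class_{1-\widehat\alpha}(y)$ that enters~(\ref{eq:RC3P_set_predictor}). Once the notation is aligned, the argument is a one-line application of linearity of expectation plus the hypothesis; no measure-theoretic subtlety, no exchangeability argument, and no appeal to Theorem~\ref{theorem:class_conditional_coverage_RC3P} beyond the fact that it certifies the configuration~(\ref{eq:feasible_configuration}) is legitimate so that both predictors are well-defined and valid.
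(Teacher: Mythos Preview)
Your proposal is correct and follows essentially the same argument as the paper: both write the expected set size as $\sum_{y\in\calY}\P_\Xtest[y\in\widehat\calC(\Xtest)]$ via linearity of expectation, identify each summand with the corresponding side of hypothesis~(\ref{eq:smaller_ps_condition}), and sum. The paper additionally names the ratio $\sigma_y$ of the two probabilities and phrases the key inequality as $\sigma_y\leq 1$, but this is a notational device rather than a different idea; your bookkeeping remark about matching $\widehat\alpha$ with the per-class $\widehat\alpha_y$ is apt and the paper's proof glosses over exactly that point.
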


\vspace{1.0ex}

{\bf Remark.}
The above result demonstrates that when both \newCP~and CCP achieve the target $1-\alpha$ class-conditional coverage, under the condition of (\ref{eq:smaller_ps_condition}), \newCP~produces smaller prediction sets than CCP.
In fact, this condition implies that the combined (conformity score and label rank) calibration of \newCP~tends to include less labels with high rank or low confidence from the classifier.
In contrast, the CCP method tends to include relatively more uncertain labels into the  prediction set, where their ranks are high and the confidence of the classifier is low.
Now we can interpret the condition (\ref{eq:smaller_ps_condition}) by defining a condition number, termed as 
$\sigma_y$:
\begin{align}
\label{eq:sigma_y_defination}
\sigma_y 
= \frac{ \P_\Xtest \Big[ V(\Xtest, y) \leq \widehat Q^\class_{1-\hat\alpha}(y), ~ r_f(\Xtest, y) \leq \widehat k(y) \Big] }{ \P_\Xtest \Big[ V(\Xtest, y) \leq \widehat Q^\class_{1-\alpha}(y) \Big] }
.
\end{align}

\vspace{1.0ex}

In other words, if we can verify that $\sigma_y \leq 1$ for all $y$, then \newCP~can improve the predictive efficiency over CCP.
Furthermore, if $\sigma_y$ is fairly small, then the efficiency improvement can be even more significant.
To verify this condition, our comprehensive experiments (Section \ref{sec:results_and_discussion}, Figure \ref{fig:condition_number_sigma_y_exp_0.1}) show that $\sigma_y$ values are much smaller than $1$ on real-world data. These results demonstrate the practical utility of our theoretical analysis to produce small prediction sets using \newCP. 
Note that the reduction in prediction set size of \newCP~over CCP is proportional to how small the $\sigma_y$ values are. 

\begin{theorem}
\label{theorem:RC3P_efficiency_condition}
(Conditions of improved predictive efficiency for \newCP)
Define $D = \P[ r_f(X, y) \leq \widehat k(y) | Y \neq y ]$,
and $\bar r_f(X, y) = \lfloor \frac{ r_f(X, y) + 1 }{ 2 } \rfloor$.
Denote $B = \P[ f(X)_{ ( \bar r_f(X, y) ) } \leq \widehat Q_{1-\alpha}^\class(y) | Y \neq y ]$ if $V$ is APS, 
or $B = \P[ f(X)_{ ( \bar r_f(X, y) ) } + \lambda \leq \widehat Q_{1-\alpha}^\class(y) | Y \neq y ]$ if $V$ is RAPS.
If $B - D \geq \frac{ p_y }{ 1 - p_y } ( \alpha - \epsilon_y^{\widehat k(y)} )$,
then $\sigma_y \leq 1$.
\end{theorem}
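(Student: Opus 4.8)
Since $\sigma_y$ in (\ref{eq:sigma_y_defination}) is exactly the ratio of the two sides of the efficiency trade-off condition~(\ref{eq:smaller_ps_condition}), the inequality $\sigma_y\le 1$ is literally~(\ref{eq:smaller_ps_condition}), so once it is established Lemma~\ref{lemma:RC3P_improved_efficiency} delivers the conclusion. Hence I would aim directly at
\[
\P_X\!\big[\, V(X,y)\le \widehat Q^\class_{1-\widehat\alpha_y}(y),\; r_f(X,y)\le \widehat k(y)\,\big]\;\le\;\P_X\!\big[\, V(X,y)\le \widehat Q^\class_{1-\alpha}(y)\,\big].
\]
The plan is to expand both sides with $\P_X[\cdot]=p_y\,\P[\cdot\mid Y=y]+(1-p_y)\,\P[\cdot\mid Y\neq y]$ and bound the ``true-class'' ($Y=y$) and ``off-class'' ($Y\neq y$) parts separately, then recombine.

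For the $Y=y$ part I would discard the conformity-score event on the left and upper bound it by $\P[r_f(X,Y)\le\widehat k(y)\mid Y=y]=1-\epsilon_y^{\widehat k(y)}$, which is the definition of the class-wise top-$k$ error; for the right side I would lower bound using the class-conditional coverage of the CCP quantile, $\P[V(X,Y)\le\widehat Q^\class_{1-\alpha}(y)\mid Y=y]\ge 1-\alpha$ (Eq.~(\ref{eq:class_conditional_coverage})). Thus the $Y=y$ regime contributes at most $p_y\big(\alpha-\epsilon_y^{\widehat k(y)}\big)$ to the gap ``left side minus right side''.

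For the $Y\neq y$ part the core ingredient is a pointwise comparison of the APS/RAPS score with a single order statistic of $f(X)$. Writing $r=r_f(X,y)$, $\bar r=\lfloor (r+1)/2\rfloor$, and $f(X)_{(1)}\ge\cdots\ge f(X)_{(K)}$, for $r\ge 2$ one has $\bar r\le r-1$, so
\[
V(X,y)\;\ge\;\sum_{l=1}^{r-1} f(X)_{(l)}\;\ge\;\sum_{l=1}^{\bar r} f(X)_{(l)}\;\ge\;\bar r\, f(X)_{(\bar r)}\;\ge\; f(X)_{(\bar r)},
\]
with the rank regularizer of RAPS supplying the extra $+\lambda$ (for ranks beyond $k_{reg}$); the purely randomized case $r=1$, where instead $V(X,y)=U\,f(X)_{(1)}\le f(X)_{(\bar r_f(X,y))}$, has to be handled on its own. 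Using this comparison to convert the score event $\{V(X,y)\le\widehat Q^\class_{1-\alpha}(y)\}$ into the order-statistic event defining $B$, and using $\P[r_f(X,y)\le\widehat k(y)\mid Y\neq y]=D$ to absorb the rank filter, I would show that the off-class contribution to ``right side minus left side'' is at least $B-D$.

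Recombining, the gap ``left side minus right side'' is at most $p_y\big(\alpha-\epsilon_y^{\widehat k(y)}\big)-(1-p_y)(B-D)$, which is nonpositive precisely under the hypothesis $B-D\ge \tfrac{p_y}{1-p_y}\big(\alpha-\epsilon_y^{\widehat k(y)}\big)$, yielding $\sigma_y\le 1$. I expect the main obstacle to be the $Y\neq y$ bookkeeping: one must reconcile the larger threshold $\widehat Q^\class_{1-\widehat\alpha_y}(y)$ appearing in the \newCP~event with the smaller threshold $\widehat Q^\class_{1-\alpha}(y)$ inside $B$, while simultaneously tracking the rank filter and disposing of the $r=1$ corner case, so that exactly $B$ and $D$ — and not looser surrogates — fall out of the estimate.
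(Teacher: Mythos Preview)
Your decomposition into $Y=y$ and $Y\neq y$ parts, with the on-class contribution bounded by $p_y(\alpha-\epsilon_y^{\widehat k(y)})$ via CCP coverage and the top-$k$ error, and the off-class contribution governed by $B$ and $D$, is exactly the route the paper takes. However, there is a genuine gap in the off-class step: the pointwise inequality you derive, $V(X,y)\ge f(X)_{(\bar r_f(X,y))}$, points in the \emph{wrong direction} for what you need. To obtain ``right side off-class $\ge B$'' (which together with ``left side off-class $\le D$'' gives the $B-D$ you want), you must show $\{f(X)_{(\bar r)}\le \widehat Q^\class_{1-\alpha}(y)\}\subseteq\{V(X,y)\le \widehat Q^\class_{1-\alpha}(y)\}$, i.e., an \emph{upper} bound on $V$ in terms of $f_{(\bar r)}$. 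Your lower bound instead yields $\P[V\le Q\mid Y\neq y]\le B$, which is the opposite inequality and does not deliver the conclusion.

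The paper argues in the other direction: using that the partial sums $l\mapsto\sum_{j\le l}f(X)_{(j)}$ are concave (increments nonincreasing), it upper-bounds $V(X,y)\le r_f(X,y)\cdot f(X)_{(\bar r_f(X,y))}$ for APS and $V(X,y)\le r_f(X,y)\cdot\big(f(X)_{(\bar r_f(X,y))}+\lambda\big)$ for RAPS, from which the lower bound on $\P[V\le \widehat Q^\class_{1-\alpha}(y)\mid Y\neq y]$ by $B$ is read off. This bound holds for all $r_f\ge 1$, so the $r=1$ corner case you flag does not arise. The paper also dispatches your threshold-mismatch concern more simply than you anticipate: it bounds the entire numerator of $\sigma_y$ by $\P_X[r_f(X,y)\le\widehat k(y)]$ (just drop the score event), so $\widehat Q^\class_{1-\widehat\alpha_y}(y)$ never enters the estimate; the comparison then reduces to showing $\P_X[r_f(X,y)\le\widehat k(y)]\le\P_X[V(X,y)\le\widehat Q^\class_{1-\alpha}(y)]$.
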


{\bf Remark.}
The above result further analyzes when the condition in Eq (\ref{eq:smaller_ps_condition}) of Lemma \ref{lemma:RC3P_improved_efficiency} (or equivalently, $\sigma_y \leq 1$) holds to guarantee the improved predictive efficiency.
Specifically, the condition $B - D \geq \frac{ p_y }{ 1 - p_y } ( \alpha - \epsilon_y^{\widehat k(y)} )$ of Theorem \ref{theorem:RC3P_efficiency_condition} can be realized by two ways: (i) making LHS $B - D$ as large as possible; (ii) making the RHS $\frac{ p_y }{ 1 - p_y } ( \alpha - \epsilon_y^{\widehat k(y)} )$ as small as possible.
To this end, we can set Line 7 in Algorithm \ref{alg:RC3P} in the following way:
\begin{align}\label{eq:configure_k_for_efficient_RC3P}
\widehat k(y) = \min\{k: \epsilon_y^k < \alpha \}
,
\quad 
\widehat \alpha_y = \alpha - \epsilon_y^{\widehat k(y)}.
\end{align}
Therefore, this setting ensures $\sigma_y \leq 1$ and as a result improved predictive efficiency.

\section{Experiments and Results}
\label{section:experiments}
\vspace{-1.0ex}

We present the empirical evaluation of the  \newCP~algorithm and demonstrate its effectiveness in achieving class-conditional coverage to produce small prediction sets. We conduct experiments using two baselines (\texttt{CCP} and \texttt{Cluster-CP}), four datasets (each with three imbalance types and five imbalance ratios), and two machine learning models (trained for $50$ epochs and $200$ epochs, with $200$ epochs being our main experimental setting). Additionally, we use two scoring functions (APS and RAPS) and set three different $\alpha$ values ($\alpha \in {0.1, 0.05, 0.01}$, with $\alpha = 0.1$ as our main setting).

\begin{table*}[!t]
\centering
\caption{
\textbf{
Imalanced classification data experiment on CIFAR-10, CIFAR-100, mini-ImageNet, Food-101.}
APSS results comparing \texttt{CCP}, \texttt{Cluster-CP}, and \texttt{\newCP}~with ResNet-20 model trained with $200 $ epochs under different imbalance types and ratios when $\alpha = 0.1$.
For a fair comparison of APSS, we set UCR of \newCP~the same as or smaller (more restrictive) than that of \texttt{CCP} and \texttt{Cluster-CP} under $0.16$ on CIFAR-10 and $0.03$ on other datasets. The specified UCR values are in Table \ref{tab:overall_comparison_cluster_four_datasets} and \ref{tab:overall_comparison_RAPS_four_datasets} of Appendix \ref{appendix:aps} and \ref{appendix:raps}.
The APSS results show that \texttt{\newCP}~significantly outperforms \texttt{CCP} and \texttt{Cluster-CP} in terms of average prediction set size with $24.47\%$ (four datasets) or $32.63\%$ (excluding CIFAR-10) reduction over $\min \{ \texttt{CCP}, \texttt{cluster-CP}\}$. 
}
\label{tab:overall_comparison_four_imbalanced_datasets}
\resizebox{\textwidth}{!}{
\begin{NiceTabular}{@{}cc!{~}cc!{~}cc!{~}cc@{}} \toprule 
\Block{2-1}{Conformity\\ Score}  & \Block{2-1}{Methods} & \Block{1-2}{\EXP} & & \Block{1-2}{\POLY} & & \Block{1-2}{\MAJ} \\ 
 & & $\rho$ = 0.5 & $\rho$ = 0.1 & $\rho$ = 0.5 & $\rho$ = 0.1 & $\rho$ = 0.5 & $\rho$ = 0.1\\ 
\midrule 
\Block{1-*}{CIFAR-10}
\\
\midrule
\Block{3-1}{APS} 
& CCP     & \textbf{1.555 $\pm$ 0.010} & \textbf{1.855 $\pm$ 0.014}
& \textbf{1.538 $\pm$ 0.010} & \textbf{1.776 $\pm$ 0.012}
& \textbf{1.840 $\pm$ 0.020} & \textbf{2.629 $\pm$ 0.013}
\\ 
& Cluster-CP    & 1.714 $\pm$ 0.018 & 2.162 $\pm$ 0.015
& 1.706 $\pm$ 0.014 & 1.928 $\pm$ 0.013
& 1.948 $\pm$ 0.023 & 3.220 $\pm$ 0.020
\\ 
& \textbf{\newCP} & \textbf{1.555 $\pm$ 0.010} & \textbf{1.855 $\pm$ 0.014}
& \textbf{1.538 $\pm$ 0.010} & \textbf{1.776 $\pm$ 0.012}
& \textbf{1.840 $\pm$ 0.020} & \textbf{2.629 $\pm$ 0.013}
\\
\midrule
\Block{3-1}{RAPS}
& CCP     & \textbf{1.555 $\pm$ 0.010} & \textbf{1.855 $\pm$ 0.014}
& \textbf{1.538 $\pm$ 0.010} & \textbf{1.776 $\pm$ 0.012}
& \textbf{1.840 $\pm$ 0.020} & \textbf{2.632 $\pm$ 0.012}
\\ 
& Cluster-CP     & 1.714 $\pm$ 0.018 & 2.162 $\pm$ 0.015
& 1.706 $\pm$ 0.014 & 1.929 $\pm$ 0.013
& 1.787 $\pm$ 0.019 & 2.968 $\pm$ 0.024
\\ 
& \textbf{\newCP} & \textbf{1.555 $\pm$ 0.010} & \textbf{1.855 $\pm$ 0.014}
& \textbf{1.538 $\pm$ 0.010} & \textbf{1.776 $\pm$ 0.012}
& \textbf{1.840 $\pm$ 0.020} & \textbf{2.632 $\pm$ 0.012}
\\
\midrule
\Block{3-1}{HPS}
& CCP     & \textbf{1.144 $\pm$ 0.005} & \textbf{1.324 $\pm$ 0.007}
& \textbf{1.137 $\pm$ 0.003} & \textbf{1.243 $\pm$ 0.005}
& \textbf{1.272 $\pm$ 0.008} & \textbf{1.936 $\pm$ 0.010}
\\ 
& Cluster-CP    & 1.214 $\pm$ 0.008 & 1.508 $\pm$ 0.010
& 1.211 $\pm$ 0.004 & 1.354 $\pm$ 0.005
& 1.336 $\pm$ 0.009 & 2.312 $\pm$ 0.025
\\ 
& \textbf{\newCP} & \textbf{1.144 $\pm$ 0.005} & \textbf{1.324 $\pm$ 0.007}
& \textbf{1.137 $\pm$ 0.003} & \textbf{1.243 $\pm$ 0.005}
& \textbf{1.272 $\pm$ 0.008} & \textbf{1.936 $\pm$ 0.010}
\\
\midrule
\Block{1-*}{CIFAR-100}
\\
\midrule
\Block{3-1}{APS} 
& CCP     & 44.224 $\pm$ 0.341 & 50.969 $\pm$ 0.345
& 49.889 $\pm$ 0.353 & 64.343 $\pm$ 0.237
& 44.194 $\pm$ 0.514 & 64.642 $\pm$ 0.535
\\ 
& Cluster-CP     & 29.238 $\pm$ 0.609 & 37.592 $\pm$ 0.857
& 38.252 $\pm$ 0.353 & 52.391 $\pm$ 0.595
& 31.518 $\pm$ 0.335 & 50.883 $\pm$ 0.673
\\ 
& \textbf{\newCP} & \textbf{17.705 $\pm$ 0.004} & \textbf{21.954 $\pm$ 0.005}
& \textbf{23.048 $\pm$ 0.008} & \textbf{33.185 $\pm$ 0.005}
& \textbf{18.581 $\pm$ 0.007} & \textbf{32.699 $\pm$ 0.005}
\\
\midrule
\Block{3-1}{RAPS}
& CCP     & 44.250 $\pm$ 0.342 & 50.970 $\pm$ 0.345
& 49.886 $\pm$ 0.353 & 64.332 $\pm$ 0.236
& 48.343 $\pm$ 0.353 & 64.663 $\pm$ 0.535
\\ 
& Cluster-CP     & 29.267 $\pm$ 0.612 & 37.795 $\pm$ 0.862
& 38.258 $\pm$ 0.320 & 52.374 $\pm$ 0.592
& 31.513 $\pm$ 0.325 & 50.379 $\pm$ 0.684
\\ 
& \textbf{\newCP} & \textbf{17.705 $\pm$ 0.004} & \textbf{21.954 $\pm$ 0.005}
& \textbf{23.048 $\pm$ 0.008} & \textbf{33.185 $\pm$ 0.005}
& \textbf{18.581 $\pm$ 0.006} & \textbf{32.699 $\pm$ 0.006}
\\
\midrule
\Block{3-1}{HPS}
& CCP     & 41.351 $\pm$ 0.242 & 49.469 $\pm$ 0.344
& 48.063 $\pm$ 0.376 & 63.963 $\pm$ 0.277
& 46.125 $\pm$ 0.351 & 64.371 $\pm$ 0.564
\\ 
& Cluster-CP     & 27.566 $\pm$ 0.555 & 35.528 $\pm$ 0.979
& 36.101 $\pm$ 0.565 & 51.333 $\pm$ 0.776
& 29.323 $\pm$ 0.363 & 50.519 $\pm$ 0.679
\\ 
& \textbf{\newCP} & \textbf{20.363 $\pm$ 0.006} & \textbf{25.212 $\pm$ 0.010}
& \textbf{25.908 $\pm$ 0.007} & \textbf{36.951 $\pm$ 0.018}
& \textbf{21.149 $\pm$ 0.006} & \textbf{35.606 $\pm$ 0.005}
\\
\midrule
\Block{1-*}{mini-ImageNet}
\\
\midrule 
\Block{3-1}{APS} 
& CCP     & 26.676 $\pm$ 0.171 & 26.111 $\pm$ 0.194
& 26.626 $\pm$ 0.133 & 26.159 $\pm$ 0.208
& 27.313 $\pm$ 0.154 & 25.629 $\pm$ 0.207
\\ 
& Cluster-CP     & 25.889 $\pm$ 0.301 & 25.253 $\pm$ 0.346
& 26.150 $\pm$ 0.393 & 25.633 $\pm$ 0.268
& 26.918 $\pm$ 0.241 & 25.348 $\pm$ 0.334
\\ 
& \textbf{\newCP} & \textbf{18.129 $\pm$ 0.003} & \textbf{17.082 $\pm$ 0.002}
& \textbf{17.784 $\pm$ 0.003} & \textbf{17.465 $\pm$ 0.003}
& \textbf{18.111 $\pm$ 0.002} & \textbf{17.167 $\pm$ 0.004}
\\
\midrule
\Block{3-1}{RAPS}
& CCP     & 26.756 $\pm$ 0.178 & 26.212 $\pm$ 0.199
& 26.689 $\pm$ 0.142 & 26.248 $\pm$ 0.219
& 27.397 $\pm$ 0.162 & 25.725 $\pm$ 0.214
\\ 
& Cluster-CP     & 26.027 $\pm$ 0.325 & 25.415 $\pm$ 0.289
& 26.288 $\pm$ 0.407 & 25.712 $\pm$ 0.315
& 26.969 $\pm$ 0.305 & 25.532 $\pm$ 0.350
\\ 
& \textbf{\newCP} & \textbf{18.129 $\pm$ 0.003} & \textbf{17.082 $\pm$ 0.002}
& \textbf{17.784 $\pm$ 0.003} & \textbf{17.465 $\pm$ 0.003}
& \textbf{18.111 $\pm$ 0.002} & \textbf{17.167 $\pm$ 0.004}
\\
\midrule
\Block{3-1}{HPS}
& CCP     & 24.633 $\pm$ 0.212 & 24.467 $\pm$ 0.149
& 24.379 $\pm$ 0.152 & 24.472 $\pm$ 0.167
& 25.449 $\pm$ 0.196 & 23.885 $\pm$ 0.159
\\ 
& Cluster-CP     & 23.911 $\pm$ 0.322 & 24.023 $\pm$ 0.195
& 24.233 $\pm$ 0.428 & 23.263 $\pm$ 0.295
& 24.987 $\pm$ 0.319 & 23.323 $\pm$ 0.378
\\ 
& \textbf{\newCP} & \textbf{17.830 $\pm$ 0.104} & \textbf{17.036 $\pm$ 0.014}
& \textbf{17.684 $\pm$ 0.062} & \textbf{17.393 $\pm$ 0.013}
& \textbf{18.024 $\pm$ 0.049} & \textbf{17.086 $\pm$ 0.059}
\\
\midrule
\Block{1-*}{Food-101}
\\
\midrule 
\Block{3-1}{APS} 
& CCP     & 27.022 $\pm$ 0.192 & 30.900 $\pm$ 0.170
& 30.943 $\pm$ 0.119 & 35.912 $\pm$ 0.105
& 27.415 $\pm$ 0.194 & 36.776 $\pm$ 0.132
\\ 
& Cluster-CP     & 28.953 $\pm$ 0.333 & 33.375 $\pm$ 0.377
& 33.079 $\pm$ 0.393 & 38.301 $\pm$ 0.232
& 30.071 $\pm$ 0.412 & 39.632 $\pm$ 0.342
\\ 
& \textbf{\newCP} & \textbf{18.369 $\pm$ 0.004} & \textbf{21.556 $\pm$ 0.006}
& \textbf{21.499 $\pm$ 0.003} & \textbf{25.853 $\pm$ 0.004}
& \textbf{19.398 $\pm$ 0.006} & \textbf{26.585 $\pm$ 0.004}
\\
\midrule
\Block{3-1}{RAPS}
& CCP    & 27.022 $\pm$ 0.192 & 30.900 $\pm$ 0.170
& 30.966 $\pm$ 0.125 & 35.940 $\pm$ 0.111
& 27.439 $\pm$ 0.203 & 36.802 $\pm$ 0.138
\\ 
& Cluster-CP     & 28.953 $\pm$ 0.333 & 33.375 $\pm$ 0.377
& 33.337 $\pm$ 0.409 & 38.499 $\pm$ 0.216
& 29.946 $\pm$ 0.407 & 39.529 $\pm$ 0.306
\\ 
& \textbf{\newCP} & \textbf{18.369 $\pm$ 0.004} & \textbf{21.556$\pm$ 0.006}
& \textbf{21.499 $\pm$ 0.003} & \textbf{25.853 $\pm$ 0.004}
& \textbf{19.397 $\pm$ 0.006} & \textbf{26.585 $\pm$ 0.004}
\\
\midrule
\Block{3-1}{HPS}
& CCP     & 26.481 $\pm$ 0.142 & 30.524 $\pm$ 0.152
& 30.787 $\pm$ 0.099 & 35.657 $\pm$ 0.107
& 26.826 $\pm$ 0.163 & 36.518 $\pm$ 0.122
\\ 
& Cluster-CP     & 29.347 $\pm$ 0.288 & 33.806 $\pm$ 0.513
& 33.407 $\pm$ 0.345 & 38.956 $\pm$ 0.242
& 29.606 $\pm$ 0.436 & 39.880 $\pm$ 0.318
\\ 
& \textbf{\newCP} & \textbf{18.337 $\pm$ 0.004} & \textbf{21.558$\pm$ 0.006}
& \textbf{21.477 $\pm$ 0.003} & \textbf{25.853 $\pm$ 0.005}
& \textbf{19.396 $\pm$ 0.008} & \textbf{26.584 $\pm$ 0.003}
\\
\bottomrule 
\end{NiceTabular}
}
\end{table*}

\vspace{-1.5ex}
\subsection{Experimental Setup}
\label{section: Experimental setup}

\noindent \textbf{Classification datasets.}
We consider four datasets: CIFAR-10, CIFAR-100 \cite{krizhevsky2009learning}, mini-ImageNet \cite{vinyals2016matching}, and Food-101 \cite{bossard2014food} by using the standard training and validation split. 
We employ the same methodology as \cite{menon2020long,cao2019learning,cui2019class} to create an imbalanced long-tail setting for each dataset as a harder challenge: 1) We use the original training split as a training set for training $f$ with training samples ($n_{tr}$ is defined as the number of training samples), and randomly split the original (balanced) validation set into calibration samples and testing samples. 
2) We define an imbalance ratio $\rho$, the ratio between the sample size of the smallest and largest class: $\rho = \frac{\min_i \{\text{\# samples in class}~ i\}}{\max_i \{\text{\# samples in class}~ i\}}$. 3) For each training set, we create three different imbalanced distributions using three decay types over the class indices $c\in\{1, \cdots, K\}$: 
(a) An exponential-based decay (\EXP) with $\frac{n_{tr}}{K}\times \rho^{\frac{c}{K}}$  examples in class $c$, 
(b) A polynomial-based decay (\POLY) with $\frac{n_{tr}}{K}\times\frac{1}{\sqrt{\frac{c}{10\rho}+1}}$ examples in class $c$, and 
(c) A majority-based decay (\MAJ) with $\frac{n_{tr}}{K}\times \rho$ examples in classes $c>1$. 
We keep the calibration and test set balanced and unchanged. 
We provide an illustrative example of the three decay types in Appendix (Section \ref{subsection:appendix:illustration_imbalanced_data}, Figure \ref{fig:imbalanced_decay_examples}).
Towards a more complete comparison, we also employ balanced datasets. 
Following Cluster-CP\footnote{https://github.com/tiffanyding/class-conditional-conformal/tree/main}, we employ CIFAR-100, Places365 \cite{zhou2017places}, iNaturalist\cite{van2018inaturalist}, and ImageNet\cite{russakovsky2015imagenet}.

\noindent \textbf{Deep neural network models.} 
We consider ResNet-20 \cite{he2016deep} as the main architecture to train classifiers for imbalanced classification datasets. 
To 
handle imbalanced data, we employ the training algorithm ``LDAM'' proposed by \cite{cao2019learning} that assigns different margins to classes, where larger margins are assigned to minority classes in the loss function.
We follow the training strategy in \cite{cao2019learning} where all models are trained with $200$ epochs.
The class-wise performance with three imbalance types and imbalance ratios $\rho = 0.5$ and $\rho = 0.1$ on four datasets are evaluated (see Appendix \ref{appendix:training}).
We also train models with $50$ epochs and the corresponding APSS results are reported in Appendix \ref{appendix:epochs}. 

For balanced datsets, we follow the same settings from Cluster-CP, which uses IMAGENET1K\_V2 as pre-trained weights from PyTorch \cite{paszke2019pytorch} and then fine-tune models with ResNet-50 for all datasets except ImageNet. 
For ImageNet, we use SimCLR-v2 \cite{chen2020big} 
as training models.

\begin{table*}[!ht]
\centering
\caption{
\textbf{Balanced experiment on CIFAR-100, Places365, iNaturalist, ImageNet.}
The models are pre-trained.
UCR is controlled to $\leq 0.05$.
RC3P significantly outperforms the best baseline with $32.826\%$ reduction in APSS ($\downarrow$ better) on average over $\min \{ \texttt{CCP}, \texttt{cluster-CP} \}$.
}
\label{tab:overall_comparison_balanced_datasets}
\vspace{-0.1in}
\resizebox{\textwidth}{!}{
\begin{NiceTabular}{@{}ccc!{~}c!{~}c!{~}c!{~}c@{}} \toprule 
Conformity Score  & Measure & Methods & CIFAR-100 & Places365 & iNaturalist & ImageNet\\ 
\midrule 
\Block{6-1}{APS} 
& \Block{3-1}{UCR} 
& CCP     
& 0.045 $\pm$ 0.008 
& 0.012 $\pm$ 0.002 
& 0.016 $\pm$ 0.001 
& 0.036 $\pm$ 0.001 
\\ 
& & Cluster-CP    
& 0.023 $\pm$ 0.006 
& 0.029 $\pm$ 0.003 
& 0.026 $\pm$ 0.003 
& 0.031 $\pm$ 0.002
\\ 
& & \textbf{\newCP} 
& \textbf{0.006 $\pm$ 0.003} 
& \textbf{0.003 $\pm$ 0.001} 
& \textbf{0.008 $\pm$ 0.001} 
& \textbf{0.023 $\pm$ 0.001} 
\\
\cline{2-7}
& \Block{3-1}{APSS} 
& CCP 
& 30.467 $\pm$ 0.307 
& 19.698 $\pm$ 0.050
& 18.802 $\pm$ 0.023
& 101.993 $\pm$ 0.812
\\
& & Cluster-CP  
& 32.628 $\pm$ 0.720
& 20.818 $\pm$ 0.173
& 23.467 $\pm$ 0.494
& 66.285 $\pm$ 1.433
\\
& & \textbf{\newCP} 
& \textbf{12.551 $\pm$ 0.005}
& \textbf{13.772 $\pm$ 0.005}
& \textbf{12.736 $\pm$ 0.006}
& \textbf{6.518 $\pm$ 0.001}
\\
\midrule
\Block{6-1}{RAPS}
& \Block{3-1}{UCR} 
& CCP     
& 0.043 $\pm$ 0.006 
& 0.013 $\pm$ 0.002 
& 0.016 $\pm$ 0.020 
& 0.038 $\pm$ 0.020 
\\ 
& & Cluster-CP     
& 0.016 $\pm$ 0.005 
& 0.036 $\pm$ 0.002 
& 0.027 $\pm$ 0.003 
& 0.046 $\pm$ 0.004 
\\ 
& & \textbf{\newCP} 
& \textbf{0.002 $\pm$ 0.001} 
& \textbf{0.002 $\pm$ 0.001} 
& \textbf{0.006 $\pm$ 0.001} 
& \textbf{0.017 $\pm$ 0.001} 
\\
\cline{2-7}
& \Block{3-1}{APSS} 
& CCP 
& 26.135 $\pm$ 0.308
& 15.694 $\pm$ 0.049
& 14.812 $\pm$ 0.042
& 37.748 $\pm$ 0.304
\\
& & Cluster-CP  
& 28.084 $\pm$ 0.609
& 16.750 $\pm$ 0.143
& 23.964 $\pm$ 0.419
& 16.155 $\pm$ 1.241
\\
& & \textbf{\newCP} 
& \textbf{12.586 $\pm$ 0.002}
& \textbf{14.192 $\pm$ 0.001}
& \textbf{13.251 $\pm$ 0.001}
& \textbf{6.560 $\pm$ 0.002}
\\
\midrule
\Block{6-1}{HPS} 
& \Block{3-1}{UCR} 
& CCP     
& 0.034 $\pm$ 0.006 
& 0.015 $\pm$ 0.002 
& \textbf{0.018 $\pm$ 0.002} 
& 0.036 $\pm$ 0.002 
\\ 
& & Cluster-CP    
& 0.006 $\pm$ 0.003 
& 0.029 $\pm$ 0.004
& 0.035 $\pm$ 0.002 
& 0.039 $\pm$ 0.005 
\\ 
& & \textbf{\newCP} 
& \textbf{0.003 $\pm$ 0.002}
& \textbf{0.002 $\pm$ 0.001} 
& \textbf{0.018 $\pm$ 0.002} 
& \textbf{0.006 $\pm$ 0.000} 
\\
\cline{2-7}
& \Block{3-1}{APSS} 
& CCP 
& 25.898 $\pm$ 0.321
& 14.020 $\pm$ 0.044
& \textbf{9.751 $\pm$ 0.033}
& 24.384 $\pm$ 0.249
\\
& & Cluster-CP    
& 27.165 $\pm$ 0.600
& 14.530 $\pm$ 0.143
& 13.080 $\pm$ 0.374
& 8.810 $\pm$ 0.046
\\
& & \textbf{\newCP} 
& \textbf{12.558 $\pm$ 0.004}
& \textbf{13.919 $\pm$ 0.004}
& \textbf{9.751 $\pm$ 0.033}
& \textbf{6.533 $\pm$ 0.001}
\\
\bottomrule 
\end{NiceTabular}
}
\end{table*}

\noindent {\bf CP baselines.} 
We consider three CP methods: 
{\bf 1)} \texttt{CCP} which estimates class-wise score thresholds and produces prediction set using Equation (\ref{eq:CCP}); 
{\bf 2)} \texttt{Cluster-CP} \cite{ding2024class} 
that performs calibration over clusters to reduce prediction set sizes; and 
{\bf 3)} \texttt{\newCP}~that produces prediction set using Equation (\ref{eq:RC3P_set_predictor}).
All CP methods are built on the same classifier and non-conformity scoring function for a fair comparison.
We employ the three common scoring functions:
APS \cite{romano2020classification}, RAPS \cite{angelopoulos2020uncertainty}, and HPS \cite{sadinle2019least}. 
We set $\alpha = 0.1$ as our main experiment setting and also report other experiment results of different $\alpha$ values (See Appendix \ref{appendix:alphas}).
Meanwhile, the hyper-parameters for each baseline are tuned according to their recommended ranges based on the same criterion (see Appendix \ref{subsection:appendix:calibration_details}).
We repeat experiments over $10$ different random calibration-testing splits and report the mean and standard deviation.

\noindent \textbf{Evaluation methodology.}
We use the target coverage $1-\alpha$ = 90\% 
class-conditional coverage for \texttt{CCP}, \texttt{Cluster-CP}, and \newCP. 
We compute three evaluation metrics on the testing set: 

\vspace{0.75ex}

\textbullet~{\em Under Coverage Ratio (UCR).} 
\begin{align*}
\UCR 
:= 
\sum_{c \in [K]} \indicator \Big[\frac{
\E_{X_\test} \indicator [y \in \widehat \calC_{1-\alpha}(x) \text{ s.t. } y = c] 
}{
\E_{X_\test} \indicator [y = c] }  < 1- \alpha\Big] / K .
\end{align*}

\vspace{0.75ex}

\textbullet~{\em Average Prediction Set Size (APSS).} 
\begin{align*}
\APSS := \sum_{c \in [K]} \frac{ 
\E_{X_\test} \indicator [ y = c ] \cdot |  \widehat \calC_{1-\alpha}(x) | 
}{
\E_{X_\test} \indicator [y = c]
} / K .
\end{align*}

Note that coverage and predictive efficiency are two competing metrics in CP \cite{angelopoulos2020uncertainty}, e.g., achieving better coverage (resp. predictive efficiency) degenerates predictive efficiency (resp. coverage). 
Therefore, following the same strategy in \cite{fontana2023conformal}, 
we choose to control their UCR as the same level that is close to $0$ for a fair comparison over three class-conditional CP algorithms in terms of APSS.
Meanwhile, to address the gap between population values and empirical ones (e.g., quantiles with $\tilde O(1 / \sqrt{n_y})$ error bound, common to all CP methods \cite{vovk2012conditional,ghosh2023probabilistically,angelopoulos2021gentle}, or class-wise top-$k$ error $\epsilon_y^k$ with $\tilde O(1 / \sqrt{n_y})$ error bound \cite{mohri2018foundations}), we uniformly add $g/\sqrt{n_y}$ (the same order with the standard concentration gap) to inflate the nominal coverage $1-\alpha$ on each baseline and tune $g \in \{0.25, 0.5, 0.75, 1\}$ on the calibration dataset in terms of UCR. 
The detailed $g$ values of each method are displayed in Appendix \ref{subsection:appendix:calibration_details}.
In addition, the actual achieved UCR values are shown in the complete results (see Appendix \ref{appendix:aps}, \ref{appendix:raps}, and \ref{appendix:hps}).
For a complete evaluation, 
we add the experiments without controlling coverage on imbalanced datasets under the same setting and use the total under coverage gap (UCG) metric: 
\\
\vspace{0.75ex}
\textbullet~{\em Under Coverage Gap (UCG).} 
\begin{align*}
\UCG 
:= \sum_{c \in [K]} \max \bigg \{1-\alpha-\frac{\P [ Y \in \hat \calC(X), \text{s.t. Y=c} ] }{ \P[Y=c] },0 \bigg \}.
\end{align*}
Experiments with UCG metric evaluation are shown in the Appendix \ref{appendix:UCG}.

\subsection{Results and Discussion}
\label{sec:results_and_discussion}
\vspace{-0.75ex}

We list empirical results in Table \ref{tab:overall_comparison_four_imbalanced_datasets} for an overall comparison on four imbalanced datasets with $\rho=0.5, 0.1$ using all three training distributions (\EXP, \POLY and \MAJ) based on the considered APS, RAPS and HPS scoring functions.
Complete experiment results under more values of $\rho$ are in Appendix \ref{section:appendix:experiment_result}).
Results with APS, RAPS, and HPS scoring functions on balanced datasets are also summarized in Table \ref{tab:overall_comparison_balanced_datasets}.
We make the following two key observations: 
(i) \texttt{CCP}, \texttt{Cluster-CP}, and \texttt{\newCP}~can guarantee the class-conditional coverage (their UCRs are all close to $0$) 
for all settings; 
(ii) \texttt{\newCP}~significantly outperforms \texttt{CCP} and \texttt{Cluster-CP} in APSS on almost all imbalanced settings by reducing APSS with $24.47\%$ on all four datasets and $32.63\%$ on three datasets excluding CIFAR-10 compared with $\min \{\texttt{CCP}, \texttt{Cluster-CP} \}$ on average,
while for balanced settings, \texttt{\newCP}~still significantly outperforms the best baselines in terms of APSS with $32.826\%$ APSS reduction.

To investigate the challenge of imbalanced data and more importantly, how \newCP~significantly improves the APSS, we further conduct three careful experiments on imbalanced datasets.
First,
we report the histograms of class-conditional coverage and the corresponding histograms of prediction set size.
This experiment verifies that \texttt{\newCP}~derives significantly more class-conditional coverage above $1-\alpha$ and thus reduces the prediction set size.
Second, 
we visualize the normalized frequency of label rank included in prediction sets on testing datasets for all class-wise algorithms: \texttt{CCP}, \texttt{Cluster-CP}, and \texttt{\newCP}. 
The normalized frequency is defined as: 
$\P(k):=\frac{\E_{X_\test} \indicator[r_f(X_\test, y)=k, y \in \widehat \calC(x)]}{\sum^K_{k=1} \E_{X_\test} \indicator[r_f(X_\test, y)=k, y \in \widehat \calC(x)]}$.
Finally, 
we empirically verify the trade-off condition number $\{\sigma_y\}_{y=1}^K$ of Equation \ref{eq:sigma_y_defination} on calibration dataset to reveal the underlying reason for \texttt{\newCP}~producing smaller prediction sets over \texttt{CCP} with our standard training models (epoch $=200$). 
We also evaluate $\{\sigma_y\}_{y=1}^K$ with less trained models (epoch $=50$) on imbalanced datasets in Appendix \ref{subsection:appendix:complete experiment_results}.
Additionally, we also repeat all three experiments on balanced datasets (i.e., the histograms of class-conditional coverage and prediction set size, the normalized frequency of label rank included in prediction sets, and $\{\sigma_y\}_{y=1}^K$) in Appendix \ref{subsection:appendix:complete experiment_results_balanced}.
Below we discuss our experimental results and findings in detail.

\begin{figure*}[!ht]
    \centering
    \begin{minipage}{.24\textwidth}
        \centering
        (a) CIFAR-10
    \end{minipage}%
    \begin{minipage}{.24\textwidth}
        \centering
        (b) CIFAR-100
    \end{minipage}%
    \begin{minipage}{.24\textwidth}
        \centering
        (c) mini-ImageNet
    \end{minipage}%
    \begin{minipage}{.24\textwidth}
        \centering
        (d) Food-101
    \end{minipage}

    \subfigure{
        \begin{minipage}{0.23\linewidth}
            \includegraphics[width=\linewidth]{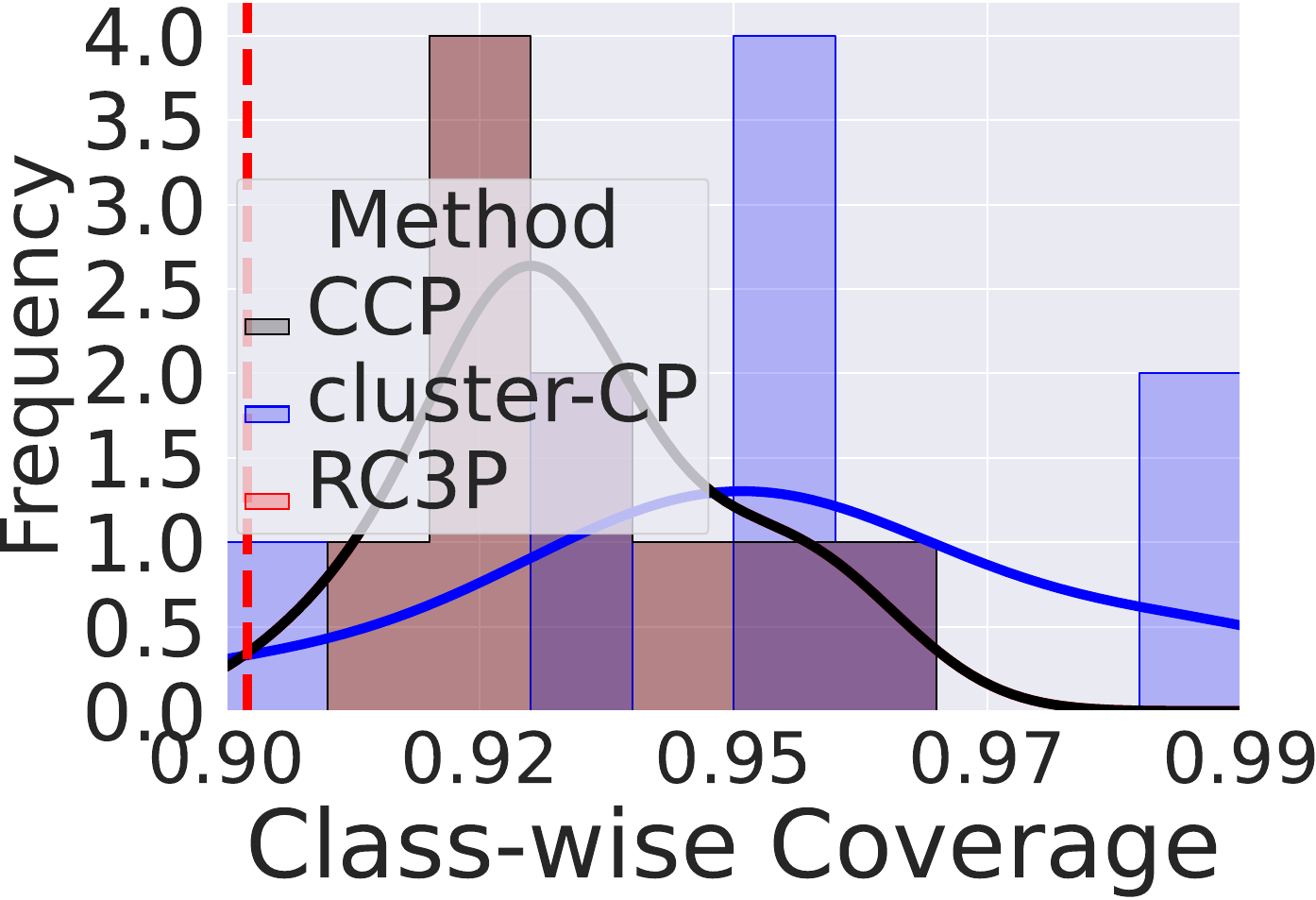}
        \end{minipage}
    }
    \subfigure{
        \begin{minipage}{0.23\linewidth}
            \includegraphics[width=\linewidth]{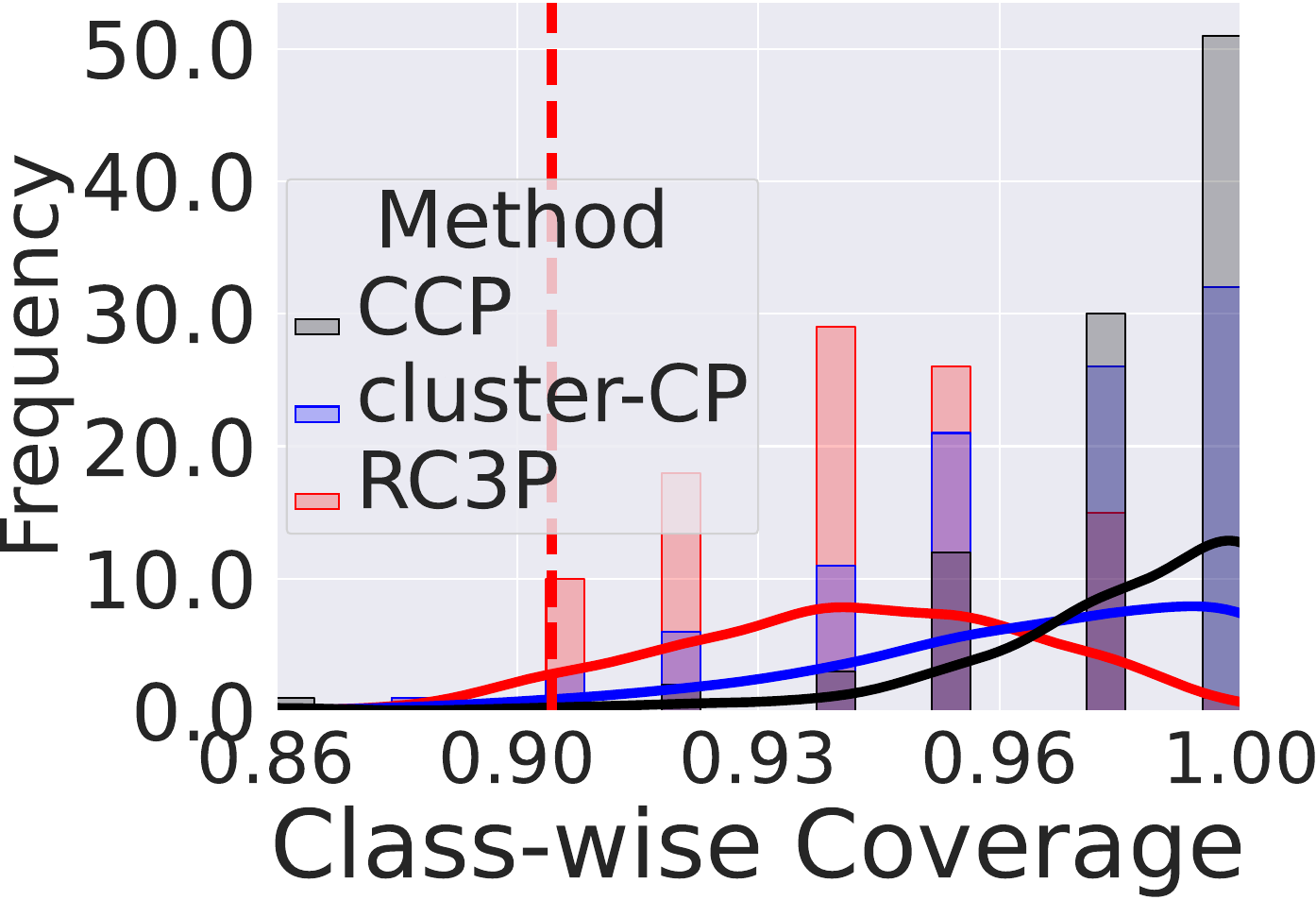}
        \end{minipage}
    }
    \subfigure{
        \begin{minipage}{0.23\linewidth}
            \includegraphics[width=\linewidth]{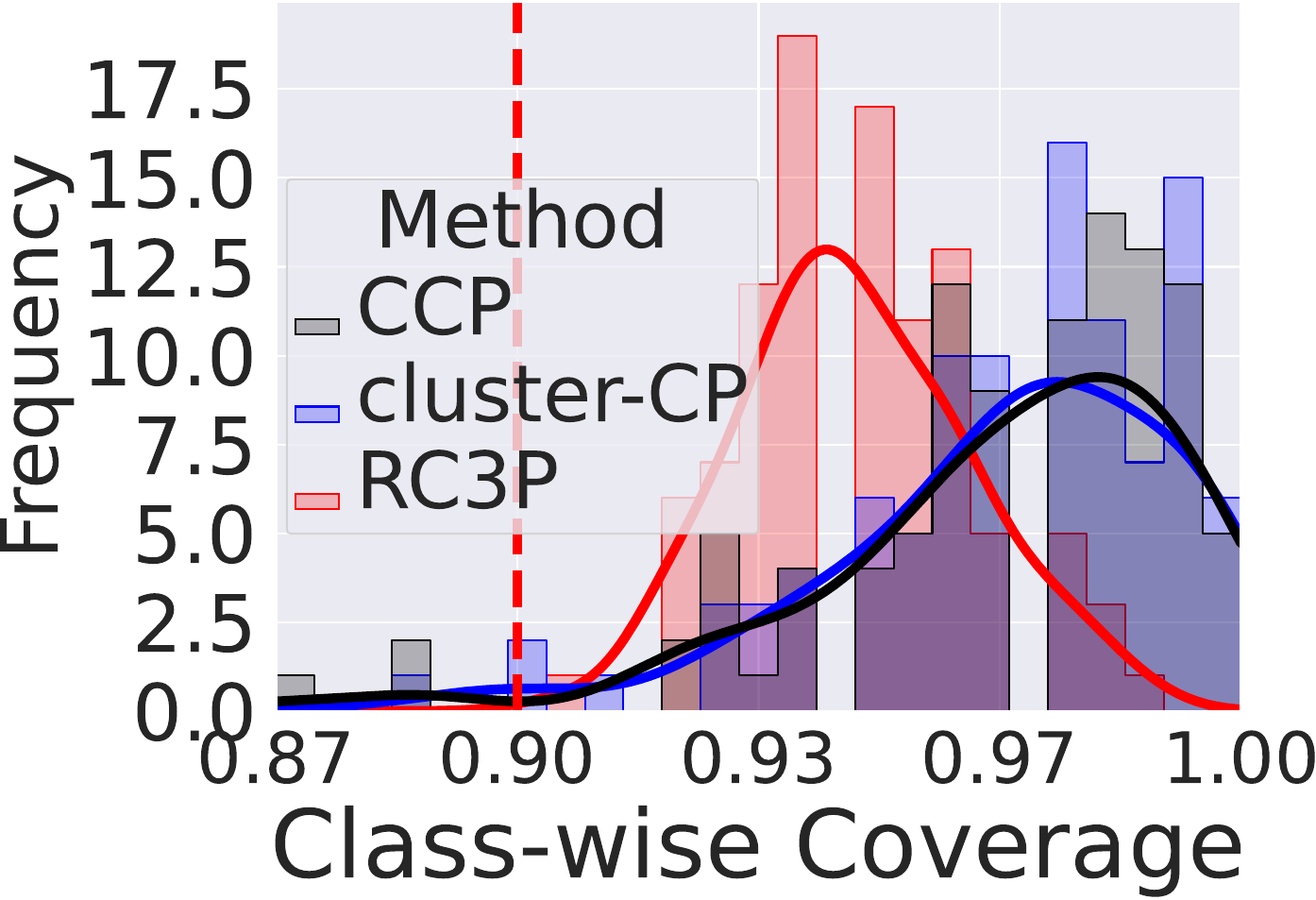}
        \end{minipage}
    }
    \subfigure{
        \begin{minipage}{0.23\linewidth}
            \includegraphics[width=\linewidth]{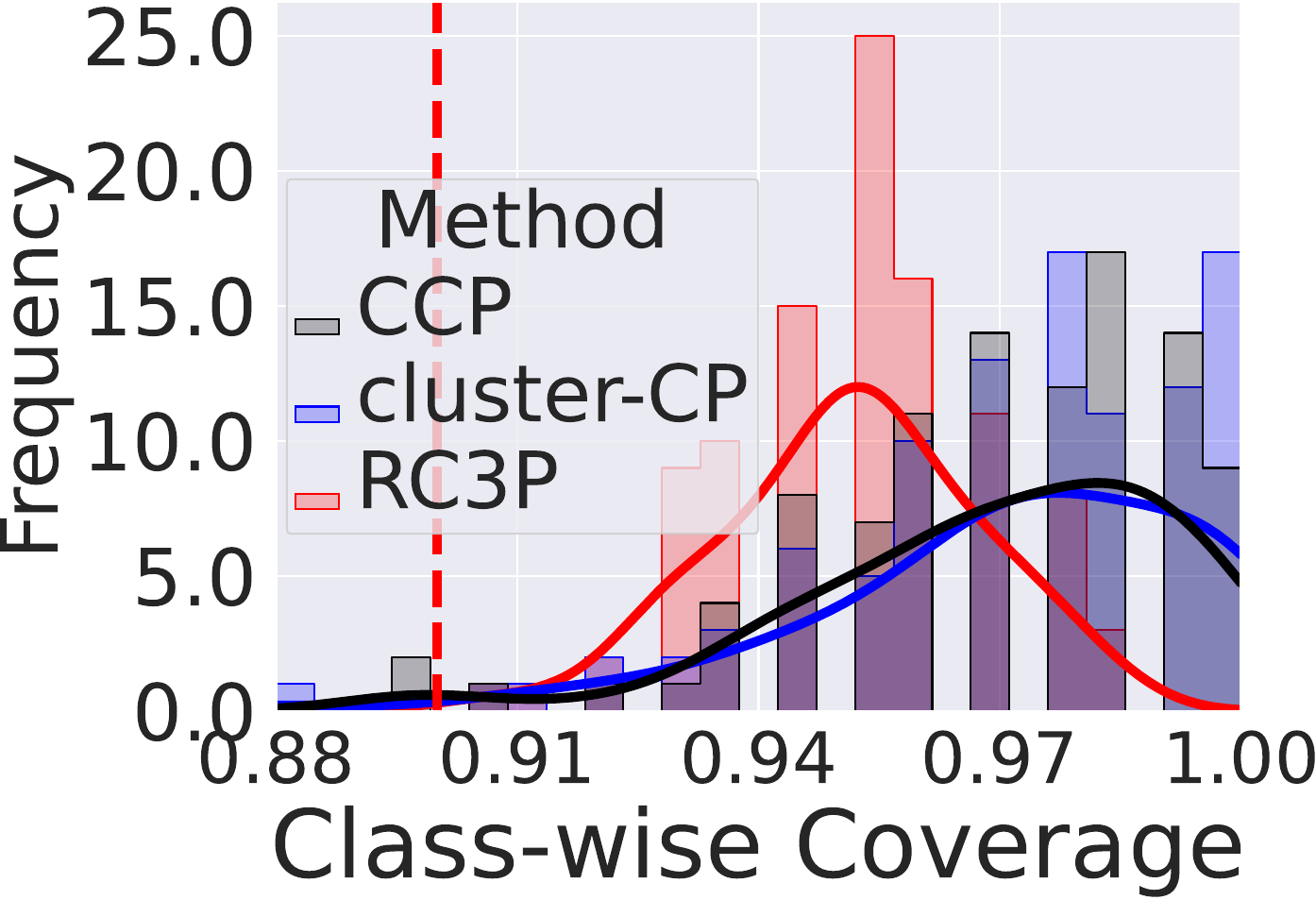}
        \end{minipage}
    }
    \subfigure{
        \begin{minipage}{0.23\linewidth}
            \includegraphics[width=\linewidth]{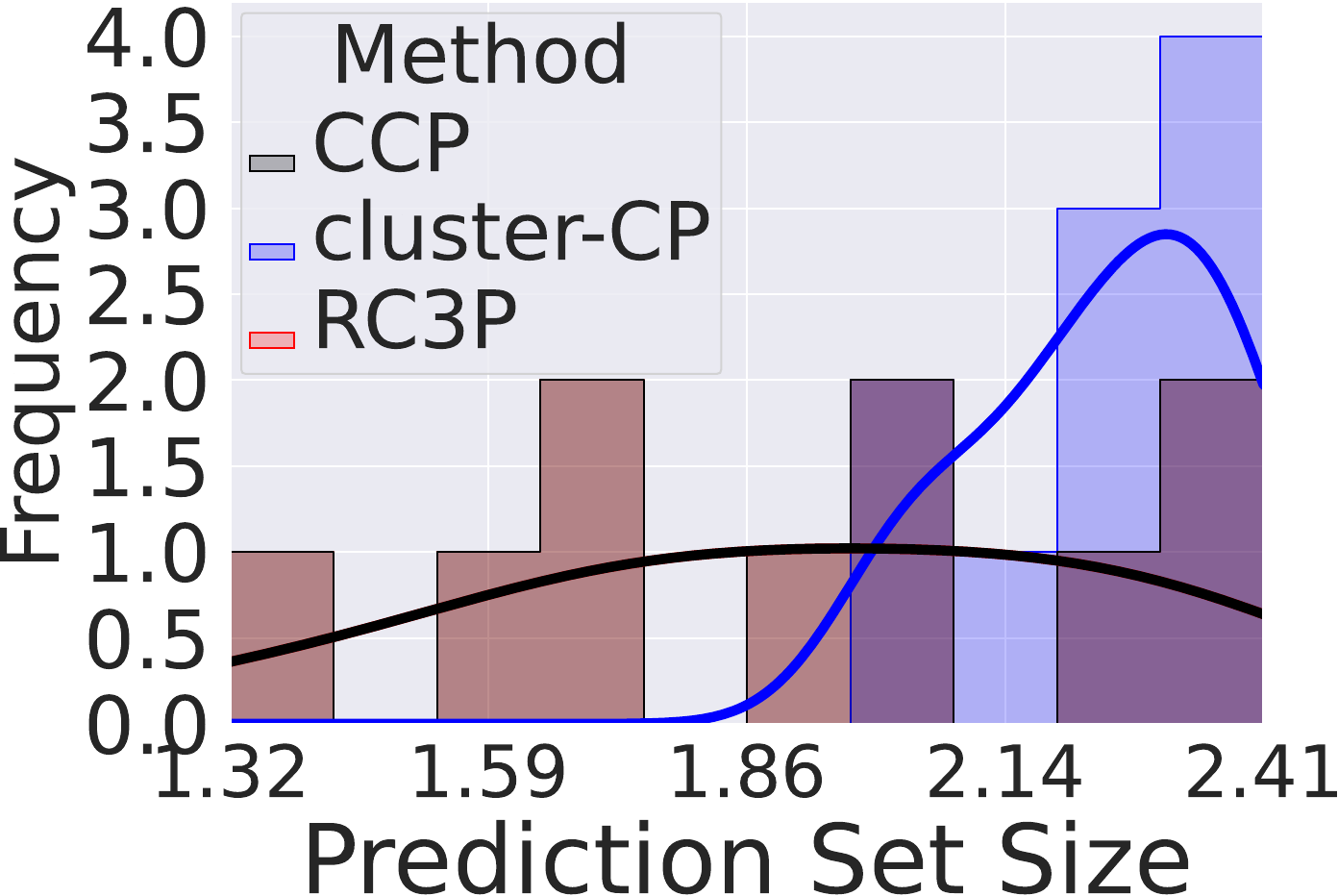}
        \end{minipage}
    }
    \subfigure{
        \begin{minipage}{0.23\linewidth}
            \includegraphics[width=\linewidth]{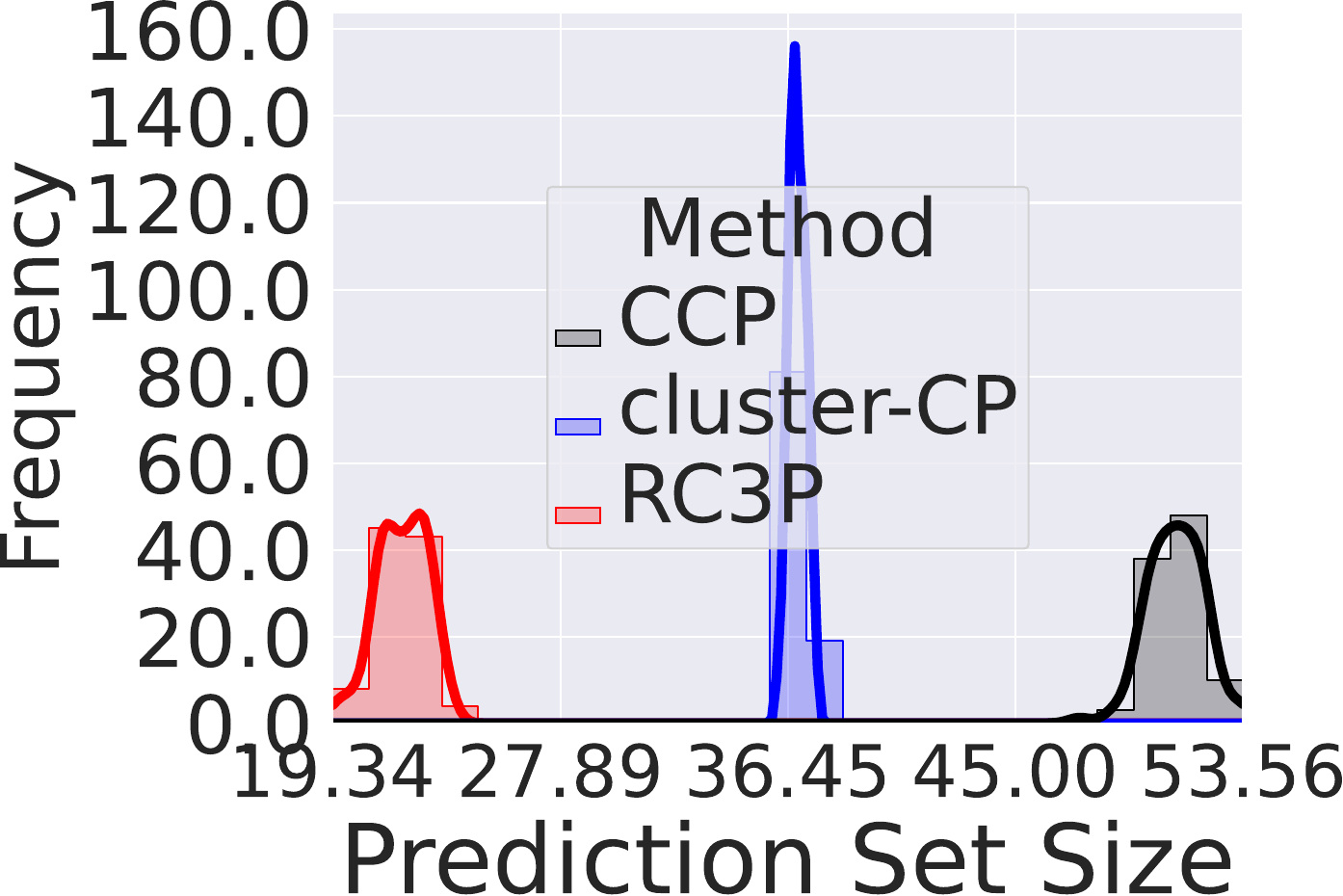}
        \end{minipage}
    }
    \subfigure{
        \begin{minipage}{0.23\linewidth}
            \includegraphics[width=\linewidth]{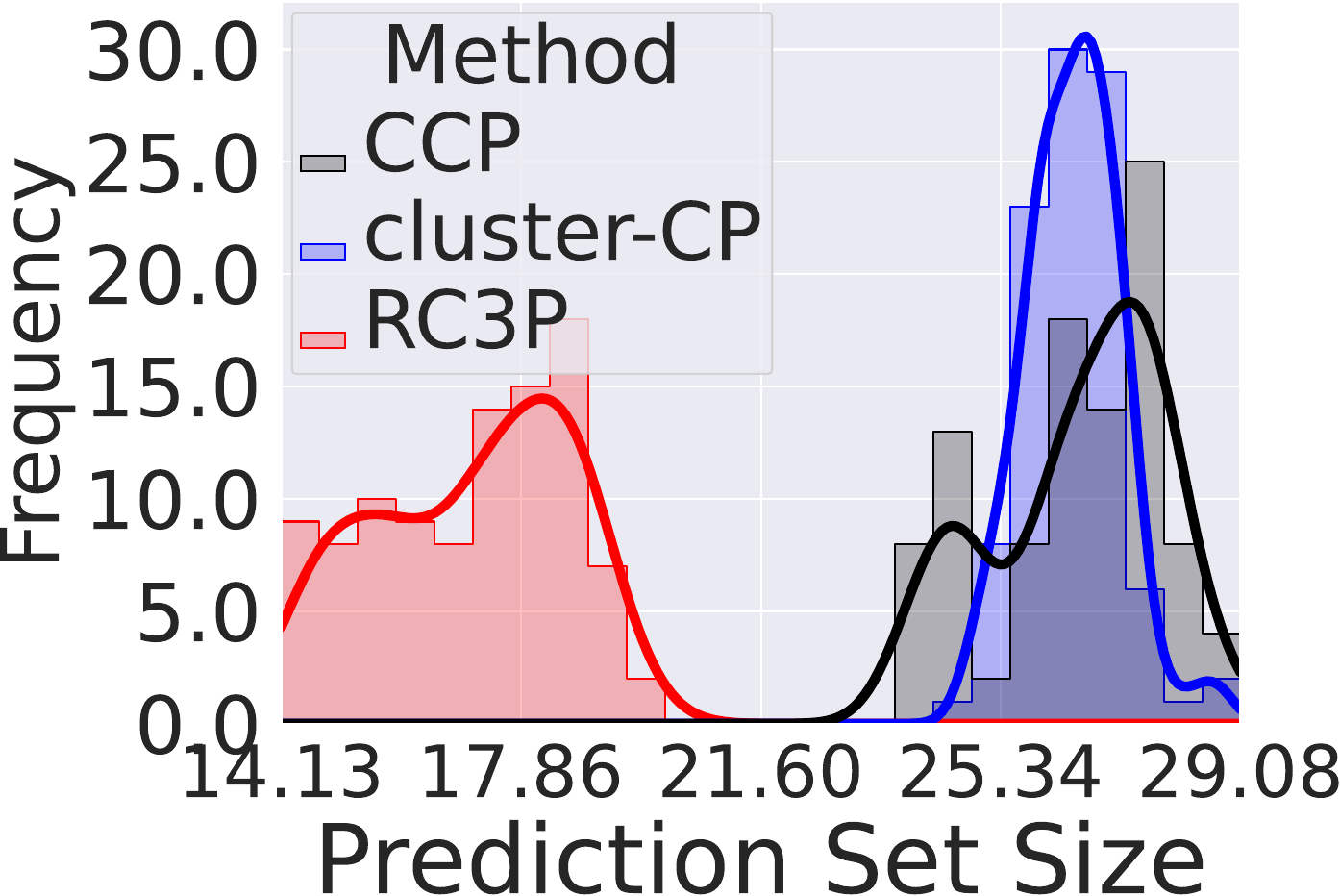}
        \end{minipage}
    }
    \subfigure{
        \begin{minipage}{0.23\linewidth}
            \includegraphics[width=\linewidth]{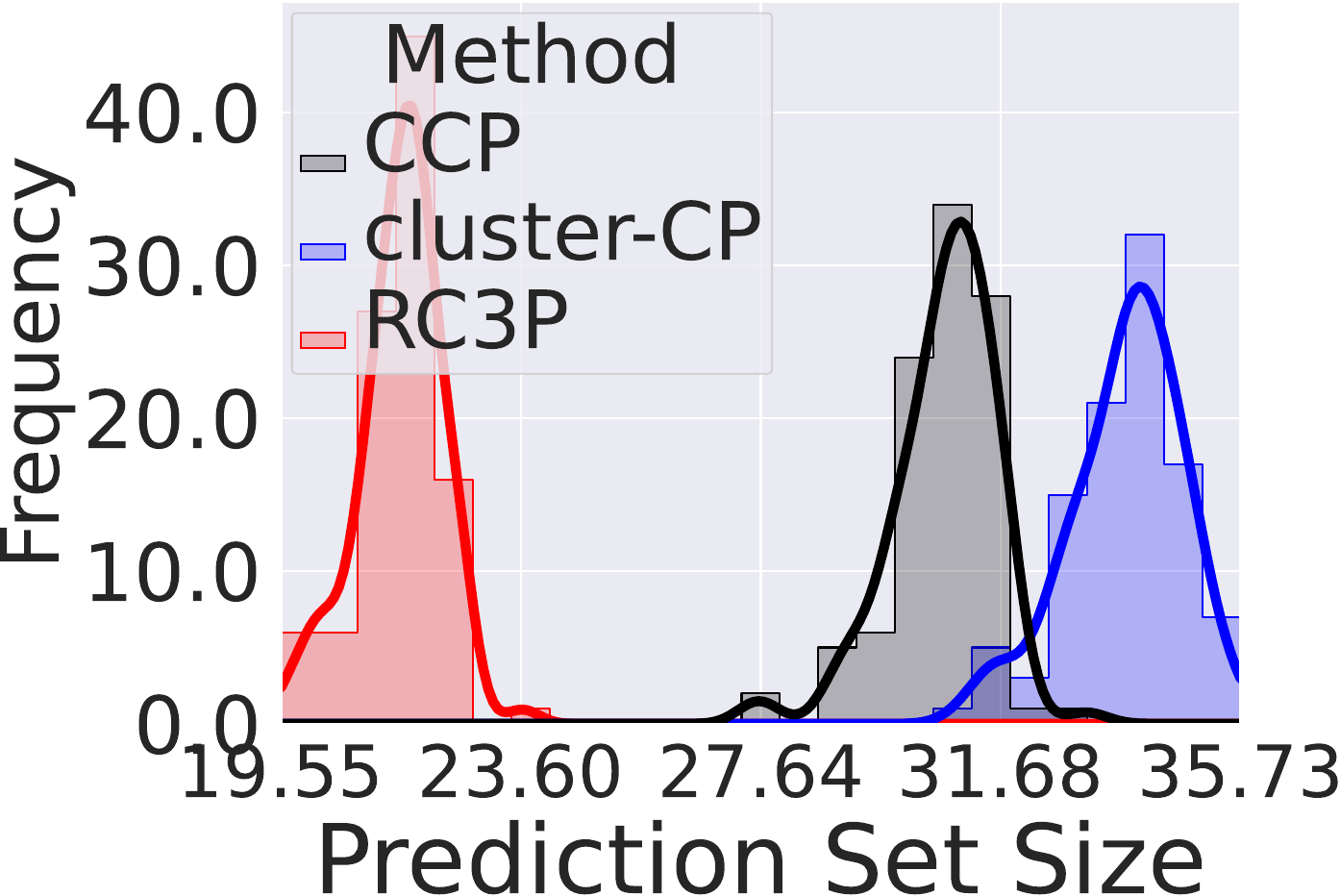}
        \end{minipage}
    }
    \vspace{-0.07in}
    \caption{
    Class-conditional coverage (Top row) and prediction set size (Bottom row) achieved by \texttt{CCP}, \texttt{Cluster-CP}, and \texttt{\newCP} methods when $\alpha = 0.1$ and models are trained with $200$ epochs on four imbalanced datasets with imbalance type \EXP~ $\rho=0.1$.
    We clarify that \texttt{\newCP} overlaps with \texttt{CCP} on CIFAR-10.
    It is clear that \texttt{\newCP} has more densely distributed class-conditional coverage above $0.9$ (the target $1-\alpha$ class-conditional coverage) than \texttt{CCP} and \texttt{Cluster-CP} with significantly smaller prediction sets on CIFAR-100, mini-ImageNet and Food-101.
    }
    \label{fig:overall_comparison_four_datasets_exp_0.1}
\end{figure*}

\begin{figure*}[!ht]
    \centering
    \begin{minipage}{.24\textwidth}
        \centering
        (a) CIFAR-10
    \end{minipage}%
    \begin{minipage}{.24\textwidth}
        \centering
        (b) CIFAR-100
    \end{minipage}%
    \begin{minipage}{.24\textwidth}
        \centering
        (c) mini-ImageNet
    \end{minipage}%
    \begin{minipage}{.24\textwidth}
        \centering
        (d) Food-101
    \end{minipage}
    \subfigure{
        \begin{minipage}{0.23\linewidth}
            \includegraphics[width=\linewidth]{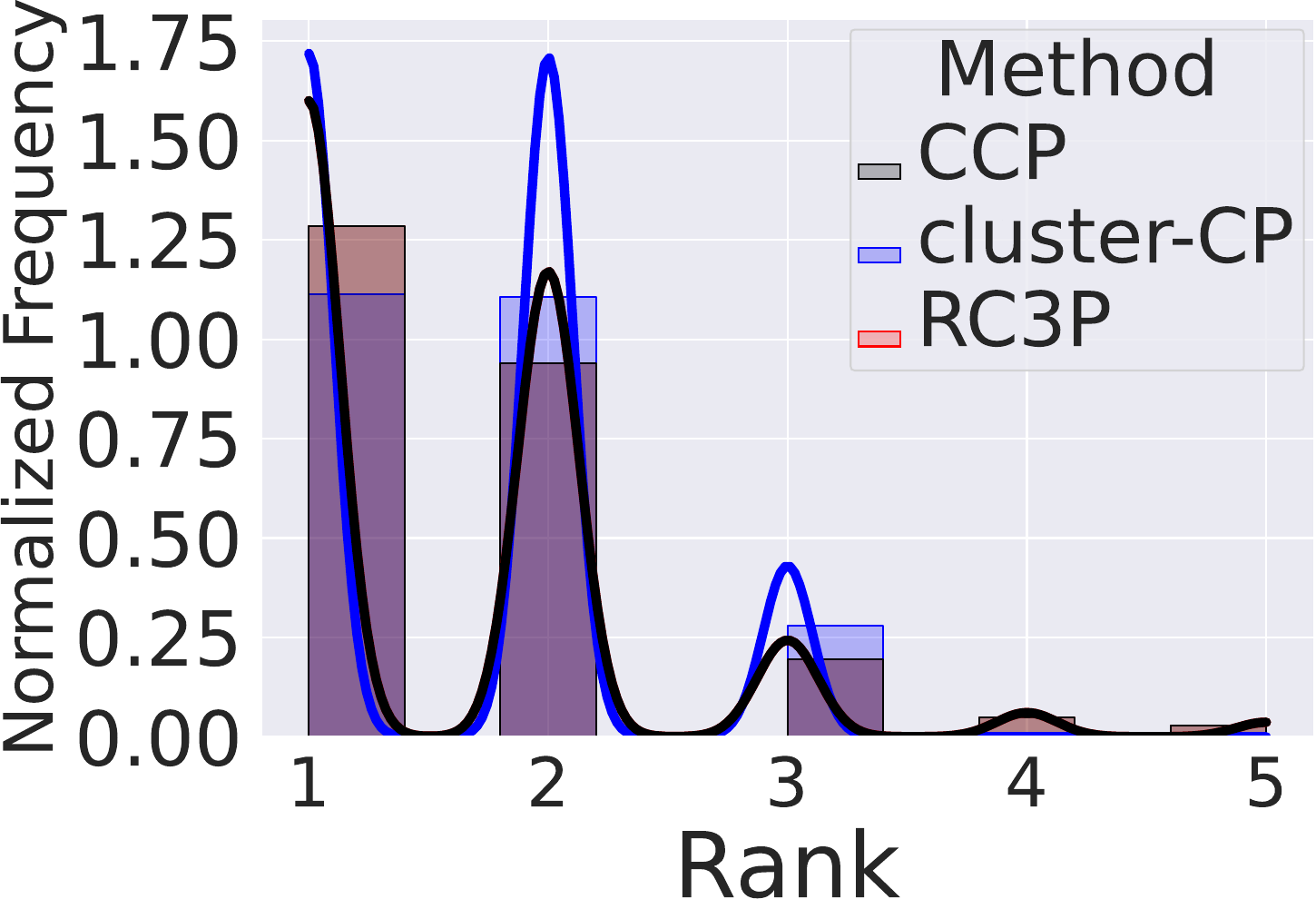}
        \end{minipage}
    }
    \subfigure{
        \begin{minipage}{0.23\linewidth}
            \includegraphics[width=\linewidth]{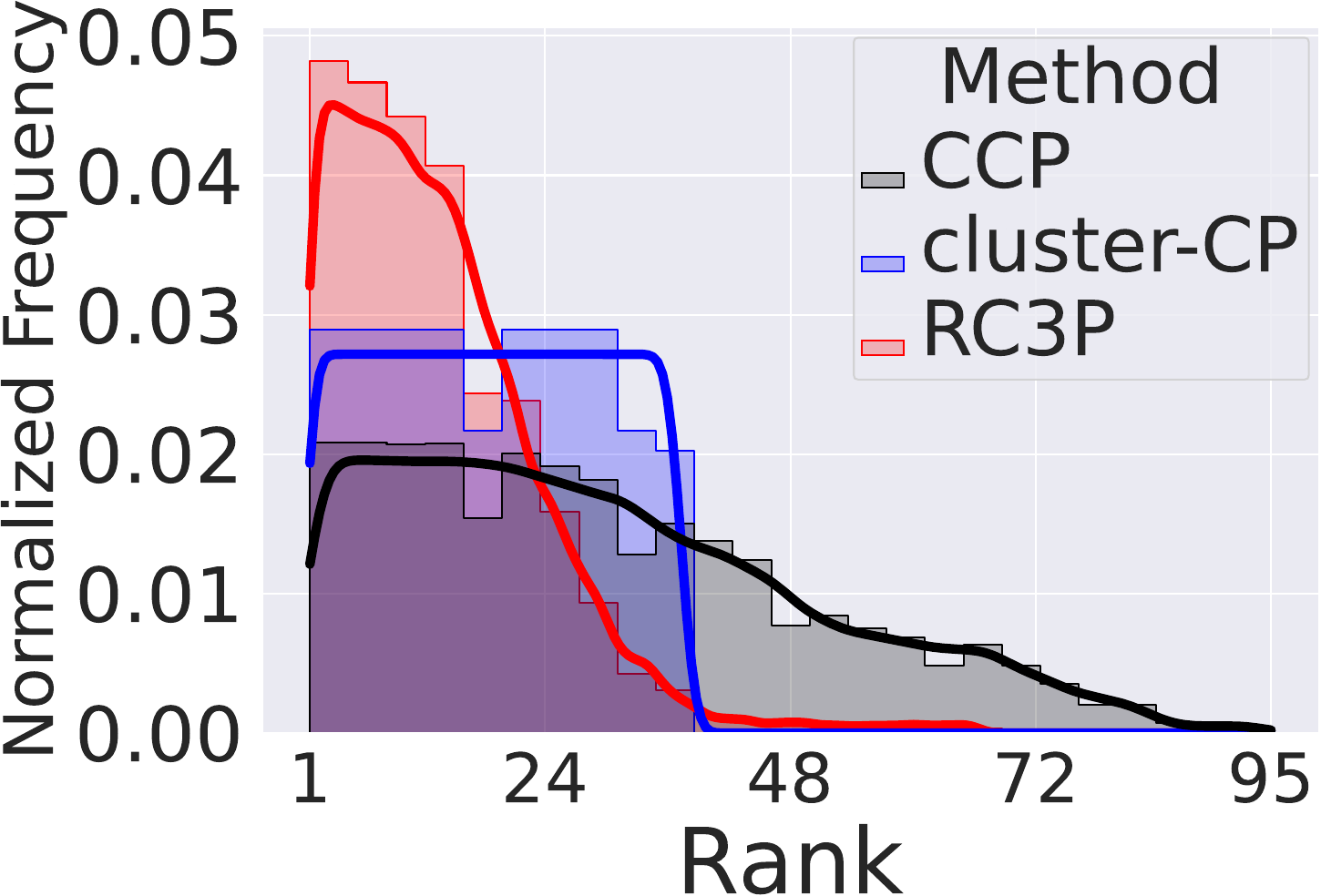}
        \end{minipage}
    }
    \subfigure{
        \begin{minipage}{0.23\linewidth}
            \includegraphics[width=\linewidth]{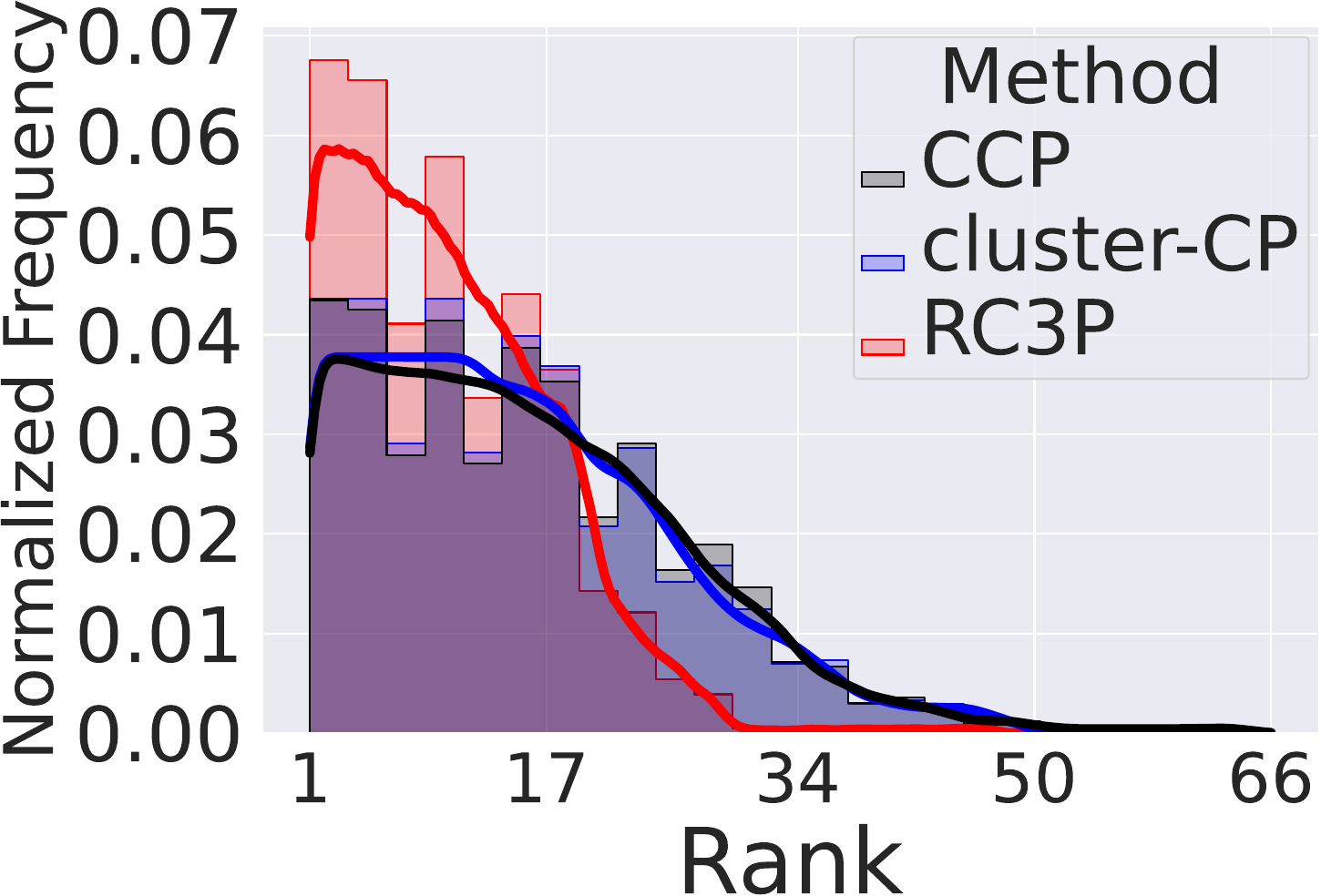}
        \end{minipage}
    }
    \subfigure{
        \begin{minipage}{0.23\linewidth}
            \includegraphics[width=\linewidth]{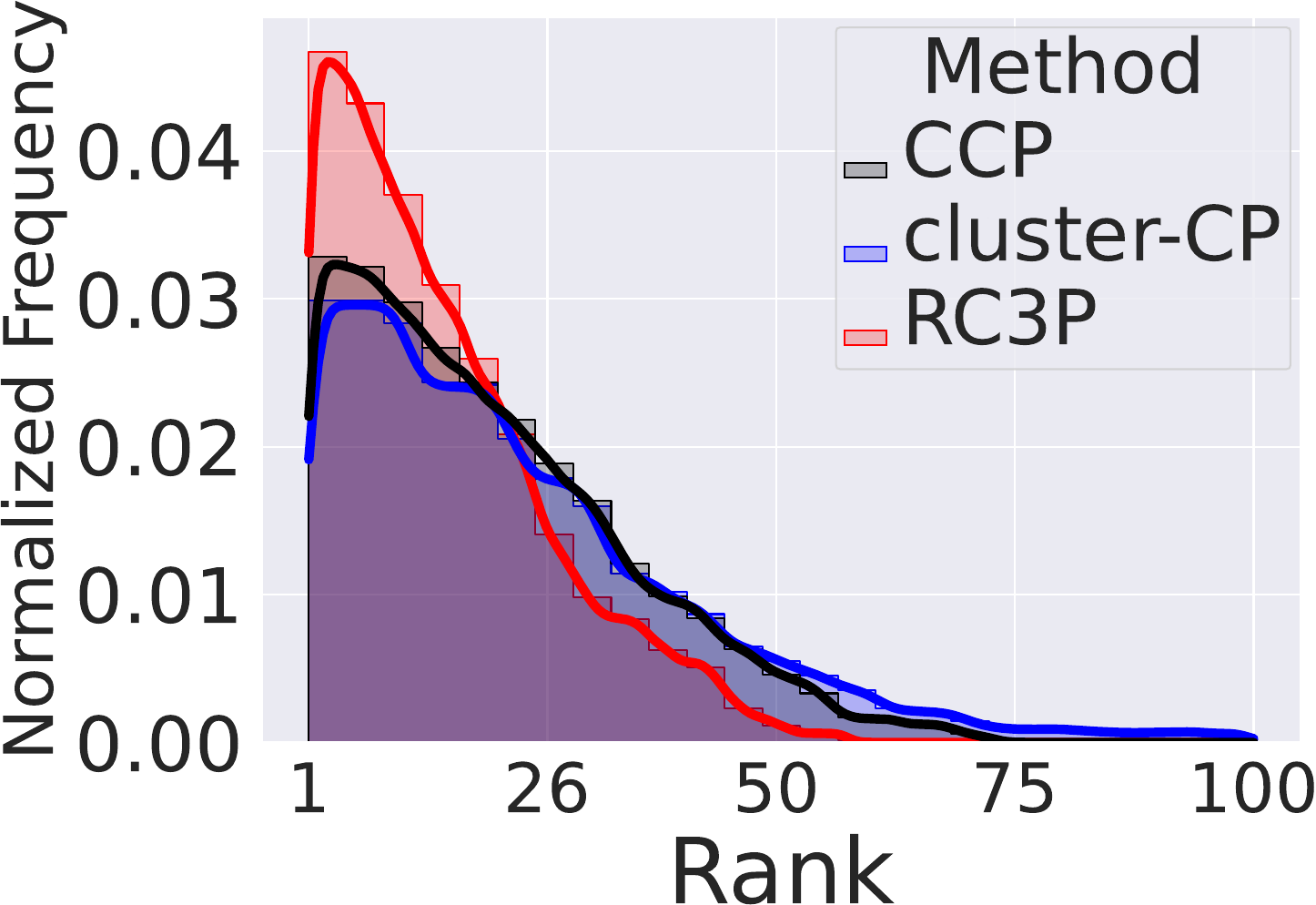}
        \end{minipage}
    }
    \vspace{-0.1in}
    \caption{
    Visualization for the normalized frequency distribution of label ranks included in the prediction set of \texttt{CCP}, \texttt{Cluster-CP}, and \texttt{\newCP} with $\rho=0.1$ for imbalance type \EXP~when $\alpha = 0.1$ and models are trained with $200$ epochs.
    It is clear that the distribution of normalized frequency generated by \texttt{\newCP} tends to be lower compared to those produced by \texttt{CCP} and \texttt{Cluster-CP}.  
    Furthermore, the probability density function tail for label ranks in the \texttt{\newCP} prediction set is notably shorter than that of other methods.
    }
    \label{fig:condition_number_rank_exp_0.1}
\end{figure*}

\noindent \textbf{\newCP~significantly outperforms CCP and Cluster-CP}. 
First, it is clear from Table \ref{tab:overall_comparison_cluster_four_datasets},  \ref{tab:overall_comparison_HPS_four_datasets}, and \ref{tab:overall_comparison_RAPS_four_datasets},
and \ref{tab:overall_comparison_balanced_datasets} that \texttt{\newCP}, \texttt{CCP}, and \texttt{Cluster-CP} guarantee class-conditional coverage on all settings.
This can also be observed by the first row of Fig \ref{fig:overall_comparison_four_datasets_exp_0.1}, where the class-wise coverage bars of \texttt{CCP} and \texttt{\newCP}~distribute on the right-hand side of the target probability $1-\alpha$ (red dashed line).
Second, \texttt{\newCP}~outperforms \texttt{CCP} and \texttt{Cluster-CP} with $24.47\%$ (four datasets) or $32.63\%$ (excluding CIFAR-10) on imbalanced datasets and $32.63\%$ on balanced datasets decrease in terms of average prediction set size for the same class-wise coverage.
We also report the histograms of the corresponding prediction set sizes in the second row of Figure \ref{fig:overall_comparison_four_datasets_exp_0.1}, which shows
(i) \texttt{\newCP}~has more concentrated class-wise coverage distribution than \texttt{CCP} and \texttt{Cluster-CP};
(ii) the distribution of prediction set sizes produced by \texttt{\newCP}~is globally smaller than that produced by \texttt{CCP} and \texttt{Cluster-CP}, which is justified by a better trade-off number of $\{\sigma_y\}_{y=1}^K$ as shown in Figure \ref{fig:condition_number_sigma_y_exp_0.1}.
Note that the class-wise coverage and the corresponding prediction set sizes \texttt{\newCP} overlap with \texttt{CCP} on CIFAR-10 in Figure \ref{fig:overall_comparison_four_datasets_exp_0.1}.

{\bf Visualization of normalized frequency.}
Figure \ref{fig:condition_number_rank_exp_0.1} illustrates the normalized frequency distribution of label ranks included in the prediction sets across various testing datasets. 
It is evident that the distribution of label ranks in the prediction set generated by \texttt{\newCP}~tends to be lower compared to those produced by \texttt{CCP} and \texttt{Cluster-CP}. 
Furthermore, the probability density function tail for label ranks in the \texttt{\newCP}~prediction set is notably shorter than that of other methods. This indicates that \texttt{\newCP}~more effectively incorporates lower-ranked labels into prediction sets, as a result of its augmented rank calibration scheme.

\begin{figure*}[!ht]
    \centering
    \begin{minipage}{.24\textwidth}
        \centering
        (a) CIFAR-10
    \end{minipage}%
    \begin{minipage}{.24\textwidth}
        \centering
        (b) CIFAR-100
    \end{minipage}%
    \begin{minipage}{.24\textwidth}
        \centering
        (c) mini-ImageNet
    \end{minipage}%
    \begin{minipage}{.24\textwidth}
        \centering
        (d) Food-101
    \end{minipage}
    \subfigure{
        \begin{minipage}{0.23\linewidth}
            \includegraphics[width=\linewidth]{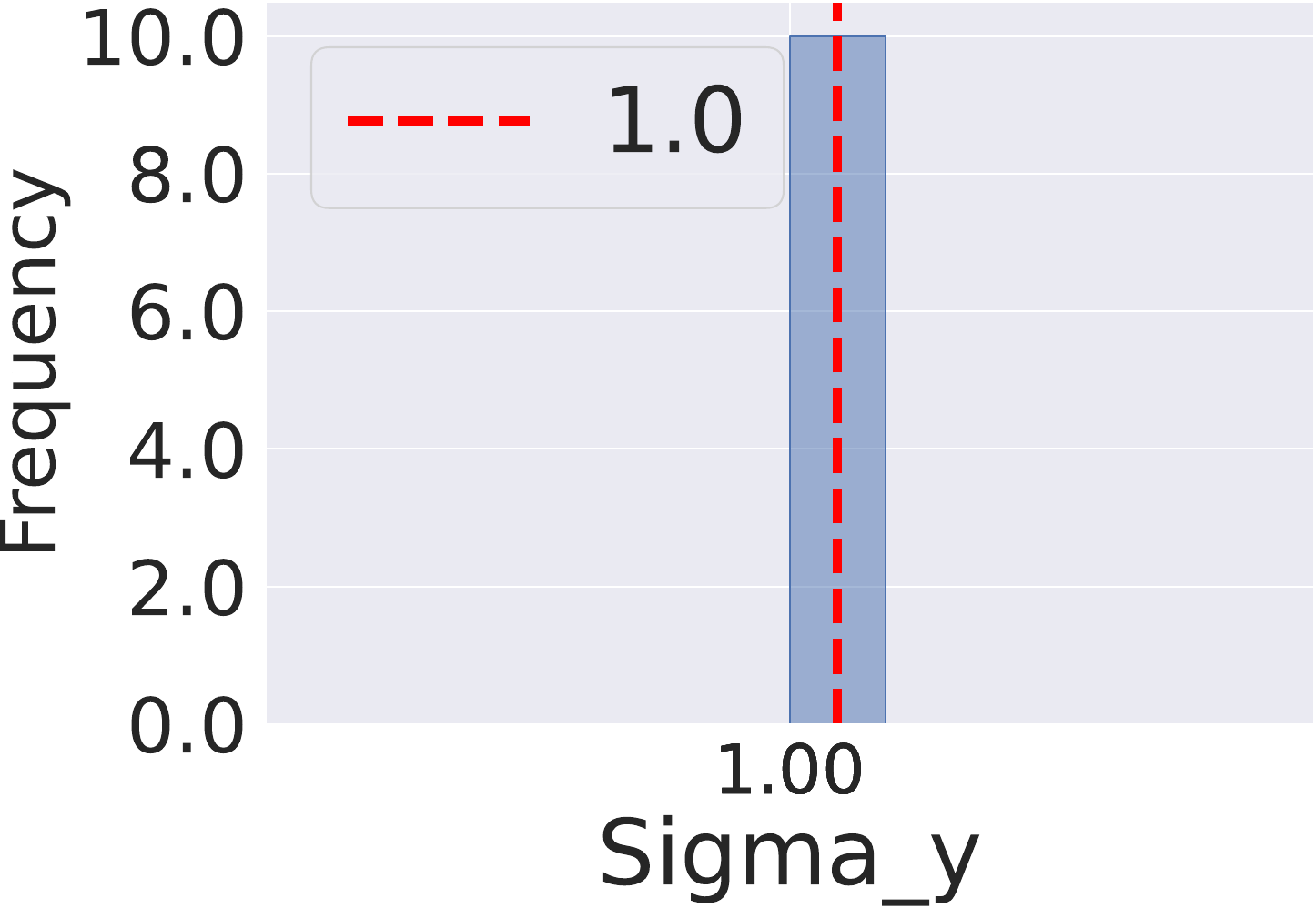}
        \end{minipage}
    }
    \subfigure{
        \begin{minipage}{0.23\linewidth}
            \includegraphics[width=\linewidth]{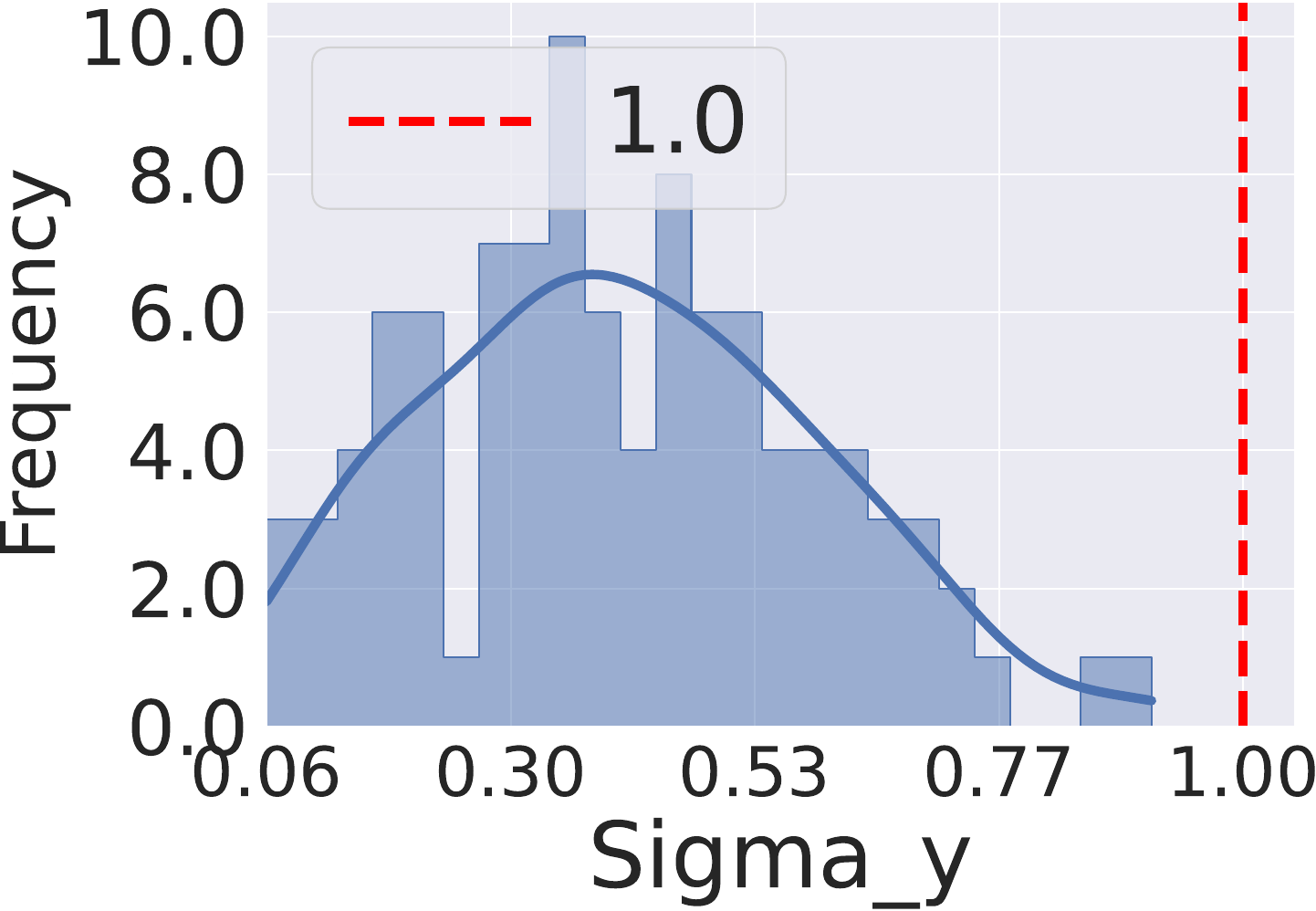}
        \end{minipage}
    }
    \subfigure{
        \begin{minipage}{0.23\linewidth}
            \includegraphics[width=\linewidth]{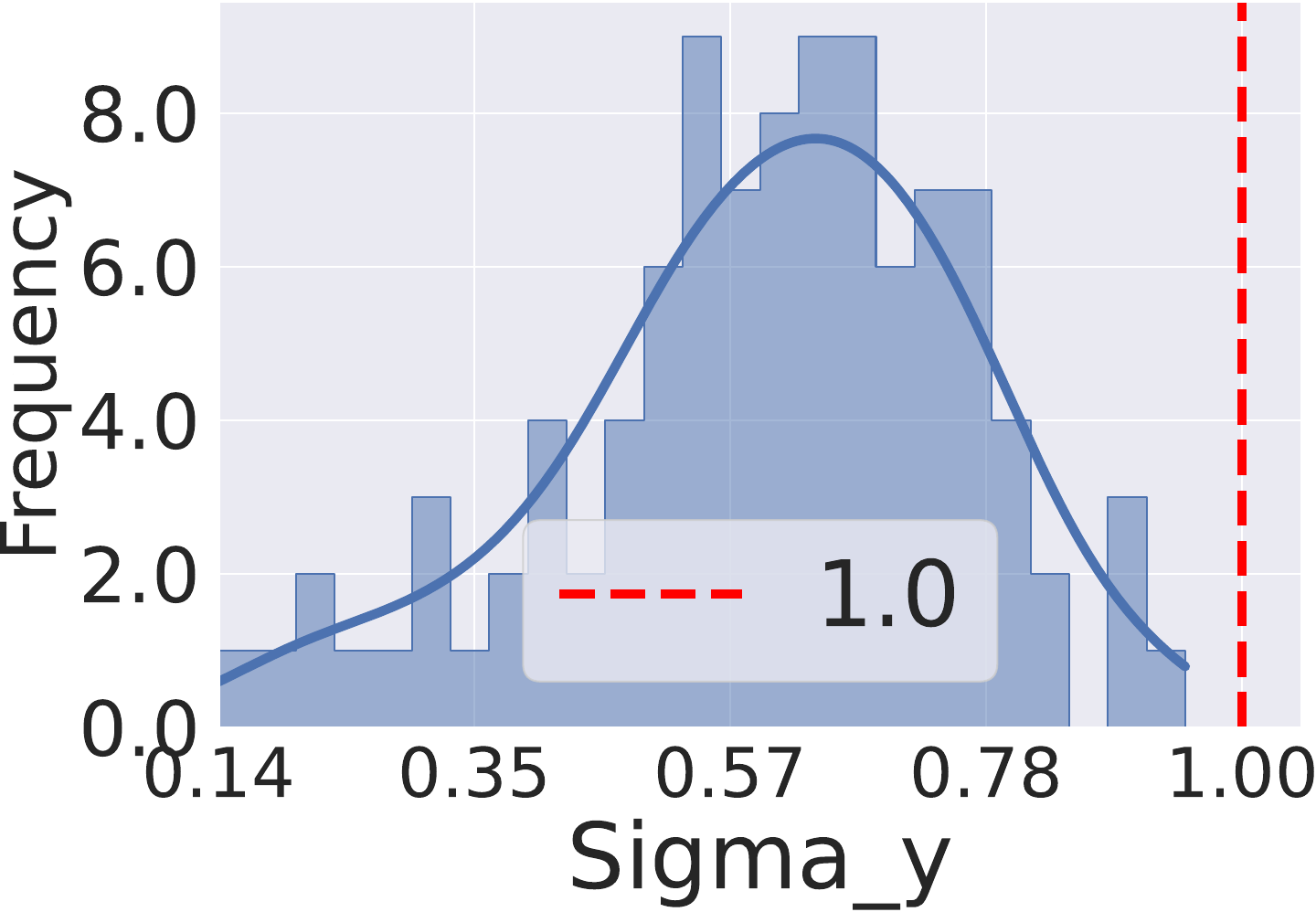}
        \end{minipage}
    }
    \subfigure{
        \begin{minipage}{0.23\linewidth}
            \includegraphics[width=\linewidth]{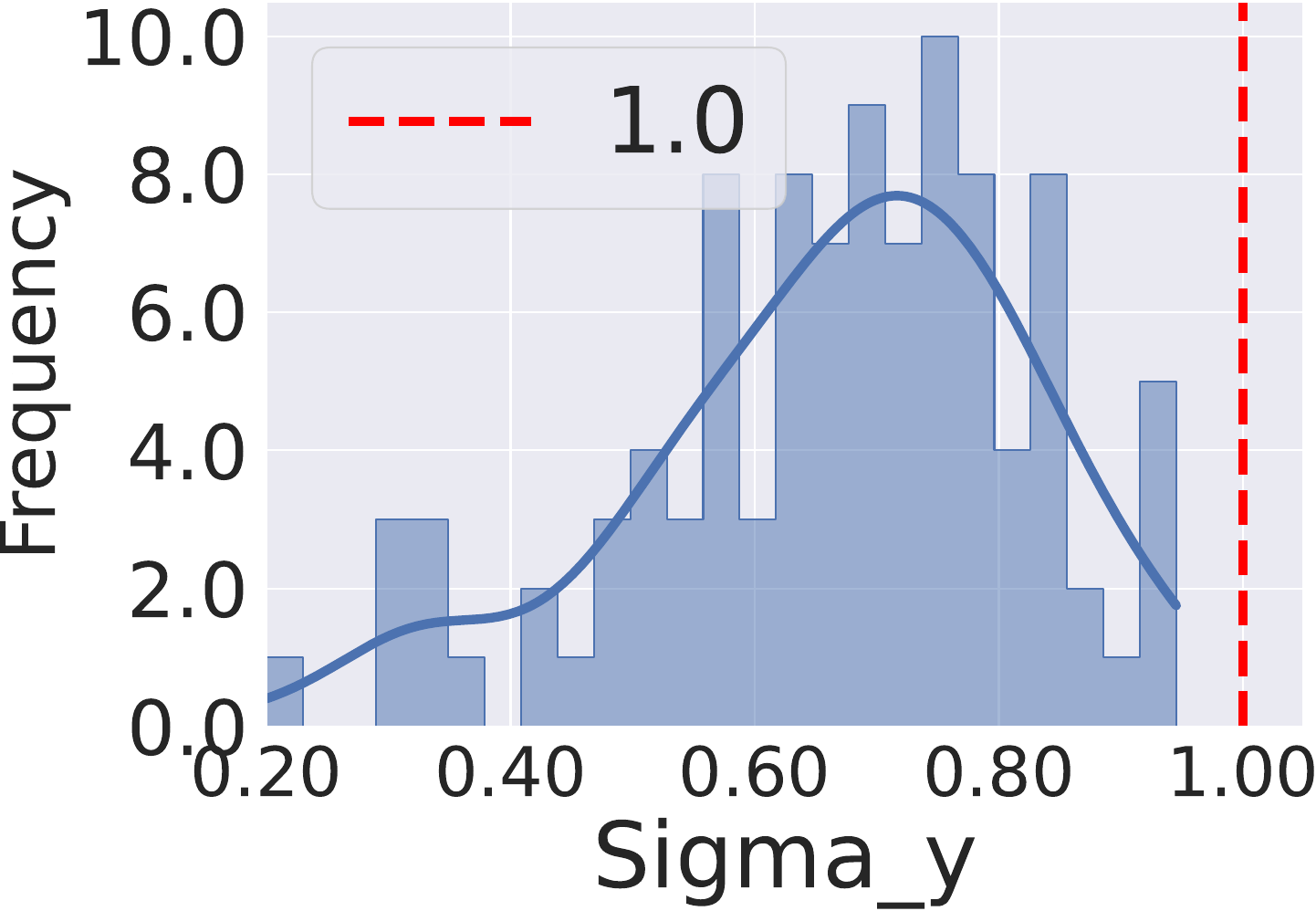}
        \end{minipage}
    }
    \vspace{-0.10in}
    \caption{
    Verification of condition numbers $\{\sigma_y\}_{y=1}^K$ in Equation \ref{eq:sigma_y_defination} with imbalance type \EXP, $\rho=0.1$ when $\alpha = 0.1$ and models are trained with $200$ epochs.
    Vertical dashed lines represent the value $1$, and we observe that all the condition numbers are smaller than $1$.
    This verifies the validity of the condition for Lemma \ref{lemma:RC3P_improved_efficiency}, and thus confirms that \texttt{\newCP}~produces smaller prediction sets than \texttt{CCP} using calibration on both non-conformity scores and label ranks.
    }
    \label{fig:condition_number_sigma_y_exp_0.1}
\end{figure*}

{\bf Verification of $\sigma_y$.}
Figure \ref{fig:condition_number_sigma_y_exp_0.1} verifies the validity of Equation (\ref{eq:sigma_y_defination}) on testing datasets and confirms the optimized trade-off between the coverage with inflated quantile and the constraint with calibrated label rank leads to smaller prediction sets.
It also confirms that the condition number $\{\sigma_y\}_{y=1}^K$ could be evaluated on calibration datasets without testing datasets and thus decreases the overall computation cost.
We verify that $\sigma_y \leq 1$ for all settings and $\sigma_y$ is much smaller than $1$ on all datasets with large number of classes.

\section{Summary}
\vspace{-1.0ex}

This paper studies a provable conformal prediction (CP) algorithm that aims to provide class-conditional coverage guarantee and to produce small prediction sets for  classification tasks with many and/or imbalanced classes. 
Our proposed \newCP~algorithm performs double-calibration, one over conformity score and one over label rank for each class separately, to achieve this goal.
Our experiments clearly demonstrate the significant efficacy of \newCP~over the baseline class-conditional CP algorithms on both balanced and imbalanced classification data settings.

\vspace{1.0ex}

\noindent{\bf Acknowledgments.} This research was supported in part by United States Department of Agriculture (USDA) NIFA award No. 2021-67021-35344 (AgAID AI Institute) and by NSF CNS-2312125 grant.
\clearpage

\bibliographystyle{plainnat}
\bibliography{icp_ref}

\begin{thebibliography}{52}
\providecommand{\natexlab}[1]{#1}
\providecommand{\url}[1]{\texttt{#1}}
\expandafter\ifx\csname urlstyle\endcsname\relax
  \providecommand{\doi}[1]{doi: #1}\else
  \providecommand{\doi}{doi: \begingroup \urlstyle{rm}\Url}\fi

\bibitem{ghosh2023probabilistically}Ghosh, S., Shi, Y., Belkhouja, T., Yan, Y., Doppa, J. \& Jones, B. Probabilistically robust conformal prediction. {\em Uncertainty In Artificial Intelligence}. pp. 681-690 (2023)

\bibitem{straitouriicml2023improving}Straitouri, E., Wang, L., Okati, N. \& Rodriguez, M. Improving Expert Predictions with Conformal Prediction. {\em International Conference On Machine Learning (ICML)}. (2023)

\bibitem{babbarijcai2022utility}Babbar, V., Bhatt, U. \& Weller, A. On the utility of prediction sets in human-ai teams. {\em Proceedings Of The Thirty-First International Joint Conference On Artificial Intelligence (IJCAI-22)}. (2022)

\bibitem{cui2019class}Cui, Y., Jia, M., Lin, T., Song, Y. \& Belongie, S. Class-balanced loss based on effective number of samples. {\em Proceedings Of The IEEE/CVF Conference On Computer Vision And Pattern Recognition}. pp. 9268-9277 (2019)

\bibitem{cao2019learning}Cao, K., Wei, C., Gaidon, A., Arechiga, N. \& Ma, T. Learning imbalanced datasets with label-distribution-aware margin loss. {\em Advances In Neural Information Processing Systems}. \textbf{32} (2019)

\bibitem{krizhevsky2009learning}Krizhevsky, A., Hinton, G. \& Others Learning multiple layers of features from tiny images. (Toronto, ON, Canada,2009)

\bibitem{deng2009imagenet}Deng, J., Dong, W., Socher, R., Li, L., Li, K. \& Fei-Fei, L. Imagenet: A large-scale hierarchical image database. {\em 2009 IEEE Conference On Computer Vision And Pattern Recognition}. pp. 248-255 (2009)

\bibitem{he2016deep}He, K., Zhang, X., Ren, S. \& Sun, J. Deep residual learning for image recognition. {\em Proceedings Of The IEEE Conference On Computer Vision And Pattern Recognition}. pp. 770-778 (2016)

\bibitem{romano2020classification}Romano, Y., Sesia, M. \& Candes, E. Classification with valid and adaptive coverage. {\em Advances In Neural Information Processing Systems}. \textbf{33} pp. 3581-3591 (2020)

\bibitem{menon2020long}Menon, A., Jayasumana, S., Rawat, A., Jain, H., Veit, A. \& Kumar, S. Long-tail learning via logit adjustment. {\em ArXiv Preprint ArXiv:2007.07314}. (2020)

\bibitem{vovk2005algorithmic}Vovk, V., Gammerman, A. \& Shafer, G. Algorithmic learning in a random world. (Springer,2005)

\bibitem{mohri2018foundations}Mohri, M., Rostamizadeh, A. \& Talwalkar, A. Foundations of machine learning. (MIT press,2018)

\bibitem{lei2013distribution}Lei, J., Robins, J. \& Wasserman, L. Distribution-free prediction sets. {\em Journal Of The American Statistical Association}. \textbf{108}, 278-287 (2013)

\bibitem{vovk1999machine}Vovk, V., Gammerman, A. \& Saunders, C. Machine-learning applications of algorithmic randomness.  (1999)

\bibitem{shafer2008tutorial}Shafer, G. \& Vovk, V. A Tutorial on Conformal Prediction.. {\em Journal Of Machine Learning Research}. \textbf{9} (2008)

\bibitem{gibbs2021adaptive}Gibbs, I. \& Candes, E. Adaptive conformal inference under distribution shift. {\em Advances In Neural Information Processing Systems}. \textbf{34} pp. 1660-1672 (2021)

\bibitem{tibshirani2019conformal}Tibshirani, R., Foygel Barber, R., Candes, E. \& Ramdas, A. Conformal prediction under covariate shift. {\em Advances In Neural Information Processing Systems}. \textbf{32} (2019)

\bibitem{sadinle2019least}Sadinle, M., Lei, J. \& Wasserman, L. Least ambiguous set-valued classifiers with bounded error levels. {\em Journal Of The American Statistical Association}. \textbf{114}, 223-234 (2019)

\bibitem{guan2023localized}Guan, L. Localized conformal prediction: A generalized inference framework for conformal prediction. {\em Biometrika}. \textbf{110}, 33-50 (2023)

\bibitem{angelopoulos2020uncertainty}Angelopoulos, A., Bates, S., Malik, J. \& Jordan, M. Uncertainty sets for image classifiers using conformal prediction. {\em ArXiv Preprint ArXiv:2009.14193}. (2020)

\bibitem{angelopoulos2021gentle}Angelopoulos, A. \& Bates, S. A gentle introduction to conformal prediction and distribution-free uncertainty quantification. {\em ArXiv Preprint ArXiv:2107.07511}. (2021)

\bibitem{chawla2002smote}Chawla, N., Bowyer, K., Hall, L. \& Kegelmeyer, W. SMOTE: synthetic minority over-sampling technique. {\em Journal Of Artificial Intelligence Research}. \textbf{16} pp. 321-357 (2002)

\bibitem{tsai2019under}Tsai, C., Lin, W., Hu, Y. \& Yao, G. Under-sampling class imbalanced datasets by combining clustering analysis and instance selection. {\em Information Sciences}. \textbf{477} pp. 47-54 (2019)

\bibitem{mohammed2020machine}Mohammed, R., Rawashdeh, J. \& Abdullah, M. Machine learning with oversampling and undersampling techniques: overview study and experimental results. {\em 2020 11th International Conference On Information And Communication Systems (ICICS)}. pp. 243-248 (2020)

\bibitem{krawczyk2019radial}Krawczyk, B., Koziarski, M. \& Woźniak, M. Radial-based oversampling for multiclass imbalanced data classification. {\em IEEE Transactions On Neural Networks And Learning Systems}. \textbf{31}, 2818-2831 (2019)

\bibitem{vuttipittayamongkol2020neighbourhood}Vuttipittayamongkol, P. \& Elyan, E. Neighbourhood-based undersampling approach for handling imbalanced and overlapped data. {\em Information Sciences}. \textbf{509} pp. 47-70 (2020)

\bibitem{huang2019deep}Huang, C., Li, Y., Loy, C. \& Tang, X. Deep imbalanced learning for face recognition and attribute prediction. {\em IEEE Transactions On Pattern Analysis And Machine Intelligence}. \textbf{42}, 2781-2794 (2019)

\bibitem{madabushi2020cost}Madabushi, H., Kochkina, E. \& Castelle, M. Cost-sensitive BERT for generalisable sentence classification with imbalanced data. {\em ArXiv Preprint ArXiv:2003.11563}. (2020)

\bibitem{gottlieb2021apportioned}Gottlieb, L., Kaufman, E. \& Kontorovich, A. Apportioned margin approach for cost sensitive large margin classifiers. {\em Annals Of Mathematics And Artificial Intelligence}. \textbf{89}, 1215-1235 (2021)

\bibitem{vovk2012conditional}Vovk, V. Conditional validity of inductive conformal predictors. {\em Asian Conference On Machine Learning}. pp. 475-490 (2012)

\bibitem{vinyals2016matching}Vinyals, O., Blundell, C., Lillicrap, T., Wierstra, D. \& Others Matching networks for one shot learning. {\em Advances In Neural Information Processing Systems}. \textbf{29} (2016)

\bibitem{bossard2014food}Bossard, L., Guillaumin, M. \& Van Gool, L. Food-101–mining discriminative components with random forests. {\em Computer Vision–ECCV 2014: 13th European Conference, Zurich, Switzerland, September 6-12, 2014, Proceedings, Part VI 13}. pp. 446-461 (2014)

\bibitem{ding2024class}Ding, T., Angelopoulos, A., Bates, S., Jordan, M. \& Tibshirani, R. Class-conditional conformal prediction with many classes. {\em Advances In Neural Information Processing Systems}. \textbf{36} (2024)


\bibitem{romano2019conformalized}Romano, Y., Patterson, E. \& Candes, E. Conformalized quantile regression. {\em Advances In Neural Information Processing Systems}. \textbf{32} (2019)

\bibitem{huang2023conformal}Huang, J., Xi, H., Zhang, L., Yao, H., Qiu, Y. \& Wei, H. Conformal Prediction for Deep Classifier via Label Ranking. {\em ArXiv Preprint ArXiv:2310.06430}. (2023)

\bibitem{ghosh2023improving}Ghosh, S., Belkhouja, T., Yan, Y. \& Doppa, J. Improving uncertainty quantification of deep classifiers via neighborhood conformal prediction: Novel algorithm and theoretical analysis. {\em Proceedings Of The AAAI Conference On Artificial Intelligence}. \textbf{37}, 7722-7730 (2023)


\bibitem{bostrom2021mondrian}Boström, H., Johansson, U. \& Löfström, T. Mondrian conformal predictive distributions. {\em Conformal And Probabilistic Prediction And Applications}. pp. 24-38 (2021)

\bibitem{romano2020malice}Romano, Y., Barber, R., Sabatti, C. \& Candès, E. With malice toward none: Assessing uncertainty via equalized coverage. {\em Harvard Data Science Review}. \textbf{2}, 4 (2020)

\bibitem{zaffran2023conformal}Zaffran, M., Dieuleveut, A., Josse, J. \& Romano, Y. Conformal prediction with missing values. {\em ArXiv Preprint ArXiv:2306.02732}. (2023)

\bibitem{fisch2021few}Fisch, A., Schuster, T., Jaakkola, T. \& Barzilay, R. Few-shot conformal prediction with auxiliary tasks. {\em International Conference On Machine Learning}. pp. 3329-3339 (2021)

\bibitem{lu2022fair}Lu, C., Lemay, A., Chang, K., Höbel, K. \& Kalpathy-Cramer, J. Fair conformal predictors for applications in medical imaging. {\em Proceedings Of The AAAI Conference On Artificial Intelligence}. \textbf{36}, 12008-12016 (2022)

\bibitem{fontana2023conformal}Fontana, M., Zeni, G. \& Vantini, S. Conformal prediction: a unified review of theory and new challenges. {\em Bernoulli}. \textbf{29}, 1-23 (2023)

\bibitem{fisch2020efficient}Fisch, A., Schuster, T., Jaakkola, T. \& Barzilay, R. Efficient conformal prediction via cascaded inference with expanded admission. {\em ArXiv Preprint ArXiv:2007.03114}. (2020)

\bibitem{vovk2003mondrian}Vovk, V., Lindsay, D., Nouretdinov, I. \& Gammerman, A. Mondrian confidence machine. {\em Technical Report}. (2003)

\bibitem{lei2014distribution}Lei, J. \& Wasserman, L. Distribution-free prediction bands for non-parametric regression. {\em Journal Of The Royal Statistical Society Series B: Statistical Methodology}. \textbf{76}, 71-96 (2014)

\bibitem{vovk2016criteria}Vovk, V., Fedorova, V., Nouretdinov, I. \& Gammerman, A. Criteria of efficiency for conformal prediction. {\em Conformal And Probabilistic Prediction With Applications: 5th International Symposium, COPA 2016, Madrid, Spain, April 20-22, 2016, Proceedings 5}. pp. 23-39 (2016)

\bibitem{vazquez2022conformal}Vazquez, J. \& Facelli, J. Conformal prediction in clinical medical sciences. {\em Journal Of Healthcare Informatics Research}. \textbf{6}, 241-252 (2022)

\bibitem{lu2022improving}Lu, C., Angelopoulos, A. \& Pomerantz, S. Improving trustworthiness of ai disease severity rating in medical imaging with ordinal conformal prediction sets. {\em International Conference On Medical Image Computing And Computer-Assisted Intervention}. pp. 545-554 (2022)

\bibitem{bates2021distribution}Bates, S., Angelopoulos, A., Lei, L., Malik, J. \& Jordan, M. Distribution-free, risk-controlling prediction sets. {\em Journal Of The ACM (JACM)}. \textbf{68}, 1-34 (2021)

\bibitem{angelopoulos2022conformal}Angelopoulos, A., Bates, S., Fisch, A., Lei, L. \& Schuster, T. Conformal risk control. {\em ArXiv Preprint ArXiv:2208.02814}. (2022)

\bibitem{cohen2024cross}Cohen, K., Park, S., Simeone, O. \& Shamai, S. Cross-Validation Conformal Risk Control. {\em ArXiv Preprint ArXiv:2401.11974}. (2024)

\bibitem{xu2023conformal}Xu, Y., Guo, W. \& Wei, Z. Conformal risk control for ordinal classification. {\em Uncertainty In Artificial Intelligence}. pp. 2346-2355 (2023)

\bibitem{lei2021conformal}Lei, L. \& Candès, E. Conformal inference of counterfactuals and individual treatment effects. {\em Journal Of The Royal Statistical Society Series B: Statistical Methodology}. \textbf{83}, 911-938 (2021)

\bibitem{barber2023conformal}Barber, R., Candes, E., Ramdas, A. \& Tibshirani, R. Conformal prediction beyond exchangeability. {\em The Annals Of Statistics}. \textbf{51}, 816-845 (2023)

\bibitem{gibbs2022conformal}Gibbs, I. \& Candès, E. Conformal inference for online prediction with arbitrary distribution shifts. {\em ArXiv Preprint ArXiv:2208.08401}. (2022)

\bibitem{jin2023modelfree}Jin, Y. \& Candès, E. Model-free selective inference under covariate shift via weighted conformal p-values.  (2023)

\bibitem{huang2023uncertainty}Huang, K., Jin, Y., Candès, E. \& Leskovec, J. Uncertainty Quantification over Graph with Conformalized Graph Neural Networks. {\em NeurIPS 2023}. (2023)

\bibitem{lei2018distribution}Lei, J., G’Sell, M., Rinaldo, A., Tibshirani, R. \& Wasserman, L. Distribution-free predictive inference for regression. {\em Journal Of The American Statistical Association}. \textbf{113}, 1094-1111 (2018)

\bibitem{bhatnagar2023improved}Bhatnagar, A., Wang, H., Xiong, C. \& Bai, Y. Improved online conformal prediction via strongly adaptive online learning. {\em ArXiv Preprint ArXiv:2302.07869}. (2023)

\bibitem{feldman2023calibrated}Feldman, S., Bates, S. \& Romano, Y. Calibrated multiple-output quantile regression with representation learning. {\em Journal Of Machine Learning Research}. \textbf{24}, 1-48 (2023)

\bibitem{jockel2023conformal}Jöckel, L., Kläs, M., Groß, J. \& Gerber, P. Conformal prediction and uncertainty wrapper: What statistical guarantees can you get for uncertainty quantification in machine learning?. {\em International Conference On Computer Safety, Reliability, And Security}. pp. 314-327 (2023)

\bibitem{sun2024conformal}Sun, J., Jiang, Y., Qiu, J., Nobel, P., Kochenderfer, M. \& Schwager, M. Conformal prediction for uncertainty-aware planning with diffusion dynamics model. {\em Advances In Neural Information Processing Systems}. \textbf{36} (2024)

\bibitem{dunn2018distribution}Dunn, R., Wasserman, L. \& Ramdas, A. Distribution-free prediction sets with random effects. {\em ArXiv Preprint ArXiv:1809.07441}. (2018)

\bibitem{huang2024conformal} Huang, J., Xi, H., Zhang, L., Yao, H., Qiu, Y., Wei, H. Conformal Prediction for Deep Classifier via Label Ranking. {\em ArXiv Preprint ArXiv:2310.06430}. (2024)

\bibitem{zhou2017places}Zhou, B., Lapedriza, A., Khosla, A., Oliva, A. \& Torralba, A. Places: A 10 million image database for scene recognition. {\em IEEE Transactions On Pattern Analysis And Machine Intelligence}. \textbf{40}, 1452-1464 (2017)

\bibitem{van2018inaturalist}Van Horn, G., Mac Aodha, O., Song, Y., Cui, Y., Sun, C., Shepard, A., Adam, H., Perona, P. \& Belongie, S. The inaturalist species classification and detection dataset. {\em Proceedings Of The IEEE Conference On Computer Vision And Pattern Recognition}. pp. 8769-8778 (2018)

\bibitem{russakovsky2015imagenet}Russakovsky, O., Deng, J., Su, H., Krause, J., Satheesh, S., Ma, S., Huang, Z., Karpathy, A., Khosla, A., Bernstein, M. \& Others Imagenet large scale visual recognition challenge. {\em International Journal Of Computer Vision}. \textbf{115} pp. 211-252 (2015)

\bibitem{paszke2019pytorch}Paszke, A., Gross, S., Massa, F., Lerer, A., Bradbury, J., Chanan, G., Killeen, T., Lin, Z., Gimelshein, N., Antiga, L. \& Others Pytorch: An imperative style, high-performance deep learning library. {\em Advances In Neural Information Processing Systems}. \textbf{32} (2019)

\bibitem{chen2020big}Chen, T., Kornblith, S., Swersky, K., Norouzi, M. \& Hinton, G. Big self-supervised models are strong semi-supervised learners. {\em Advances In Neural Information Processing Systems}. \textbf{33} pp. 22243-22255 (2020)

\bibitem{gibbs2024conformal}Gibbs, I. \& Candès, E. Conformal inference for online prediction with arbitrary distribution shifts. {\em Journal Of Machine Learning Research}. \textbf{25}, 1-36 (2024)

\bibitem{huang2024uncertainty}Huang, K., Jin, Y., Candes, E. \& Leskovec, J. Uncertainty quantification over graph with conformalized graph neural networks. {\em Advances In Neural Information Processing Systems}. \textbf{36} (2024)

\bibitem{jin2023model}Jin, Y. \& Candès, E. Model-free selective inference under covariate shift via weighted conformal p-values. {\em ArXiv Preprint ArXiv:2307.09291}. (2023)

\end{thebibliography}

\newpage

\appendix

\section{Technical Proofs of Theoretical Results}
\label{section:appendix:proofs}

\subsection{Proof of Theorem \ref{theorem:class_conditional_coverage_RC3P} }
\label{subsection:appendix:proof_of_class_conditional_coverage_RC3P}

\begin{theorem}
\label{theorem:appendix:class_conditional_coverage_RC3P}
(Theorem \ref{theorem:class_conditional_coverage_RC3P} restated, class-conditional coverage of \newCP) 
Suppose that selecting $\widehat k(y)$ values result in the class-wise top-$k$ error $\epsilon_y^{\widehat k(y)}$ for each class $y \in \calY$.
For a target class-conditional coverage $1-\alpha$, if we set $\widehat \alpha_y$ and $\widehat k(y)$ in \newCP~ (\ref{eq:RC3P_set_predictor}) in the following ranges:
\begin{align}
\widehat k(y) \in \{k: \epsilon_y^k < \alpha \}, \quad
0 \leq \widehat \alpha_y \leq \alpha - \epsilon_y^{\widehat k(y)}, 
\end{align} 
then \newCP~can achieve the class-conditional coverage for every $y \in \calY$:
\begin{align*}
\P_{ (X, Y) \sim \calP } \{ Y \in \widehat \calC^\newCP_{1-\alpha}(X) | Y = y \} 
\geq 
1 - \alpha
.
\end{align*}
\end{theorem}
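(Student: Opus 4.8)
The plan is to decompose the event $\{Y \in \widehat\calC^{\newCP}_{1-\alpha}(X)\}$ conditioned on $Y = y$ into the two defining constraints of the prediction set in Eq.~(\ref{eq:RC3P_set_predictor}) --- the conformal score constraint and the label rank constraint --- and to union-bound the two failure events. Concretely, writing $A = \{V(X,y) \le \widehat Q^\class_{1-\widehat\alpha_y}(y)\}$ and $B = \{r_f(X,y) \le \widehat k(y)\}$, so that $Y \in \widehat\calC^{\newCP}_{1-\alpha}(X)$ on $\{Y=y\}$ is precisely $A \cap B$, I would use $\P\{A \cap B \mid Y=y\} \ge 1 - \P\{A^c \mid Y=y\} - \P\{B^c \mid Y=y\}$, reducing the task to controlling each complementary probability separately.

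For the rank term, $\P\{B^c \mid Y=y\} = \P\{r_f(X,Y) > \widehat k(y) \mid Y=y\} = \epsilon_y^{\widehat k(y)}$ holds by the definition of the class-wise top-$k$ error given in Section~\ref{section:problem_setup}. For the score term, I would invoke the standard split-conformal quantile lemma applied within class $y$: conditioning on the realized label multiset (hence on $n_y$), the calibration scores $\{V_i\}_{i \in \calI_y}$ and the test score $V(\Xtest, y)$ given $\Ytest = y$ are exchangeable draws from $\calP(\cdot \mid Y = y)$, so the threshold $\widehat Q^\class_{1-\widehat\alpha_y}(y)$, defined as the $\lceil (1-\widehat\alpha_y)(1+n_y)\rceil$-th order statistic of $\{V_i\}_{i\in\calI_y}$, satisfies $\P\{A^c \mid Y=y\} = \P\{V(\Xtest,y) > \widehat Q^\class_{1-\widehat\alpha_y}(y)\mid \Ytest=y\} \le \widehat\alpha_y$, with the usual convention that the threshold is taken to be $+\infty$ when the order-statistic index exceeds $n_y$ (which only improves coverage).

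Combining the two bounds yields $\P\{Y \in \widehat\calC^{\newCP}_{1-\alpha}(X) \mid Y=y\} \ge 1 - \widehat\alpha_y - \epsilon_y^{\widehat k(y)}$, and substituting the hypothesis $\widehat\alpha_y \le \alpha - \epsilon_y^{\widehat k(y)}$ --- which is a nonempty constraint precisely because $\widehat k(y) \in \{k : \epsilon_y^k < \alpha\}$ forces $\epsilon_y^{\widehat k(y)} < \alpha$ --- gives the claimed lower bound $1 - \alpha$.

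I expect the only genuinely delicate point to be the exchangeability/conditioning argument for the class-wise conformal quantile: making precise that conditioning on the per-class counts renders the scores within class $y$ exchangeable and that the conditional law of the test point matches that of the class-$y$ calibration points. However, this is exactly the argument underlying the validity of CCP asserted in Eq.~(\ref{eq:class_conditional_coverage}) \cite{vovk2012conditional,angelopoulos2021gentle}, so it can be imported essentially verbatim; everything else is an elementary union bound together with the substitution of the feasible configuration (\ref{eq:feasible_configuration}).
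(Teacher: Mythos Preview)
Your proposal is correct and is essentially the same argument as the paper's: both decompose the RC3P inclusion event on $\{Y=y\}$ into the score event $A$ and the rank event $B$, bound $\P\{A^c\mid Y=y\}\le \widehat\alpha_y$ via the class-wise conformal guarantee and $\P\{B^c\mid Y=y\}=\epsilon_y^{\widehat k(y)}$ by definition, and combine to obtain $\P\{A\cap B\mid Y=y\}\ge 1-\widehat\alpha_y-\epsilon_y^{\widehat k(y)}\ge 1-\alpha$. The only cosmetic difference is that the paper writes the decomposition as $\P\{A\}=\P\{A\cap B\}+\P\{A\cap B^c\}$ and then bounds $\P\{A\cap B^c\}\le\P\{B^c\}$, whereas you use the equivalent union bound on $(A\cap B)^c$; the content is identical.
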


\begin{proof}
(of Theorem \ref{theorem:class_conditional_coverage_RC3P})

Let $y \in \calY$ denote any class label.
In this proof, we omit the superscript $k$ in the top-$k$ error notation $\epsilon_y^k$ for simplicity.

With the lower bound of the coverage on class $y$ (Theorem 1 in \cite{romano2020classification}), we have
\begin{align*}
&
1 - \widehat \alpha 
\leq 
\P\{ \Ytest \in \widehat \calC^\CCP_{1 - \widehat \alpha }(\Xtest) | Y = y \}
\\
= &
\P\{ V(\Xtest, \Ytest) \leq \widehat Q^\class_{1 - \widehat \alpha } (y) | Y = y \}
\\
= &
\P\{ V(\Xtest, \Ytest) \leq \widehat Q^\class_{1-\hat  \alpha} (y), r_f(\Xtest, \Ytest) \leq \widehat k(y) | Y = y \}
\\
& 
+ \P\{ V(\Xtest, \Ytest) \leq \widehat Q^\class_{1-\hat  \alpha} (y), r_f(\Xtest, \Ytest) > \widehat k(y) | Y = y \}
\\
\leq &
\P\{ V(\Xtest, \Ytest) \leq \widehat Q^\class_{1-\hat  \alpha} (y), r_f(\Xtest, \Ytest) \leq \widehat k(y) | Y = y \}
\\
& 
+ \underbrace{ \P\{ r_f(\Xtest, \Ytest) > \widehat k(y) | Y = y \} }_{ \leq \epsilon^{\widehat k(y)}_y }
\\
\leq &
\P\{ \Ytest \in \widehat \calC^\newCP_{1-\widehat \alpha}(y) | Y = y \} 
+ \epsilon^{\widehat k(y)}_y 
.
\end{align*}

Re-arranging the above inequality, we have
\begin{align*}
\P\{ \Ytest \in \widehat \calC^\newCP_{1-\widehat \alpha}(y) | Y = y \} 
\geq 
1 - \widehat \alpha - \epsilon^{\widehat k(y)}_y 
\geq
1 - \alpha,
\end{align*}
where the last inequality is due to $\widehat \alpha_y \leq \alpha - \epsilon^{\widehat k(y)}_y$.
This implies that \texttt{\newCP} guarantees the class-conditional coverage on any class $y$.
This completes the proof for Theorem \ref{theorem:class_conditional_coverage_RC3P}.
\end{proof}

\subsection{Proof of Lemma \ref{lemma:RC3P_improved_efficiency} }
\label{subsection:appendix:proof_of_RC3P_efficiency}

\begin{theorem}
\label{theorem:appendix:RC3P_improved_efficiency}
(Lemma \ref{lemma:RC3P_improved_efficiency} restated, improved predictive efficiency of \newCP)
Let $\widehat \alpha_y$ and $\widehat k(y)$ satisfy Theorem \ref{theorem:class_conditional_coverage_RC3P}.
If the following inequality holds for any $y \in \calY$:
\begin{align}\label{eq:appendix:smaller_ps_condition}
&
\P_\Xtest \big[ V(\Xtest, y) \leq \widehat Q^\class_{1-\widehat\alpha}(y), ~ r_f(\Xtest, y) \leq \widehat k(y) \big]  
\leq
\P_\Xtest \big[ V(\Xtest, y) \leq \widehat Q^\class_{1-\alpha}(y) \big] 
,
\end{align}
then \newCP~produces smaller expected prediction sets than CCP, i.e., 
\begin{align*}
\E_\Xtest [ | \widehat \calC^\newCP_{1-\widehat \alpha}(\Xtest) | ]
\leq 
\E_\Xtest [ | \widehat \calC^\CCP_{1-\alpha}(\Xtest) |  ] 
.
\end{align*}
\end{theorem}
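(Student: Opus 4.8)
The plan is to express the expected prediction set size of each method as a sum over class labels, and then compare term by term. For any set predictor $\widehat\calC(\cdot)$ of the thresholding form, linearity of expectation gives
\begin{align*}
\E_\Xtest \big[ | \widehat\calC(\Xtest) | \big]
= \sum_{y \in \calY} \P_\Xtest \big[ y \in \widehat\calC(\Xtest) \big].
\end{align*}
Applying this to both $\widehat\calC^\newCP_{1-\widehat\alpha}$ (whose membership rule is the conjunction of the conformal-score condition $V(\Xtest,y)\le \widehat Q^\class_{1-\widehat\alpha_y}(y)$ and the label-rank condition $r_f(\Xtest,y)\le \widehat k(y)$, per Eq.~(\ref{eq:RC3P_set_predictor})) and to $\widehat\calC^\CCP_{1-\alpha}$ (whose membership rule is just $V(\Xtest,y)\le \widehat Q^\class_{1-\alpha}(y)$, per Eq.~(\ref{eq:CCP})), I reduce the claimed inequality $\E_\Xtest[|\widehat\calC^\newCP_{1-\widehat\alpha}(\Xtest)|] \le \E_\Xtest[|\widehat\calC^\CCP_{1-\alpha}(\Xtest)|]$ to showing, for every $y$,
\begin{align*}
\P_\Xtest \big[ V(\Xtest,y)\le \widehat Q^\class_{1-\widehat\alpha_y}(y),\ r_f(\Xtest,y)\le \widehat k(y) \big]
\le
\P_\Xtest \big[ V(\Xtest,y)\le \widehat Q^\class_{1-\alpha}(y) \big].
\end{align*}

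The next step is to observe that this per-class inequality is exactly the hypothesis (\ref{eq:appendix:smaller_ps_condition}) of the lemma (with the understanding that $\widehat\alpha$ in that display is $\widehat\alpha_y$), assumed to hold for all $y \in \calY$. So summing the per-class inequality over $y$ and re-applying the expansion in the first display completes the argument. I would also note in passing that the quantiles $\widehat Q^\class_{1-\widehat\alpha_y}(y)$ and $\widehat Q^\class_{1-\alpha}(y)$ are computed from the calibration data and are therefore deterministic conditional on $\calD_\cal$; the probabilities above are over the fresh test point $\Xtest$ only, so the expansion and term-by-term comparison are valid pointwise in $\calD_\cal$, and the conclusion then holds after taking expectation over $\calD_\cal$ as well.

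There is essentially no hard analytic obstacle here — the content of the statement has been front-loaded into the assumed condition (\ref{eq:appendix:smaller_ps_condition}), and the proof is a two-line bookkeeping argument via linearity of expectation over the $K$ candidate labels. The one point deserving care is the bridge between the expected \emph{set size} and the per-label \emph{inclusion probability}: I must make sure the indicator decomposition $|\widehat\calC(\Xtest)| = \sum_{y\in\calY} \indicator[y \in \widehat\calC(\Xtest)]$ is written out explicitly, since it is what converts a statement about sizes into the sum of per-class probabilities that (\ref{eq:appendix:smaller_ps_condition}) controls. Beyond that, the proof is immediate.
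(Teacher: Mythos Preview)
Your proposal is correct and follows essentially the same approach as the paper: both expand the expected set size via the indicator decomposition $|\widehat\calC(\Xtest)| = \sum_{y}\indicator[y\in\widehat\calC(\Xtest)]$, apply linearity of expectation, and compare term by term using the assumed inequality~(\ref{eq:appendix:smaller_ps_condition}). The only cosmetic difference is that the paper routes the comparison through the ratio $\sigma_y$ (noting $\sigma_y\le 1$), whereas you invoke the per-class inequality directly; the logical content is identical.
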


\begin{proof}
(of Lemma \ref{lemma:RC3P_improved_efficiency})

The proof idea is to reduce the cardinality of the prediction set made by \newCP~to that made by CCP in expectation.
Let
$
\sigma_y 
= \frac{ \P_\Xtest \Big[ V(\Xtest, y) \leq \widehat Q^\class_{1-\widehat \alpha}(y), ~ r_f(\Xtest, y) \leq \widehat k(y) \Big] }{ \P_\Xtest \Big[ V(\Xtest, y) \leq \widehat Q^\class_{1-\alpha}(y) \Big] }
.
$
According to the assumption in (\ref{eq:appendix:smaller_ps_condition}), we know that $\sigma_y \leq 1$, which will be used later.

We start with the expected prediction set size of \newCP and then derive its upper bound.
\begin{align}\label{eq:kCCP_improved_efficiency}
&
\E_\Xtest [ | \widehat \calC^\newCP_{1-\hat  \alpha}(\Xtest) | ]
=
\E_\Xtest \Bigg[ \sum_{y \in \calY} \indicator \Big[ V(\Xtest, y) \leq \widehat Q^\class_{1-\widehat \alpha}(y),~ r_f(\Xtest, y) \leq \widehat k(y) \Big] \Bigg] 
\nonumber\\
= &
\sum_{y \in \calY} \E_\Xtest \Big[ \indicator [ V(\Xtest, y) \leq \widehat Q^\class_{1-\widehat \alpha}(y), ~ r_f(\Xtest, y) \leq \widehat k(y) ] \Big]
\nonumber\\
= &
\sum_{y \in \calY} \P_\Xtest \Big[ V(\Xtest, y) \leq \widehat Q^\class_{1-\widehat \alpha}(y), ~ r_f(\Xtest, y) \leq \widehat k(y) \Big]
\nonumber\\
\stackrel{ (a) }{ = } &
\sum_{y \in \calY} \sigma_y \cdot \P_\Xtest \Big[ V(\Xtest, y) \leq \widehat Q^\class_{1-\alpha}(y) \Big]
\\
\stackrel{ (b) }{ \leq } &
\sum_{y \in \calY} \E_\Xtest \Big[ \indicator [ V(\Xtest, y) \leq \widehat Q^\class_{1-\alpha}(y) ] \Big]
\nonumber\\
= &
\E_\Xtest \Bigg[ \sum_{y \in \calY} \indicator [ V(\Xtest, y) \leq \widehat Q^\class_{1-\alpha}(y) ] \Bigg]
=
\E_\Xtest [ | \widehat \calC^\CCP_{1-\alpha}(\Xtest) | ] ,
\end{align}
where the equality $(a)$ is due to the definitions of $\sigma_y$, and
inequality $(b)$ is due to the assumption 
\begin{align*}
\sum_{y \in \calY} \sigma_y \cdot \P_\Xtest \Big[ V(\Xtest, y) \leq \widehat Q^\class_{1-\alpha}(y) \Big]
\leq 
\sum_{y \in \calY} \P_\Xtest \Big[ V(\Xtest, y) \leq \widehat Q^\class_{1-\alpha}(y) \Big] .
\end{align*}
This shows that \newCP~requires smaller prediction sets to guarantee the class-conditional coverage compared to CCP.
\end{proof}

\subsection{ Proof of Theorem \ref{theorem:RC3P_efficiency_condition} }

\begin{theorem}
\label{theorem:appendix:RC3P_efficiency_condition}
(Theorem \ref{theorem:RC3P_efficiency_condition} restated, conditions of improved predictive efficiency for \newCP)
Define $D = \P[ r_f(X, y) \leq \widehat k(y) | Y \neq y ]$, and $\bar r_f(X, y) = \lfloor \frac{ r_f(X, y) + 1 }{ 2 } \rfloor$.
Denote $B = \P[ f(X)_{ ( \bar r_f(X, y) ) } \leq \widehat Q_{1-\alpha}^\class(y) | Y \neq y ]$ if $V$ is APS, 
or $B = \P[ f(X)_{ ( \bar r_f(X, y) ) } + \lambda \leq \widehat Q_{1-\alpha}^\class(y) | Y \neq y ]$ if $V$ is RAPS.
If $B - D \geq \frac{ p_y }{ 1 - p_y } ( \alpha - \epsilon_y^{\widehat k(y)} )$,
then $\sigma_y \leq 1$.
\end{theorem}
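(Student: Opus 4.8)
The plan is to first unwind the definition of $\sigma_y$. By Eq.~(\ref{eq:sigma_y_defination}), the claim $\sigma_y \le 1$ is equivalent to
\begin{align*}
\P_\Xtest\big[ V(\Xtest,y) \le \widehat Q^\class_{1-\widehat\alpha_y}(y),\ r_f(\Xtest,y) \le \widehat k(y) \big]
\ \le\
\P_\Xtest\big[ V(\Xtest,y) \le \widehat Q^\class_{1-\alpha}(y) \big],
\end{align*}
which is precisely the hypothesis (\ref{eq:smaller_ps_condition}) required by Lemma~\ref{lemma:RC3P_improved_efficiency}. I would prove this inequality by splitting both sides according to whether the label of the drawn example equals $y$, using $p_y=\P[Y=y]$ as the mixing weight, and then bounding the four resulting conditional probabilities: two \emph{diagonal} terms (conditioning on $Y=y$) and two \emph{off-diagonal} terms (conditioning on $Y\ne y$).

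For the diagonal terms: on the left, drop the conformity-score event and note that conditionally on $Y=y$ one has $r_f(\Xtest,y)=r_f(\Xtest,\Ytest)$, so this term is at most $\P[r_f(\Xtest,\Ytest)\le\widehat k(y)\mid Y=y]=1-\epsilon_y^{\widehat k(y)}$ by the definition of the class-wise top-$k$ error; on the right, the event $\{V(\Xtest,y)\le\widehat Q^\class_{1-\alpha}(y)\}$ conditional on $Y=y$ is exactly the class-$y$ CCP coverage event, so Theorem~1 of \cite{romano2020classification} (the tool already used in the proof of Theorem~\ref{theorem:class_conditional_coverage_RC3P}) lower-bounds it by $1-\alpha$. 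For the off-diagonal terms: on the left, dropping the score event again gives the bound $\P[r_f(\Xtest,y)\le\widehat k(y)\mid Y\ne y]=D$; the only remaining quantity is $\P[V(\Xtest,y)\le\widehat Q^\class_{1-\alpha}(y)\mid Y\ne y]$, which must be compared to $B$.

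The main obstacle is this last comparison: controlling $\P[V(\Xtest,y)\le\widehat Q^\class_{1-\alpha}(y)\mid Y\ne y]$ from below by $B$. This is where the structure of the APS/RAPS score has to be used, since $V(X,y)$ is the partial sum $\sum_{l<r_f(X,y)}f(X)_{(l)}+U\,f(X)_{(r_f(X,y))}$ of the sorted confidences, and $\bar r_f(X,y)=\lfloor(r_f(X,y)+1)/2\rfloor$ is chosen so that pairing the $l$-th largest confidence with the $(r_f-l)$-th largest lets one sandwich this partial sum against (multiples of) the single middle coordinate $f(X)_{(\bar r_f(X,y))}$; the RAPS regularizer is what introduces the extra $+\lambda$ in the definition of $B$. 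I expect the careful treatment of the small-rank boundary cases and of the tie-breaking variable $U$ inside this sandwiching argument to be the technical heart of the proof.

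Once that comparison is in hand, assembling the four bounds shows that the left-hand side of the displayed inequality is at most $p_y(1-\epsilon_y^{\widehat k(y)})+(1-p_y)D$ while the right-hand side is at least $p_y(1-\alpha)+(1-p_y)B$; hence $\sigma_y\le 1$ follows as soon as $p_y(1-\epsilon_y^{\widehat k(y)})+(1-p_y)D \le p_y(1-\alpha)+(1-p_y)B$, which rearranges to exactly the hypothesis $B-D \ge \frac{p_y}{1-p_y}(\alpha-\epsilon_y^{\widehat k(y)})$. This derivation also explains the configuration (\ref{eq:configure_k_for_efficient_RC3P}): choosing the smallest admissible $\widehat k(y)$ keeps $D$ small and shrinks the right-hand side $\alpha-\epsilon_y^{\widehat k(y)}$, while setting $\widehat\alpha_y=\alpha-\epsilon_y^{\widehat k(y)}$ removes all slack from the left diagonal term.
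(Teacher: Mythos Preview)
Your proposal is correct and follows essentially the same route as the paper: both arguments decompose the numerator and denominator of $\sigma_y$ by conditioning on $\{Y=y\}$ versus $\{Y\ne y\}$, bound the diagonal pieces by $1-\epsilon_y^{\widehat k(y)}$ and $1-\alpha$ (via the CCP coverage guarantee), bound the off-diagonal pieces by $D$ and $B$, and then rearrange to recover the hypothesis $B-D\ge \frac{p_y}{1-p_y}(\alpha-\epsilon_y^{\widehat k(y)})$. The paper handles the key step you flag---lower-bounding $\P[V(X,y)\le\widehat Q^\class_{1-\alpha}(y)\mid Y\ne y]$ by $B$---through a Jensen/concavity-type inequality $V(X,y)\le r_f(X,y)\cdot\big(f(X)_{(\bar r_f(X,y))}+\lambda\cdot\indicator[\text{RAPS}]\big)$, which is the same mechanism as the pairing/sandwiching you sketch.
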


\begin{proof}
(of Theorem \ref{theorem:RC3P_efficiency_condition})

Based on the different choices of scoring function, we first divide two scenarios: 

(i): 
If $V(X, y)$ is the APS scoring function, since the APS score cumulatively sums the ordered prediction of $f(X)$: $V(X, y) = \sum_{l=1}^{r_f(X,y)} f(X)_{(l)}$, it is easy to verify that $V(X, y)$ is concave in terms of $l$.
As a result, we have
\begin{align*}
V(X, y)
= &
\frac{ r_f(X, y) }{ r_f(X, y) } \cdot \sum_{l=1}^{r_f(X, y)} f(X)_{(l)}
\leq 
r_f(X, y) \cdot f(X)_{ ( \lfloor \sum_{l=1}^{r_f(X, y)} l / r_f(X, y) \rfloor ) }
=
r_f(X, y) \cdot f(X)_{ ( \bar r_f(X, y) ) }
,
\end{align*}
where $\bar r_f(X, y) = \bigg\lfloor \frac{ \sum_{l=1}^{r_f(X, y)} l }{ r_f(X, y) } \bigg\rfloor = \lfloor ( r_f(X, y) + 1 ) / 2 \rfloor$.

Now we lower bound $\P_X [ V(X, y) \leq \widehat Q_{1-\alpha}^\class(y) ]$ as follows.
\begin{align}
\label{eq:aps_proof}
& \P_X [ V(X, y) \leq \widehat Q_{1-\alpha}^\class(y) ]
\nonumber \\
= &
\underbrace{ 
\P_{XY} [ Y = y ] 
}_{ = p_y } 
\cdot 
\underbrace{
\P_X [ V(X, y) \leq \widehat Q_{1-\alpha}^\class(y) | Y = y ]
}_{ \geq 1 - \alpha }
+ 
\underbrace{
\P_{XY} [ Y \neq y ] 
}_{ = 1 - p_y }
\cdot 
\underbrace{ 
\P_X [ V(X, y) \leq \widehat Q_{1-\alpha}^\class(y) | Y \neq y ]
}_{ \geq B }
\nonumber \\
\geq &
p_y (1-\alpha) + (1-p_y) B
+ p_y (1-\epsilon_y^{\widehat k(y)}) + (1-p_y) D
- p_y (1-\epsilon_y^{\widehat k(y)}) - (1-p_y) D
\nonumber \\
\geq &
\P_X [ r_f(X, y) \leq \widehat k(y) ]
- p_y ( \alpha - \epsilon_y^{\widehat k(y)})
+ (1-p_y) (B-D)
.
\end{align}
According to the assumption $B - D \geq \frac{ p_y }{ 1 - p_y } ( \alpha - \epsilon_y^{\widehat k(y)} )$, we have
\begin{align*}
\P_X [ r_f(X, y) \leq \widehat k(y) ]
\leq 
\P_X [ V(X, y) \leq \widehat Q_{1-\alpha}^\class(y) ] 
.
\end{align*}

(ii): 
If $V(X, y)$ is the RAPS scoring function and $r_f(X, y) \leq k_{reg}$, then the RAPS scoring function could be rewritten as: $V(X, y) = \sum_{l=1}^{r_f(X,y)} f(X)_{(l)}$. 
As a result, we have: 
\begin{align*}
V(X, y)
= &
\frac{ r_f(X, y) }{ r_f(X, y) } \cdot \sum_{l=1}^{r_f(X, y)} f(X)_{(l)}
\\
\leq &
r_f(X, y) \cdot f(X)_{ ( \lfloor \sum_{l=1}^{r_f(X, y)} l / r_f(X, y) \rfloor ) }
\\
= &
r_f(X, y) \cdot f(X)_{ ( \bar r_f(X, y) ) }
\\
\leq &
r_f(X, y) \cdot \Big ( f(X)_{ ( \bar r_f(X, y) ) } + \lambda \Big )
.
\end{align*}

If $r_f(X, y) > k_{reg}$, then the RAPS scoring function could be rewritten as: $V(X, y) = \sum_{l=1}^{r_f(X,y)} f(X)_{(l)} + \lambda (r_f(X, y) - k_{reg})$. 
As a result, we have
\begin{align*}
V(X, y)
& = 
\frac{ r_f(X, y) }{ r_f(X, y) } \cdot \Big ( \sum_{l=1}^{r_f(X, y)} f(X)_{(l)} + \lambda \big (r_f(X,y) - k_{reg} \big ) \Big )
\\
& \leq 
r_f(X, y) \cdot \Big ( f(X)_{ ( \bar r_f(X, y) ) } + \lambda \big (1 - \frac{ k_{reg} }{ r_f(X, y) } \big ) \Big )
\\
& \leq 
r_f(X, y) \cdot \Big ( f(X)_{ ( \bar r_f(X, y) ) } + \lambda \Big )
.
\end{align*}

Then, by applying the Inequality \ref{eq:aps_proof}, we have:
\begin{align*}
\P_X [ r_f(X, y) \leq \widehat k(y) ]
\leq 
\P_X [ V(X, y) \leq \widehat Q_{1-\alpha}^\class(y) ] 
.
\end{align*}
This completes the proof for Theorem \ref{theorem:RC3P_efficiency_condition}.
\end{proof}

\section{ Complete Experimental Results }
\label{section:appendix:experiment_result}

\subsection{Training Details}
\label{appendix:training}

For CIFAR-10 and CIFAR-100, we train ResNet20 using LDAM loss function given in \cite{cao2019learning} with standard mini-batch stochastic
gradient descent (SGD) using learning rate $0.1$, momentum $0.9$, and weight decay $2e-4$ for $200$ epochs and $50$ epochs. The batch size is $128$. For experiments on mini-ImageNet, we use the same setting. For Food-101, the batch size is $256$ and other parameters are kept the same.
We reported our main results when models were trained in $200$ epochs. Other results are reported in Appendix \ref{appendix:epochs} and Table \ref{tab:overall_comparison_four_datasets_50epochs}. 

\vspace{1.0ex}

We also evaluate the top-1 accuracy over the majority, medium, and minority groups of classes as the class-wise performance when $200$ epochs. 
To show the variation of class-wise performance, we divide some classes with the largest number of data samples into the majority group, and the number of these classes is a quarter ($25\%$) of the total number of classes. Similarly, we divide the classes with the smallest number of data into the minority group ($25\%$) and the remaining classes as the medium group ($50\%$). In the above table, we show the accuracy of three groups with three imbalance types and two imbalance ratios $\rho = 0.1, \rho = 0.5$ on four datasets.

\vspace{1.0ex}

The results are summarized in Table \ref{tab:acc_comparison_four_datasets}.
As can be seen, the group-wise performance can vary significantly from high to very low.
The class-imbalance setting is the case where the classifier does not perform very well in some classes.

\begin{table}[!ht]
\centering
\caption{
Top-1 accuracy of minority, medium, and majority groups with three imbalance types and two imbalance ratios $\rho = 0.1, \rho = 0.5$ on four datasets. We could observe that the class-wise performance varies significantly over different classes.
}
\resizebox{0.65\textwidth}{!}{
\begin{NiceTabular}{@{}c!{~}cc!{~}cc!{~}cc@{}} \toprule 
\Block{2-1}{Groups} & \Block{1-2}{\EXP} & & \Block{1-2}{\POLY} & & \Block{1-2}{\MAJ} \\ 
& $\rho$ = 0.5 & $\rho$ = 0.1 & $\rho$ = 0.5 & $\rho$ = 0.1 & $\rho$ = 0.5 & $\rho$ = 0.1\\ 
\midrule 
\Block{1-*}{CIFAR-10}
\\
\midrule
Minority & 0.913 & 0.961
& 0.932 & 0.901
& 0.940 & 0.927
\\
Medium & 0.872 & 0.822
& 0.867 & 0.847
& 0.848 & 0.75
\\ 
Majority & 0.949 & 0.832
& 0.933 & 0.948
& 0.914 & 0.795
\\ 
\midrule
\Block{1-*}{CIFAR-100}
\\
\midrule
Minority & 0.554 & 0.295
& 0.468 & 0.352
& 0.572 & 0.365
\\ 
Medium & 0.589 & 0.536
& 0.517 & 0.413
& 0.574 & 0.476
\\ 
Majority & 0.668 & 0.720
& 0.671 & 0.588 
& 0.616 & 0.562
\\ 
\midrule
\Block{1-*}{mini-ImageNet}
\\
\midrule 
Minority & 0.677 & 0.640
& 0.624 & 0.627
& 0.626 & 0.642
\\ 
Medium & 0.527 & 0.546
& 0.533 & 0.530
& 0.526 & 0.538
\\ 
Majority & 0.633 & 0.679
& 0.684 & 0.67
& 0.673 & 0.686
\\ 
\midrule
\Block{1-*}{Food-101}
\\
\midrule 
Minority & 0.453 & 0.231
& 0.379 & 0.289
& 0.505 & 0.333
\\ 
Medium & 0.579 & 0.474
& 0.496 & 0.398
& 0.579 & 0.467
\\ 
Majority & 0.582 & 0.660
& 0.596 & 0.563
& 0.532 & 0.490
\\ 
\bottomrule 
\end{NiceTabular}
}
\label{tab:acc_comparison_four_datasets}
\end{table}

\subsection{Calibration Details}
\label{subsection:appendix:calibration_details}

As mentioned in Section \ref{section: Experimental setup}, we balanced split the validation set of CIFAR-10 and CIFAR-100, the number of calibration data is $5000$. For mini-ImageNet, the number of calibration data is $15000$. For Food-101, the total number is $12625$.
To compute the mean and standard deviation for the overall performance, we repeat calibration experiments for $10$ times. In our main results, We set $\alpha = 0.1$. We also report other experiment results of different $\alpha$ values, $\alpha = 0.05$ and $\alpha = 0.01$, 
in Appendix \ref{appendix:alphas}, and Table \ref{tab:overall_comparison_four_datasets_0.05} and \ref{tab:overall_comparison_four_datasets_0.01}.

\vspace{1.0ex}

The regularization parameter for RAPS scoring function is from the set $k_{reg} \in \{3, 5, 7\}$ and $\lambda \in \{0.001, 0.01, 0.1\}$ based on the empirical setting in \texttt{cluster-CP}. We select the combination of $k_{reg}$ and $\lambda$ for each experiment with the same imbalanced type and imbalanced ratio on the same dataset, where most of the APSS values of all methods are minimum.

\vspace{1.0ex}

The hyper-parameter $g$ is selected from the set $\{0.25, 0.5, 0.75, 1.0\}$ to find the minimal $g$ that \texttt{CCP}, \texttt{Cluster-CP} 
\footnote{\url{https://github.com/tiffanyding/class-conditional-conformal/tree/main}}, and \texttt{\newCP} achieve the target class-conditional coverage. 
We clarify that for each dataset and each class-conditional CP method, we use fixed $g$ values. The detailed $g$ values are displayed in Table \ref{tab:appendix:g_detail}.
From Table \ref{tab:appendix:g_detail}, we could observe that the hyperparameter $g$ for \texttt{\newCP} is always smaller than other methods, which means that comparing other class-wise CP algorithms, our algorithm needs the smallest inflation on $1-\widehat \alpha$ to achieve the target class-conditional coverage. This could also match the result of histograms of class-conditional coverage.

\begin{table*}[!ht]
\centering
\caption{
Hyperparameter $g$ choices for each class-conditional CP methods \texttt{CCP}, \texttt{Cluster-CP}, and \texttt{\newCP} on four datasets CIFAR-10, CIFAR-100, mini-ImageNet, and Food101.
We could observe that all $g$ values are in constant order to make a fair comparison. Meanwhile, the hyperparameter $g$ for \texttt{\newCP} is always smaller than other methods.
}
\resizebox{0.65\textwidth}{!}{
\begin{NiceTabular}{@{}c!{~}cccc@{}} \toprule 
\Block{2-1}{Methods} & \Block{1-*}{Dataset} \\ 
& CIFAR-10 & CIFAR-100 & mini-ImageNet & FOOD-101 \\ 
\\
\midrule
CCP    
& 0.5 
& 0.5
& 0.75 
& 0.75 
\\ 
Cluster-CP      
& 1.0 
& 0.5 
& 0.75  
& 0.75
\\ 
\newCP  
& 0.5 
& 0.25
& 0.5 
& 0.5 
\\
\bottomrule 
\end{NiceTabular}
}
\label{tab:appendix:g_detail}
\end{table*}

\subsection{ Illustration of Imbalanced Data }
\label{subsection:appendix:illustration_imbalanced_data}

\begin{figure}[!ht]
    \centering
    \begin{minipage}{.33\textwidth}
        \centering
        {\small (a) \EXP}
    \end{minipage}%
    \begin{minipage}{.33\textwidth}
        \centering
        {\small (b) \POLY}
    \end{minipage}%
    \begin{minipage}{.33\textwidth}
        \centering
        {\small (c) \MAJ}
    \end{minipage}
    \begin{minipage}{.33\textwidth}
        \centering
            \includegraphics[width=\linewidth]{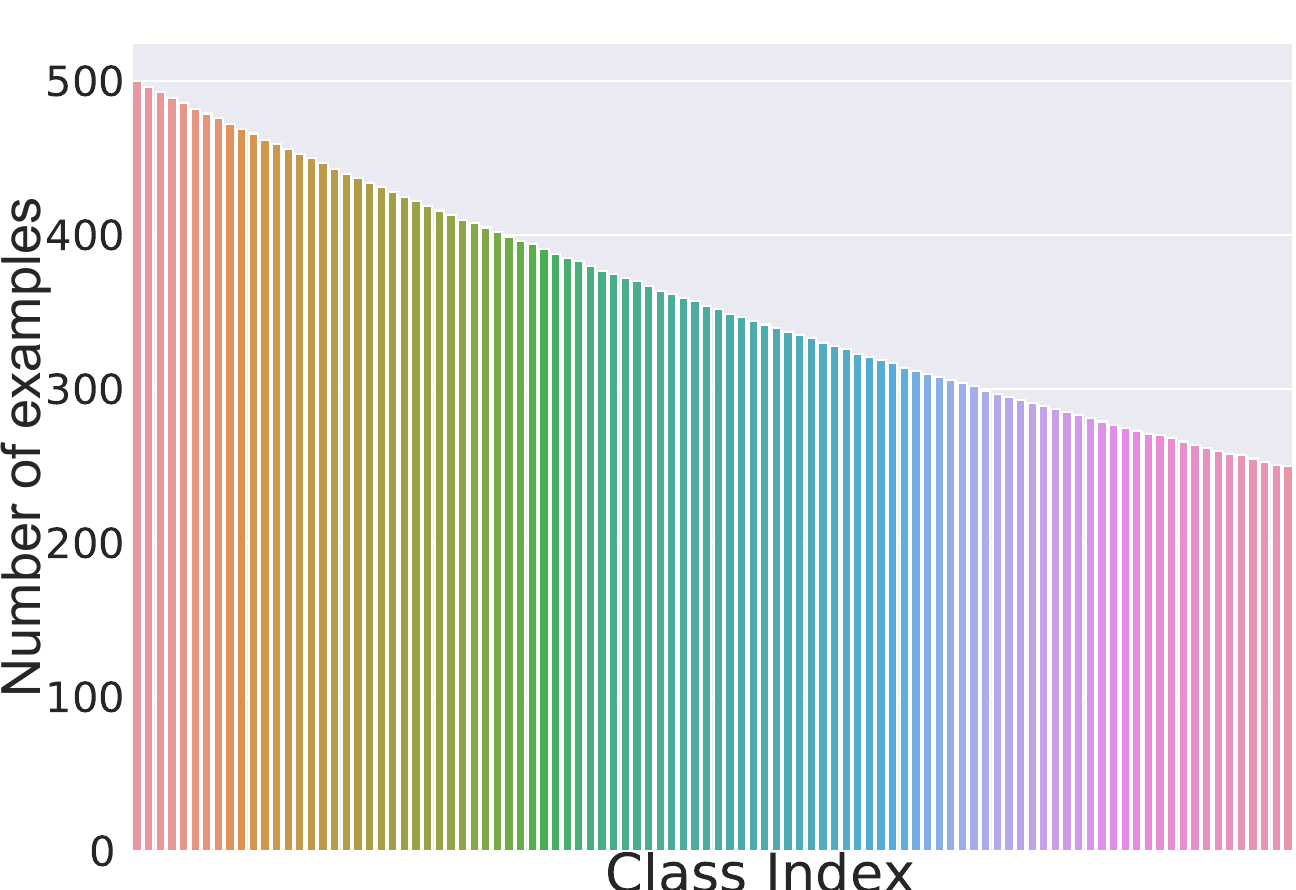}
    \end{minipage}%
    \begin{minipage}{.33\textwidth}
        \centering
            \includegraphics[width=\linewidth]{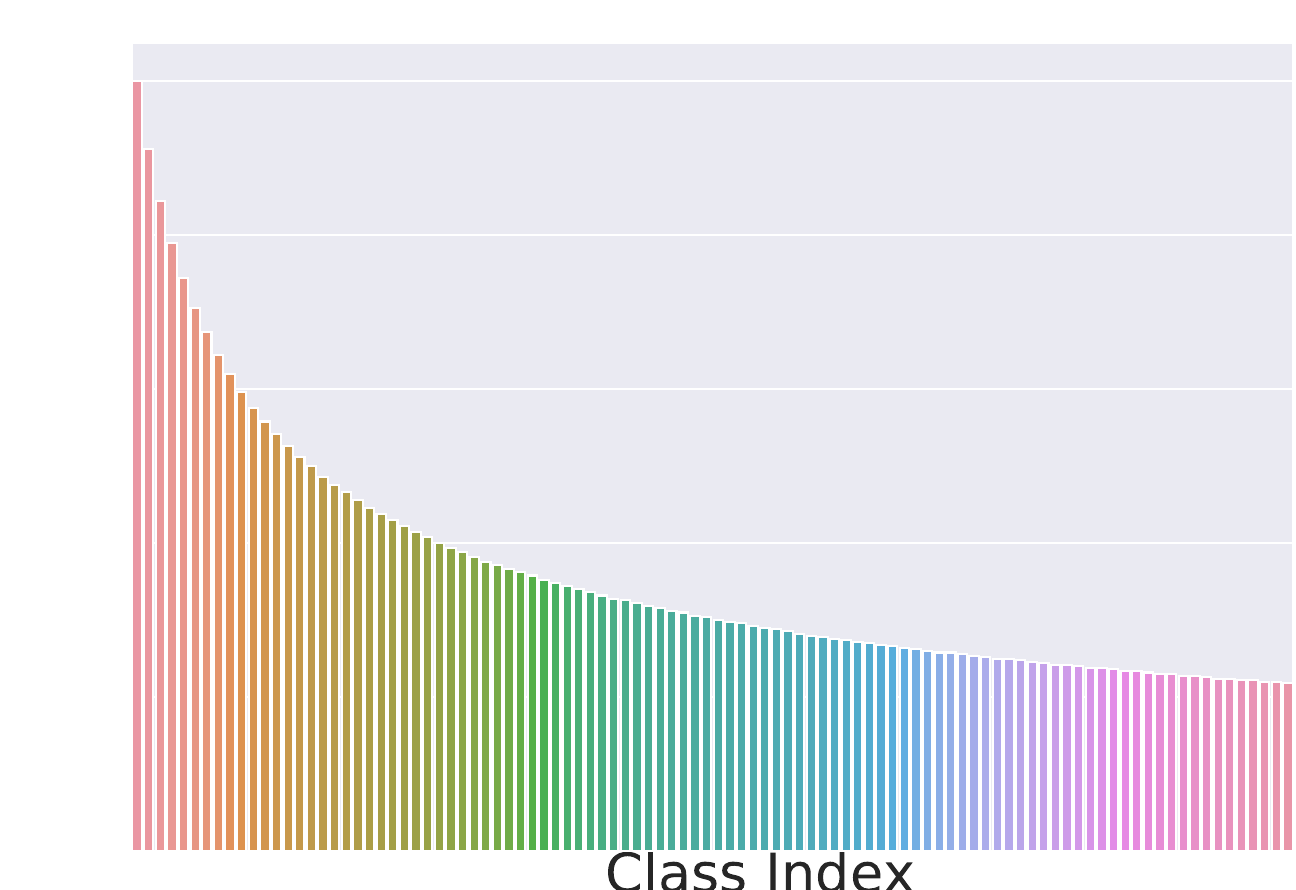}
    \end{minipage}%
    \begin{minipage}{.33\textwidth}
        \centering
            \includegraphics[width=\linewidth]{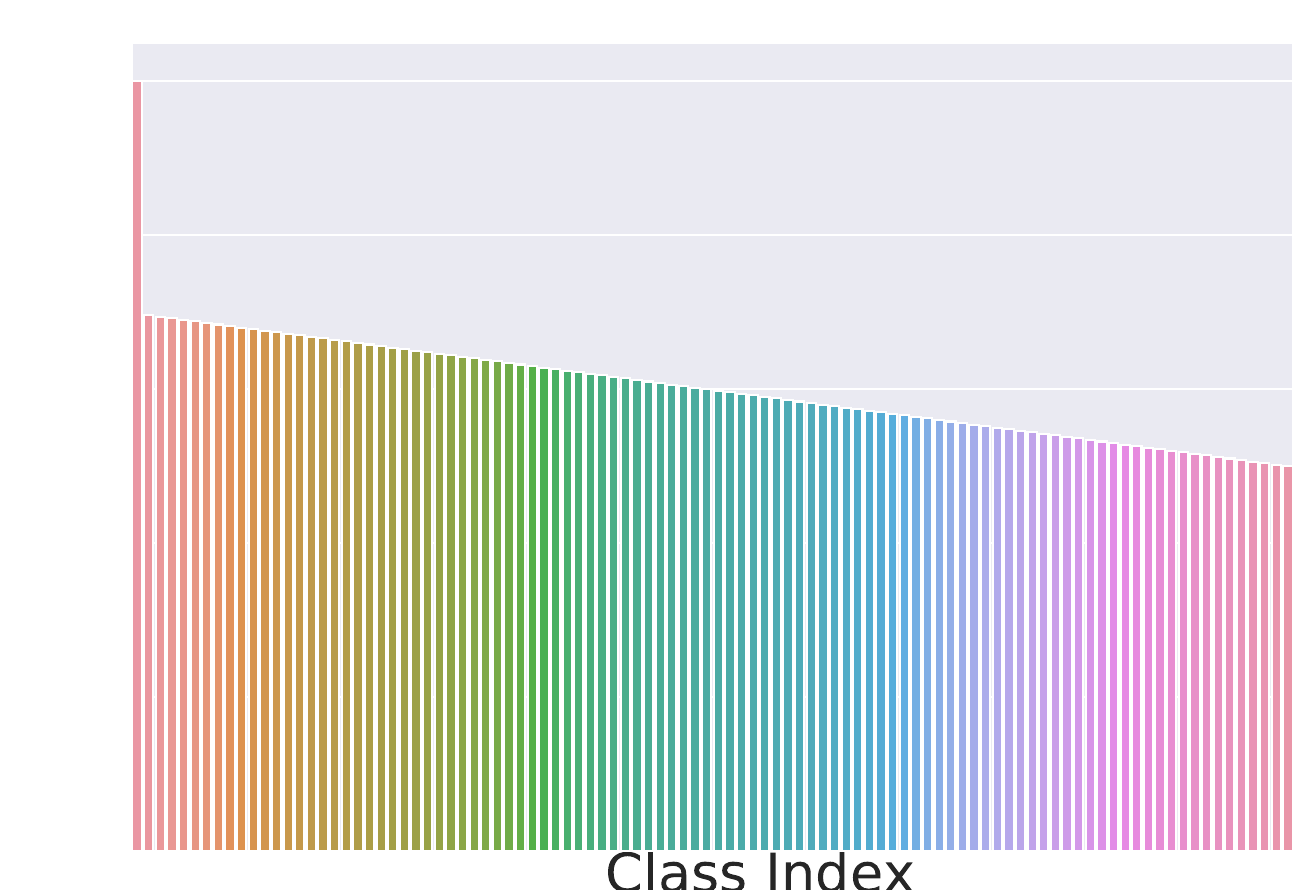}
    \end{minipage}
    
    \caption{Illustrative examples of the different imbalanced distributions of the number of training examples per class index $c$ on CIFAR-100}
    \label{fig:imbalanced_decay_examples}
\end{figure}

\subsection{Comparison Experiments Using APS Score Function}
\label{appendix:aps}

\vspace{1.0ex}

Based on the results in Table \ref{tab:overall_comparison_cluster_four_datasets}, we make the following observations: (i) \texttt{CCP}, \texttt{Cluster-CP}, and \texttt{\newCP} can guarantee the class-conditional coverage; and (ii) \texttt{\newCP} significantly outperforms \texttt{CCP} and \texttt{Cluster-CCP} on three datasets by producing smaller prediction sets.

\begin{table}[!ht]
\centering
\caption{
Results comparing \texttt{CCP}, \texttt{Cluster-CP}, and \texttt{\newCP}~with ResNet-20 model and APS scoring function under different imbalance ratios $\rho=0.5$ and $\rho=0.1$ when $\alpha = 0.1$.
We set UCR of \texttt{\newCP} the same as or better than that of \texttt{CCP} and \texttt{Cluster-CP} for a fair comparison of prediction set size.
The APSS results show that \texttt{\newCP} significantly outperforms \texttt{Cluster-CP} in terms of the average prediction set size over all settings on CIFAR-100, mini-ImageNet, and Food-101.
}
\resizebox{\textwidth}{!}{
\begin{NiceTabular}{@{}cc!{~}cc!{~}cc!{~}cc@{}} \toprule 
\Block{2-1}{Measure}  & \Block{2-1}{Methods} & \Block{1-2}{\EXP} & & \Block{1-2}{\POLY} & & \Block{1-2}{\MAJ} \\ 
 & & $\rho$ = 0.5 & $\rho$ = 0.1 & $\rho$ = 0.5 & $\rho$ = 0.1 & $\rho$ = 0.5 & $\rho$ = 0.1\\ 
\midrule 
\Block{1-*}{CIFAR-10}
\\
\midrule
\Block{3-1}{UCR} 
& CCP     & 0.050 $\pm$ 0.016 & 0.100 $\pm$ 0.020
& 0.100 $\pm$ 0.032 & \textbf{0.050 $\pm$ 0.021}
& 0.070 $\pm$ 0.014 & \textbf{0.040 $\pm$ 0.015}
\\
& Cluster-CP   & \textbf{0.010 $\pm$ 0.009} & \textbf{0.090 $\pm$ 0.009}
& \textbf{0.080 $\pm$ 0.019} & 0.060 $\pm$ 0.001
& \textbf{0.020 $\pm$ 0.012} & 0.070 $\pm$ 0.014
\\ 
& \textbf{\newCP} & 0.050 $\pm$ 0.016 & 0.100 $\pm$ 0.020
& 0.100 $\pm$ 0.032 & \textbf{0.050 $\pm$ 0.021}
& 0.070 $\pm$ 0.014 & \textbf{0.040 $\pm$ 0.015}
\\ 
\midrule
\Block{3-1}{APSS}
& CCP     & \textbf{1.555 $\pm$ 0.010} & \textbf{1.855 $\pm$ 0.014}
& \textbf{1.538 $\pm$ 0.010} & \textbf{1.776 $\pm$ 0.012}
& \textbf{1.840 $\pm$ 0.020} & \textbf{2.629 $\pm$ 0.013}
\\ 
& Cluster-CP     & 1.714 $\pm$ 0.018 & 2.162 $\pm$ 0.015
& 1.706 $\pm$ 0.014 & 1.928 $\pm$ 0.013
& 1.948 $\pm$ 0.023 & 3.220 $\pm$ 0.020
\\ 
& \textbf{\newCP} & \textbf{1.555 $\pm$ 0.010} & \textbf{1.855 $\pm$ 0.014}
& \textbf{1.538 $\pm$ 0.010} & \textbf{1.776 $\pm$ 0.012}
& \textbf{1.840 $\pm$ 0.020} & \textbf{2.629 $\pm$ 0.013}
\\
\midrule
\Block{1-*}{CIFAR-100}
\\
\midrule
\Block{3-1}{UCR} 
& CCP     & 0.007 $\pm$ 0.002 & \textbf{0.010 $\pm$ 0.002}
& 0.010 $\pm$ 0.002 & \textbf{0.014 $\pm$ 0.003}
& 0.016 $\pm$ 0.003 & \textbf{0.008 $\pm$ 0.004}
\\ 
& Cluster-CP     & 0.012 $\pm$ 0.002 & 0.016 $\pm$ 0.004
& 0.020 $\pm$ 0.003 & 0.004 $\pm$ 0.002
& 0.016 $\pm$ 0.003 & 0.019 $\pm$ 0.005
\\ 
& \textbf{\newCP} & \textbf{0.005 $\pm$ 0.002} & 0.011 $\pm$ 0.002
& \textbf{0.009 $\pm$ 0.003} & 0.015 $\pm$ 0.003 
& \textbf{0.008 $\pm$ 0.002} & \textbf{0.008 $\pm$ 0.004}
\\ 
\midrule
\Block{3-1}{APSS}
& CCP     & 44.224 $\pm$ 0.341 & 50.969 $\pm$ 0.345
& 49.889 $\pm$ 0.353 & 64.343 $\pm$ 0.237
& 44.194 $\pm$ 0.514 & 64.642 $\pm$ 0.535
\\ 
& Cluster-CP     & 29.238 $\pm$ 0.609 & 37.592 $\pm$ 0.857
& 38.252 $\pm$ 0.353 & 52.391 $\pm$ 0.595
& 31.518 $\pm$ 0.335 & 50.883 $\pm$ 0.673
\\ 
& \textbf{\newCP} & \textbf{17.705 $\pm$ 0.004} & \textbf{21.954 $\pm$ 0.005}
& \textbf{23.048 $\pm$ 0.008} & \textbf{33.185 $\pm$ 0.005}
& \textbf{18.581 $\pm$ 0.007} & \textbf{32.699 $\pm$ 0.005}
\\
\midrule
\Block{1-*}{mini-ImageNet}
\\
\midrule 
\Block{3-1}{UCR} 
& CCP     & 0.008 $\pm$ 0.004 & 0.008 $\pm$ 0.004
& 0.005 $\pm$ 0.002 & 0.004 $\pm$ 0.001
& 0.010 $\pm$ 0.004 & 0.005 $\pm$ 0.002
\\ 
& Cluster-CP     & 0.014 $\pm$ 0.004 & 0.012 $\pm$ 0.004
& 0.011 $\pm$ 0.003 & 0.014 $\pm$ 0.003
& 0.008 $\pm$ 0.002 & 0.010 $\pm$ 0.003
\\ 
& \textbf{\newCP} & \textbf{0.000 $\pm$ 0.000} & \textbf{0.001 $\pm$ 0.001}
& \textbf{0.000 $\pm$ 0.000} & \textbf{0.000 $\pm$ 0.000}
& \textbf{0.000 $\pm$ 0.000} & \textbf{0.000 $\pm$ 0.000}
\\ 
\midrule
\Block{3-1}{APSS} 
& CCP     & 26.676 $\pm$ 0.171 & 26.111 $\pm$ 0.194
& 26.626 $\pm$ 0.133 & 26.159 $\pm$ 0.208
& 27.313 $\pm$ 0.154 & 25.629 $\pm$ 0.207
\\ 
& Cluster-CP     & 25.889 $\pm$ 0.301 & 25.253 $\pm$ 0.346
& 26.150 $\pm$ 0.393 & 25.633 $\pm$ 0.268
& 26.918 $\pm$ 0.241 & 25.348 $\pm$ 0.334
\\ 
& \textbf{\newCP} & \textbf{18.129 $\pm$ 0.003} & \textbf{17.082 $\pm$ 0.002}
& \textbf{17.784 $\pm$ 0.003} & \textbf{17.465 $\pm$ 0.003}
& \textbf{18.111 $\pm$ 0.002} & \textbf{17.167 $\pm$ 0.004}
\\
\midrule
\Block{1-*}{Food-101}
\\
\midrule 
\Block{3-1}{UCR} 
& CCP     & 0.006 $\pm$ 0.002 & 0.006 $\pm$ 0.002
& 0.009 $\pm$ 0.003 & 0.008 $\pm$ 0.001
& 0.006 $\pm$ 0.001 & 0.008 $\pm$ 0.002
\\ 
& Cluster-CP     & 0.003 $\pm$ 0.002 & 0.009 $\pm$ 0.003
& 0.004 $\pm$ 0.001 & 0.009 $\pm$ 0.002
& 0.011 $\pm$ 0.003 & 0.011 $\pm$ 0.002
\\ 
&\textbf{\newCP} & \textbf{0.000 $\pm$ 0.000} & \textbf{0.000 $\pm$ 0.000}
& \textbf{0.000 $\pm$ 0.000} & \textbf{0.001 $\pm$ 0.001} 
& \textbf{0.000 $\pm$ 0.000} & \textbf{0.000 $\pm$ 0.000}
\\ 
\midrule
\Block{3-1}{APSS}
& CCP     & 27.022 $\pm$ 0.192 & 30.900 $\pm$ 0.170
& 30.943 $\pm$ 0.119 & 35.912 $\pm$ 0.105
& 27.415 $\pm$ 0.194 & 36.776 $\pm$ 0.132
\\ 
& Cluster-CP     & 28.953 $\pm$ 0.333 & 33.375 $\pm$ 0.377
& 33.079 $\pm$ 0.393 & 38.301 $\pm$ 0.232
& 30.071 $\pm$ 0.412 & 39.632 $\pm$ 0.342
\\ 
& \textbf{\newCP} & \textbf{18.369 $\pm$ 0.004} & \textbf{21.556 $\pm$ 0.006}
& \textbf{21.499 $\pm$ 0.003} & \textbf{25.853 $\pm$ 0.004}
& \textbf{19.398 $\pm$ 0.006} & \textbf{26.585 $\pm$ 0.004}
\\
\bottomrule 
\end{NiceTabular}
}
\label{tab:overall_comparison_cluster_four_datasets}
\end{table}

\clearpage

\subsection{Comparison Experiments Using RAPS Score Function}
\label{appendix:raps}

With the same model, evaluation metrics, and RAPS score function \cite{angelopoulos2020uncertainty}, 
we add the comparison experiments with \texttt{CCP}, and \texttt{Cluster-CP} on four datasets with different imbalanced types and imbalance ratio $\rho = 0.5$ and $\rho = 0.1$. 
The regularization parameter for RAPS scoring function is from the set $k_{reg} \in \{3, 5, 7\}$ and $\lambda \in \{0.001, 0.01, 0.1\}$. 
We select the combination of $k_{reg}$ and $\lambda$ for each experiment with the same imbalanced type and imbalanced ratio on the same dataset, where most of the $APSS$ values of all methods are minimum. 
The overall performance is summarized in Table \ref{tab:overall_comparison_RAPS_four_datasets}. 
We highlight that we also select the $g$ from the set $g \in \{0.25, 0.5, 0.75, 1.0\}$ to find the minimal $g$ that CCP, Cluster-CP, and \texttt{\newCP} approximately achieves the target class conditional coverage.

\vspace{1.0ex}

Based on the results in Table \ref{tab:overall_comparison_RAPS_four_datasets}, we make the following observations: (i) \texttt{CCP}, \texttt{Cluster-CP}, and \texttt{\newCP} can guarantee the class-conditional coverage; and (ii) \texttt{\newCP} significantly outperforms \text{CCP} and \text{Cluster-CP} on three datasets by producing smaller prediction sets.

\begin{table}[!ht]
\centering
\caption{
Results comparing \texttt{CCP}, \texttt{Cluster-CP}, and \texttt{\newCP}~with ResNet-20 model and the RAPS scoring function under different imbalance ratios $\rho=0.5$ and $\rho=0.1$ when $\alpha = 0.1$.
The regularization parameter for RAPS scoring function is selected from the set $[3, 5, 7]$ and $[0.001, 0.01, 0.1]$. We select the best results for each element in the table.
We set UCR of \texttt{\newCP} the same as or better than that of \texttt{CCP} and \texttt{Cluster-CP} for a fair comparison of prediction set size.
The APSS results show that \texttt{\newCP} significantly outperforms \texttt{CCP} and \texttt{Cluster-CP} in terms of average prediction set size over all settings on CIFAR-100, mini-ImageNet, and Food-101.
}
\resizebox{\textwidth}{!}{
\begin{NiceTabular}{@{}cc!{~}cc!{~}cc!{~}cc@{}} \toprule 
\Block{2-1}{Measure}  & \Block{2-1}{Methods} & \Block{1-2}{\EXP} & & \Block{1-2}{\POLY} & & \Block{1-2}{\MAJ} \\ 
 & & $\rho$ = 0.5 & $\rho$ = 0.1 & $\rho$ = 0.5 & $\rho$ = 0.1 & $\rho$ = 0.5 & $\rho$ = 0.1\\ 
\midrule 
\Block{1-*}{CIFAR-10}
\\
\midrule
\Block{3-1}{UCR} 
& CCP     & 0.050 $\pm$ 0.016 & \textbf{0.010 $\pm$ 0.020}
& 0.100 $\pm$ 0.028 & \textbf{0.050 $\pm$ 0.021}
& 0.070 $\pm$ 0.014 & \textbf{0.040 $\pm$ 0.015}
\\ 
& Cluster-CP     & \textbf{0.010 $\pm$ 0.009} & \textbf{0.010 $\pm$ 0.010}
& \textbf{0.080 $\pm$ 0.019} & 0.060 $\pm$ 0.015
& \textbf{0.020 $\pm$ 0.025} & 0.070 $\pm$ 0.014
\\ 
& \textbf{\newCP} & 0.050 $\pm$ 0.016 & \textbf{0.010 $\pm$ 0.020}
& 0.100 $\pm$ 0.028 & \textbf{0.050 $\pm$ 0.021}
& 0.070 $\pm$ 0.014 & \textbf{0.040 $\pm$ 0.015}
\\ 
\midrule
\Block{3-1}{APSS}
& CCP   & \textbf{1.555 $\pm$ 0.010} & \textbf{1.855 $\pm$ 0.014}
& \textbf{1.538 $\pm$ 0.010} & \textbf{1.776 $\pm$ 0.012}
& \textbf{1.840 $\pm$ 0.020} & \textbf{2.632 $\pm$ 0.012}
\\ 
& Cluster-CP     & 1.714 $\pm$ 0.018 & 2.162 $\pm$ 0.015
& 1.706 $\pm$ 0.014 & 1.929 $\pm$ 0.013
& 1.787 $\pm$ 0.019 & 2.968 $\pm$ 0.024
\\ 
& \textbf{\newCP} & \textbf{1.555 $\pm$ 0.010} & \textbf{1.855 $\pm$ 0.014}
& \textbf{1.538 $\pm$ 0.010} & \textbf{1.776 $\pm$ 0.012}
& \textbf{1.840 $\pm$ 0.020} & \textbf{2.632 $\pm$ 0.012}
\\
\midrule
\Block{1-*}{CIFAR-100}
\\
\midrule
\Block{3-1}{UCR} 
& CCP     & 0.007 $\pm$ 0.002 & \textbf{0.011 $\pm$ 0.002}
& 0.010 $\pm$ 0.002 & \textbf{0.015 $\pm$ 0.003}
& 0.015 $\pm$ 0.003 & 0.008 $\pm$ 0.004
\\ 
& Cluster-CP     & 0.012 $\pm$ 0.002 & 0.017 $\pm$ 0.004
& 0.019 $\pm$ 0.004 & 0.034 $\pm$ 0.005
& \textbf{0.008 $\pm$ 0.003} & 0.018 $\pm$ 0.006
\\ 
& \textbf{\newCP} & \textbf{0.005 $\pm$ 0.002} & \textbf{0.011 $\pm$ 0.002}
& \textbf{0.009 $\pm$ 0.003} & \textbf{0.015 $\pm$ 0.003}
& 0.015 $\pm$ 0.003 & \textbf{0.008 $\pm$ 0.004}
\\ 
\midrule
\Block{3-1}{APSS}
& CCP     & 44.250 $\pm$ 0.342 & 50.970 $\pm$ 0.345
& 49.886 $\pm$ 0.353 & 64.332 $\pm$ 0.236
& 48.343 $\pm$ 0.353 & 64.663 $\pm$ 0.535
\\ 
& Cluster-CP     & 29.267 $\pm$ 0.612 & 37.795 $\pm$ 0.862
& 38.258 $\pm$ 0.320 & 52.374 $\pm$ 0.592
& 31.513 $\pm$ 0.325 & 50.379 $\pm$ 0.684
\\ 
& \textbf{\newCP} & \textbf{17.705 $\pm$ 0.004} & \textbf{21.954 $\pm$ 0.005}
& \textbf{23.048 $\pm$ 0.008} & \textbf{33.185 $\pm$ 0.005}
& \textbf{18.581 $\pm$ 0.006} & \textbf{32.699 $\pm$ 0.006}
\\
\midrule
\Block{1-*}{mini-ImageNet}
\\
\midrule 
\Block{3-1}{UCR} 
& CCP     & 0.008 $\pm$ 0.003 & 0.009 $\pm$ 0.004
& 0.005 $\pm$ 0.002 & 0.004 $\pm$ 0.002
& 0.009 $\pm$ 0.003 & 0.005 $\pm$ 0.002
\\ 
& Cluster-CP     & 0.006 $\pm$ 0.002 & 0.013 $\pm$ 0.005
& 0.009 $\pm$ 0.003 & 0.016 $\pm$ 0.001
& 0.007 $\pm$ 0.002 & 0.009 $\pm$ 0.004
\\ 
& \textbf{\newCP} & \textbf{0.000 $\pm$ 0.000} & \textbf{0.001 $\pm$ 0.001}
& \textbf{0.000 $\pm$ 0.000} & \textbf{0.000 $\pm$ 0.000}
& \textbf{0.000 $\pm$ 0.000} & \textbf{0.000 $\pm$ 0.000}
\\ 
\midrule
\Block{3-1}{APSS}
& CCP     & 26.756 $\pm$ 0.178 & 26.212 $\pm$ 0.199
& 26.689 $\pm$ 0.142 & 26.248 $\pm$ 0.219
& 27.397 $\pm$ 0.162 & 25.725 $\pm$ 0.214
\\ 
& Cluster-CP     & 26.027 $\pm$ 0.325 & 25.415 $\pm$ 0.289
& 26.288 $\pm$ 0.407 & 25.712 $\pm$ 0.315
& 26.969 $\pm$ 0.305 & 25.532 $\pm$ 0.350
\\ 
& \textbf{\newCP} & \textbf{18.129 $\pm$ 0.003} & \textbf{17.082 $\pm$ 0.002}
& \textbf{17.784 $\pm$ 0.003} & \textbf{17.465 $\pm$ 0.003}
& \textbf{18.111 $\pm$ 0.002} & \textbf{17.167 $\pm$ 0.004}
\\
\midrule
\Block{1-*}{Food-101}
\\
\midrule 
\Block{3-1}{UCR} 
& CCP     & 0.006 $\pm$ 0.003 & 0.006 $\pm$ 0.002
& 0.009 $\pm$ 0.003 & 0.008 $\pm$ 0.001
& 0.006 $\pm$ 0.002 & 0.008 $\pm$ 0.002
\\ 
& Cluster-CP     & 0.004 $\pm$ 0.003 & 0.012 $\pm$ 0.004
& 0.006 $\pm$ 0.002 & 0.006 $\pm$ 0.003
& 0.011 $\pm$ 0.003 & 0.014 $\pm$ 0.004
\\ 
&\textbf{ \newCP} & \textbf{0.000 $\pm$ 0.000} & \textbf{0.000 $\pm$ 0.000}
& \textbf{0.000 $\pm$ 0.000} & \textbf{0.001 $\pm$ 0.001} 
& \textbf{0.000 $\pm$ 0.000} & \textbf{0.000 $\pm$ 0.000}
\\ 
\midrule
\Block{3-1}{APSS}
& CCP     & 27.022 $\pm$ 0.192 & 30.900 $\pm$ 0.170
& 30.966 $\pm$ 0.125 & 35.940 $\pm$ 0.111
& 27.439 $\pm$ 0.203 & 36.802 $\pm$ 0.138
\\ 
& Cluster-CP     & 28.953 $\pm$ 0.333 & 33.375 $\pm$ 0.377
& 33.337 $\pm$ 0.409 & 38.499 $\pm$ 0.216
& 29.946 $\pm$ 0.407 & 39.529 $\pm$ 0.306
\\ 
& \textbf{\newCP} & \textbf{18.369 $\pm$ 0.004} & \textbf{21.556$\pm$ 0.006}
& \textbf{21.499 $\pm$ 0.003} & \textbf{25.853 $\pm$ 0.004}
& \textbf{19.397 $\pm$ 0.006} & \textbf{26.585 $\pm$ 0.004}
\\
\bottomrule 
\end{NiceTabular}
}
\label{tab:overall_comparison_RAPS_four_datasets}
\end{table}

\subsection{Comparison Experiments Using HPS Score Function}
\label{appendix:hps}
With the same model, evaluation metrics, and HPS score function \cite{angelopoulos2020uncertainty}, 
we add the comparison experiments with \texttt{CCP}, and \texttt{Cluster-CP} on four datasets with different imbalanced types and imbalance ratio $\rho = 0.5$ and $\rho = 0.1$. 
The overall performance is summarized in Table \ref{tab:overall_comparison_HPS_four_datasets}. 
We highlight that we also select the $g$ from the set $g \in \{0.25, 0.5, 0.75, 1.0\}$ to find the minimal $g$ that CCP, Cluster-CP, and \texttt{\newCP} approximately achieves the target class conditional coverage.
\vspace{1.0ex}

Based on the results in Table \ref{tab:overall_comparison_HPS_four_datasets}, we make the following observations: (i) \texttt{CCP}, \texttt{Cluster-CP}, and \texttt{\newCP} can guarantee the class-conditional coverage; and (ii) \texttt{\newCP} significantly outperforms \text{CCP} and \text{Cluster-CP} on three datasets by producing smaller prediction sets.

\begin{table}[!ht]
\centering
\caption{
Results comparing \texttt{CCP}, \texttt{Cluster-CP}, and \texttt{\newCP}~with ResNet-20 model and the HPS scoring function under different imbalance ratios $\rho=0.5$ and $\rho=0.1$ when $\alpha = 0.1$.
We set UCR of \texttt{\newCP} the same as or better than that of \texttt{CCP} and \texttt{Cluster-CP} for a fair comparison of prediction set size.
\texttt{\newCP} significantly outperforms \texttt{CCP} and \texttt{Cluster-CP} with $20.91\%$ (four datasets) or $27.88\%$ (excluding CIFAR-10) reduction in APSS.
}
\resizebox{\textwidth}{!}{
\begin{NiceTabular}{@{}cc!{~}cc!{~}cc!{~}cc@{}} \toprule 
\Block{2-1}{Measure}  & \Block{2-1}{Methods} & \Block{1-2}{\EXP} & & \Block{1-2}{\POLY} & & \Block{1-2}{\MAJ} \\ 
 & & $\rho$ = 0.5 & $\rho$ = 0.1 & $\rho$ = 0.5 & $\rho$ = 0.1 & $\rho$ = 0.5 & $\rho$ = 0.1\\ 
\midrule 
\Block{1-*}{CIFAR-10}
\\
\midrule
\Block{3-1}{UCR} 
& CCP     & 0.050 $\pm$ 0.016 & \textbf{0.010 $\pm$ 0.020}
& 0.100 $\pm$ 0.028 & \textbf{0.050 $\pm$ 0.021}
& 0.070 $\pm$ 0.014 & \textbf{0.040 $\pm$ 0.015}
\\ 
& Cluster-CP     & \textbf{0.010 $\pm$ 0.009} & \textbf{0.010 $\pm$ 0.010}
& \textbf{0.080 $\pm$ 0.019} & 0.060 $\pm$ 0.015
& \textbf{0.020 $\pm$ 0.025} & 0.070 $\pm$ 0.014
\\ 
& \textbf{\newCP} & 0.050 $\pm$ 0.016 & \textbf{0.010 $\pm$ 0.020}
& 0.100 $\pm$ 0.028 & \textbf{0.050 $\pm$ 0.021}
& 0.070 $\pm$ 0.014 & \textbf{0.040 $\pm$ 0.015}
\\ 
\midrule
\Block{3-1}{APSS}
& CCP     & \textbf{1.144 $\pm$ 0.005} & \textbf{1.324 $\pm$ 0.007}
& \textbf{1.137 $\pm$ 0.003} & \textbf{1.243 $\pm$ 0.005}
& \textbf{1.272 $\pm$ 0.008} & \textbf{1.936 $\pm$ 0.010}
\\ 
& Cluster-CP    & 1.214 $\pm$ 0.008 & 1.508 $\pm$ 0.010
& 1.211 $\pm$ 0.004 & 1.354 $\pm$ 0.005
& 1.336 $\pm$ 0.009 & 2.312 $\pm$ 0.025
\\ 
& \textbf{\newCP} & \textbf{1.144 $\pm$ 0.005} & \textbf{1.324 $\pm$ 0.007}
& \textbf{1.137 $\pm$ 0.003} & \textbf{1.243 $\pm$ 0.005}
& \textbf{1.272 $\pm$ 0.008} & \textbf{1.936 $\pm$ 0.010}
\\
\midrule
\Block{1-*}{CIFAR-100}
\\
\midrule
\Block{3-1}{UCR} 
& CCP     & 0.007 $\pm$ 0.002 & \textbf{0.011 $\pm$ 0.002}
& 0.010 $\pm$ 0.002 & \textbf{0.015 $\pm$ 0.003}
& 0.015 $\pm$ 0.003 & 0.008 $\pm$ 0.004
\\ 
& Cluster-CP     & 0.012 $\pm$ 0.002 & 0.017 $\pm$ 0.004
& 0.019 $\pm$ 0.004 & 0.034 $\pm$ 0.005
& \textbf{0.008 $\pm$ 0.003} & 0.018 $\pm$ 0.006
\\ 
& \textbf{\newCP} & \textbf{0.005 $\pm$ 0.002} & \textbf{0.011 $\pm$ 0.002}
& \textbf{0.009 $\pm$ 0.003} & \textbf{0.015 $\pm$ 0.003}
& 0.015 $\pm$ 0.003 & \textbf{0.008 $\pm$ 0.004}
\\ 
\midrule
\Block{3-1}{APSS}
& CCP     & 41.351 $\pm$ 0.242 & 49.469 $\pm$ 0.344
& 48.063 $\pm$ 0.376 & 63.963 $\pm$ 0.277
& 46.125 $\pm$ 0.351 & 64.371 $\pm$ 0.564
\\ 
& Cluster-CP     & 27.566 $\pm$ 0.555 & 35.528 $\pm$ 0.979
& 36.101 $\pm$ 0.565 & 51.333 $\pm$ 0.776
& 29.323 $\pm$ 0.363 & 50.519 $\pm$ 0.679
\\ 
& \textbf{\newCP} & \textbf{20.363 $\pm$ 0.006} & \textbf{25.212 $\pm$ 0.010}
& \textbf{25.908 $\pm$ 0.007} & \textbf{36.951 $\pm$ 0.018}
& \textbf{21.149 $\pm$ 0.006} & \textbf{35.606 $\pm$ 0.005}
\\
\midrule
\Block{1-*}{mini-ImageNet}
\\
\midrule 
\Block{3-1}{UCR} 
& CCP     & 0.008 $\pm$ 0.003 & 0.009 $\pm$ 0.004
& 0.005 $\pm$ 0.002 & 0.004 $\pm$ 0.002
& 0.009 $\pm$ 0.003 & 0.005 $\pm$ 0.002
\\ 
& Cluster-CP     & 0.006 $\pm$ 0.002 & 0.013 $\pm$ 0.005
& 0.009 $\pm$ 0.003 & 0.016 $\pm$ 0.001
& 0.007 $\pm$ 0.002 & 0.009 $\pm$ 0.004
\\ 
& \textbf{\newCP} & \textbf{0.000 $\pm$ 0.000} & \textbf{0.001 $\pm$ 0.001}
& \textbf{0.000 $\pm$ 0.000} & \textbf{0.000 $\pm$ 0.000}
& \textbf{0.000 $\pm$ 0.000} & \textbf{0.000 $\pm$ 0.000}
\\ 
\midrule
\Block{3-1}{APSS}
& CCP     & 24.633 $\pm$ 0.212 & 24.467 $\pm$ 0.149
& 24.379 $\pm$ 0.152 & 24.472 $\pm$ 0.167
& 25.449 $\pm$ 0.196 & 23.885 $\pm$ 0.159
\\ 
& Cluster-CP     & 23.911 $\pm$ 0.322 & 24.023 $\pm$ 0.195
& 24.233 $\pm$ 0.428 & 23.263 $\pm$ 0.295
& 24.987 $\pm$ 0.319 & 23.323 $\pm$ 0.378
\\ 
& \textbf{\newCP} & \textbf{17.830 $\pm$ 0.104} & \textbf{17.036 $\pm$ 0.014}
& \textbf{17.684 $\pm$ 0.062} & \textbf{17.393 $\pm$ 0.013}
& \textbf{18.024 $\pm$ 0.049} & \textbf{17.086 $\pm$ 0.059}
\\
\midrule
\Block{1-*}{Food-101}
\\
\midrule 
\Block{3-1}{UCR} 
& CCP     & 0.006 $\pm$ 0.003 & 0.006 $\pm$ 0.002
& 0.009 $\pm$ 0.003 & 0.008 $\pm$ 0.001
& 0.006 $\pm$ 0.002 & 0.008 $\pm$ 0.002
\\ 
& Cluster-CP     & 0.004 $\pm$ 0.003 & 0.012 $\pm$ 0.004
& 0.006 $\pm$ 0.002 & 0.006 $\pm$ 0.003
& 0.011 $\pm$ 0.003 & 0.014 $\pm$ 0.004
\\ 
&\textbf{ \newCP} & \textbf{0.000 $\pm$ 0.000} & \textbf{0.000 $\pm$ 0.000}
& \textbf{0.000 $\pm$ 0.000} & \textbf{0.001 $\pm$ 0.001} 
& \textbf{0.000 $\pm$ 0.000} & \textbf{0.000 $\pm$ 0.000}
\\ 
\midrule
\Block{3-1}{APSS}
& CCP     & 26.481 $\pm$ 0.142 & 30.524 $\pm$ 0.152
& 30.787 $\pm$ 0.099 & 35.657 $\pm$ 0.107
& 26.826 $\pm$ 0.163 & 36.518 $\pm$ 0.122
\\ 
& Cluster-CP     & 29.347 $\pm$ 0.288 & 33.806 $\pm$ 0.513
& 33.407 $\pm$ 0.345 & 38.956 $\pm$ 0.242
& 29.606 $\pm$ 0.436 & 39.880 $\pm$ 0.318
\\ 
& \textbf{\newCP} & \textbf{18.337 $\pm$ 0.004} & \textbf{21.558$\pm$ 0.006}
& \textbf{21.477 $\pm$ 0.003} & \textbf{25.853 $\pm$ 0.005}
& \textbf{19.396 $\pm$ 0.008} & \textbf{26.584 $\pm$ 0.003}
\\
\bottomrule 
\end{NiceTabular}
}
\label{tab:overall_comparison_HPS_four_datasets}
\end{table}

\subsection{Comparison Experiments with different target coverages}
\label{appendix:alphas}

With the same model, evaluation metrics, and scoring functions, 
we add the comparison experiments with \texttt{CCP}, and \texttt{Cluster-CP} on four datasets with different imbalanced types and imbalance ratio $\rho = 0.5$ and $\rho = 0.1$ under the different $\alpha$ values. 
The overall performance is summarized in Table \ref{tab:overall_comparison_four_datasets_0.05} and \ref{tab:overall_comparison_four_datasets_0.01}, with $\alpha = 0.05$ and $\alpha = 0.01$, respectively. 
We highlight that we also select the $g$ from the set $g \in [0.15, 0.75]$ with $0.05$ range to find the minimal $g$ that CCP, Cluster-CP, and \texttt{\newCP} approximately achieves the target class conditional coverage.

\vspace{1.0ex}

Based on the results in Table \ref{tab:overall_comparison_RAPS_four_datasets}, we make the following observations: (i) \texttt{CCP}, \texttt{Cluster-CP}, and \texttt{\newCP} can guarantee the class-conditional coverage; and (ii) \texttt{\newCP} significantly outperforms \text{CCP} and \text{Cluster-CP} on three datasets by producing smaller prediction sets.

\begin{table*}[!ht]
\centering
\caption{
APSS results comparing \texttt{CCP}, \texttt{Cluster-CP}, and \texttt{\newCP}~with ResNet-20 model under different imbalance ratio $\rho=0.5$ and $\rho=0.1$ where $\alpha = 0.05$.
For a fair comparison of prediction set size, we set UCR of \newCP~the same as or smaller (more restrictive) than that of \texttt{CCP} and \texttt{Cluster-CP} under $0.16$ on CIFAR-10 and $0.03$ on other datasets. 
The APSS results show that \texttt{\newCP}~significantly outperforms \texttt{CCP} and \texttt{Cluster-CP} in terms of average prediction set size with $21.036\%$ (four datasets) or $28.048\%$ (excluding CIFAR-10) reduction in prediction size on average over $\min \{ \texttt{CCP}, \texttt{cluster-CP}\}$. 
}
\resizebox{\textwidth}{!}{
\begin{NiceTabular}{@{}cc!{~}cc!{~}cc!{~}cc@{}} \toprule 
\Block{2-1}{Conformity Score}  & \Block{2-1}{Methods} & \Block{1-2}{\EXP} & & \Block{1-2}{\POLY} & & \Block{1-2}{\MAJ} \\ 
 & & $\rho$ = 0.5 & $\rho$ = 0.1 & $\rho$ = 0.5 & $\rho$ = 0.1 & $\rho$ = 0.5 & $\rho$ = 0.1\\ 
\midrule 
\Block{1-*}{CIFAR-10}
\\
\midrule
\Block{3-1}{APS} 
& CCP     & \textbf{2.861 $\pm$ 0.027} & \textbf{3.496 $\pm$ 0.037}
& \textbf{2.744 $\pm$ 0.033} & \textbf{3.222 $\pm$ 0.018}
& \textbf{3.269 $\pm$ 0.037} & \textbf{4.836 $\pm$ 0.035}
\\ 
& Cluster-CP    & 3.443 $\pm$ 0.041 & 4.551 $\pm$ 0.049
& 3.309 $\pm$ 0.037 & 4.012 $\pm$ 0.039
& 4.075 $\pm$ 0.069 & 5.958 $\pm$ 0.070
\\ 
& \textbf{\newCP} & \textbf{2.861 $\pm$ 0.027} & \textbf{3.496 $\pm$ 0.037}
& \textbf{2.744 $\pm$ 0.033} & \textbf{3.222 $\pm$ 0.018}
& \textbf{3.269 $\pm$ 0.037} & \textbf{4.836 $\pm$ 0.035}
\\
\midrule
\Block{3-1}{RAPS}
& CCP     & \textbf{2.833 $\pm$ 0.018} & \textbf{3.448 $\pm$ 0.036}
& \textbf{2.774 $\pm$ 0.033} & \textbf{3.231 $\pm$ 0.021}
& \textbf{3.301 $\pm$ 0.024} & \textbf{4.842 $\pm$ 0.037}
\\ 
& Cluster-CP    & 3.430 $\pm$ 0.044 & 4.389 $\pm$ 0.062
& 3.352 $\pm$ 0.035 & 3.876 $\pm$ 0.034
& 4.044 $\pm$ 0.055 & 5.959 $\pm$ 0.083
\\ 
& \textbf{\newCP}  & \textbf{2.833 $\pm$ 0.018} & \textbf{3.448 $\pm$ 0.036}
& \textbf{2.774 $\pm$ 0.033} & \textbf{3.231 $\pm$ 0.021}
& \textbf{3.301 $\pm$ 0.024} & \textbf{4.842 $\pm$ 0.037}
\\
\midrule
\Block{1-*}{CIFAR-100}
\\
\midrule
\Block{3-1}{APS} 
& CCP     & 44.019 $\pm$ 0.295 & 51.004 $\pm$ 0.366
& 49.564 $\pm$ 0.315 & 64.314 $\pm$ 0.231
& 48.024 $\pm$ 0.386 & 64.941 $\pm$ 0.532
\\ 
& Cluster-CP     & 39.641 $\pm$ 0.567 & 46.746 $\pm$ 0.147
& 47.654 $\pm$ 0.371 & 62.340 $\pm$ 0.404
& 37.634 $\pm$ 0.537 & 60.841 $\pm$ 0.391
\\ 
& \textbf{\newCP} & \textbf{32.128 $\pm$ 0.011} & \textbf{38.769 $\pm$ 0.006}
& \textbf{39.930 $\pm$ 0.008} & \textbf{53.147 $\pm$ 0.010}
& \textbf{34.361 $\pm$ 0.007} & \textbf{51.498 $\pm$ 0.009}
\\
\midrule
\Block{3-1}{RAPS}
& CCP     & 44.234 $\pm$ 0.341 & 50.950 $\pm$ 0.344
& 49.889 $\pm$ 0.355 & 64.339 $\pm$ 0.236
& 48.310 $\pm$ 0.353 & 64.628 $\pm$ 0.535
\\ 
& Cluster-CP     & 39.212 $\pm$ 0.365 & 46.840 $\pm$ 0.186
& 49.094 $\pm$ 0.280 & 62.095 $\pm$ 0.278
& 41.596 $\pm$ 0.323 & 60.158 $\pm$ 0.536
\\ 
& \textbf{\newCP} & \textbf{32.135 $\pm$ 0.010} & \textbf{38.793 $\pm$ 0.007}
& \textbf{39.871 $\pm$ 0.010} & \textbf{53.169 $\pm$ 0.009}
& \textbf{34.380 $\pm$ 0.007} & \textbf{51.512 $\pm$ 0.008}
\\
\midrule
\Block{1-*}{mini-ImageNet}
\\
\midrule 
\Block{3-1}{APS} 
& CCP     & 58.527 $\pm$ 0.445 & 57.527 $\pm$ 0.408
& 60.327 $\pm$ 0.520 & 56.581 $\pm$ 0.438
& 59.360 $\pm$ 0.430 & 56.636 $\pm$ 0.469
\\ 
& Cluster-CP     & 47.613 $\pm$ 0.544 & 46.650 $\pm$ 0.699
& 47.117 $\pm$ 0.930 & 45.360 $\pm$ 0.582
& 59.002 $\pm$ 0.434 & 56.147 $\pm$ 0.456
\\ 
& \textbf{\newCP} & \textbf{32.046 $\pm$ 0.002} & \textbf{31.729 $\pm$ 0.003}
& \textbf{31.718 $\pm$ 0.004} & \textbf{32.048 $\pm$ 0.003}
& \textbf{32.909 $\pm$ 0.007} & \textbf{31.441 $\pm$ 0.004}
\\
\midrule
\Block{3-1}{RAPS}
& CCP     & 58.615 $\pm$ 0.428 & 57.626 $\pm$ 0.394
& 60.173 $\pm$ 0.527 & 56.702 $\pm$ 0.414
& 59.532 $\pm$ 0.430 & 56.903 $\pm$ 0.460
\\ 
& Cluster-CP     & 47.427 $\pm$ 0.588 & 46.767 $\pm$ 0.724
& 47.302 $\pm$ 1.126 & 45.603 $\pm$ 0.639
& 59.408 $\pm$ 0.482 & 56.779 $\pm$ 0.486
\\ 
& \textbf{\newCP} & \textbf{32.040 $\pm$ 0.003} & \textbf{31.741 $\pm$ 0.003}
& \textbf{31.752 $\pm$ 0.003} & \textbf{32.067 $\pm$ 0.002}
& \textbf{32.914 $\pm$ 0.005} & \textbf{31.417 $\pm$ 0.005}
\\
\midrule
\Block{1-*}{Food-101}
\\
\midrule 
\Block{3-1}{APS} 
& CCP     & 55.967 $\pm$ 0.464 & 60.374 $\pm$ 0.383
& 60.717 $\pm$ 0.596 & 65.698 $\pm$ 0.405
& 56.934 $\pm$ 0.446 & 66.654 $\pm$ 0.511
\\ 
& Cluster-CP     & 48.699 $\pm$ 0.512 & 55.288 $\pm$ 0.815
& 54.063 $\pm$ 0.885 & 60.104 $\pm$ 0.608
& 48.894 $\pm$ 0.919 & 59.432 $\pm$ 0.754
\\ 
& \textbf{\newCP} & \textbf{31.224 $\pm$ 0.004} & \textbf{35.273 $\pm$ 0.007}
& \textbf{35.364 $\pm$ 0.003} & \textbf{41.109 $\pm$ 0.005}
& \textbf{31.661 $\pm$ 0.005} & \textbf{39.135 $\pm$ 0.003}
\\
\midrule
\Block{3-1}{RAPS}
& CCP     & 55.872 $\pm$ 0.465 & 60.764 $\pm$ 0.394
& 60.618 $\pm$ 0.579 & 65.681 $\pm$ 0.401
& 56.982 $\pm$ 0.447 & 66.615 $\pm$ 0.504
\\ 
& Cluster-CP     & 48.371 $\pm$ 0.513 & 55.155 $\pm$ 0.775
& 53.813 $\pm$ 0.864 & 59.912 $\pm$ 0.530
& 49.259 $\pm$ 0.846 & 59.307 $\pm$ 0.648
\\ 
& \textbf{\newCP} & \textbf{31.229 $\pm$ 0.004} & \textbf{35.283 $\pm$ 0.006}
& \textbf{35.379 $\pm$ 0.003} & \textbf{41.113 $\pm$ 0.005}
& \textbf{31.631 $\pm$ 0.004} & \textbf{39.118 $\pm$ 0.003}
\\
\bottomrule 
\end{NiceTabular}
}
\label{tab:overall_comparison_four_datasets_0.05}
\end{table*}

\begin{table*}[!ht]
\centering
\caption{
APSS results comparing \texttt{CCP}, \texttt{Cluster-CP}, and \texttt{\newCP}~with ResNet-20 model under different imbalance ratio $\rho=0.5$ and $\rho=0.1$ where $\alpha = 0.01$.
For a fair comparison of prediction set size, we set UCR of \newCP~the same as or smaller (more restrictive) than that of \texttt{CCP} and \texttt{Cluster-CP} under $0.16$ on CIFAR-10 and $0.03$ on other datasets. 
The APSS results show that \texttt{\newCP}~significantly outperforms \texttt{CCP} and \texttt{Cluster-CP} in terms of average prediction set size with $16.911\%$ (four datasets) or $22.549\%$ (excluding CIFAR-10) reduction in prediction size on average over $\min \{ \texttt{CCP}, \texttt{cluster-CP}\}$. 
}
\resizebox{\textwidth}{!}{
\begin{NiceTabular}{@{}cc!{~}cc!{~}cc!{~}cc@{}} \toprule 
\Block{2-1}{Conformity Score}  & \Block{2-1}{Methods} & \Block{1-2}{\EXP} & & \Block{1-2}{\POLY} & & \Block{1-2}{\MAJ} \\ 
 & & $\rho$ = 0.5 & $\rho$ = 0.1 & $\rho$ = 0.5 & $\rho$ = 0.1 & $\rho$ = 0.5 & $\rho$ = 0.1\\ 
\midrule 
\Block{1-*}{CIFAR-10}
\\
\midrule
\Block{3-1}{APS} 
& CCP     & 7.250 $\pm$ 0.164 & \textbf{7.387 $\pm$ 0.116}
& 7.173 $\pm$ 0.079 &7.596 $\pm$ 0.109
& 7.392 $\pm$ 0.128 & \textbf{8.864 $\pm$ 0.108}
\\ 
& Cluster-CP    & \textbf{5.528 $\pm$ 0.103} & 8.332 $\pm$ 0.060
& 6.954 $\pm$ 0.084 & 7.762 $\pm$ 0.143
& 7.586 $\pm$ 0.113 & 9.308 $\pm$ 0.054
\\ 
& \textbf{\newCP} & 5.671 $\pm$ 0.046 & \textbf{7.387 $\pm$ 0.116}
& \textbf{6.309 $\pm$ 0.042} & \textbf{7.276 $\pm$ 0.010}
& \textbf{6.779 $\pm$ 0.013} & \textbf{8.864 $\pm$ 0.108}
\\
\midrule
\Block{3-1}{RAPS}
& CCP     & 7.294 $\pm$ 0.160 & \textbf{7.458 $\pm$ 0.101}
& 7.067 $\pm$ 0.106 & 7.597 $\pm$ 0.096
& 7.547 $\pm$ 0.134 & \textbf{8.884 $\pm$ 0.106}
\\ 
& Cluster-CP    & \textbf{5.568 $\pm$ 0.103} & 8.288 $\pm$ 0.118
& 6.867 $\pm$ 0.078 & 7.795 $\pm$ 0.136
& 7.813 $\pm$ 0.142 & 9.239 $\pm$ 0.055
\\ 
& \textbf{\newCP} & 5.673 $\pm$ 0.040 & \textbf{7.458 $\pm$ 0.101}
& \textbf{6.310 $\pm$ 0.046} & \textbf{7.253 $\pm$ 0.006}
& \textbf{6.780 $\pm$ 0.015} & \textbf{8.884 $\pm$ 0.106}
\\
\midrule
\Block{1-*}{CIFAR-100}
\\
\midrule
\Block{3-1}{APS} 
& CCP     & 100.0 $\pm$ 0.0 & 100.0 $\pm$ 0.0
& 100.0 $\pm$ 0.0 & 100.0 $\pm$ 0.0
& \textbf{100.0 $\pm$ 0.0} & \textbf{100.0 $\pm$ 0.0}
\\ 
& Cluster-CP     & 65.523 $\pm$ 0.495 & 69.063 $\pm$ 0.512
& 67.012 $\pm$ 0.739 & 81.997 $\pm$ 0.390
& \textbf{100.0 $\pm$ 0.0} & \textbf{100.0 $\pm$ 0.0}
\\ 
& \textbf{\newCP} & \textbf{55.621 $\pm$ 0.007} & \textbf{63.039 $\pm$ 0.007}
& \textbf{60.258 $\pm$ 0.005} & \textbf{74.927 $\pm$ 0.007}
& \textbf{100.0 $\pm$ 0.0} & \textbf{100.0 $\pm$ 0.0}
\\
\midrule
\Block{3-1}{RAPS}
& CCP     & 100.0 $\pm$ 0.0 & 100.0 $\pm$ 0.0
& 100.0 $\pm$ 0.0 & 100.0 $\pm$ 0.0
& \textbf{100.0 $\pm$ 0.0} & \textbf{100.0 $\pm$ 0.0}
\\ 
& Cluster-CP     & 65.584 $\pm$ 0.508 & 69.373 $\pm$ 0.466
& 66.313 $\pm$ 0.745 & 82.043$\pm$ 0.439
& \textbf{100.0 $\pm$ 0.0} & \textbf{100.0 $\pm$ 0.0}
\\ 
& \textbf{\newCP} & \textbf{55.632 $\pm$ 0.008} & \textbf{63.021 $\pm$ 0.006}
& \textbf{60.205 $\pm$ 0.006} & \textbf{74.885 $\pm$ 0.006}
& \textbf{100.0 $\pm$ 0.0} & \textbf{100.0 $\pm$ 0.0}
\\
\midrule
\Block{1-*}{mini-ImageNet}
\\
\midrule 
\Block{3-1}{APS} 
& CCP     & 100.0 $\pm$ 0.0 & 100.0 $\pm$ 0.0
& 100.0 $\pm$ 0.0 & 100.0 $\pm$ 0.0
& 100.0 $\pm$ 0.0 & 100.0 $\pm$ 0.0
\\ 
& Cluster-CP     & 74.019 $\pm$ 0.699 & 71.300 $\pm$ 0.674
& 75.546 $\pm$ 0.683 & 70.996 $\pm$ 0.702
& 74.508 $\pm$ 0.531 & 72.803 $\pm$ 0.536
\\ 
& \textbf{\newCP} & \textbf{55.321 $\pm$ 0.003} & \textbf{54.214 $\pm$ 0.004}
& \textbf{56.018 $\pm$ 0.006} & \textbf{53.732 $\pm$ 0.004}
& \textbf{54.483 $\pm$ 0.007} & \textbf{53.522 $\pm$ 0.005}
\\
\midrule
\Block{3-1}{RAPS}
& CCP     & 100.0 $\pm$ 0.0 & 100.0 $\pm$ 0.0
& 100.0 $\pm$ 0.0 & 100.0 $\pm$ 0.0
& 100.0 $\pm$ 0.0 & 100.0 $\pm$ 0.0
\\ 
& Cluster-CP     & 73.893 $\pm$ 0.734 & 70.638 $\pm$ 0.657
& 75.546 $\pm$ 0.683 & 71.098 $\pm$ 0.706
& 74.675 $\pm$ 0.578 & 73.345 $\pm$ 0.474
\\ 
& \textbf{\newCP} & \textbf{55.270 $\pm$ 0.003} & \textbf{54.184 $\pm$ 0.003}
& \textbf{56.733 $\pm$ 0.006} & \textbf{53.736 $\pm$ 0.004}
& \textbf{55.304 $\pm$ 0.004} & \textbf{53.532 $\pm$ 0.005}
\\
\midrule
\Block{1-*}{Food-101}
\\
\midrule 
\Block{3-1}{APS} 
& CCP     & 101.0 $\pm$ 0.0 & 101.0 $\pm$ 0.0
& 101.0 $\pm$ 0.0 & 101.0 $\pm$ 0.0
& 101.0 $\pm$ 0.0 & 101.0 $\pm$ 0.0
\\ 
& Cluster-CP     & 81.489 $\pm$ 0.957 & 87.092 $\pm$ 0.588
& 82.257 $\pm$ 0.514 & 86.539 $\pm$ 0.453
& 83.293 $\pm$ 0.583 & 88.603 $\pm$ 0.401
\\ 
& \textbf{\newCP} & \textbf{67.443 $\pm$ 0.004} & \textbf{57.055 $\pm$ 0.005}
& \textbf{57.722 $\pm$ 0.006} & \textbf{62.931 $\pm$ 0.005}
& \textbf{68.267 $\pm$ 0.005} & \textbf{65.413 $\pm$ 0.005}
\\
\midrule
\Block{3-1}{RAPS}
& CCP     & 101.0 $\pm$ 0.0 & 101.0 $\pm$ 0.0
& 101.0 $\pm$ 0.0 & 101.0 $\pm$ 0.0
& 101.0 $\pm$ 0.0 & 101.0 $\pm$ 0.0
\\ 
& Cluster-CP     & 81.505 $\pm$ 0.955 & 87.103 $\pm$ 0.587
& 82.272 $\pm$ 0.513 & 86.517 $\pm$ 0.455
& 83.367 $\pm$ 0.635 & 88.604 $\pm$ 0.404
\\ 
& \textbf{\newCP} & \textbf{67.444 $\pm$ 0.004} & \textbf{57.069 $\pm$ 0.005}
& \textbf{57.722 $\pm$ 0.006} & \textbf{62.938 $\pm$ 0.004}
& \textbf{68.266 $\pm$ 0.005} & \textbf{65.457 $\pm$ 0.006}
\\
\bottomrule 
\end{NiceTabular}
}
\label{tab:overall_comparison_four_datasets_0.01}
\end{table*}

\clearpage

\subsection{Comparison Experiments when models are trained in different epochs}
\label{appendix:epochs}

With the same loss function, training criteria, evaluation metrics, and two scoring functions, 
we add the comparison experiments with \texttt{CCP}, and \texttt{Cluster-CP} on four datasets with different imbalanced types and imbalance ratio $\rho = 0.5$ and $\rho = 0.1$ and $\alpha = 0.1$ when models are trained with $50$ epochs. 
The overall performance is summarized in Table \ref{tab:overall_comparison_four_datasets_50epochs}. 
We highlight that we also select the $g$ from the set $g \in \{0.25, 0.5, 0.75, 1.0\}$ to find the minimal $g$ that CCP, Cluster-CP, and \texttt{\newCP} approximately achieves the target class conditional coverage.

\vspace{1.0ex}

Based on the results in Table \ref{tab:overall_comparison_RAPS_four_datasets}, we make the following observations: (i) \texttt{CCP}, \texttt{Cluster-CP}, and \texttt{\newCP} can guarantee the class-conditional coverage; and (ii) \texttt{\newCP} significantly outperforms \text{CCP} and \text{Cluster-CP} on three datasets by producing smaller prediction sets.

\begin{table*}[!ht]
\centering
\caption{
APSS results comparing \texttt{CCP}, \texttt{Cluster-CP}, and \texttt{\newCP}~with ResNet-20 model under different imbalance ratio $\rho=0.5$ and $\rho=0.1$ where $\alpha = 0.1$ and models are trained with $50$ epochs.
For a fair comparison of prediction set size, we set UCR of \newCP~the same as or smaller (more restrictive) than that of \texttt{CCP} and \texttt{Cluster-CP} under $0.16$ on CIFAR-10 and $0.03$ on other datasets. 
The APSS results show that \texttt{\newCP}~significantly outperforms \texttt{CCP} and \texttt{Cluster-CP} in terms of average prediction set size with $21.441\%$ (four datasets) or $28.588\%$ (excluding CIFAR-10) reduction in prediction size on average over $\min \{ \texttt{CCP}, \texttt{cluster-CP}\}$. 
}
\resizebox{\textwidth}{!}{
\begin{NiceTabular}{@{}cc!{~}cc!{~}cc!{~}cc@{}} \toprule 
\Block{2-1}{Conformity Score}  & \Block{2-1}{Methods} & \Block{1-2}{\EXP} & & \Block{1-2}{\POLY} & & \Block{1-2}{\MAJ} \\ 
 & & $\rho$ = 0.5 & $\rho$ = 0.1 & $\rho$ = 0.5 & $\rho$ = 0.1 & $\rho$ = 0.5 & $\rho$ = 0.1\\ 
\midrule 
\Block{1-*}{CIFAR-10}
\\
\midrule
\Block{3-1}{APS} 
& CCP     & \textbf{2.420 $\pm$ 0.019} & \textbf{2.661 $\pm$ 0.015}
& \textbf{2.399 $\pm$ 0.013} & \textbf{2.519 $\pm$ 0.022}
& \textbf{2.651 $\pm$ 0.031} & \textbf{4.053 $\pm$ 0.021}
\\ 
& Cluster-CP    & 4.006 $\pm$ 0.019 & 3.574 $\pm$ 0.023
& 3.144 $\pm$ 0.020 & 2.994 $\pm$ 0.029
& 3.698 $\pm$ 0.044 & 5.290 $\pm$ 0.016
\\ 
& \textbf{\newCP} & \textbf{2.420 $\pm$ 0.019} & \textbf{2.661 $\pm$ 0.015}
& \textbf{2.399 $\pm$ 0.013} & \textbf{2.519 $\pm$ 0.022}
& \textbf{2.651 $\pm$ 0.031} & \textbf{4.053 $\pm$ 0.021}
\\
\midrule
\Block{3-1}{RAPS}
& CCP     & \textbf{2.096 $\pm$ 0.014} & \textbf{2.533 $\pm$ 0.019}
& \textbf{2.383 $\pm$ 0.026} & \textbf{2.247 $\pm$ 0.017}
& \textbf{2.232 $\pm$ 0.019} & \textbf{3.233 $\pm$ 0.021}
\\ 
& Cluster-CP    & 2.625 $\pm$ 0.017 & 3.099 $\pm$ 0.021
& 2.840 $\pm$ 0.043 & 2.843 $\pm$ 0.026
& 2.770 $\pm$ 0.025 & 3.961 $\pm$ 0.029
\\ 
& \textbf{\newCP} & \textbf{2.096 $\pm$ 0.014} & \textbf{2.533 $\pm$ 0.019}
& \textbf{2.383 $\pm$ 0.026} & \textbf{2.247 $\pm$ 0.017}
& \textbf{2.232 $\pm$ 0.019} & \textbf{3.233 $\pm$ 0.021}
\\
\midrule
\Block{1-*}{CIFAR-100}
\\
\midrule
\Block{3-1}{APS} 
& CCP     & 52.655 $\pm$ 0.473 & 52.832 $\pm$ 0.308
& 54.523 $\pm$ 0.441 & 61.768 $\pm$ 0.195
& 52.119 $\pm$ 0.197 & 58.333 $\pm$ 0.299
\\ 
& Cluster-CP     & 42.990 $\pm$ 0.655 & 43.275 $\pm$ 0.833
& 44.114 $\pm$ 0.458 & 58.226 $\pm$ 0.627
& 39.841 $\pm$ 0.836 & 53.409 $\pm$ 0.520
\\ 
& \textbf{\newCP} & \textbf{24.872 $\pm$ 0.008} & \textbf{25.107 $\pm$ 0.006}
& \textbf{27.757 $\pm$ 0.004} & \textbf{35.733 $\pm$ 0.010}
& \textbf{24.496 $\pm$ 0.010} & \textbf{32.172 $\pm$ 0.007}
\\
\midrule
\Block{3-1}{RAPS}
& CCP     & 52.662 $\pm$ 0.473 & 52.841 $\pm$ 0.307
& 54.528 $\pm$ 0.442 & 61.766 $\pm$ 0.195
& 52.129 $\pm$ 0.197 & 58.331 $\pm$ 0.299
\\ 
& Cluster-CP     & 43.024 $\pm$ 0.648 & 43.277 $\pm$ 0.839
& 44.120 $\pm$ 0.458 & 58.212 $\pm$ 0.629
& 39.864 $\pm$ 0.845 & 53.402 $\pm$ 0.518
\\ 
& \textbf{\newCP} & \textbf{24.872 $\pm$ 0.008} & \textbf{25.107 $\pm$ 0.006}
& \textbf{27.757 $\pm$ 0.004} & \textbf{35.733 $\pm$ 0.010}
& \textbf{24.496 $\pm$ 0.010} & \textbf{32.173 $\pm$ 0.007}
\\
\midrule
\Block{1-*}{mini-ImageNet}
\\
\midrule 
\Block{3-1}{APS} 
& CCP     & 42.404 $\pm$ 0.213 & 41.154 $\pm$ 0.191
& 38.433 $\pm$ 0.248 & 36.363 $\pm$ 0.228
& 36.047 $\pm$ 0.191 & 37.600 $\pm$ 0.208
\\ 
& Cluster-CP     & 42.006 $\pm$ 0.430 & 41.101 $\pm$ 0.224
& 39.016 $\pm$ 0.273 & 36.046 $\pm$ 0.467
& 35.721 $\pm$ 0.355 & 37.975 $\pm$ 0.559
\\ 
& \textbf{\newCP} & \textbf{32.022 $\pm$ 0.005} & \textbf{31.909 $\pm$ 0.004}
& \textbf{28.460 $\pm$ 0.003} & \textbf{26.383 $\pm$ 0.003}
& \textbf{26.128 $\pm$ 0.005} & \textbf{28.127 $\pm$ 0.005}
\\
\midrule
\Block{3-1}{RAPS}
& CCP     & 42.516 $\pm$ 0.215 & 37.552 $\pm$ 0.192
& 38.730 $\pm$ 0.218 & 37.800 $\pm$ 0.186
& 36.595 $\pm$ 0.244 & 36.057 $\pm$ 0.206
\\ 
& Cluster-CP     & 42.231 $\pm$ 0.386 & 37.448 $\pm$ 0.332
& 38.602 $\pm$ 0.327 & 37.939 $\pm$ 0.309
& 36.351 $\pm$ 0.308 & 35.724 $\pm$ 0.242
\\ 
& \textbf{\newCP} & \textbf{32.022 $\pm$ 0.005} & \textbf{29.114 $\pm$ 0.004}
& \textbf{28.197 $\pm$ 0.006} & \textbf{27.626 $\pm$ 0.004}
& \textbf{25.853 $\pm$ 0.003} & \textbf{25.948 $\pm$ 0.003}
\\
\midrule
\Block{1-*}{Food-101}
\\
\midrule 
\Block{3-1}{APS} 
& CCP     & 41.669 $\pm$ 0.118 & 51.395 $\pm$ 0.247
& 44.261 $\pm$ 0.165 & 58.816 $\pm$ 0.162
& 52.672 $\pm$ 0.169 & 57.312 $\pm$ 0.162
\\ 
& Cluster-CP     & 44.883 $\pm$ 0.336 & 54.684 $\pm$ 0.475
& 47.794 $\pm$ 0.420 & 60.727 $\pm$ 0.178
& 56.100 $\pm$ 0.257 & 60.200 $\pm$ 0.543
\\ 
& \textbf{\newCP} & \textbf{31.987 $\pm$ 0.005} & \textbf{36.118 $\pm$ 0.016}
& \textbf{34.576 $\pm$ 0.006} & \textbf{49.299 $\pm$ 0.005}
& \textbf{43.680 $\pm$ 0.005} & \textbf{47.649 $\pm$ 0.006}
\\
\midrule
\Block{3-1}{RAPS}
& CCP     & 41.803 $\pm$ 0.157 & 48.548 $\pm$ 0.107
& 44.288 $\pm$ 0.165 & 56.592 $\pm$ 0.165
& 47.264 $\pm$ 0.120 & 56.666 $\pm$ 0.160
\\ 
& Cluster-CP     & 44.810 $\pm$ 0.565 & 51.091 $\pm$ 0.375
& 47.861 $\pm$ 0.428 & 59.262 $\pm$ 0.306
& 50.211 $\pm$ 0.474 & 60.183 $\pm$ 0.507
\\ 
& \textbf{\newCP} & \textbf{34.240 $\pm$ 0.115} & \textbf{36.425$\pm$ 0.024}
& \textbf{34.576 $\pm$ 0.006} & \textbf{46.074 $\pm$ 0.004}
& \textbf{37.055 $\pm$ 0.006} & \textbf{48.012 $\pm$ 0.076}
\\
\bottomrule 
\end{NiceTabular}
}
\label{tab:overall_comparison_four_datasets_50epochs}
\end{table*}

\subsection{Comparison Experiments with UCG metrics}
\label{appendix:UCG}

We add the experiments without controlling coverage on imbalanced datasets under the same setting as the main paper.
We then use the total under coverage gap (UCG, $\downarrow$ better) between class conditional coverage and target coverage $1-\alpha$ of all under covered classes. 
We choose UCG as the fine-grained metric to differentiate the coverage performance in our experiment setting. Conditioned on similar APSS of all methods, RC3P significantly outperforms the best baselines with $35.18\%$(four datasets) or $46.91\%$ (excluding CIFAR-10)reduction in UCG on average.

\begin{table*}[!ht]
\centering
\caption{
UCG and APSS results comparing \texttt{CCP}, \texttt{Cluster-CP}, and \texttt{\newCP}~with ResNet-20 model trained with $200$ epochs under different imbalance types with imbalance ratio $\rho=0.1$, where the coverage of each method are not aligned.
The APSS results show that \texttt{\newCP}~outperforms \texttt{CCP} and \texttt{Cluster-CP} in terms of average prediction set size with $1.64\%$(four datasets) or $2.19\%$ (excluding CIFAR-10) reduction in prediction size on average over $\min \{ \texttt{CCP}, \texttt{cluster-CP}\}$.
The UCG results show that \texttt{\newCP}achieve the similar class conditional coverage as \texttt{CCP} and \texttt{Cluster-CP} in terms of with $35.18\%$(four datasets) or $46.91\%$ (excluding CIFAR-10) increment in the proportion of under coverage classes on average over $\min \{ \texttt{CCP}, \texttt{cluster-CP}\}$.
}
\label{tab:overall_comparison_four_datasets}
\resizebox{\textwidth}{!}{
\begin{NiceTabular}{@{}cc!{~}cc!{~}cc!{~}cc@{}} \toprule 
\Block{2-1}{Conformity Score}  & \Block{2-1}{Methods} & \Block{1-2}{\EXP} & & \Block{1-2}{\POLY} & & \Block{1-2}{\MAJ} \\ 
 & & UCG & APSS & UCG & APSS & UCG & APSS \\ 
\midrule 
\Block{1-*}{CIFAR-10}
\\
\midrule
\Block{3-1}{APS} 
& CCP     & \textbf{0.014 $\pm$ 0.000} & 1.573 $\pm$ 0.009
& \textbf{0.032 $\pm$ 0.000} & 1.494 $\pm$ 0.015
& \textbf{0.068 $\pm$ 0.000} & 2.175 $\pm$ 0.019
\\ 
& Cluster-CP    & 0.166 $\pm$ 0.000 & \textbf{1.438 $\pm$ 0.012}
& 0.124 $\pm$ 0.000 & \textbf{1.280 $\pm$ 0.007}
& 0.144 $\pm$ 0.000 & \textbf{2.079 $\pm$ 0.023}
\\ 
& \textbf{\newCP} & \textbf{0.014 $\pm$ 0.000} & 1.573 $\pm$ 0.009
& \textbf{0.032 $\pm$ 0.043} & 1.494 $\pm$ 0.015
& \textbf{0.068 $\pm$ 0.031} & 2.175 $\pm$ 0.019
\\
\midrule
\Block{3-1}{RAPS}
& CCP     & \textbf{0.014 $\pm$ 0.000} & 1.573 $\pm$ 0.009
& \textbf{0.032 $\pm$ 0.000} & 1.494 $\pm$ 0.015
& \textbf{0.070 $\pm$ 0.000} & 2.179 $\pm$ 0.019
\\ 
& Cluster-CP     & 0.166 $\pm$ 0.000 & \textbf{1.438 $\pm$ 0.012}
& 0.124 $\pm$ 0.000 & \textbf{1.280 $\pm$ 0.007}
& 0.144 $\pm$ 0.000 & \textbf{2.079 $\pm$ 0.023}
\\ 
& \textbf{\newCP} & \textbf{0.014 $\pm$ 0.050} & 1.573 $\pm$ 0.009
& \textbf{0.032 $\pm$ 0.000} & 1.494 $\pm$ 0.015
& \textbf{0.070 $\pm$ 0.000} & 2.179 $\pm$ 0.019
\\
\midrule
\Block{1-*}{CIFAR-100}
\\
\midrule
\Block{3-1}{APS} 
& CCP     & 1.920 $\pm$ 0.000 & 16.721 $\pm$ 0.174
& 2.000 $\pm$ 0.000 & 26.831 $\pm$ 0.150
& 2.400 $\pm$ 0.000 & 26.211 $\pm$ 0.216
\\ 
& Cluster-CP     & 1.500 $\pm$ 0.000 & 15.657 $\pm$ 0.417
& 2.580 $\pm$ 0.000 & 26.709 $\pm$ 0.422
& 2.660 $\pm$ 0.000 &  25.145 $\pm$ 0.385
\\ 
& \textbf{\newCP} & \textbf{0.840 $\pm$ 0.000} & \textbf{14.642 $\pm$ 0.005}
& \textbf{1.200 $\pm$ 0.000} & \textbf{24.480 $\pm$ 0.004}
& \textbf{1.460 $\pm$ 0.000} & \textbf{23.332 $\pm$ 0.006}
\\
\midrule
\Block{3-1}{RAPS}
& CCP     & 1.920 $\pm$ 0.000 & 16.724 $\pm$ 0.174
& 2.020 $\pm$ 0.000 & 26.817 $\pm$ 0.150
& 2.400 $\pm$ 0.007 & 26.199 $\pm$ 0.216
\\ 
& Cluster-CP     & 1.500 $\pm$ 0.000 & 15.767 $\pm$ 0.410
& 2.760 $\pm$ 0.000 & 26.712 $\pm$ 0.512
& 2.480 $\pm$ 0.000 & 25.153 $\pm$ 0.250
\\ 
& \textbf{\newCP} & \textbf{0.840 $\pm$ 0.000} & \textbf{14.642 $\pm$ 0.005}
& \textbf{1.200 $\pm$ 0.000} & \textbf{24.480 $\pm$ 0.004}
& \textbf{1.460 $\pm$ 0.000} & \textbf{23.332 $\pm$ 0.006}
\\
\midrule
\Block{1-*}{mini-ImageNet}
\\
\midrule 
\Block{3-1}{APS} 
& CCP     & 1.486 $\pm$ 0.000 & 10.525 $\pm$ 0.093
& 1.620 $\pm$ 0.000 & 11.188 $\pm$ 0.094
& 1.280 $\pm$ 0.000 & 10.642 $\pm$ 0.055
\\ 
& Cluster-CP     & 1.313 $\pm$ 0.000 & 11.133 $\pm$ 0.118
& 1.453 $\pm$ 0.000 & 11.547 $\pm$ 0.129
& 1.640 $\pm$ 0.000 & 11.186 $\pm$ 0.151
\\ 
& \textbf{\newCP} & \textbf{0.713 $\pm$ 0.000} & \textbf{10.360 $\pm$ 0.042}
& \textbf{0.653 $\pm$ 0.000} & \textbf{11.089 $\pm$ 0.052}
& \textbf{0.600 $\pm$ 0.000} & \textbf{10.545 $\pm$ 0.029}
\\
\midrule
\Block{3-1}{RAPS}
& CCP     & 1.526 $\pm$ 0.000 & 10.570 $\pm$ 0.093
& 1.620 $\pm$ 0.000 & 11.250 $\pm$ 0.095
& 1.293 $\pm$ 0.000 & 10.702 $\pm$ 0.055
\\ 
& Cluster-CP     & 1.480 $\pm$ 0.000 & 11.192 $\pm$ 0.123
& 1.513 $\pm$ 0.000 & 11.704 $\pm$ 0.124
& 1.586 $\pm$ 0.000 & 11.231 $\pm$ 0.156
\\ 
& \textbf{\newCP} & \textbf{0.713 $\pm$ 0.000} & \textbf{10.377 $\pm$ 0.035}
& \textbf{0.653 $\pm$ 0.000} & \textbf{11.126 $\pm$ 0.046}
& \textbf{0.600 $\pm$ 0.000} & \textbf{10.571 $\pm$ 0.021}
\\
\midrule
\Block{1-*}{Food-101}
\\
\midrule 
\Block{3-1}{APS} 
& CCP     & 1.176 $\pm$ 0.000 & 14.019 $\pm$ 0.064
& 1.208 $\pm$ 0.000 & 17.288 $\pm$ 0.075
& 1.748 $\pm$ 0.000 & 17.663 $\pm$ 0.076
\\ 
& Cluster-CP     & 1.296 $\pm$ 0.000 & 13.998 $\pm$ 0.107
& 1.704 $\pm$ 0.000 & 17.300 $\pm$ 0.183
& 2.148 $\pm$ 0.000 & 17.410 $\pm$ 0.130
\\ 
& \textbf{\newCP} & \textbf{0.556 $\pm$ 0.000} & \textbf{13.564 $\pm$ 0.003}
& \textbf{0.664 $\pm$ 0.000} & \textbf{16.608 $\pm$ 0.006}
& \textbf{0.924 $\pm$ 0.000} & \textbf{16.890 $\pm$ 0.005}
\\
\midrule
\Block{3-1}{RAPS}
& CCP    & 1.160 $\pm$ 0.000 & 14.019 $\pm$ 0.064
& 1.208 $\pm$ 0.000 & 17.301 $\pm$ 0.075
& 1.764 $\pm$ 0.000 & 17.679 $\pm$ 0.076
\\ 
& Cluster-CP      & 1.308 $\pm$ 0.000 & 14.080 $\pm$ 0.113
& 1.804 $\pm$ 0.000 & 17.370 $\pm$ 0.198
& 1.944 $\pm$ 0.000 & 17.488 $\pm$ 0.138
\\ 
& \textbf{\newCP} & \textbf{0.556 $\pm$ 0.000} & \textbf{13.564 $\pm$ 0.003}
& \textbf{0.664 $\pm$ 0.000} & \textbf{16.608 $\pm$ 0.006}
& \textbf{0.924 $\pm$ 0.000} & \textbf{16.890 $\pm$ 0.005}
\\
\bottomrule 
\end{NiceTabular}
}
\end{table*}

\clearpage
\newpage

\subsection{ Complete Experiment Results on Imbalanced Datasets}
\label{subsection:appendix:complete experiment_results}

In this subsection, we report complete experimental results over four imbalanced datasets, three decaying types, and five imbalance ratios when epoch $=200$ and $\alpha = 0.1$.
Specifically, Table \ref{tab:appendix:overall_comparison_cifar10_exp}, \ref{tab:appendix:overall_comparison_cifar10_poly}, \ref{tab:appendix:overall_comparison_cifar10_maj} report results on CIFAR-10 with three decaying types.
Table \ref{tab:appendix:overall_comparison_cifar100_exp}, \ref{tab:appendix:overall_comparison_cifar100_poly}, \ref{tab:appendix:overall_comparison_cifar100_maj} report results on CIFAR-100 with three decaying types.
Table \ref{tab:appendix:overall_comparison_miniimagenet_exp}, \ref{tab:appendix:overall_comparison_miniimagenet_poly}, \ref{tab:appendix:overall_comparison_miniimagenet_maj} report results on mini-ImageNet with three decaying types.
Table \ref{tab:appendix:overall_comparison_food101_exp}, \ref{tab:appendix:overall_comparison_food101_poly}, \ref{tab:appendix:overall_comparison_food101_maj} report results on Food-101 with three decaying types.

\vspace{1.0ex}

Figure \ref{fig:overall_comparison_four_datasets_exp_0.5},
Figure \ref{fig:overall_comparison_four_datasets_poly_0.1},
Figure \ref{fig:overall_comparison_four_datasets_poly_0.5},
Figure \ref{fig:overall_comparison_four_datasets_maj_0.1} and
Figure \ref{fig:overall_comparison_four_datasets_maj_0.5}
show the class-conditional coverage and the corresponding prediction set sizes on \EXP~$\rho=0.5$, \POLY~$\rho=0.1$, \POLY~$\rho=0.5$, \MAJ~$\rho=0.1$, \MAJ~$\rho=0.5$, respectively.
This result on \EXP~$\rho=0.1$ is in Figure \ref{fig:overall_comparison_four_datasets_exp_0.1}.

\vspace{1.0ex}

Figure \ref{fig:condition_number_rank_exp_0.5},
Figure \ref{fig:condition_number_rank_poly_0.1},
Figure \ref{fig:condition_number_rank_poly_0.5},
Figure \ref{fig:condition_number_rank_maj_0.1} and
Figure \ref{fig:condition_number_rank_maj_0.5}
illustrates the normalized frequency distribution of label ranks included in the prediction sets on EXP $\rho=0.5$, POLY $\rho=0.1$, POLY $\rho=0.5$, MAJ $\rho=0.1$, MAJ $\rho=0.5$, respectively. 
This result on \EXP~$\rho=0.1$ is in Figure \ref{fig:condition_number_rank_exp_0.1}.
It is evident that the distribution of label ranks in the prediction set generated by \texttt{\newCP}~tends to be lower compared to those produced by \texttt{CCP} and \texttt{Cluster-CP}. 
Furthermore, the probability density function tail for label ranks in the \texttt{\newCP}~prediction set is notably shorter than that of other methods. This indicates that \texttt{\newCP}~more effectively incorporates lower-ranked labels into prediction sets, as a result of its augmented rank calibration scheme.

\vspace{1.0ex}

Figure \ref{fig:condition_number_sigma_y_exp_0.5},
Figure \ref{fig:condition_number_sigma_y_poly_0.1},
Figure \ref{fig:condition_number_sigma_y_poly_0.5},
Figure \ref{fig:condition_number_sigma_y_maj_0.1} and
Figure \ref{fig:condition_number_sigma_y_maj_0.5}
verify the condition numbers $\sigma_y$ when models are fully trained (epoch $=200$) on EXP $\rho=0.5$, POLY $\rho=0.1$, POLY $\rho=0.5$, MAJ $\rho=0.1$, MAJ $\rho=0.5$, respectively.
This result on \EXP~$\rho=0.1$ is in Figure Figure \ref{fig:condition_number_sigma_y_exp_0.1}.
We also evaluate the condition numbers $\sigma_y$ when models are lessly trained (epoch $=50$) and $\alpha = 0.1$on EXP $\rho=0.5$, EXP $\rho=0.1$, POLY $\rho=0.1$, POLY $\rho=0.5$, MAJ $\rho=0.1$, MAJ $\rho=0.5$, respectively. These results are shown from Figure  \ref{fig:condition_number_sigma_y_exp_0.5_50} to Figure \ref{fig:condition_number_sigma_y_maj_0.5_50}. 
These results verify the validity of Lemma \ref{lemma:RC3P_improved_efficiency} and Equation \ref{eq:sigma_y_defination} and confirm that the optimized trade-off between the coverage with inflated quantile and the constraint with calibrated rank leads to smaller prediction sets.
They also show a stronger condition ($\sigma_y \leq 1$ for all $y$) than the weighted aggregation condition in (\ref{eq:smaller_ps_condition}).
They also confirm that the condition number $\{\sigma_y\}_{y=1}^C$ could be evaluated on calibration datasets without testing datasets and thus decreases the computation cost.
We notice that \texttt{\newCP} degenerates to CCP on CIFAR-10, so $\sigma_y = 1$ for all $y$ and there is no trade-off.
On the other three datasets, we observe significant conditions for the optimized trade-off in \texttt{\newCP}.

\begin{table*}[!b]
\centering
\caption{
Results comparing \texttt{CCP}, \texttt{cluster-CP}, and \texttt{\newCP} with ResNet-20 model under different imbalance ratio $\rho=0.5$, $\rho=0.4$, $\rho=0.2$, and $\rho=0.1$ with imbalance type \EXP~and two scoring functions, APS and RAPS, on dataset CIFAR-10.
We set UCR of \texttt{\newCP} the same as or better than that of \texttt{CCP} and \texttt{Cluster-CP} for a fair comparison of prediction set size.
}
\resizebox{\textwidth}{!}{
\begin{NiceTabular}{@{}ccc!{~}ccccc@{}} \toprule 
\Block{2-1}{Scoring\\function} & \Block{2-1}{Measure}  & \Block{2-1}{Methods} & \Block{1-*}{\EXP} \\ 
 & & & $\rho$ = 0.5 & $\rho$ = 0.4 & $\rho$ = 0.3 & $\rho$ = 0.2 & $\rho$ = 0.1  \\ 
\\
\midrule
\Block{6-1}{APS} & \Block{3-1}{UCR} 
& CCP     & 0.050 $\pm$ 0.016 & 0.06 $\pm$ 0.021
& 0.050 $\pm$ 0.016 & 0.050 $\pm$ 0.021 & 0.100 $\pm$ 0.020
\\ 
& & Cluster-CP     & \textbf{0.010 $\pm$ 0.009} & \textbf{0.050 $\pm$ 0.021} & \textbf{0.0 $\pm$ 0.0} & \textbf{0.030 $\pm$ 0.015} & \textbf{0.090 $\pm$ 0.009} 
\\ 
& & \textbf{\newCP}  & 0.050 $\pm$ 0.016 & 0.06 $\pm$ 0.021
& 0.050 $\pm$ 0.016 & 0.050 $\pm$ 0.021 & 0.100 $\pm$ 0.020
\\ 
\cline{2-8}
& \Block{3-1}{APSS}
& CCP     & \textbf{1.555 $\pm$ 0.010} & \textbf{1.595 $\pm$ 0.013}
& \textbf{1.643 $\pm$ 0.008} & \textbf{1.676 $\pm$ 0.014} & \textbf{1.855 $\pm$ 0.014} 
\\ 
& & Cluster-CP     
& 1.714 $\pm$ 0.018 
& 1.745 $\pm$ 0.018
& 1.825 $\pm$ 0.014
& 1.901 $\pm$ 0.022
& 2.162 $\pm$ 0.015
\\ 
& & \textbf{\newCP}  & \textbf{1.555 $\pm$ 0.010} & \textbf{1.595 $\pm$ 0.013}
& \textbf{1.643 $\pm$ 0.008} & \textbf{1.676 $\pm$ 0.014} & \textbf{1.855 $\pm$ 0.014} 
\\
\midrule
\Block{6-1}{RAPS} & \Block{3-1}{UCR} 
& CCP      & 0.050 $\pm$ 0.016 & 0.060 $\pm$ 0.021
& 0.050 $\pm$ 0.016 & 0.050 $\pm$ 0.021 & \textbf{0.010 $\pm$ 0.020} 
\\ 
& & Cluster-CP     
& \textbf{0.010 $\pm$ 0.010} 
& \textbf{0.050 $\pm$ 0.021} 
& \textbf{0.000 $\pm$ 0.000}
& \textbf{0.030 $\pm$ 0.014}
& \textbf{0.010 $\pm$ 0.010}
\\ 
& & \textbf{\newCP}   & 0.050 $\pm$ 0.016 & 0.060 $\pm$ 0.021
& 0.050 $\pm$ 0.016 & 0.050 $\pm$ 0.021 & \textbf{0.010 $\pm$ 0.020} 
\\ 
\cline{2-8}
& \Block{3-1}{APSS}
& CCP     & \textbf{1.555 $\pm$ 0.010} & \textbf{1.595 $\pm$ 0.013}
& \textbf{1.643 $\pm$ 0.008} & \textbf{1.676 $\pm$ 0.014} & \textbf{1.855 $\pm$ 0.014} 
\\ 
& & Cluster-CP     
& 1.714 $\pm$ 0.018 
& 1.745 $\pm$ 0.018 
& 1.825 $\pm$ 0.014  
& 1.901 $\pm$ 0.022
& 2.162 $\pm$ 0.015
\\ 
& & \textbf{\newCP}  & \textbf{1.555 $\pm$ 0.010} & \textbf{1.595 $\pm$ 0.013}
& \textbf{1.643 $\pm$ 0.008} & \textbf{1.676 $\pm$ 0.014} & \textbf{1.855 $\pm$ 0.014} 
\\
\bottomrule 
\end{NiceTabular}
}
\label{tab:appendix:overall_comparison_cifar10_exp}
\end{table*}

\begin{table*}[!b]
\centering
\caption{
Results comparing \texttt{CCP}, \texttt{cluster-CP}, and \texttt{\newCP} with ResNet-20 model under different imbalance ratio $\rho=0.5$, $\rho=0.4$, $\rho=0.2$, and $\rho=0.1$ with imbalance type \POLY~and two scoring functions, APS and RAPS, on dataset CIFAR-10.
We set UCR of \texttt{\newCP} the same as or better than that of \texttt{CCP} and \texttt{Cluster-CP} for a fair comparison of prediction set size.
}
\resizebox{\textwidth}{!}{
\begin{NiceTabular}{@{}ccc!{~}ccccc@{}} \toprule 
\Block{2-1}{Scoring\\function} & \Block{2-1}{Measure}  & \Block{2-1}{Methods} & \Block{1-*}{\POLY} \\ 
 & & & $\rho$ = 0.5 & $\rho$ = 0.4 & $\rho$ = 0.3 & $\rho$ = 0.2 & $\rho$ = 0.1  \\ 
\\
\midrule
\Block{6-1}{APS} & \Block{3-1}{UCR} 
& CCP     & 0.100 $\pm$ 0.028 & 0.060 $\pm$ 0.026
& 0.060 $\pm$ 0.015 & \textbf{0.050 $\pm$ 0.021} & \textbf{0.050 $\pm$ 0.021} 
\\ 
& & Cluster-CP     
& \textbf{0.080 $\pm$ 0.019} 
& \textbf{0.050 $\pm$ 0.021} 
& \textbf{0.050 $\pm$ 0.025}  
& \textbf{0.050 $\pm$ 0.016}
& 0.060 $\pm$ 0.015 
\\ 
& & \textbf{\newCP}  & 0.100 $\pm$ 0.028 & 0.060 $\pm$ 0.026
& 0.060 $\pm$ 0.015 & \textbf{0.050 $\pm$ 0.021} & \textbf{0.050 $\pm$ 0.021} 
\\ 
\cline{2-8}
& \Block{3-1}{APSS}
& CCP     & \textbf{1.538 $\pm$ 0.010} & \textbf{1.546 $\pm$ 0.011}
& \textbf{1.580 $\pm$ 0.014} & \textbf{1.627 $\pm$ 0.011} & \textbf{1.776 $\pm$ 0.012} 
\\ 
& & Cluster-CP     
& 1.706 $\pm$ 0.014 
& 1.718 $\pm$ 0.014  
& 1.758 $\pm$ 0.016 
& 1.783 $\pm$ 0.016  
& 1.928 $\pm$ 0.013
\\ 
& & \textbf{\newCP}  & \textbf{1.538 $\pm$ 0.010} & \textbf{1.546 $\pm$ 0.011}
& \textbf{1.580 $\pm$ 0.014} & \textbf{1.627 $\pm$ 0.011} & \textbf{1.776 $\pm$ 0.012} 
\\
\midrule
\Block{6-1}{RAPS} & \Block{3-1}{UCR} 
& CCP     & 0.100 $\pm$ 0.028 & 0.060 $\pm$ 0.025
& 0.060 $\pm$ 0.016 & \textbf{0.050 $\pm$ 0.021} & \textbf{0.050 $\pm$ 0.021}
\\ 
& & Cluster-CP     
& \textbf{0.080 $\pm$ 0.019} 
& \textbf{0.050 $\pm$ 0.021} 
& \textbf{0.050 $\pm$ 0.025}
& \textbf{0.050 $\pm$ 0.016}
& 0.060 $\pm$ 0.015
\\ 
& & \textbf{\newCP}  & 0.100 $\pm$ 0.028 & 0.060 $\pm$ 0.025
& 0.060 $\pm$ 0.016 & \textbf{0.050 $\pm$ 0.021} & \textbf{0.050 $\pm$ 0.021}
\\ 
\cline{2-8}
& \Block{3-1}{APSS}
& CCP     & \textbf{1.538 $\pm$ 0.010} & \textbf{1.546 $\pm$ 0.011}
& \textbf{1.581 $\pm$ 0.014} & \textbf{1.627 $\pm$ 0.011} & \textbf{1.776 $\pm$ 0.012} 
\\ 
& & Cluster-CP     
& 1.706 $\pm$ 0.014 
& 1.719 $\pm$ 0.014 
& 1.759 $\pm$ 0.016  
& 1.783 $\pm$ 0.016
& 1.929 $\pm$ 0.013
\\ 
& & \textbf{\newCP}  & \textbf{1.538 $\pm$ 0.010} & \textbf{1.546 $\pm$ 0.011}
& \textbf{1.581 $\pm$ 0.014} & \textbf{1.627 $\pm$ 0.011} & \textbf{1.776 $\pm$ 0.012} 
\\
\bottomrule 
\end{NiceTabular}
}
\label{tab:appendix:overall_comparison_cifar10_poly}
\end{table*}

\begin{table*}[!b]
\centering
\caption{
Results comparing \texttt{CCP}, \texttt{cluster-CP}, and \texttt{\newCP} with ResNet-20 model under different imbalance ratio $\rho=0.5$, $\rho=0.4$, $\rho=0.2$, and $\rho=0.1$ with imbalance type \MAJ~and two scoring functions, APS and RAPS, on dataset CIFAR-10.
We set UCR of \texttt{\newCP} the same as or better than that of \texttt{CCP} and \texttt{Cluster-CP} for a fair comparison of prediction set size.
}
\resizebox{\textwidth}{!}{
\begin{NiceTabular}{@{}ccc!{~}ccccc@{}} \toprule 
\Block{2-1}{Scoring\\function} & \Block{2-1}{Measure}  & \Block{2-1}{Methods} & \Block{1-*}{\MAJ} \\ 
 & & & $\rho$ = 0.5 & $\rho$ = 0.4 & $\rho$ = 0.3 & $\rho$ = 0.2 & $\rho$ = 0.1  \\ 
\\
\midrule
\Block{6-1}{APS} & \Block{3-1}{UCR} 
& CCP     & 0.070 $\pm$ 0.014 & 0.050 $\pm$ 0.016
& 0.080 $\pm$ 0.019 & 0.070 $\pm$ 0.025 & \textbf{0.040 $\pm$ 0.015}
\\ 
& & Cluster-CP   & \textbf{0.020 $\pm$ 0.012} & \textbf{0.040 $\pm$ 0.015} & \textbf{0.020 $\pm$ 0.013} & \textbf{0.010 $\pm$ 0.010} & 0.070 $\pm$ 0.014 
\\ 
& & \textbf{\newCP}  & 0.070 $\pm$ 0.014 & 0.050 $\pm$ 0.016
& 0.080 $\pm$ 0.019 & 0.070 $\pm$ 0.025 & \textbf{0.040 $\pm$ 0.015}
\\ 
\cline{2-8}
& \Block{3-1}{APSS}
& CCP    & \textbf{1.84 $\pm$ 0.020} & \textbf{1.825 $\pm$ 0.014}
& \textbf{1.939 $\pm$ 0.016} & \textbf{2.054 $\pm$ 0.013} & \textbf{2.629 $\pm$ 0.013} 
\\ 
& & Cluster-CP    
& 1.948 $\pm$ 0.023 
& 1.999 $\pm$ 0.027 
& 2.167 $\pm$ 0.030 
& 2.457 $\pm$ 0.021 
& 3.220 $\pm$ 0.020
\\ 
& & \textbf{\newCP}  & \textbf{1.84 $\pm$ 0.020} & \textbf{1.825 $\pm$ 0.014}
& \textbf{1.939 $\pm$ 0.016} & \textbf{2.054 $\pm$ 0.013} & \textbf{2.629 $\pm$ 0.013} 
\\
\midrule
\Block{6-1}{RAPS} & \Block{3-1}{UCR} 
& CCP     & 0.070 $\pm$ 0.014 & 0.050 $\pm$ 0.016
& 0.080 $\pm$ 0.019 & 0.070 $\pm$ 0.025 & \textbf{0.040 $\pm$ 0.015} 
\\ 
& & Cluster-CP     
& \textbf{0.020 $\pm$ 0.013} 
& \textbf{0.040 $\pm$ 0.015} 
& \textbf{0.020 $\pm$ 0.012} 
& \textbf{0.010 $\pm$ 0.010} 
& 0.070 $\pm$ 0.014 
\\ 
& & \textbf{\newCP}  & 0.070 $\pm$ 0.014 & 0.050 $\pm$ 0.016
& 0.080 $\pm$ 0.019 & 0.070 $\pm$ 0.025 & \textbf{0.040 $\pm$ 0.015} 
\\ 
\cline{2-8}
& \Block{3-1}{APSS}
& CCP     & \textbf{1.840 $\pm$ 0.020} & \textbf{1.825 $\pm$ 0.014}
& \textbf{1.940 $\pm$ 0.016} & \textbf{2.055 $\pm$ 0.013} & \textbf{2.632 $\pm$ 0.012} 
\\ 
& & Cluster-CP     
& 1.948 $\pm$ 0.023 
& 1.999 $\pm$ 0.028
& 2.168 $\pm$ 0.030
& 2.458 $\pm$ 0.021
& 3.219 $\pm$ 0.030
\\ 
& & \textbf{\newCP}  & \textbf{1.840 $\pm$ 0.020} & \textbf{1.825 $\pm$ 0.014}
& \textbf{1.940 $\pm$ 0.016} & \textbf{2.055 $\pm$ 0.013} & \textbf{2.632 $\pm$ 0.012} 
\\
\bottomrule 
\end{NiceTabular}
}
\label{tab:appendix:overall_comparison_cifar10_maj}
\end{table*}

\begin{table}[!b]
\centering
\caption{
Results comparing \texttt{CCP}, \texttt{cluster-CP}, and \texttt{\newCP} with ResNet-20 model under different imbalance ratio $\rho=0.5$, $\rho=0.4$, $\rho=0.2$, and $\rho=0.1$ with imbalance type \EXP~and two scoring functions, APS and RAPS, on dataset CIFAR-100.
We set UCR of \texttt{\newCP} the same as or better than that of \texttt{CCP} and \texttt{Cluster-CP} for a fair comparison of prediction set size.
}
\resizebox{\textwidth}{!}{
\begin{NiceTabular}{@{}ccc!{~}ccccc@{}} \toprule 
\Block{2-1}{Scoring\\function} & \Block{2-1}{Measure}  & \Block{2-1}{Methods} & \Block{1-*}{\EXP} \\ 
 & & & $\rho$ = 0.5 & $\rho$ = 0.4 & $\rho$ = 0.3 & $\rho$ = 0.2 & $\rho$ = 0.1  \\ 
\\
\midrule
\Block{6-1}{APS} & \Block{3-1}{UCR} 
& CCP     
& 0.007$\pm$ 0.002  
& 0.017$\pm$ 0.004
& 0.012$\pm$ 0.004
& 0.015$\pm$ 0.003
& \textbf{0.010$\pm$ 0.002} 
\\ 
& & Cluster-CP     
& 0.012$\pm$ 0.002  
& 0.012$\pm$ 0.003
& \textbf{0.006$\pm$ 0.002 }
& 0.035$\pm$ 0.008
& 0.016$\pm$ 0.004 
\\ 
& & \textbf{\newCP} 
& \textbf{0.005$\pm$ 0.002 }
& \textbf{0.009$\pm$ 0.001 }
& 0.011$\pm$ 0.003
& \textbf{0.013$\pm$ 0.003}
& 0.011$\pm$ 0.002
\\ 
\cline{2-8}
& \Block{3-1}{APSS}
& CCP     
& 44.224$\pm$ 0.341  
& 44.486$\pm$ 0.420
& 47.672$\pm$ 0.463
& 46.955$\pm$ 0.402
& 50.969$\pm$ 0.345 
\\ 
& & Cluster-CP     
& 29.238$\pm$ 0.609  
& 30.602$\pm$ 0.553
& 32.126$\pm$ 0.563
& 33.714$\pm$ 0.863
& 37.592$\pm$ 0.857 
\\ 
& & \textbf{\newCP}  
& \textbf{17.705$\pm$ 0.004} 
& \textbf{18.311$\pm$ 0.005}
& \textbf{19.608$\pm$ 0.007}
& \textbf{20.675$\pm$ 0.005}
& \textbf{21.954$\pm$ 0.005}
\\
\midrule
\Block{6-1}{RAPS} & \Block{3-1}{UCR} 
& CCP     
& 0.007$\pm$ 0.002  
& 0.017$\pm$ 0.004
& 0.012$\pm$ 0.003
& 0.015$\pm$ 0.003
& \textbf{0.011$\pm$ 0.002 }
\\ 
& & Cluster-CP     
& 0.011$\pm$ 0.003 
& \textbf{0.009$\pm$ 0.002}
& \textbf{0.006$\pm$ 0.002}
& 0.034$\pm$ 0.007 
& 0.017$\pm$ 0.004 
\\ 
& & \textbf{\newCP} 
& \textbf{0.005$\pm$ 0.002 }
& 0.012$\pm$ 0.003 
& 0.011$\pm$ 0.003 
& \textbf{0.013$\pm$ 0.003 }
& \textbf{0.011$\pm$ 0.002}
\\ 
\cline{2-8}
& \Block{3-1}{APSS}
& CCP     
& 44.250$\pm$ 0.342  
& 44.499$\pm$ 0.420
& 47.688$\pm$ 0.569
& 46.960$\pm$ 0.404
& 50.970$\pm$ 0.345 
\\ 
& & Cluster-CP     
& 29.267$\pm$ 0.612  
& 30.595$\pm$ 0.549 
& 32.161$\pm$ 0.564
& 33.713$\pm$ 0.864
& 37.595$\pm$ 0.862 
\\ 
& & \textbf{\newCP}  
& \textbf{17.705$\pm$ 0.004} 
& \textbf{18.311$\pm$ 0.005}
& \textbf{19.609$\pm$ 0.007}
& \textbf{20.675$\pm$ 0.005}
& \textbf{21.954$\pm$ 0.005}
\\
\bottomrule 
\end{NiceTabular}
}
\label{tab:appendix:overall_comparison_cifar100_exp}
\end{table}

\begin{table}[!b]
\centering
\caption{
Results comparing \texttt{CCP}, \texttt{cluster-CP}, and \texttt{\newCP} with ResNet-20 model under different imbalance ratio $\rho=0.5$, $\rho=0.4$, $\rho=0.2$, and $\rho=0.1$ with imbalance type \POLY~and two scoring functions, APS and RAPS, on dataset CIFAR-100.
We set UCR of \texttt{\newCP} the same as or better than that of \texttt{CCP} and \texttt{Cluster-CP} for a fair comparison of prediction set size.
}
\resizebox{\textwidth}{!}{
\begin{NiceTabular}{@{}ccc!{~}ccccc@{}} \toprule 
\Block{2-1}{Scoring\\function} & \Block{2-1}{Measure}  & \Block{2-1}{Methods} & \Block{1-*}{\POLY} \\ 
 & & & $\rho$ = 0.5 & $\rho$ = 0.4 & $\rho$ = 0.3 & $\rho$ = 0.2 & $\rho$ = 0.1  \\ 
\\
\midrule
\Block{6-1}{APS} & \Block{3-1}{UCR} 
& CCP     
& 0.010$\pm$ 0.002  
& 0.008$\pm$ 0.002
& 0.016$\pm$ 0.003
& 0.012$\pm$ 0.004
& \textbf{0.014$\pm$ 0.003} 
\\ 
& & Cluster-CP     
& 0.020$\pm$ 0.003   
& 0.020$\pm$ 0.002
& 0.026$\pm$ 0.004
& \textbf{0.009$\pm$ 0.003 }
& 0.034$\pm$ 0.005
\\ 
& & \textbf{\newCP} 
& \textbf{0.009$\pm$ 0.003 }
& \textbf{0.005$\pm$ 0.002 }
& \textbf{0.013$\pm$ 0.004 }
& \textbf{0.011$\pm$ 0.004 }
& 0.015$\pm$ 0.003
\\ 
\cline{2-8}
& \Block{3-1}{APSS}
& CCP     
& 49.889$\pm$ 0.353  
& 54.011$\pm$ 0.466
& 56.031$\pm$ 0.406
& 59.888$\pm$ 0.255
& 64.343$\pm$ 0.237 
\\ 
& & Cluster-CP     
& 38.252$\pm$ 0.316   
& 39.585$\pm$ 0.545   
& 43.310$\pm$ 0.824
& 47.461$\pm$ 0.979
& 52.391$\pm$ 0.595 
\\ 
& & \textbf{\newCP}  
& \textbf{23.048$\pm$ 0.008 } 
& \textbf{24.335$\pm$ 0.005 }
& \textbf{26.366$\pm$ 0.010 }
& \textbf{28.887$\pm$ 0.006 }
& \textbf{33.829$\pm$ 0.005 }
\\
\midrule
\Block{6-1}{RAPS} & \Block{3-1}{UCR} 
& CCP     
& 0.010$\pm$ 0.002  
& 0.008$\pm$ 0.002
& 0.016$\pm$ 0.003
& 0.012$\pm$ 0.004
& \textbf{0.015$\pm$ 0.003} 
\\ 
& & Cluster-CP     
& 0.019$\pm$ 0.004  
& 0.020$\pm$ 0.002  
& 0.026$\pm$ 0.005 
& \textbf{0.009$\pm$ 0.003} 
& 0.034$\pm$ 0.005 
\\ 
& & \textbf{\newCP} 
& \textbf{0.009$\pm$ 0.003 }
& \textbf{0.005$\pm$ 0.002 }
& \textbf{0.013$\pm$ 0.004 }
& 0.011$\pm$ 0.004 
& \textbf{0.015$\pm$ 0.003}
\\ 
\cline{2-8}
& \Block{3-1}{APSS}
& CCP     
& 49.886$\pm$ 0.353 
& 53.994$\pm$ 0.467
& 56.020$\pm$ 0.406
& 59.870$\pm$ 0.253
& 64.332$\pm$ 0.236 
\\ 
& & Cluster-CP     
& 38.258$\pm$ 0.320 
& 39.566$\pm$ 0.549
& 43.304$\pm$ 0.549
& 47.450$\pm$ 0.969
& 52.374$\pm$ 0.592 
\\ 
& & \textbf{\newCP}  
& \textbf{23.048$\pm$ 0.008 } 
& \textbf{24.335$\pm$ 0.005 }
& \textbf{26.366$\pm$ 0.010 }
& \textbf{28.886$\pm$ 0.006 }
& \textbf{33.185$\pm$ 0.005 }
\\
\bottomrule 
\end{NiceTabular}
}
\label{tab:appendix:overall_comparison_cifar100_poly}
\end{table}

\begin{table}[!b]
\centering
\caption{
Results comparing \texttt{CCP}, \texttt{cluster-CP}, and \texttt{\newCP} with ResNet-20 model under different imbalance ratio $\rho=0.5$, $\rho=0.4$, $\rho=0.2$, and $\rho=0.1$ with imbalance type \MAJ~and two scoring functions, APS and RAPS, on dataset CIFAR-100.
We set UCR of \texttt{\newCP} the same as or better than that of \texttt{CCP} and \texttt{Cluster-CP} for a fair comparison of prediction set size.
}
\label{tab:appendix:overall_comparison_cifar100_maj}
\resizebox{\textwidth}{!}{
\begin{NiceTabular}{@{}ccc!{~}ccccc@{}} \toprule 
\Block{2-1}{Scoring\\function} & \Block{2-1}{Measure}  & \Block{2-1}{Methods} & \Block{1-*}{\MAJ} \\ 
 & & & $\rho$ = 0.5 & $\rho$ = 0.4 & $\rho$ = 0.3 & $\rho$ = 0.2 & $\rho$ = 0.1  \\ 
\\
\midrule
\Block{6-1}{APS} & \Block{3-1}{UCR} 
& CCP     
& 0.016$\pm$ 0.003  
& \textbf{0.007$\pm$ 0.002}
& 0.017$\pm$ 0.004
& \textbf{0.010$\pm$ 0.002}
& \textbf{0.008$\pm$ 0.004}
\\ 
& & Cluster-CP     
& \textbf{0.008$\pm$ 0.002}  
& 0.012$\pm$ 0.003
& 0.021$\pm$ 0.004
& 0.021$\pm$ 0.005
& 0.019$\pm$ 0.005 
\\ 
& & \textbf{\newCP} 
& 0.016$\pm$ 0.003 
& 0.010$\pm$ 0.003 
& \textbf{0.015$\pm$ 0.004} 
& \textbf{0.010$\pm$ 0.002 }
& \textbf{0.008$\pm$ 0.004}
\\ 
\cline{2-8}
& \Block{3-1}{APSS}
& CCP     
& 44.194$\pm$ 0.514  
& 49.231$\pm$ 0.129
& 53.676$\pm$ 0.372
& 55.024$\pm$ 0.254
& 64.642$\pm$ 0.535 
\\ 
& & Cluster-CP     
& 31.518$\pm$ 0.335  
& 35.355$\pm$ 0.563  
& 37.514$\pm$ 0.538
& 43.619$\pm$ 0.600
& 50.883$\pm$ 0.673 
\\ 
& & \textbf{\newCP}  
& \textbf{18.581$\pm$ 0.007} 
& \textbf{21.080$\pm$ 0.010}
& \textbf{22.606$\pm$ 0.007}
& \textbf{26.785$\pm$ 0.007}
& \textbf{32.699$\pm$ 0.005}
\\
\midrule
\Block{6-1}{RAPS} & \Block{3-1}{UCR} 
& CCP     
& 0.015$\pm$ 0.003  
& \textbf{0.007$\pm$ 0.002}
& 0.011$\pm$ 0.004
& \textbf{0.010$\pm$ 0.003}
& \textbf{0.008$\pm$ 0.004 }
\\ 
& & Cluster-CP     
& \textbf{0.008$\pm$ 0.003}  
& 0.011$\pm$ 0.003
& 0.021$\pm$ 0.004
& 0.021$\pm$ 0.002
& 0.018$\pm$ 0.005 
\\ 
& & \textbf{\newCP} 
& 0.015$\pm$ 0.003 
& 0.010$\pm$ 0.003 
& \textbf{0.015$\pm$ 0.004 }
& \textbf{0.010$\pm$ 0.002}
& \textbf{0.008$\pm$ 0.004 }
\\ 
\cline{2-8}
& \Block{3-1}{APSS}
& CCP     
& 48.343$\pm$ 0.353  
& 49.252$\pm$ 0.128
& 53.666$\pm$ 0.371
& 55.016$\pm$ 0.254
& 64.633$\pm$ 0.535 
\\ 
& & Cluster-CP     
& 31.513$\pm$ 0.325  
& 35.352$\pm$ 0.547 
& 37.503$\pm$ 0.535
& 43.615$\pm$ 0.608
& 50.379$\pm$ 0.684 
\\ 
& & \textbf{\newCP}  
& \textbf{18.581$\pm$ 0.006 } 
& \textbf{21.080$\pm$ 0.010 }
& \textbf{22.605$\pm$ 0.007 }
& \textbf{26.786$\pm$ 0.007 }
& \textbf{32.699$\pm$ 0.006 }
\\
\bottomrule 
\end{NiceTabular}
}
\end{table}

\begin{table}[!b]
\centering
\caption{
Results comparing \texttt{CCP}, \texttt{cluster-CP}, and \texttt{\newCP} with ResNet-20 model under different imbalance ratio $\rho=0.5$, $\rho=0.4$, $\rho=0.2$, and $\rho=0.1$ with imbalance type \EXP~and two scoring function, APS and RAPS, on dataset mini-ImageNet.
We set UCR of \texttt{\newCP} the same as or better than that of \texttt{CCP} and \texttt{Cluster-CP} for a fair comparison of prediction set size.
}
\resizebox{\textwidth}{!}{
\begin{NiceTabular}{@{}ccc!{~}ccccc@{}} \toprule 
\Block{2-1}{Scoring\\function} & \Block{2-1}{Measure}  & \Block{2-1}{Methods} & \Block{1-*}{\EXP} \\ 
 & & & $\rho$ = 0.5 & $\rho$ = 0.4 & $\rho$ = 0.3 & $\rho$ = 0.2 & $\rho$ = 0.1  \\ 
\\
\midrule
\Block{6-1}{APS} & \Block{3-1}{UCR} 
& CCP     
& 0.008$\pm$ 0.004  
& 0.003$\pm$ 0.002
& 0.003$\pm$ 0.001
& 0.003$\pm$ 0.003
& 0.008$\pm$ 0.004 
\\ 
& & Cluster-CP     
& 0.014$\pm$ 0.004  
& 0.005$\pm$ 0.002
& 0.010$\pm$ 0.002
& 0.010$\pm$ 0.003
& 0.012$\pm$ 0.004 
\\ 
& & \textbf{\newCP} 
& \textbf{0.0$\pm$ 0.0 }
& \textbf{0.0$\pm$ 0.0 }
& \textbf{0.0$\pm$ 0.0 }
& \textbf{0.0$\pm$ 0.0 }
& \textbf{0.001$\pm$ 0.001}
\\ 
\cline{2-8}
& \Block{3-1}{APSS}
& CCP     
& 26.676$\pm$ 0.171  
& 25.663$\pm$ 0.182
& 25.941$\pm$ 0.180
& 26.127$\pm$ 0.187
& 26.111$\pm$ 0.194 
\\ 
& & Cluster-CP     
& 25.889$\pm$ 0.301  
& 25.878$\pm$ 0.258
& 25.680$\pm$ 0.294
& 25.522$\pm$ 0.311
& 25.253$\pm$ 0.346 
\\ 
& & \textbf{\newCP}  
& \textbf{18.129$\pm$ 0.003 } 
& \textbf{17.546$\pm$ 0.002 }
& \textbf{17.352$\pm$ 0.003 }
& \textbf{17.006$\pm$ 0.003 }
& \textbf{17.082$\pm$ 0.002 }
\\
\midrule
\Block{6-1}{RAPS} & \Block{3-1}{UCR} 
& CCP     
& 0.008$\pm$ 0.004  
& 0.004$\pm$ 0.003
& 0.003$\pm$ 0.001
& 0.003$\pm$ 0.003
& 0.009$\pm$ 0.004 
\\ 
& & Cluster-CP     
& 0.006$\pm$ 0.002  
& 0.003$\pm$ 0.001
& 0.009$\pm$ 0.002
& 0.008$\pm$ 0.003
& 0.013$\pm$ 0.005  
\\ 
& & \textbf{\newCP} 
& \textbf{0.0$\pm$ 0.0 }
& \textbf{0.0$\pm$ 0.0 }
& \textbf{0.0$\pm$ 0.0 }
& \textbf{0.0$\pm$ 0.0 }
& \textbf{0.001$\pm$ 0.001 }
\\ 
\cline{2-8}
& \Block{3-1}{APSS}
& CCP     
& 26.756$\pm$ 0.178  
& 26.621$\pm$ 0.182
& 25.021$\pm$ 0.182
& 26.216$\pm$ 0.188
& 26.212$\pm$ 0.199
\\ 
& & Cluster-CP     
& 26.027$\pm$ 0.325  
& 26.000$\pm$ 0.283
& 25.922$\pm$ 0.253
& 25.564$\pm$ 0.358
& 25.415$\pm$ 0.289  
\\ 
& & \textbf{\newCP}  
& \textbf{18.129$\pm$ 0.003 } 
& \textbf{17.546$\pm$ 0.002 }
& \textbf{17.352$\pm$ 0.003 }
& \textbf{17.006$\pm$ 0.003 }
& \textbf{17.082$\pm$ 0.002 }
\\
\bottomrule 
\end{NiceTabular}
}
\label{tab:appendix:overall_comparison_miniimagenet_exp}
\end{table}

\begin{table}[!b]
\centering
\caption{
Results comparing \texttt{CCP}, \texttt{cluster-CP}, and \texttt{\newCP} with ResNet-20 model under different imbalance ratio $\rho=0.5$, $\rho=0.4$, $\rho=0.2$, and $\rho=0.1$ with imbalance type \POLY~and two scoring function, APS and RAPS, on dataset mini-ImageNet.
We set UCR of \texttt{\newCP} the same as or better than that of \texttt{CCP} and \texttt{Cluster-CP} for a fair comparison of prediction set size.
}\label{tab:appendix:overall_comparison_miniimagenet_poly}
\resizebox{\textwidth}{!}{
\begin{NiceTabular}{@{}ccc!{~}ccccc@{}} \toprule 
\Block{2-1}{Scoring\\function} & \Block{2-1}{Measure}  & \Block{2-1}{Methods} & \Block{1-*}{\POLY} \\ 
 & & & $\rho$ = 0.5 & $\rho$ = 0.4 & $\rho$ = 0.3 & $\rho$ = 0.2 & $\rho$ = 0.1  \\ 
\\
\midrule
\Block{6-1}{APS} & \Block{3-1}{UCR} 
& CCP     
& 0.005$\pm$ 0.002  
& 0.004$\pm$ 0.002
& 0.005$\pm$ 0.002
& 0.002$\pm$ 0.001
& 0.004$\pm$ 0.001 
\\ 
& & Cluster-CP     
& 0.011$\pm$ 0.003  
& 0.013$\pm$ 0.003
& 0.015$\pm$ 0.004
& 0.012$\pm$ 0.003
& 0.014$\pm$ 0.003 
\\ 
& & \textbf{\newCP} 
& \textbf{0.0$\pm$ 0.0 }
& \textbf{0.0$\pm$ 0.0 }
& \textbf{0.0$\pm$ 0.0 }
& \textbf{0.0$\pm$ 0.0 }
& \textbf{0.0$\pm$ 0.0}
\\ 
\cline{2-8}
& \Block{3-1}{APSS}
& CCP     
& 26.626$\pm$ 0.133  
& 26.343$\pm$ 0.214  
& 27.168$\pm$ 0.203
& 27.363$\pm$ 0.252
& 26.159$\pm$ 0.208
\\ 
& & Cluster-CP     
& 26.150$\pm$ 0.393  
& 25.348$\pm$ 0.231  
& 26.132$\pm$ 0.415
& 26.390$\pm$ 0.270
& 25.633$\pm$ 0.268
\\ 
& & \textbf{\newCP}  
& \textbf{17.784$\pm$ 0.003 } 
& \textbf{17.752$\pm$ 0.003 }
& \textbf{17.652$\pm$ 0.003 }
& \textbf{17.629$\pm$ 0.003 }
& \textbf{17.465$\pm$ 0.003 }
\\
\midrule
\Block{6-1}{RAPS} & \Block{3-1}{UCR} 
& CCP     
& 0.005$\pm$ 0.002  
& 0.004$\pm$ 0.002
& 0.005$\pm$ 0.002
& 0.002$\pm$ 0.001
& 0.004$\pm$ 0.002  
\\ 
& & Cluster-CP     
& 0.009$\pm$ 0.003  
& 0.016$\pm$ 0.004
& 0.017$\pm$ 0.004
& 0.009$\pm$ 0.003
& 0.016$\pm$ 0.003
\\ 
& & \textbf{\newCP} 
& \textbf{0.0$\pm$ 0.0 }
& \textbf{0.0$\pm$ 0.0 }
& \textbf{0.0$\pm$ 0.0 }
& \textbf{0.0$\pm$ 0.0 }
& \textbf{0.0$\pm$ 0.0}
\\ 
\cline{2-8}
& \Block{3-1}{APSS}
& CCP     
& 26.689$\pm$ 0.142  
& 26.437$\pm$ 0.213  
& 27.254$\pm$ 0.201
& 27.450$\pm$ 0.249
& 26.248$\pm$ 0.219 
\\ 
& & Cluster-CP     
& 26.288$\pm$ 0.407  
& 25.627$\pm$ 0.318  
& 26.220$\pm$ 0.432
& 26.559$\pm$ 0.242
& 25.712$\pm$ 0.315
\\ 
& & \textbf{\newCP}  
& \textbf{17.784$\pm$ 0.003 } 
& \textbf{17.752$\pm$ 0.003 }
& \textbf{17.652$\pm$ 0.003 }
& \textbf{17.629$\pm$ 0.003 }
& \textbf{17.465$\pm$ 0.003 }
\\
\bottomrule 
\end{NiceTabular}
}
\end{table}

\begin{table}[!b]
\centering
\caption{
Results comparing \texttt{CCP}, \texttt{cluster-CP}, and \texttt{\newCP} with ResNet-20 model under different imbalance ratio $\rho=0.5$, $\rho=0.4$, $\rho=0.2$, and $\rho=0.1$ with imbalance type \MAJ~and two scoring function, APS and RAPS, on dataset mini-ImageNet.
We set UCR of \texttt{\newCP} the same as or better than that of \texttt{CCP} and \texttt{Cluster-CP} for a fair comparison of prediction set size.
}\label{tab:appendix:overall_comparison_miniimagenet_maj}
\resizebox{\textwidth}{!}{
\begin{NiceTabular}{@{}ccc!{~}ccccc@{}} \toprule 
\Block{2-1}{Scoring\\function} & \Block{2-1}{Measure}  & \Block{2-1}{Methods} & \Block{1-*}{\MAJ} \\ 
 & & & $\rho$ = 0.5 & $\rho$ = 0.4 & $\rho$ = 0.3 & $\rho$ = 0.2 & $\rho$ = 0.1  \\ 
\\
\midrule
\Block{6-1}{APS} & \Block{3-1}{UCR} 
& CCP     
& 0.010$\pm$ 0.004  
& 0.009$\pm$ 0.003
& \textbf{0.0$\pm$ 0.0 }
& 0.005$\pm$ 0.002
& 0.005$\pm$ 0.002 
\\ 
& & Cluster-CP     
& 0.008$\pm$ 0.002  
& 0.010$\pm$ 0.000
& 0.010$\pm$ 0.003
& 0.012$\pm$ 0.004
& 0.010$\pm$ 0.003 
\\ 
& & \textbf{\newCP} 
& \textbf{0.0$\pm$ 0.0 }
& \textbf{0.0$\pm$ 0.0 }
& \textbf{0.0$\pm$ 0.0 }
& \textbf{0.0$\pm$ 0.0 }
& \textbf{0.0$\pm$ 0.0 }
\\ 
\cline{2-8}
& \Block{3-1}{APSS}
& CCP     
& 27.313$\pm$ 0.154  
& 27.233$\pm$ 0.246
& 26.939$\pm$ 0.177
& 26.676$\pm$ 0.267
& 25.629$\pm$ 0.207
\\ 
& & Cluster-CP     
& 26.918$\pm$ 0.241  
& 26.156$\pm$ 0.255
& 25.786$\pm$ 0.356
& 25.632$\pm$ 0.383
& 25.348$\pm$ 0.334
\\ 
& & \textbf{\newCP}  
& \textbf{18.111$\pm$ 0.002 } 
& \textbf{17.874$\pm$ 0.002 }
& \textbf{18.081$\pm$ 0.003 }
& \textbf{17.800$\pm$ 0.002 }
& \textbf{17.167$\pm$ 0.004 }
\\
\midrule
\Block{6-1}{RAPS} & \Block{3-1}{UCR} 
& CCP     
& 0.009$\pm$ 0.003  
& 0.009$\pm$ 0.003
& \textbf{0.0$\pm$ 0.0 }
& 0.005$\pm$ 0.002
& 0.005$\pm$ 0.002  
\\ 
& & Cluster-CP     
& 0.007$\pm$ 0.002  
& 0.011$\pm$ 0.002
& 0.013$\pm$ 0.004
& 0.014$\pm$ 0.004
& 0.009$\pm$ 0.002
\\ 
& & \textbf{\newCP} 
& \textbf{0.0$\pm$(0.0) }
& \textbf{0.0$\pm$(0.0) }
& \textbf{0.0$\pm$(0.0) }
& \textbf{0.0$\pm$(0.0)}
& \textbf{0.0$\pm$(0.0)}
\\ 
\cline{2-8}
& \Block{3-1}{APSS}
& CCP     
& 27.397$\pm$ 0.162  
& 27.320$\pm$ 0.244
& 27.013$\pm$ 0.177
& 26.782$\pm$ 0.269
& 25.725$\pm$ 0.214
\\ 
& & Cluster-CP     
& 26.969$\pm$ 0.305  
& 26.293$\pm$ 0.245
& 25.956$\pm$ 0.308
& 25.803$\pm$ 0.440
& 25.532$\pm$ 0.350 
\\ 
& & \textbf{\newCP}  
& \textbf{18.111$\pm$ 0.002 } 
& \textbf{17.874$\pm$ 0.002 }
& \textbf{18.081$\pm$ 0.003 }
& \textbf{17.800$\pm$ 0.002 }
& \textbf{17.167$\pm$ 0.004 }
\\
\bottomrule 
\end{NiceTabular}
}
\end{table}

\begin{table}[!b]
\centering
\caption{
Results comparing \texttt{CCP}, \texttt{cluster-CP}, and \texttt{\newCP} with ResNet-20 model under different imbalance ratio $\rho=0.5$, $\rho=0.4$, $\rho=0.2$, and $\rho=0.1$ with imbalance type \EXP~and two scoring function, APS and RAPS, on dataset Food-101.
We set UCR of \texttt{\newCP} the same as or better than that of \texttt{CCP} and \texttt{Cluster-CP} for a fair comparison of prediction set size.
}
\resizebox{\textwidth}{!}{
\begin{NiceTabular}{@{}ccc!{~}ccccc@{}} \toprule 
\Block{2-1}{Scoring\\function} & \Block{2-1}{Measure}  & \Block{2-1}{Methods} & \Block{1-*}{\EXP} \\ 
 & & & $\rho$ = 0.5 & $\rho$ = 0.4 & $\rho$ = 0.3 & $\rho$ = 0.2 & $\rho$ = 0.1  \\ 
\\
\midrule
\Block{6-1}{APS} & \Block{3-1}{UCR} 
& CCP     
& 0.006$\pm$ 0.002  
& 0.010$\pm$ 0.002
& 0.008$\pm$ 0.002
& 0.014$\pm$ 0.004
& 0.006$\pm$ 0.002 
\\ 
& & Cluster-CP     
& 0.003$\pm$ 0.002  
& 0.009$\pm$ 0.003
& 0.006$\pm$ 0.003
& 0.008$\pm$ 0.003
& 0.009$\pm$ 0.003 
\\ 
& & \textbf{\newCP} 
& \textbf{0.0$\pm$ 0.0 }
& \textbf{0.0$\pm$ 0.0 }
& \textbf{0.0$\pm$ 0.0 }
& \textbf{0.0$\pm$ 0.0 }
& \textbf{0.0$\pm$ 0.0}
\\ 
\cline{2-8}
& \Block{3-1}{APSS}
& CCP     
& 27.003$\pm$ 0.183  
& 27.024$\pm$ 0.162
& 28.074$\pm$ 0.199
& 28.512$\pm$ 0.154
& 30.875$\pm$ 0.163 
\\ 
& & Cluster-CP     
& 29.020$\pm$ 0.281  
& 30.120$\pm$ 0.440
& 30.529$\pm$ 0.381
& 31.096$\pm$ 0.350
& 33.327$\pm$ 0.440 
\\ 
& & \textbf{\newCP}  
& \textbf{18.369$\pm$ 0.003} 
& \textbf{18.339$\pm$ 0.004}
& \textbf{18.803$\pm$ 0.003}
& \textbf{19.612$\pm$ 0.005}
& \textbf{21.556$\pm$ 0.006}
\\
\midrule
\Block{6-1}{RAPS} & \Block{3-1}{UCR} 
& CCP     
& 0.006$\pm$ 0.003  
& 0.010$\pm$ 0.002
& 0.008$\pm$ 0.002
& 0.014$\pm$ 0.004
& 0.006$\pm$ 0.002 
\\ 
& & Cluster-CP     
& 0.004$\pm$ 0.003  
& 0.010$\pm$ 0.003
& 0.006$\pm$ 0.003
& 0.010$\pm$ 0.002
& 0.012$\pm$ 0.004  
\\ 
& & \textbf{\newCP} 
& \textbf{0.0$\pm$(0.0) }
& \textbf{0.0$\pm$(0.0) }
& \textbf{0.0$\pm$(0.0) }
& \textbf{0.0$\pm$(0.0)}
& \textbf{0.0$\pm$(0.0)}
\\ 
\cline{2-8}
& \Block{3-1}{APSS}
& CCP     
& 27.022$\pm$ 0.192  
& 27.043$\pm$ 0.163
& 28.098$\pm$ 0.199
& 28.535$\pm$ 0.155
& 30.900$\pm$ 0.170  
\\ 
& & Cluster-CP     
& 28.953$\pm$ 0.333  
& 30.242$\pm$ 0.466
& 30.587$\pm$ 0.377
& 30.924$\pm$ 0.317
& 33.375$\pm$ 0.377  
\\ 
& & \textbf{\newCP}  
& \textbf{18.369$\pm$ 0.004} 
& \textbf{18.339$\pm$ 0.004}
& \textbf{18.803$\pm$ 0.003}
& \textbf{19.612$\pm$ 0.005}
& \textbf{21.556$\pm$ 0.006}
\\
\bottomrule 
\end{NiceTabular}
}
\label{tab:appendix:overall_comparison_food101_exp}
\end{table}

\begin{table}[!b]
\centering
\caption{
Results comparing \texttt{CCP}, \texttt{cluster-CP}, and \texttt{\newCP} with ResNet-20 model under different imbalance ratio $\rho=0.5$, $\rho=0.4$, $\rho=0.2$, and $\rho=0.1$ with imbalance type \POLY~and two scoring function, APS and RAPS, on dataset Food-101.
We set UCR of \texttt{\newCP} the same as or better than that of \texttt{CCP} and \texttt{Cluster-CP} for a fair comparison of prediction set size.
}
\label{tab:appendix:overall_comparison_food101_poly}
\resizebox{\textwidth}{!}{
\begin{NiceTabular}{@{}ccc!{~}ccccc@{}} \toprule 
\Block{2-1}{Scoring\\function} & \Block{2-1}{Measure}  & \Block{2-1}{Methods} & \Block{1-*}{\POLY} \\ 
 & & & $\rho$ = 0.5 & $\rho$ = 0.4 & $\rho$ = 0.3 & $\rho$ = 0.2 & $\rho$ = 0.1  \\ 
\\
\midrule
\Block{6-1}{APS} & \Block{3-1}{UCR} 
& CCP     
& 0.009$\pm$ 0.003  
& 0.005$\pm$ 0.003
& 0.009$\pm$ 0.003
& 0.011$\pm$ 0.003
& 0.008$\pm$ 0.001 
\\ 
& & Cluster-CP     
& 0.004$\pm$ 0.001  
& 0.012$\pm$ 0.002
& 0.012$\pm$ 0.004
& 0.011$\pm$ 0.002
& 0.009$\pm$ 0.002 
\\ 
& & \textbf{\newCP} 
& \textbf{0.0$\pm$ 0.0 }
& \textbf{0.0$\pm$ 0.0 }
& \textbf{0.0$\pm$ 0.0 }
& \textbf{0.0$\pm$ 0.0 }
& \textbf{0.001$\pm$ 0.001 }
\\ 
\cline{2-8}
& \Block{3-1}{APSS}
& CCP     
& 30.943$\pm$ 0.119  
& 31.239$\pm$ 0.198
& 32.283$\pm$ 0.169
& 33.570$\pm$ 0.163
& 35.912$\pm$ 0.105 
\\ 
& & Cluster-CP     
& 33.079$\pm$ 0.393  
& 33.951$\pm$ 0.531
& 34.626$\pm$ 0.352
& 36.546$\pm$ 0.490
& 38.301$\pm$ 0.232 
\\ 
& & \textbf{\newCP}  
& \textbf{21.499$\pm$ 0.003 } 
& \textbf{21.460$\pm$ 0.005 }
& \textbf{22.882$\pm$ 0.005 }
& \textbf{23.708$\pm$ 0.004 }
& \textbf{25.853$\pm$ 0.004 }
\\
\midrule
\Block{6-1}{RAPS} & \Block{3-1}{UCR} 
& CCP     
& 0.009$\pm$ 0.003  
& 0.006$\pm$ 0.003
& 0.009$\pm$ 0.003
& 0.011$\pm$ 0.003
& 0.008$\pm$ 0.001
\\ 
& & Cluster-CP     
& 0.006$\pm$ 0.002  
& 0.013$\pm$ 0.002
& 0.012$\pm$ 0.004
& 0.016$\pm$ 0.002
& 0.006$\pm$ 0.003 
\\ 
& & \textbf{\newCP} 
& \textbf{0.0$\pm$ 0.0 }
& \textbf{0.0$\pm$ 0.0 }
& \textbf{0.0$\pm$ 0.0 }
& \textbf{0.0$\pm$ 0.0 }
& \textbf{0.001$\pm$ 0.001 }
\\ 
\cline{2-8}
& \Block{3-1}{APSS}
& CCP     
& 30.966$\pm$ 0.125  
& 31.257$\pm$ 0.197
& 32.302$\pm$ 0.169
& 33.595$\pm$ 0.164
& 35.940$\pm$ 0.111  
\\ 
& & Cluster-CP     
& 33.337$\pm$ 0.409  
& 33.936$\pm$ 0.448
& 34.878$\pm$ 0.282
& 36.505$\pm$ 0.520
& 38.499$\pm$ 0.216
\\ 
& & \textbf{\newCP}  
& \textbf{21.499$\pm$ 0.003 } 
& \textbf{21.460$\pm$ 0.005 }
& \textbf{22.882$\pm$ 0.005 }
& \textbf{23.708$\pm$ 0.004 }
& \textbf{25.853$\pm$ 0.004 }
\\
\bottomrule 
\end{NiceTabular}
}
\end{table}

\begin{table}[!b]
\centering
\caption{
Results comparing \texttt{CCP}, \texttt{cluster-CP}, and \texttt{\newCP} with ResNet-20 model under different imbalance ratio $\rho=0.5$, $\rho=0.4$, $\rho=0.2$, and $\rho=0.1$ with imbalance type \MAJ~and two scoring function, APS and RAPS, on dataset Food-101.
We set UCR of \texttt{\newCP} the same as or better than that of \texttt{CCP} and \texttt{Cluster-CP} for a fair comparison of prediction set size.
}
\label{tab:appendix:overall_comparison_food101_maj}
\resizebox{\textwidth}{!}{
\begin{NiceTabular}{@{}ccc!{~}ccccc@{}} \toprule 
\Block{2-1}{Scoring\\function} & \Block{2-1}{Measure}  & \Block{2-1}{Methods} & \Block{1-*}{\MAJ} \\ 
 & & & $\rho$ = 0.5 & $\rho$ = 0.4 & $\rho$ = 0.3 & $\rho$ = 0.2 & $\rho$ = 0.1  \\ 
\\
\midrule
\Block{6-1}{APS} & \Block{3-1}{UCR} 
& CCP     
& 0.006$\pm$ 0.001  
& 0.005$\pm$ 0.002
& 0.008$\pm$ 0.003
& 0.010$\pm$ 0.002
& 0.008$\pm$ 0.002 
\\ 
& & Cluster-CP     
& 0.011$\pm$ 0.003  
& 0.005$\pm$ 0.002
& 0.014$\pm$ 0.004 
& 0.016$\pm$ 0.004
& 0.011$\pm$ 0.002 
\\ 
& & \textbf{\newCP} 
& \textbf{0.0$\pm$ 0.0 }
& \textbf{0.0$\pm$ 0.0 }
& \textbf{0.0$\pm$ 0.0 }
& \textbf{0.0$\pm$ 0.0 }
& \textbf{0.0$\pm$ 0.0}
\\ 
\cline{2-8}
& \Block{3-1}{APSS}
& CCP     
& 27.415$\pm$ 0.194  
& 29.369$\pm$ 0.120
& 30.672$\pm$ 0.182
& 31.966$\pm$ 0.165
& 36.776$\pm$ 0.132 
\\ 
& & Cluster-CP     
& 30.071$\pm$ 0.412  
& 31.656$\pm$ 0.261
& 32.857$\pm$ 0.469
& 33.774$\pm$ 0.494
& 39.632$\pm$ 0.342
\\ 
& & \textbf{\newCP}  
& \textbf{19.398$\pm$ 0.006 } 
& \textbf{20.046$\pm$ 0.004 }
& \textbf{21.425$\pm$ 0.003 }
& \textbf{22.175$\pm$ 0.004 }
& \textbf{26.585$\pm$ 0.004 }
\\
\midrule
\Block{6-1}{RAPS} & \Block{3-1}{UCR} 
& CCP     
& 0.006$\pm$ 0.002  
& 0.005$\pm$ 0.002
& 0.008$\pm$ 0.003
& 0.010$\pm$ 0.002
& 0.008$\pm$ 0.002  
\\ 
& & Cluster-CP     
& 0.011$\pm$ 0.003  
& 0.005$\pm$ 0.002
& 0.013$\pm$ 0.004 
& 0.014$\pm$ 0.004
& 0.014$\pm$ 0.004 
\\ 
& & \textbf{\newCP} 
& \textbf{0.0$\pm$(0.0) }
& \textbf{0.0$\pm$(0.0) }
& \textbf{0.0$\pm$(0.0) }
& \textbf{0.0$\pm$(0.0)}
& \textbf{0.0$\pm$(0.0)}
\\ 
\cline{2-8}
& \Block{3-1}{APSS}
& CCP     
& 27.439$\pm$ 0.203  
& 29.393$\pm$ 0.120
& 30.691$\pm$ 0.182
& 31.987$\pm$ 0.165
& 36.802$\pm$ 0.138 
\\ 
& & Cluster-CP     
& 29.946$\pm$ 0.407  
& 31.409$\pm$ 0.303
& 32.724$\pm$ 0.551
& 33.686$\pm$ 0.501
& 39.529$\pm$ 0.306 
\\ 
& & \textbf{\newCP}  
& \textbf{19.397$\pm$ 0.006 } 
& \textbf{20.046$\pm$ 0.004 }
& \textbf{21.425$\pm$ 0.003 }
& \textbf{22.175$\pm$ 0.004 }
& \textbf{26.585$\pm$ 0.004 }
\\
\bottomrule 
\end{NiceTabular}
}
\end{table}

\clearpage
\newpage

\begin{figure*}[!ht]
    \centering
    \begin{minipage}{.24\textwidth}
        \centering
        (a) CIFAR-10
    \end{minipage}%
    \begin{minipage}{.24\textwidth}
        \centering
        (b) CIFAR-100
    \end{minipage}%
    \begin{minipage}{.24\textwidth}
        \centering
        (c) mini-ImageNet
    \end{minipage}%
    \begin{minipage}{.24\textwidth}
        \centering
        (d) Food-101
    \end{minipage}

    \subfigure{
        \begin{minipage}{0.23\linewidth}
            \includegraphics[width=\linewidth]{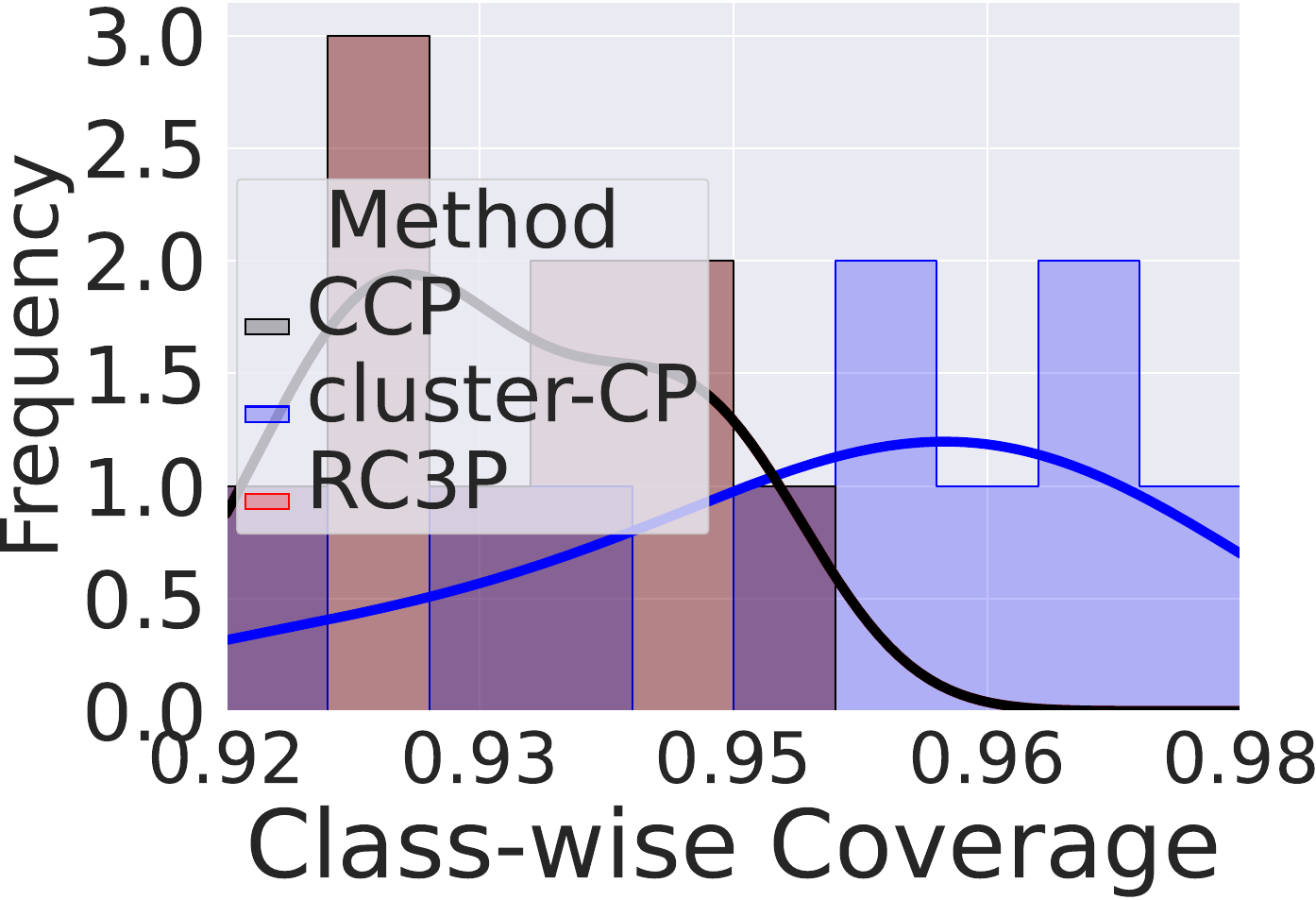}
        \end{minipage}
    }
    \subfigure{
        \begin{minipage}{0.23\linewidth}
            \includegraphics[width=\linewidth]{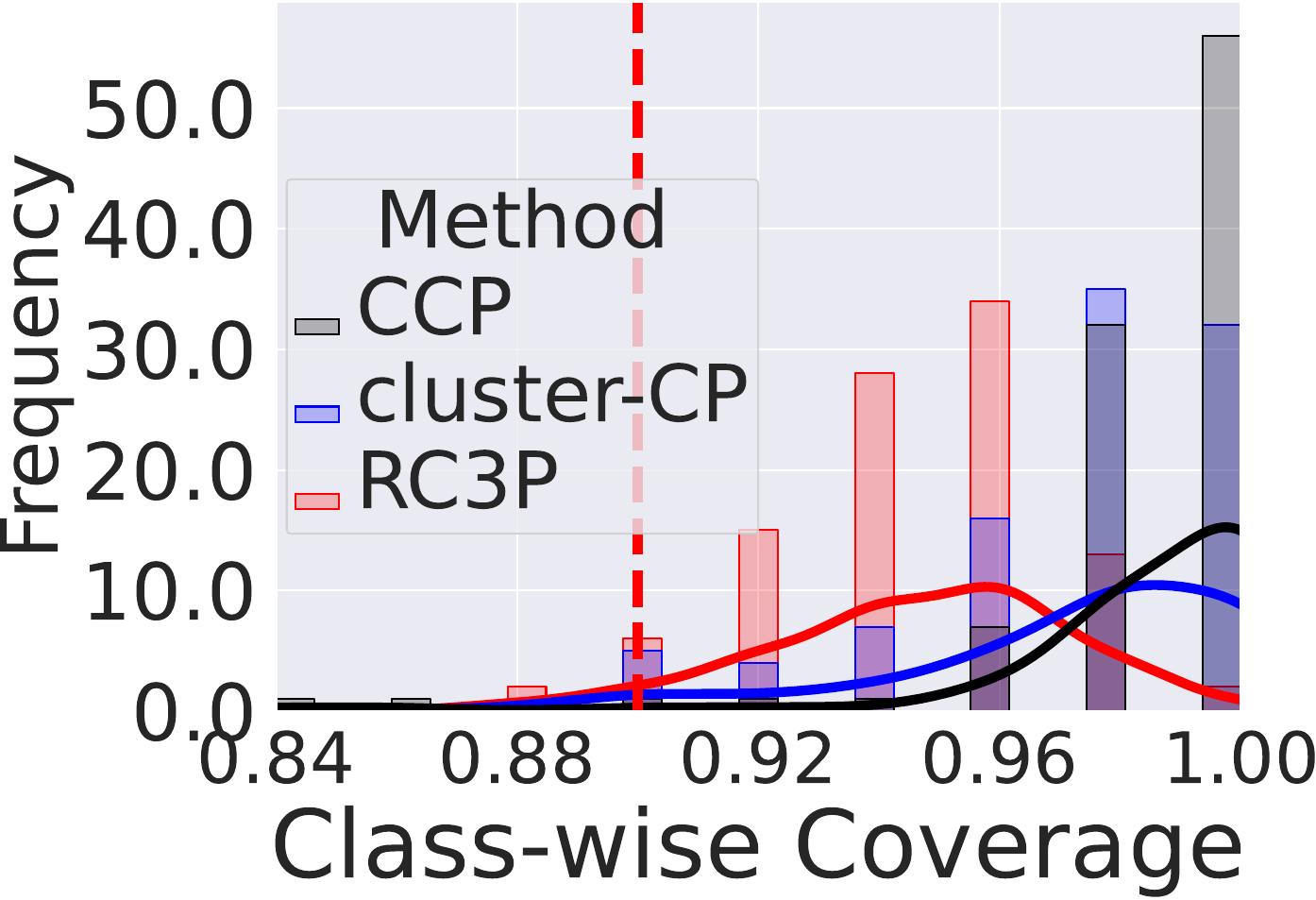}
        \end{minipage}
    }
    \subfigure{
        \begin{minipage}{0.23\linewidth}
            \includegraphics[width=\linewidth]{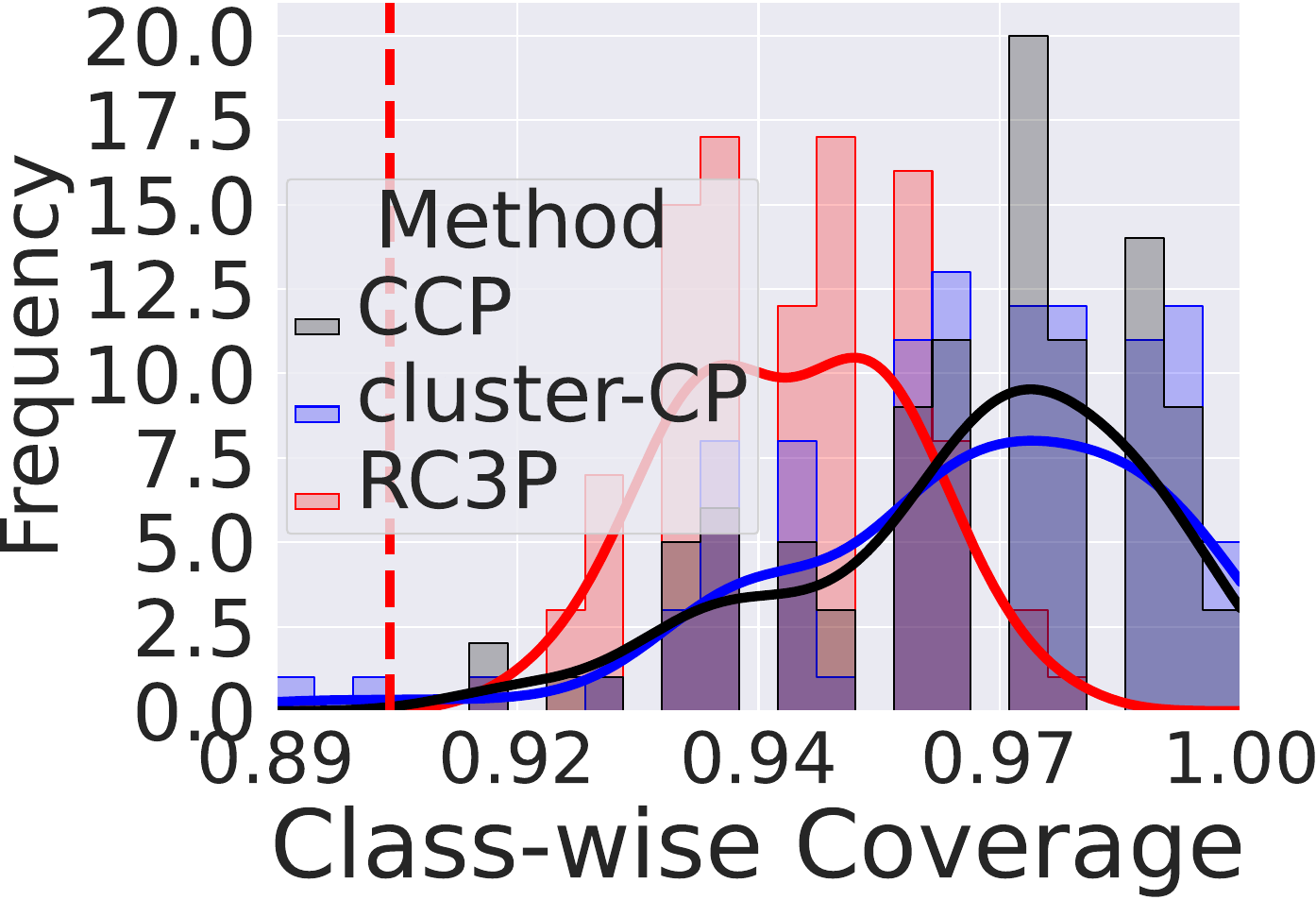}
        \end{minipage}
    }
    \subfigure{
        \begin{minipage}{0.23\linewidth}
            \includegraphics[width=\linewidth]{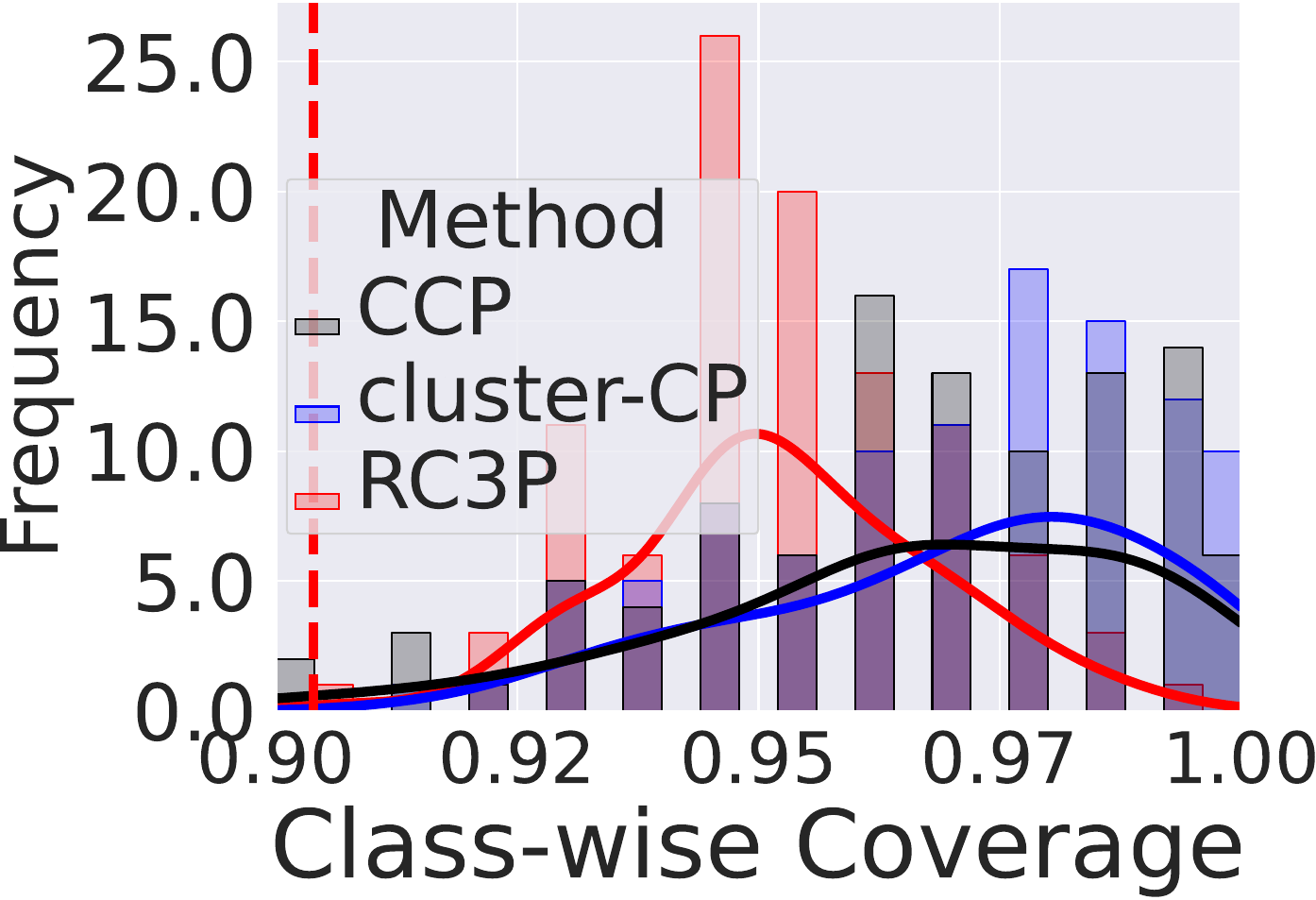}
        \end{minipage}
    }
    \subfigure{
        \begin{minipage}{0.23\linewidth}
            \includegraphics[width=\linewidth]{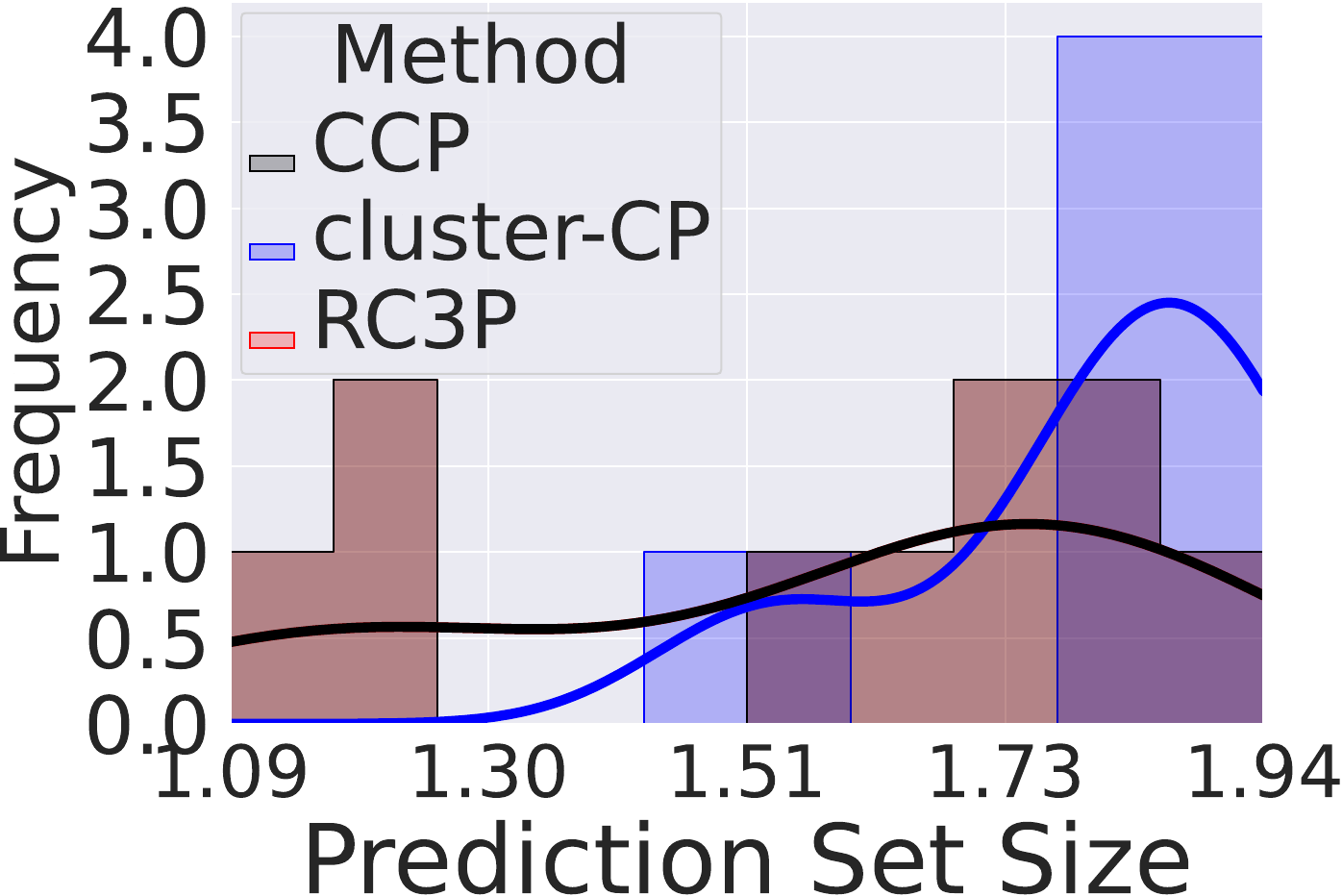}
        \end{minipage}
    }
    \subfigure{
        \begin{minipage}{0.23\linewidth}
            \includegraphics[width=\linewidth]{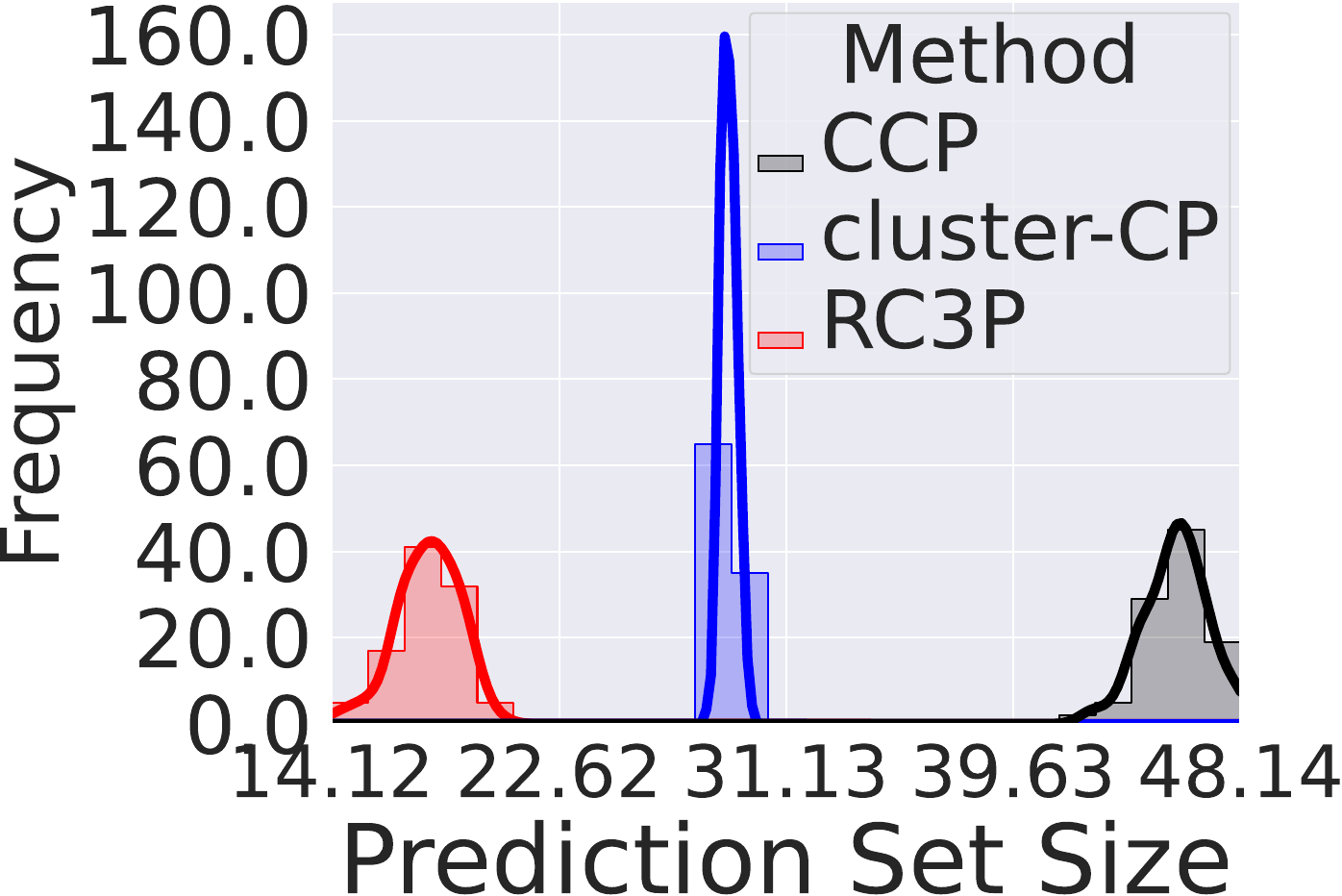}
        \end{minipage}
    }
    \subfigure{
        \begin{minipage}{0.23\linewidth}
            \includegraphics[width=\linewidth]{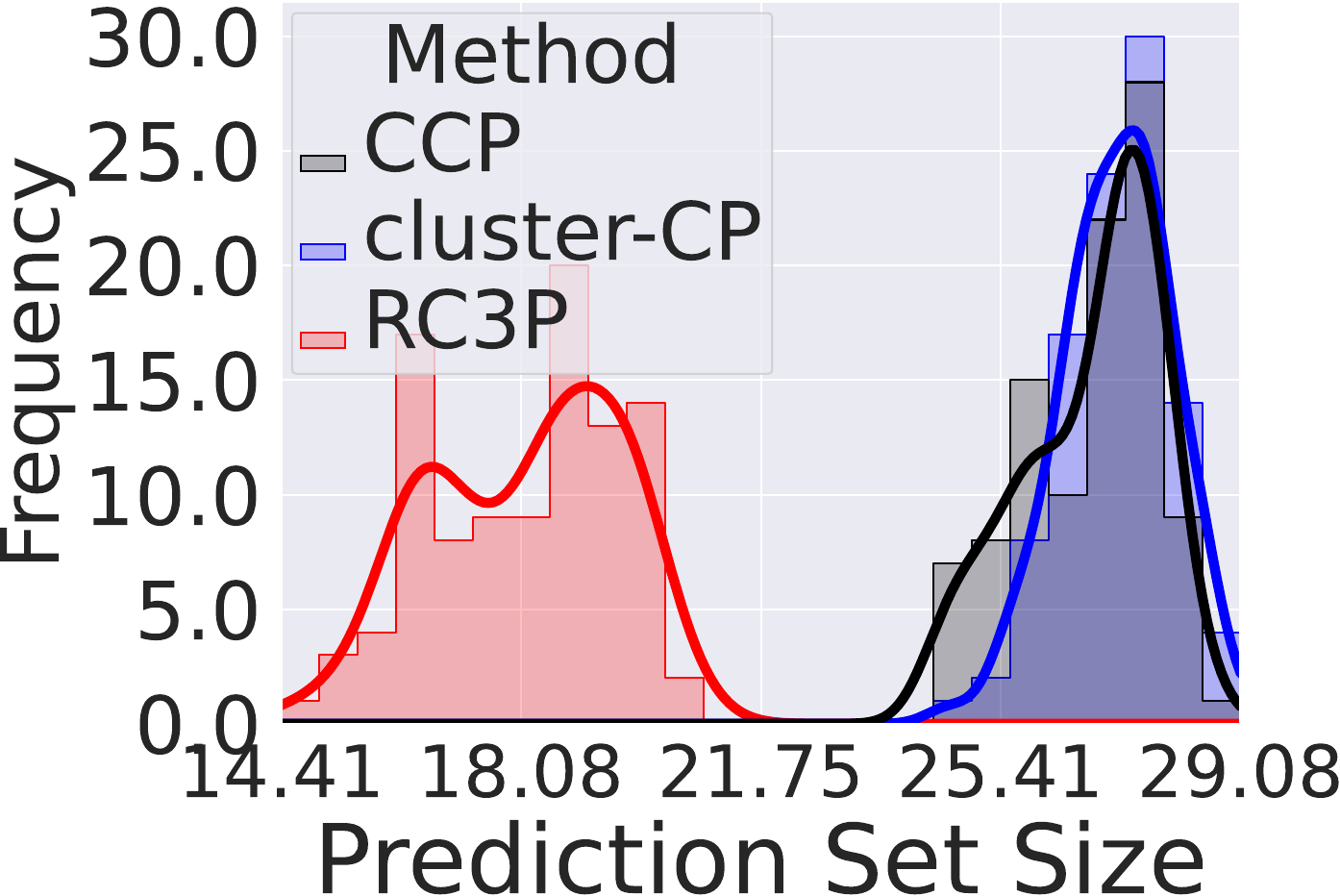}
        \end{minipage}
    }
    \subfigure{
        \begin{minipage}{0.23\linewidth}
            \includegraphics[width=\linewidth]{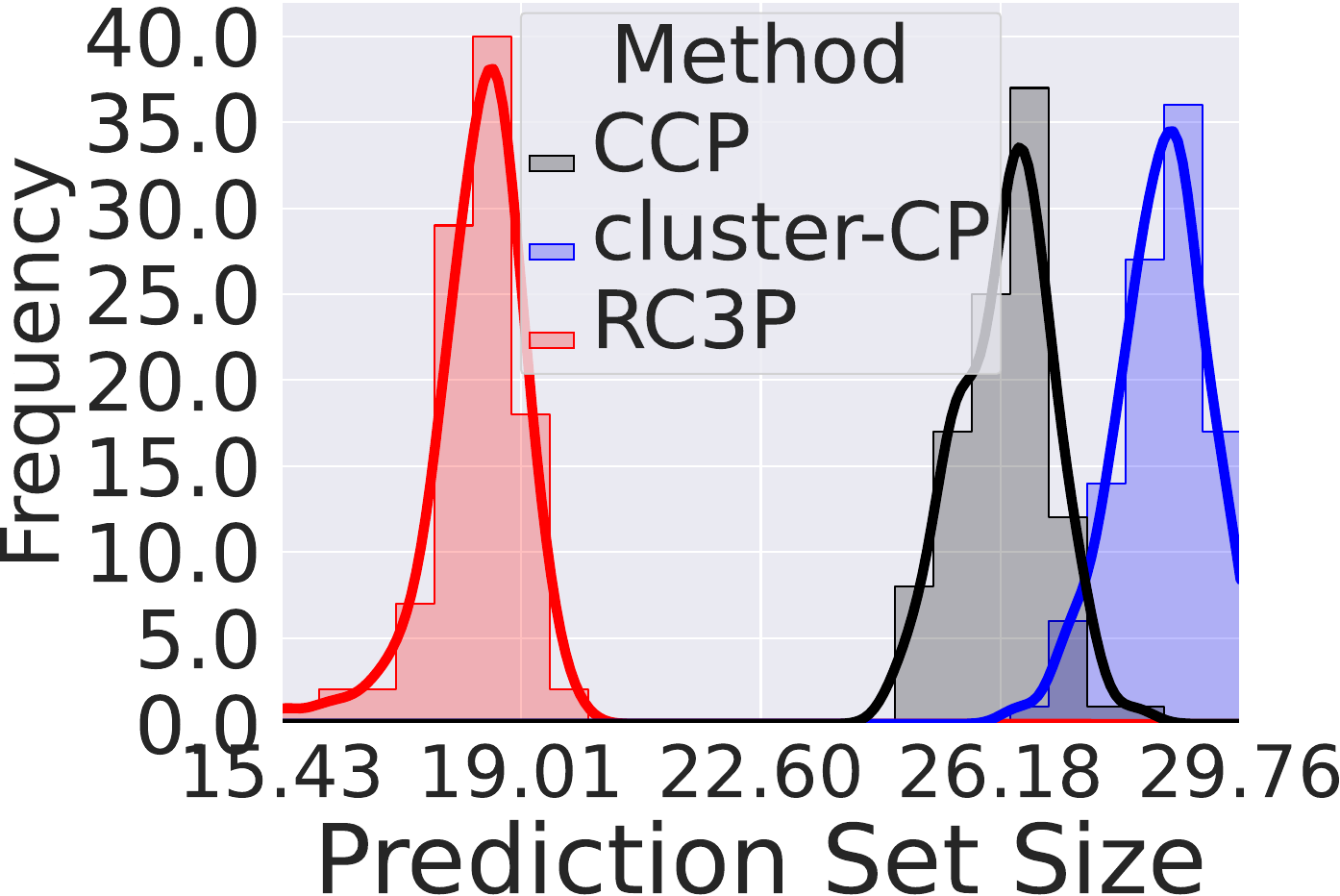}
        \end{minipage}
    }
    \caption{
    Class-conditional coverage (Top row) and prediction set size (Bottom row) achieved by \texttt{CCP}, \texttt{Cluster-CP}, and \texttt{\newCP} methods when $\alpha = 0.1$ on CIFAR-10, CIFAR-100, mini-ImageNet, and Food-101 datasets with imbalance type \EXP~for imbalance ratio $\rho=0.5$.
    We clarify that \texttt{\newCP} overlaps with \texttt{CCP} on CIFAR-10.
    It is clear that \texttt{\newCP} has more densely distributed class-conditional coverage above $0.9$ (the target $1-\alpha$ class-conditional coverage) than \texttt{CCP} and \texttt{Cluster-CP} with significantly smaller prediction sets on CIFAR-100, mini-ImageNet and Food-101.
    }
    \label{fig:overall_comparison_four_datasets_exp_0.5}
\end{figure*}

\begin{figure*}[!ht]
    \centering
    \begin{minipage}{.24\textwidth}
        \centering
        (a) CIFAR-10
    \end{minipage}%
    \begin{minipage}{.24\textwidth}
        \centering
        (b) CIFAR-100
    \end{minipage}%
    \begin{minipage}{.24\textwidth}
        \centering
        (c) mini-ImageNet
    \end{minipage}%
    \begin{minipage}{.24\textwidth}
        \centering
        (d) Food-101
    \end{minipage}

    \subfigure{
        \begin{minipage}{0.23\linewidth}
            \includegraphics[width=\linewidth]{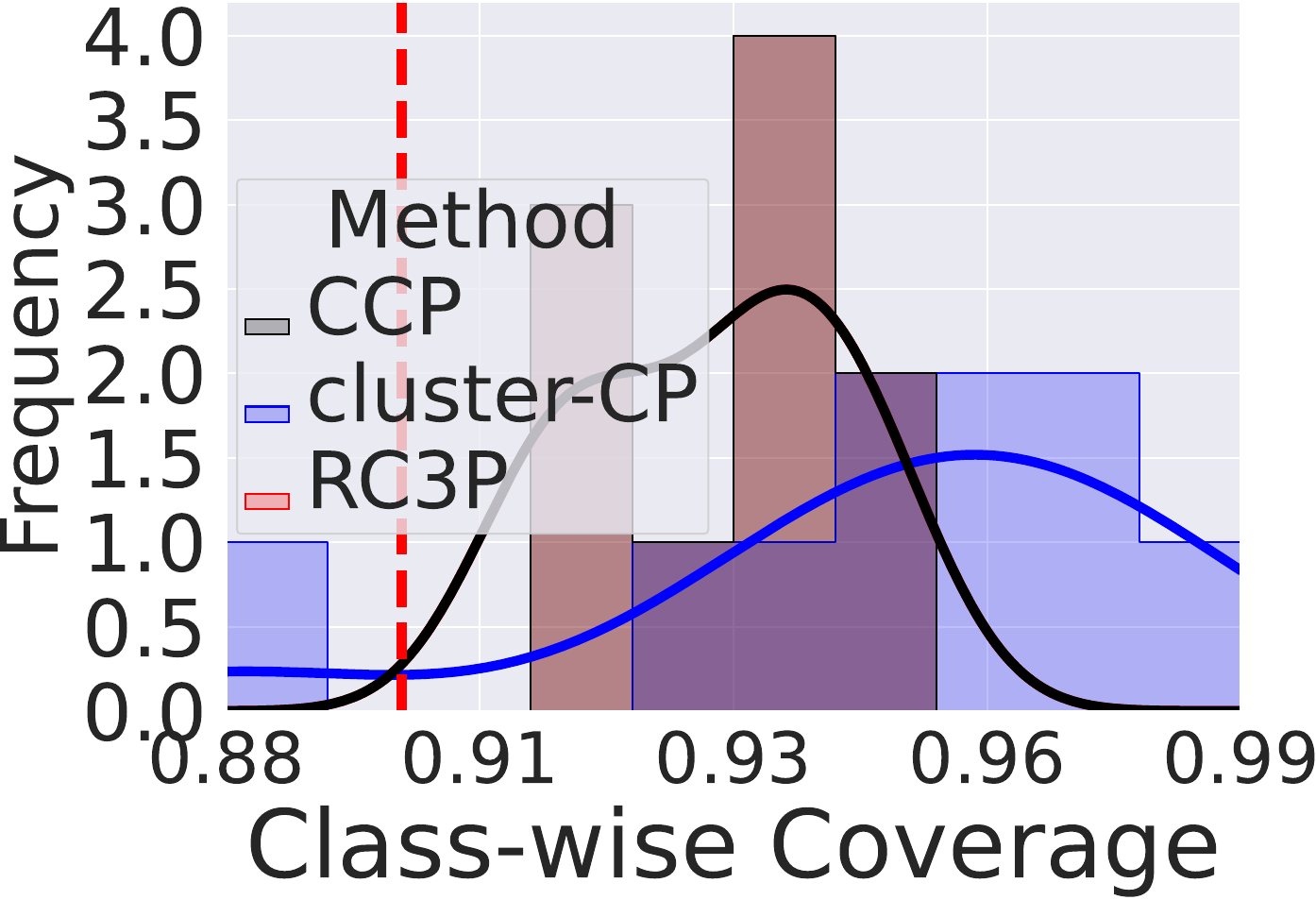}
        \end{minipage}
    }
    \subfigure{
        \begin{minipage}{0.23\linewidth}
            \includegraphics[width=\linewidth]{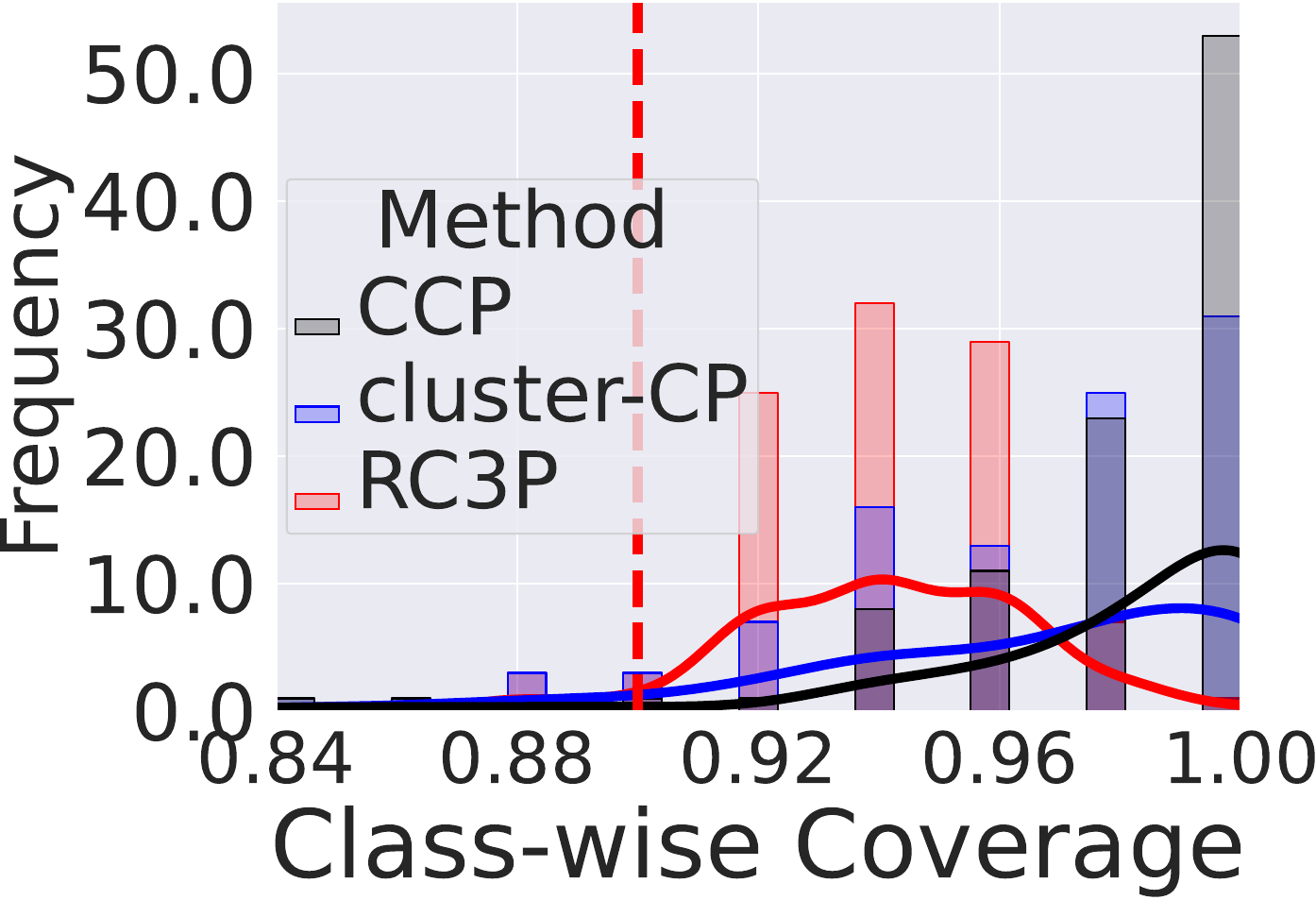}
        \end{minipage}
    }
    \subfigure{
        \begin{minipage}{0.23\linewidth}
            \includegraphics[width=\linewidth]{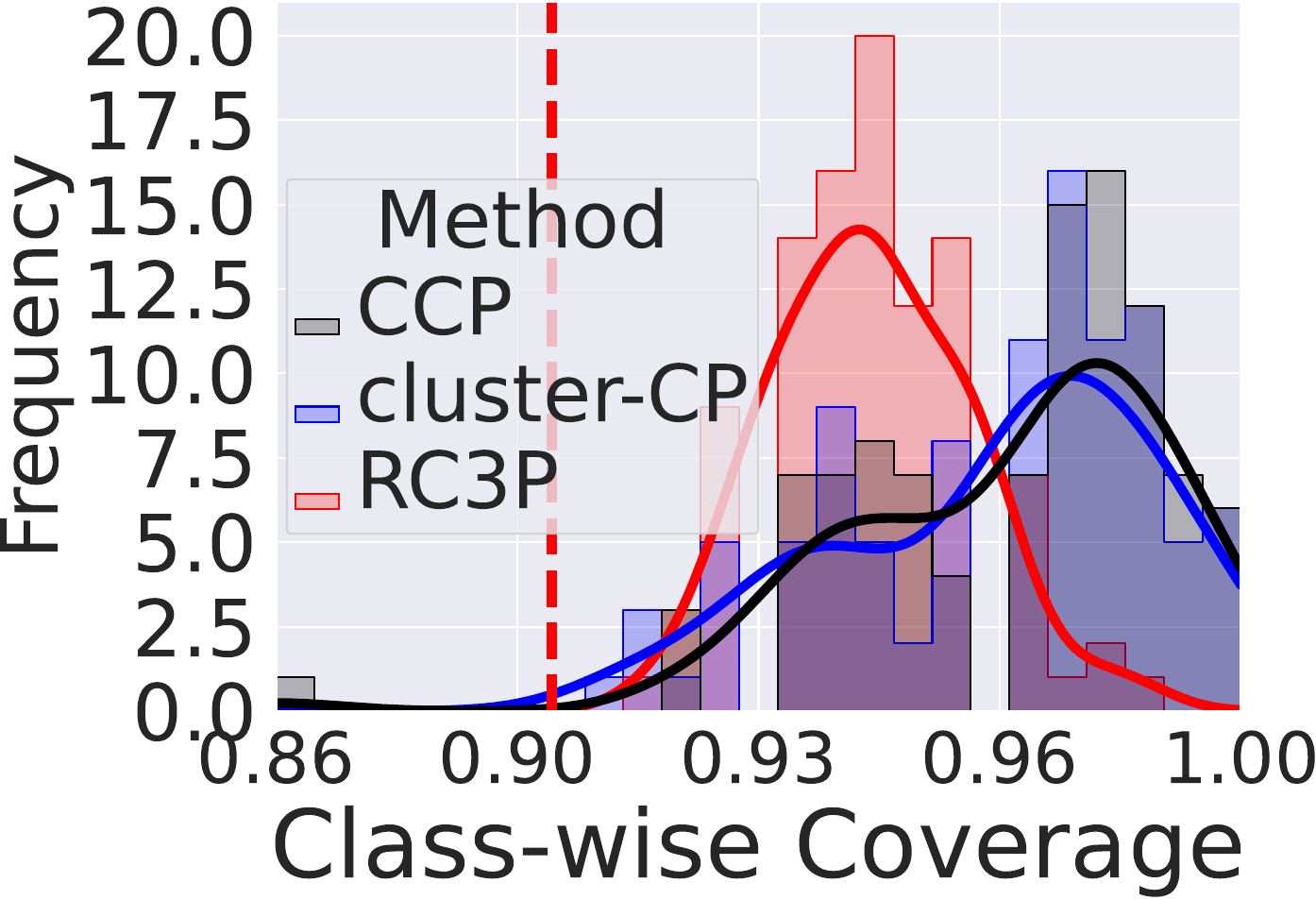}
        \end{minipage}
    }
    \subfigure{
        \begin{minipage}{0.23\linewidth}
            \includegraphics[width=\linewidth]{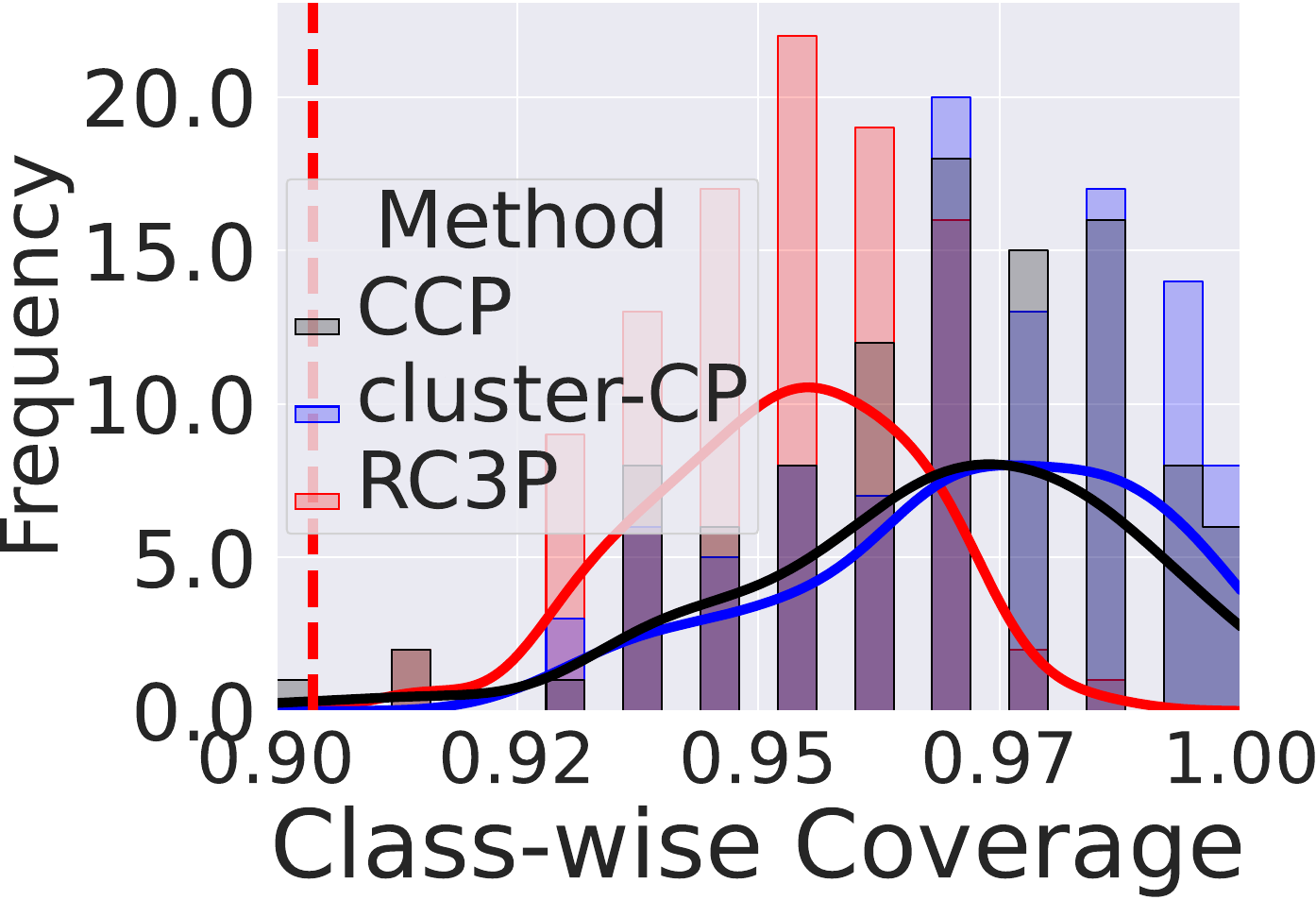}
        \end{minipage}
    }
    \subfigure{
        \begin{minipage}{0.23\linewidth}
            \includegraphics[width=\linewidth]{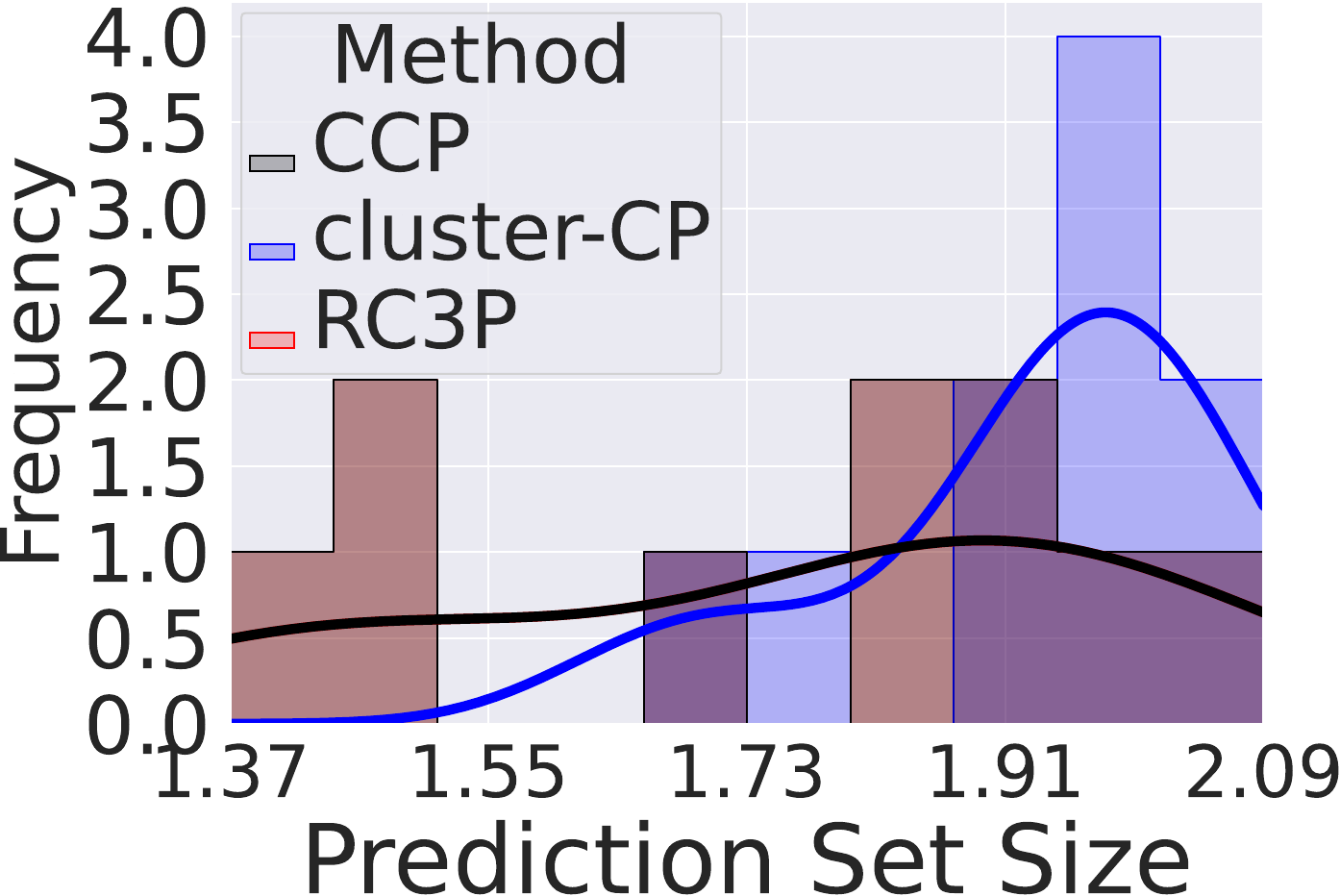}
        \end{minipage}
    }
    \subfigure{
        \begin{minipage}{0.23\linewidth}
            \includegraphics[width=\linewidth]{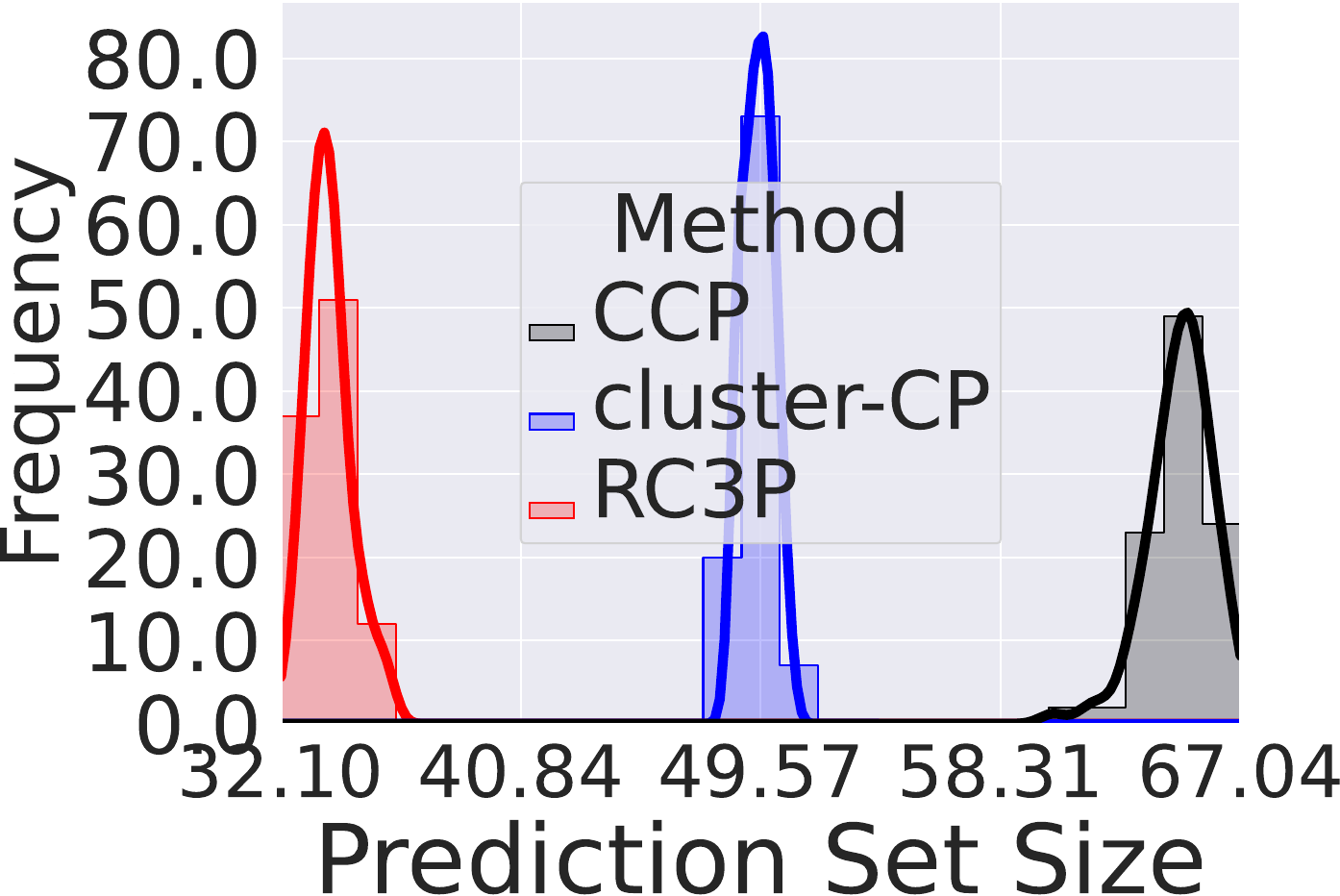}
        \end{minipage}
    }
    \subfigure{
        \begin{minipage}{0.23\linewidth}
            \includegraphics[width=\linewidth]{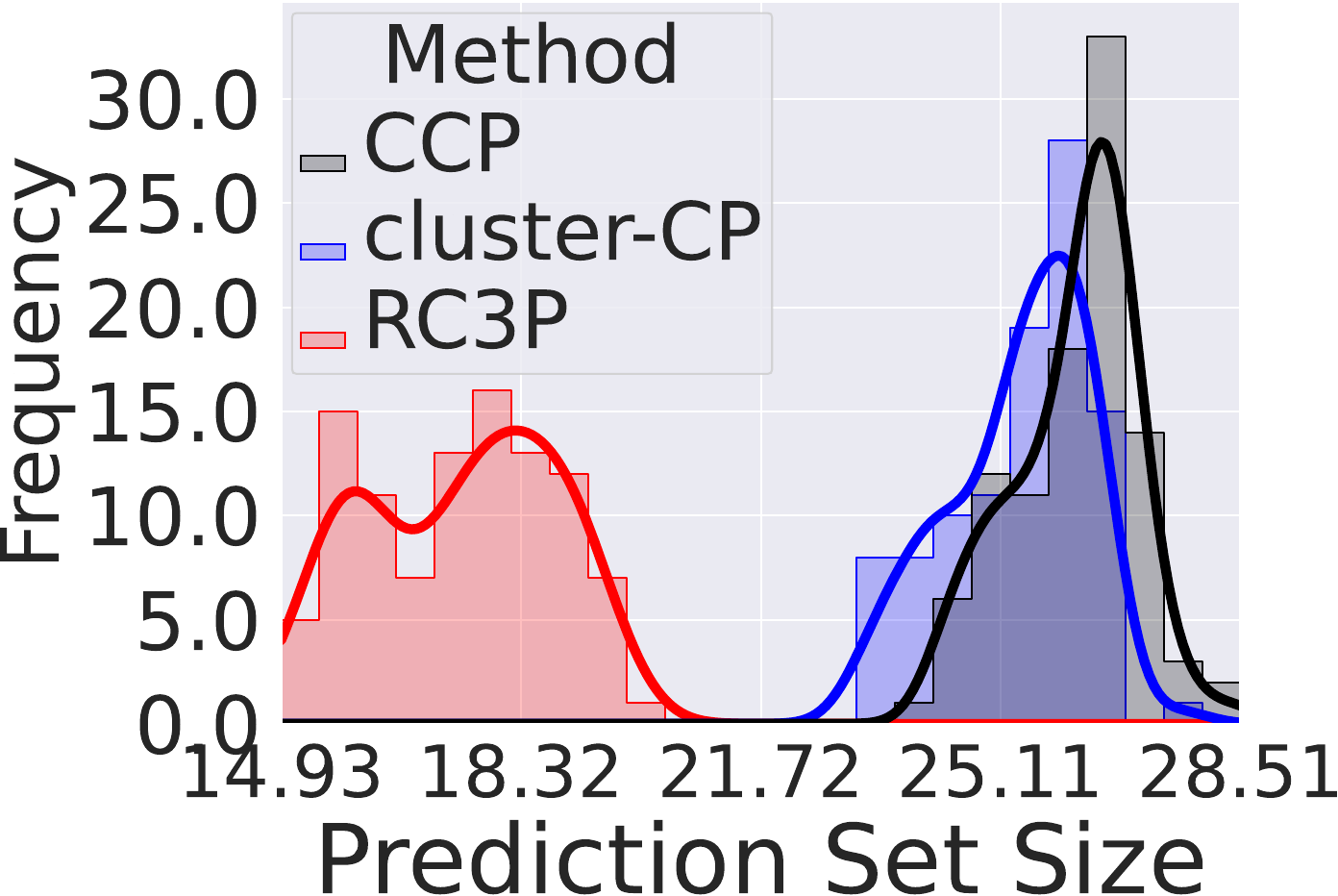}
        \end{minipage}
    }
    \subfigure{
        \begin{minipage}{0.23\linewidth}
            \includegraphics[width=\linewidth]{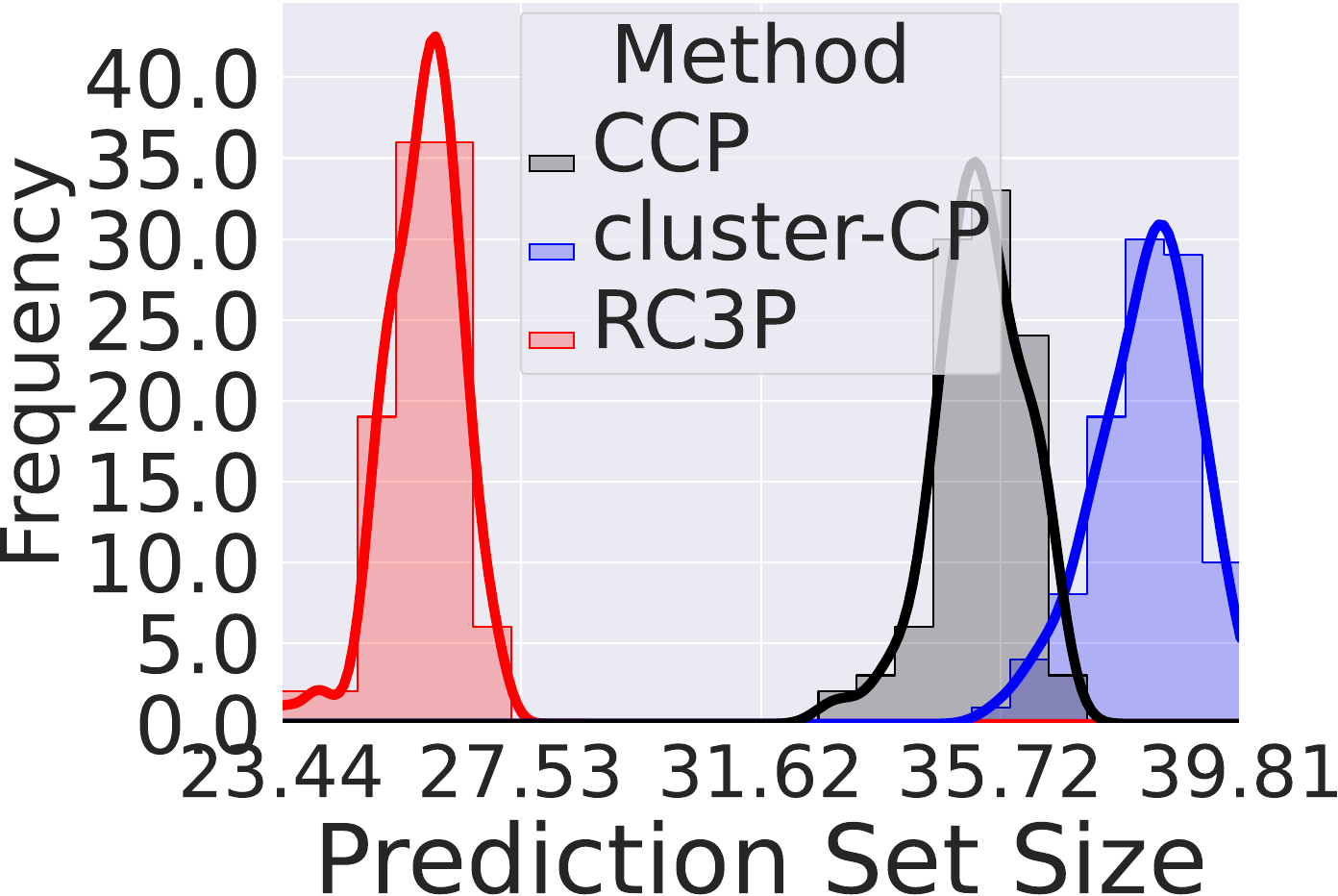}
        \end{minipage}
    }
    \caption{
    Class-conditional coverage (Top row) and prediction set size (Bottom row) achieved by \texttt{CCP}, \texttt{Cluster-CP}, and \texttt{\newCP} methods when $\alpha = 0.1$ on CIFAR-10, CIFAR-100, mini-ImageNet, and Food-101 datasets with imbalance type \POLY~for imbalance ratio $\rho=0.1$.
    We clarify that \texttt{\newCP} overlaps with \texttt{CCP} on CIFAR-10.
    It is clear that \texttt{\newCP} has more densely distributed class-conditional coverage above $0.9$ (the target $1-\alpha$ class-conditional coverage) than \texttt{CCP} and \texttt{Cluster-CP} with significantly smaller prediction sets on CIFAR-100, mini-ImageNet and Food-101.
    }
    \label{fig:overall_comparison_four_datasets_poly_0.1}
\end{figure*}

\begin{figure*}[!ht]
    \centering
    \begin{minipage}{.24\textwidth}
        \centering
        (a) CIFAR-10
    \end{minipage}%
    \begin{minipage}{.24\textwidth}
        \centering
        (b) CIFAR-100
    \end{minipage}%
    \begin{minipage}{.24\textwidth}
        \centering
        (c) mini-ImageNet
    \end{minipage}%
    \begin{minipage}{.24\textwidth}
        \centering
        (d) Food-101
    \end{minipage}

    \subfigure{
        \begin{minipage}{0.23\linewidth}
            \includegraphics[width=\linewidth]{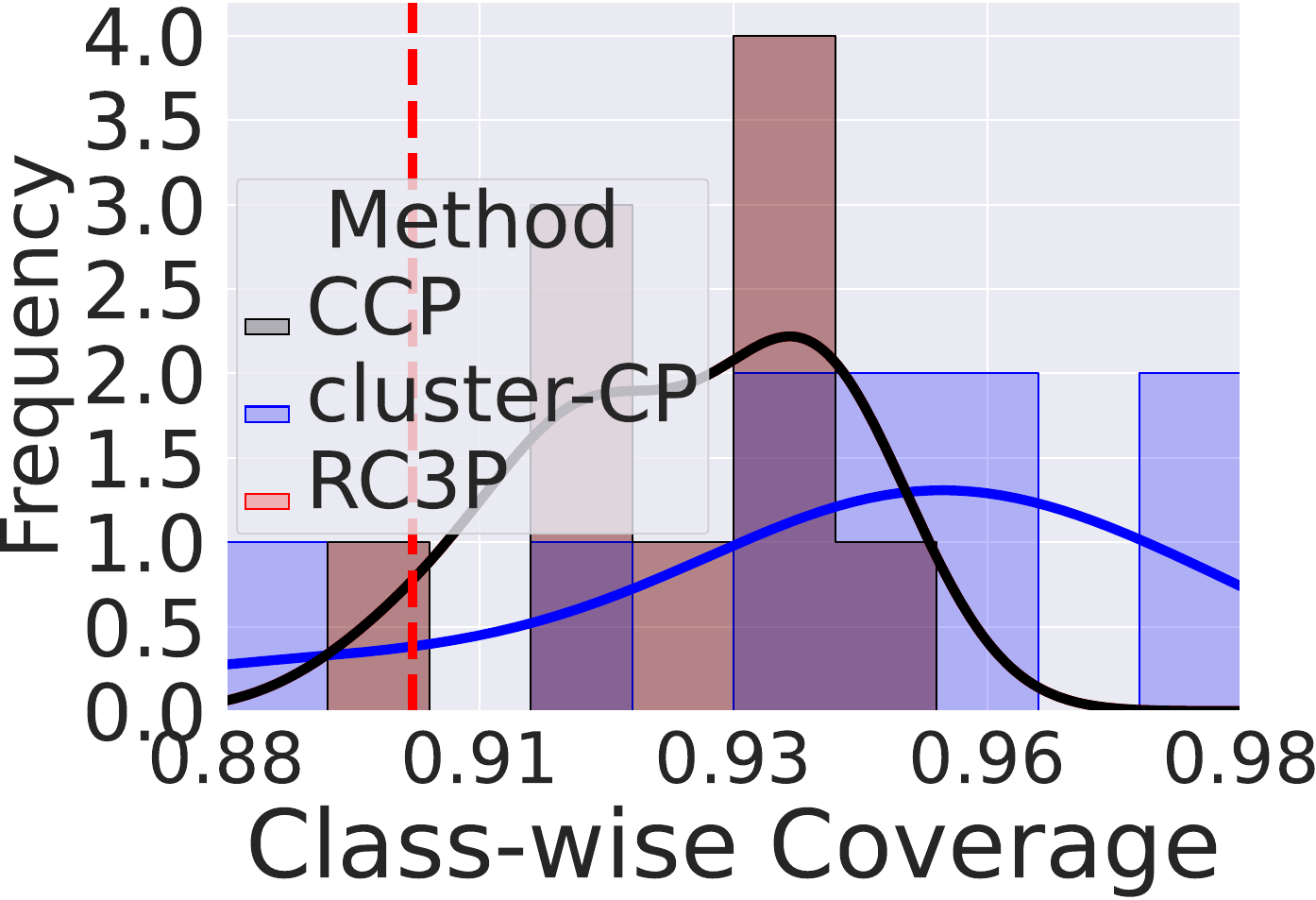}
        \end{minipage}
    }
    \subfigure{
        \begin{minipage}{0.23\linewidth}
            \includegraphics[width=\linewidth]{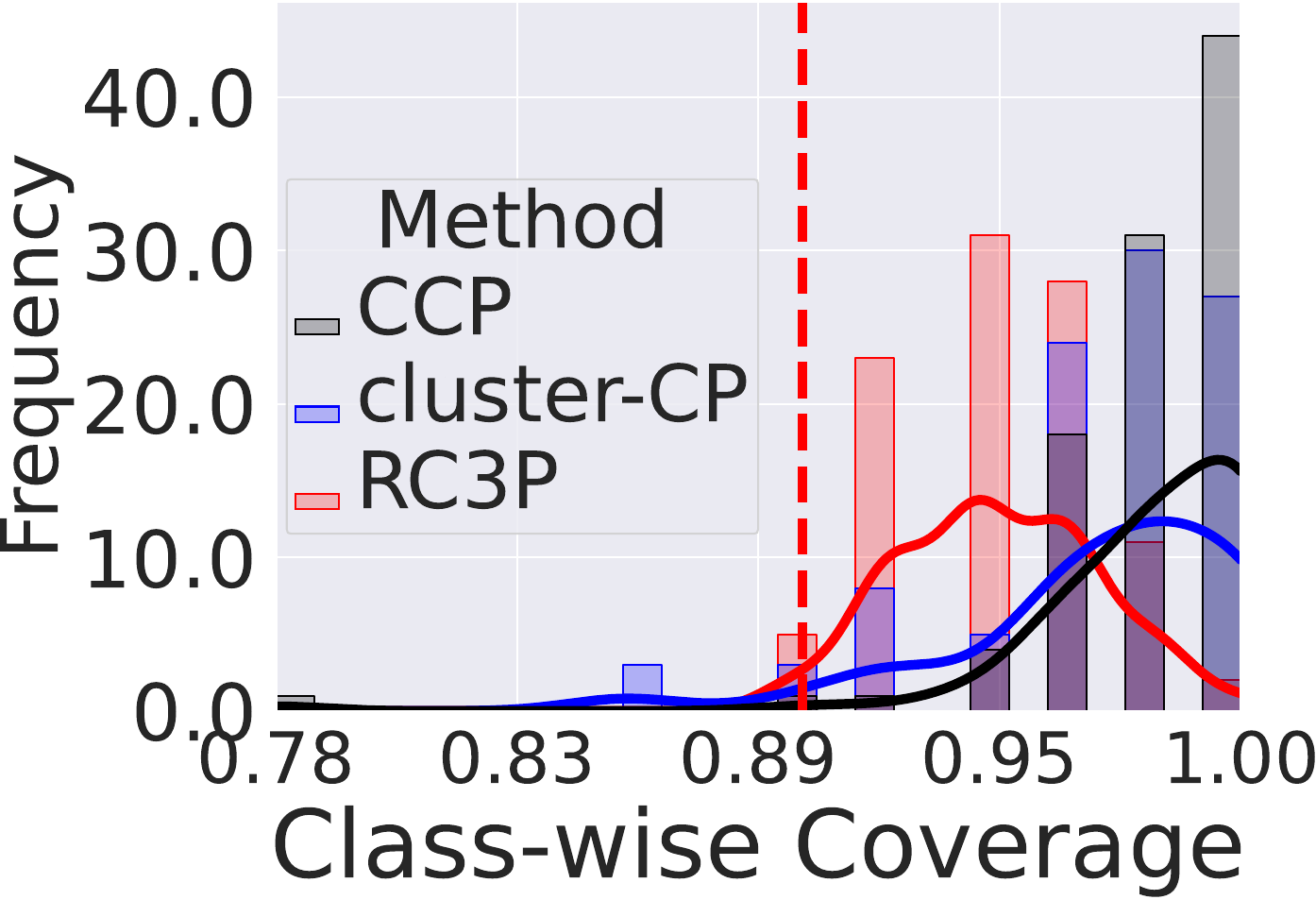}
        \end{minipage}
    }
    \subfigure{
        \begin{minipage}{0.23\linewidth}
            \includegraphics[width=\linewidth]{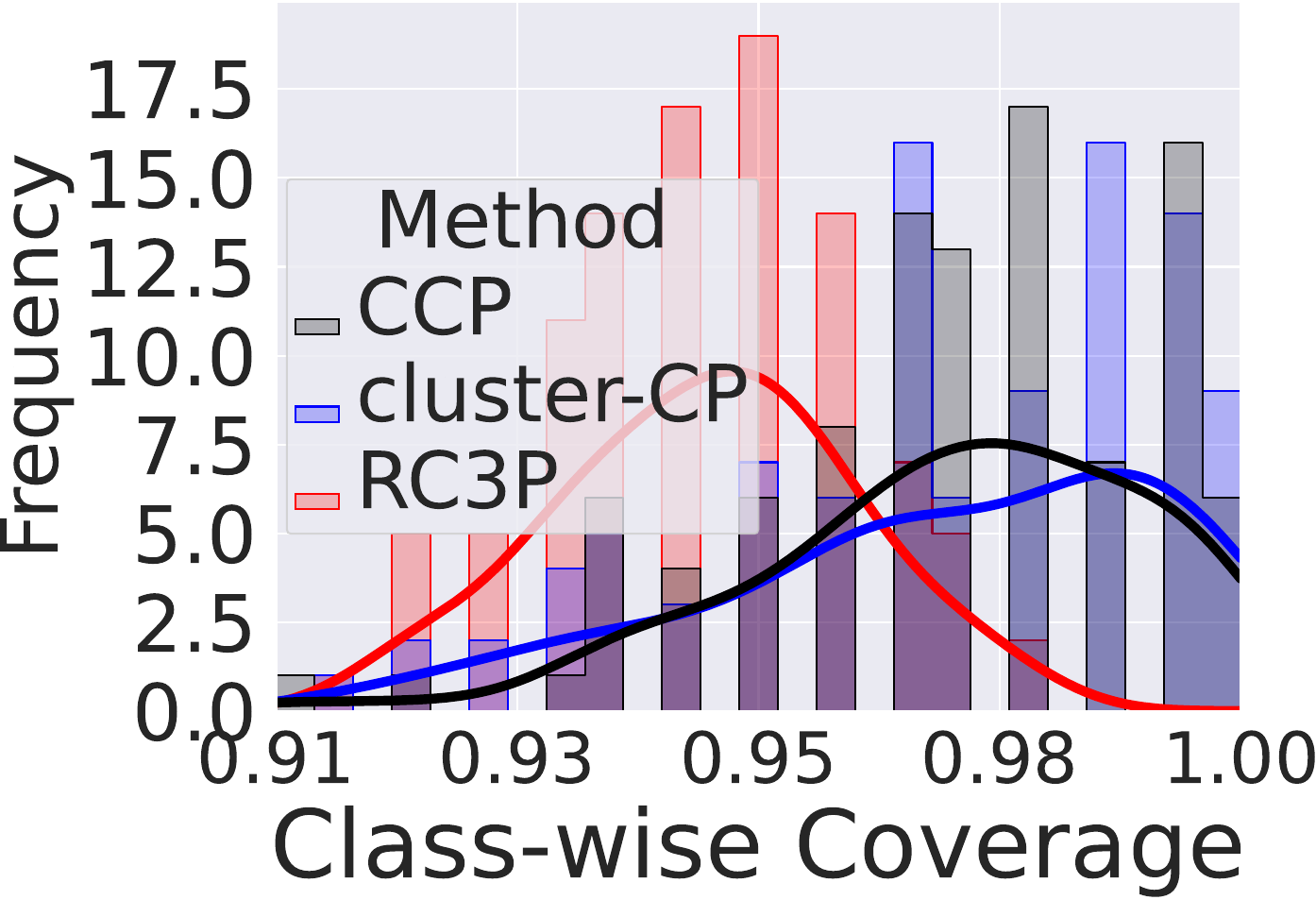}
        \end{minipage}
    }
    \subfigure{
        \begin{minipage}{0.23\linewidth}
            \includegraphics[width=\linewidth]{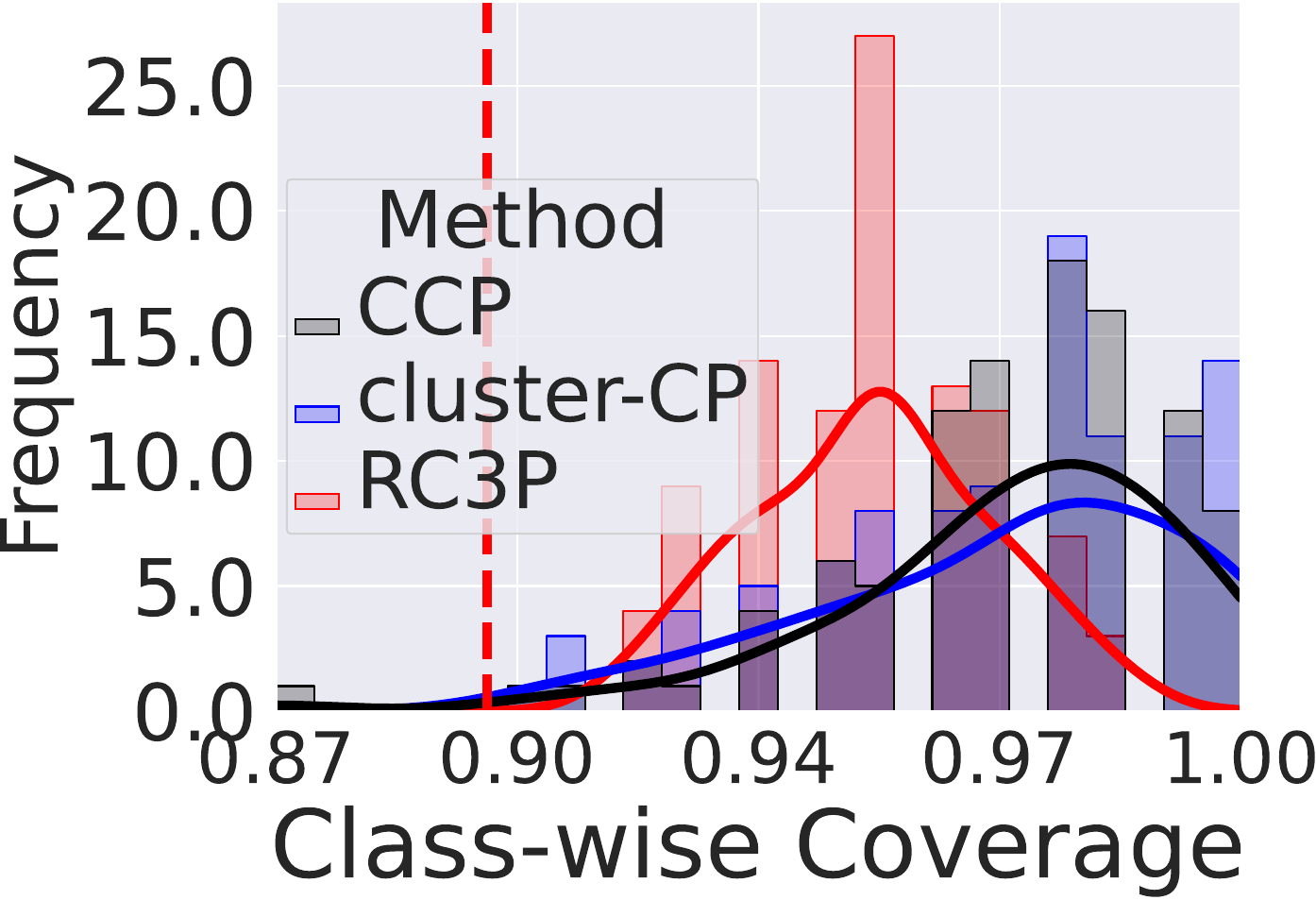}
        \end{minipage}
    }
    \subfigure{
        \begin{minipage}{0.23\linewidth}
            \includegraphics[width=\linewidth]{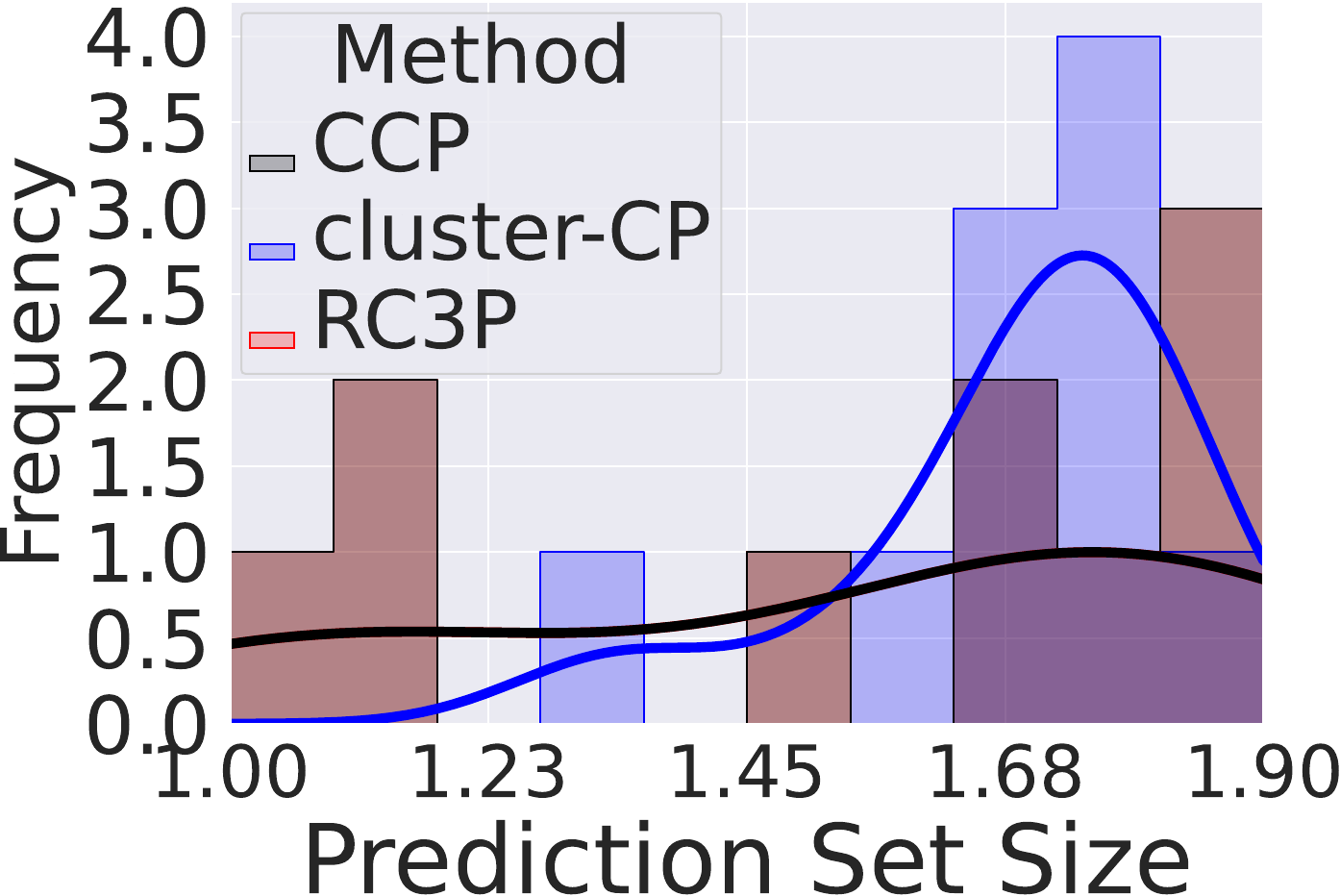}
        \end{minipage}
    }
    \subfigure{
        \begin{minipage}{0.23\linewidth}
            \includegraphics[width=\linewidth]{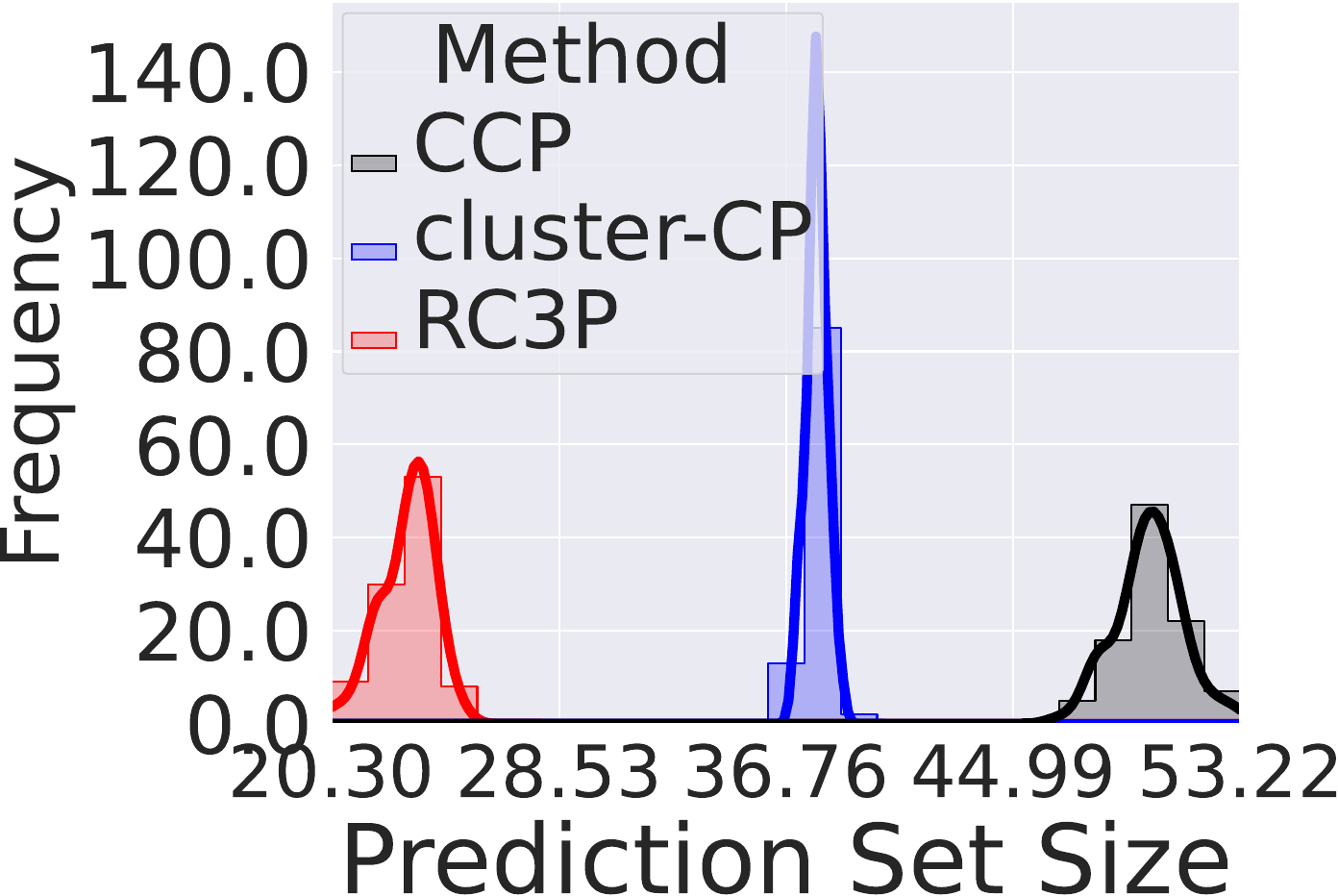}
        \end{minipage}
    }
    \subfigure{
        \begin{minipage}{0.23\linewidth}
            \includegraphics[width=\linewidth]{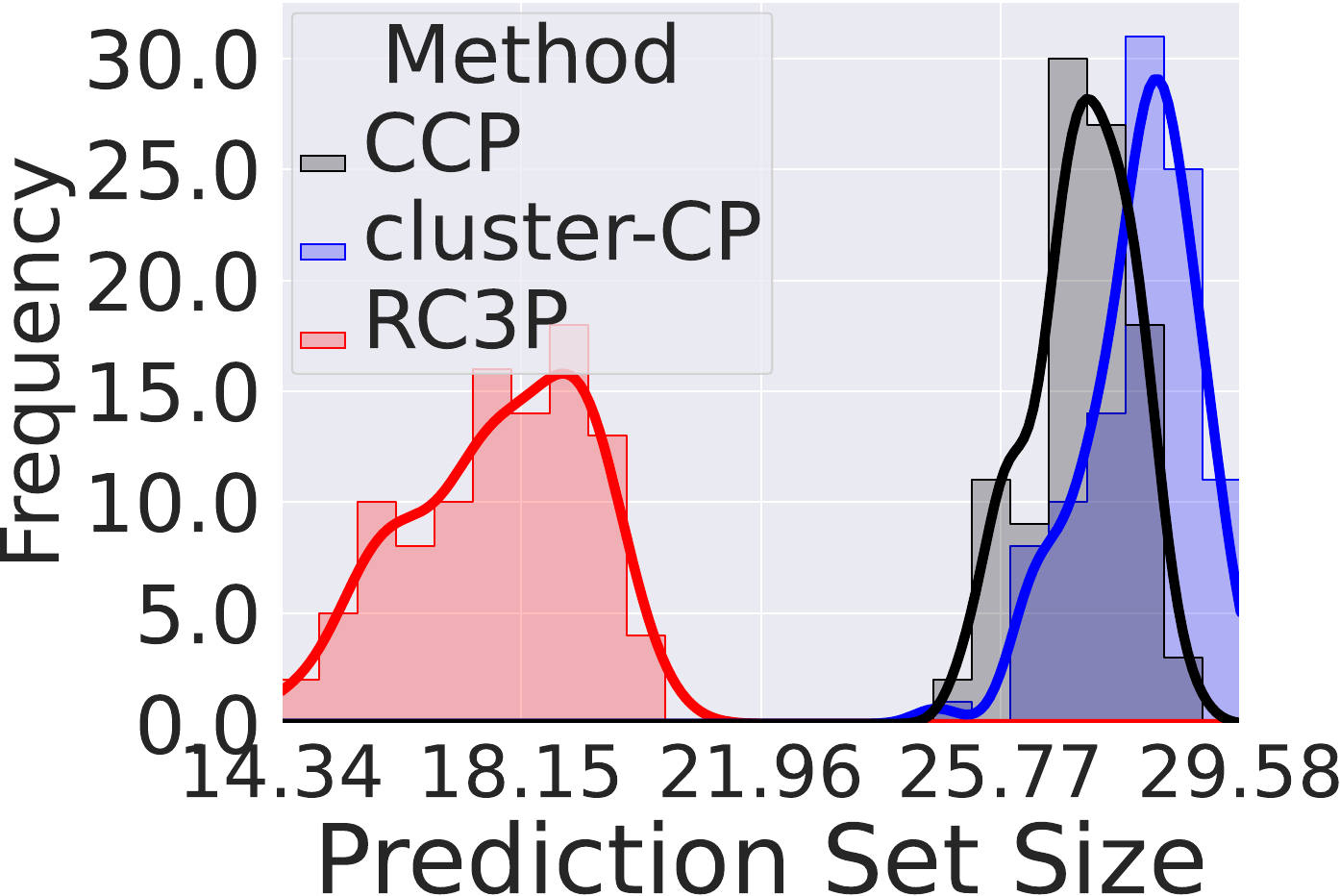}
        \end{minipage}
    }
    \subfigure{
        \begin{minipage}{0.23\linewidth}
            \includegraphics[width=\linewidth]{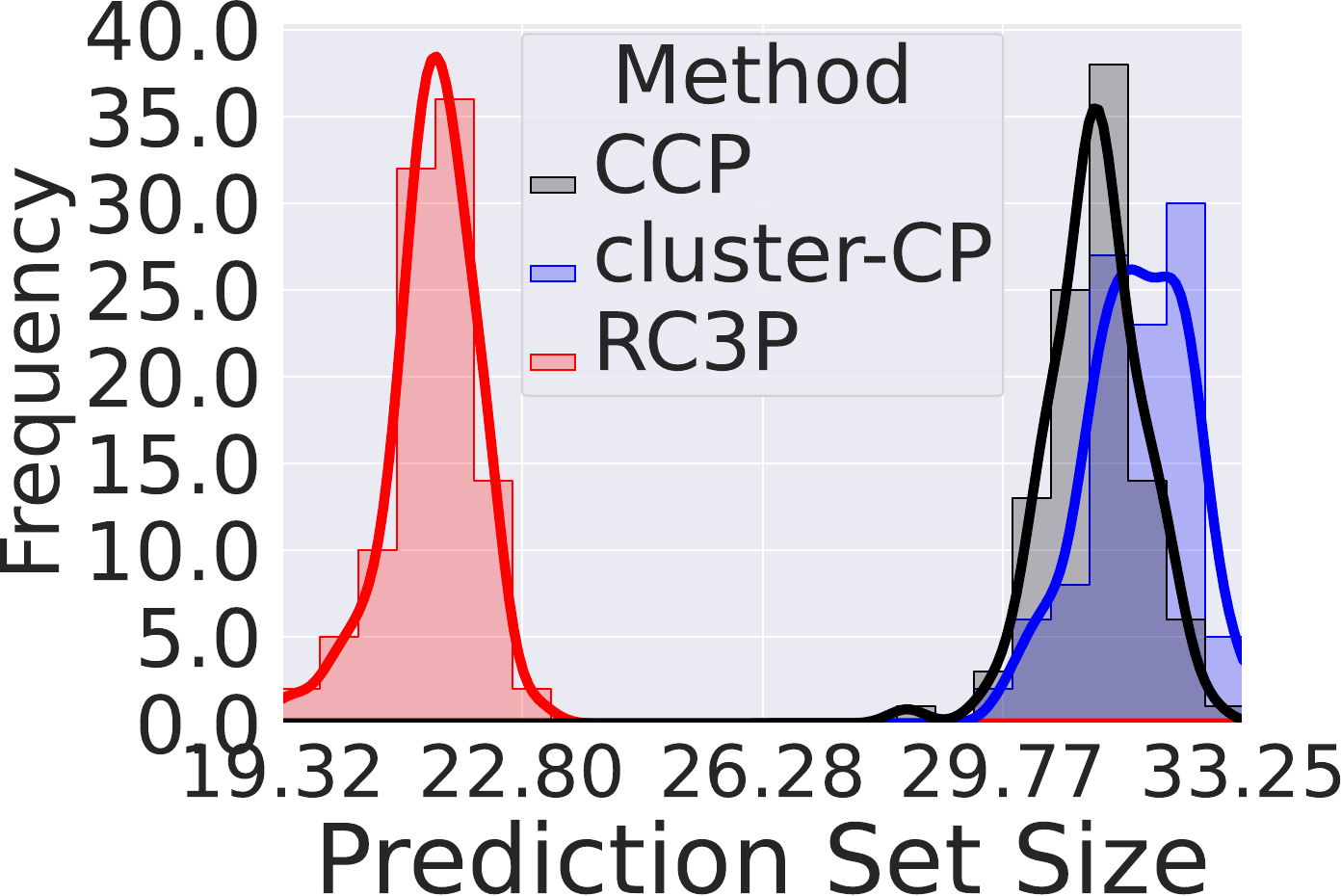}
        \end{minipage}
    }
    \caption{
    Class-conditional coverage (Top row) and prediction set size (Bottom row) achieved by \texttt{CCP}, \texttt{Cluster-CP}, and \texttt{\newCP} methods when $\alpha = 0.1$ on CIFAR-10, CIFAR-100, mini-ImageNet, and Food-101 datasets with imbalance type \POLY~for imbalance ratio $\rho=0.5$.
    We clarify that \texttt{\newCP} overlaps with \texttt{CCP} on CIFAR-10.
    It is clear that \texttt{\newCP} has more densely distributed class-conditional coverage above $0.9$ (the target $1-\alpha$ class-conditional coverage) than \texttt{CCP} and \texttt{Cluster-CP} with significantly smaller prediction sets on CIFAR-100, mini-ImageNet and Food-101.
    }
    \label{fig:overall_comparison_four_datasets_poly_0.5}
\end{figure*}

\begin{figure*}[!ht]
    \centering
    \begin{minipage}{.24\textwidth}
        \centering
        (a) CIFAR-10
    \end{minipage}%
    \begin{minipage}{.24\textwidth}
        \centering
        (b) CIFAR-100
    \end{minipage}%
    \begin{minipage}{.24\textwidth}
        \centering
        (c) mini-ImageNet
    \end{minipage}%
    \begin{minipage}{.24\textwidth}
        \centering
        (d) Food-101
    \end{minipage}

    \subfigure{
        \begin{minipage}{0.23\linewidth}
            \includegraphics[width=\linewidth]{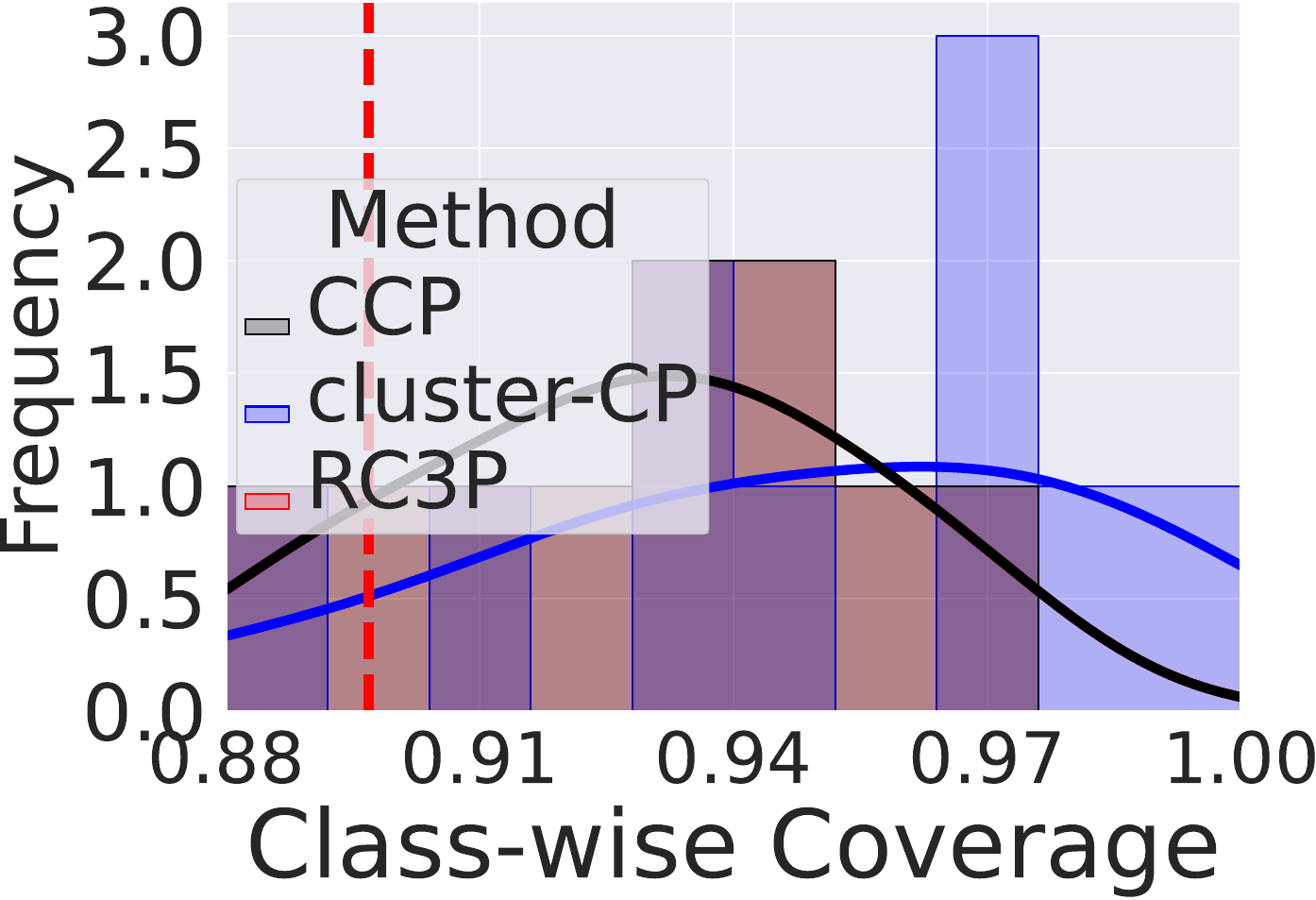}
        \end{minipage}
    }
    \subfigure{
        \begin{minipage}{0.23\linewidth}
            \includegraphics[width=\linewidth]{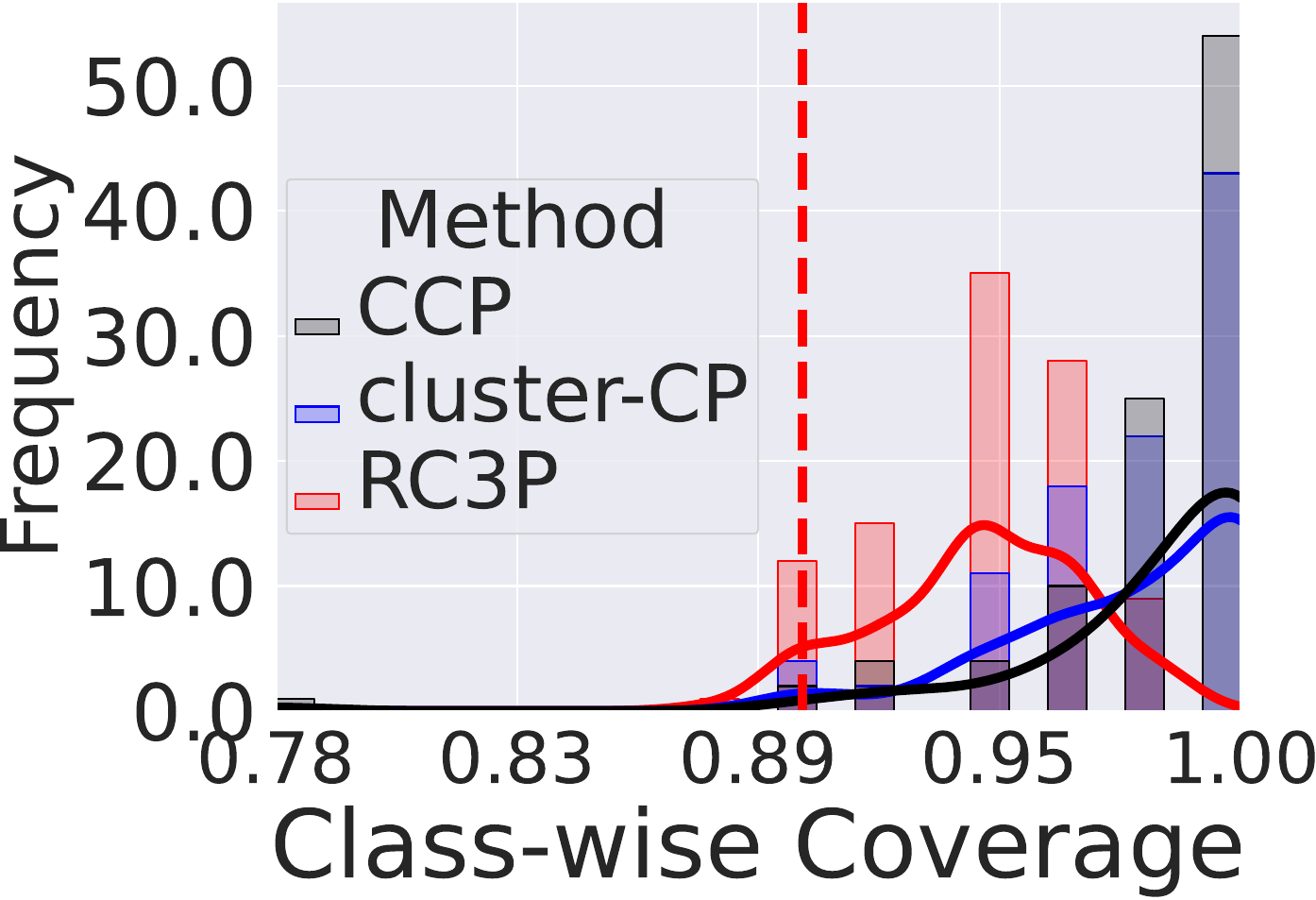}
        \end{minipage}
    }
    \subfigure{
        \begin{minipage}{0.23\linewidth}
            \includegraphics[width=\linewidth]{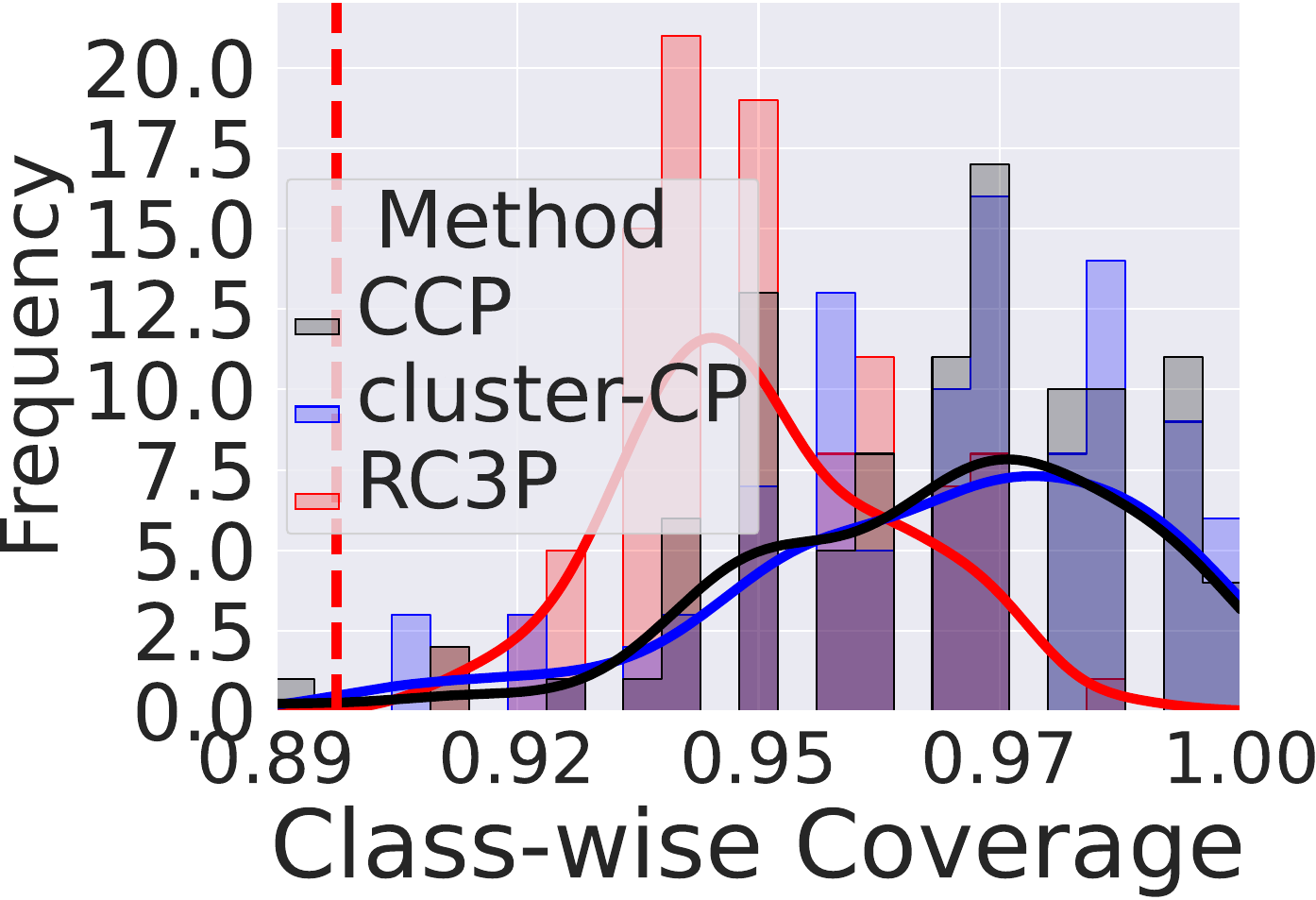}
        \end{minipage}
    }
    \subfigure{
        \begin{minipage}{0.23\linewidth}
            \includegraphics[width=\linewidth]{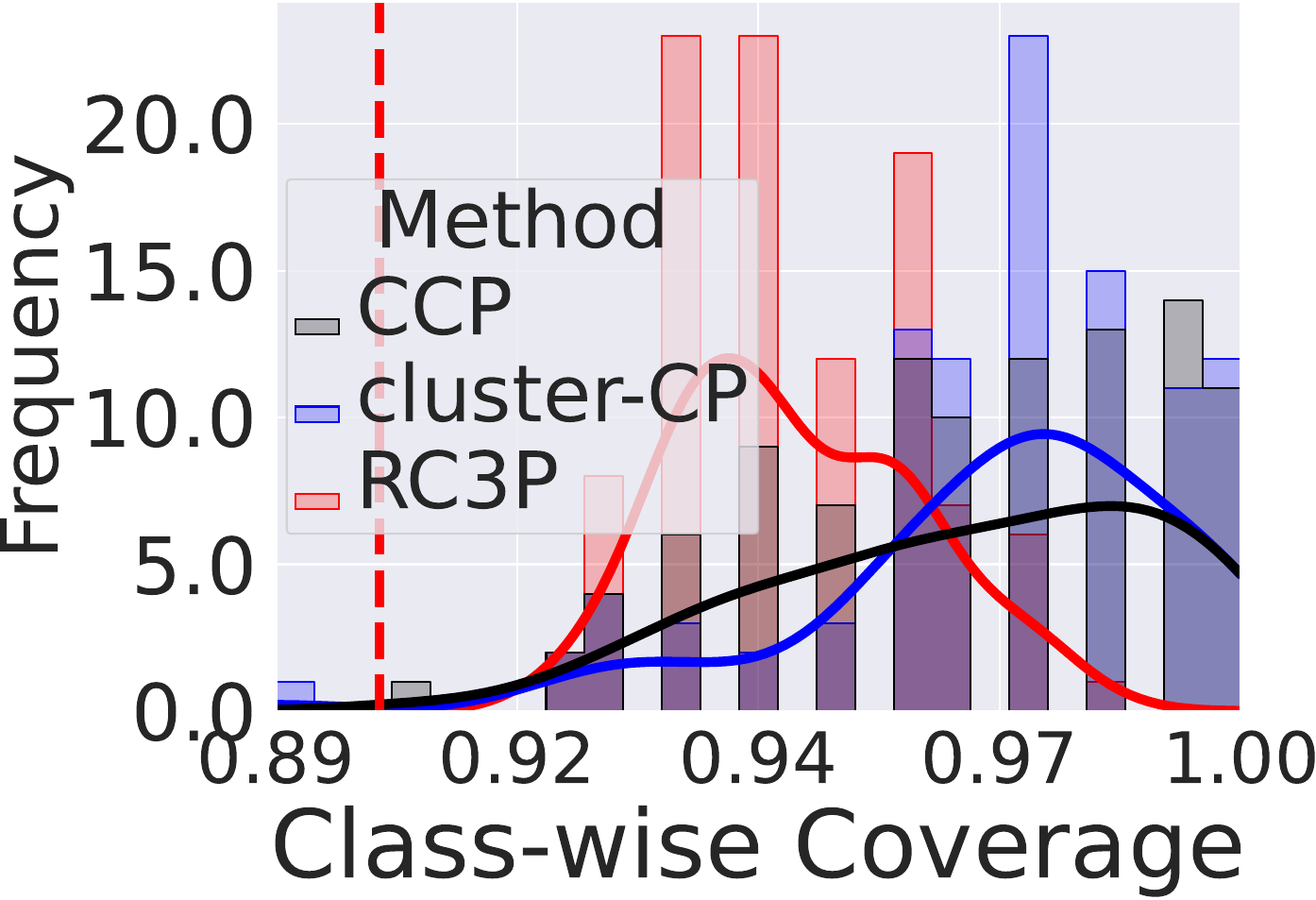}
        \end{minipage}
    }
    \subfigure{
        \begin{minipage}{0.23\linewidth}
            \includegraphics[width=\linewidth]{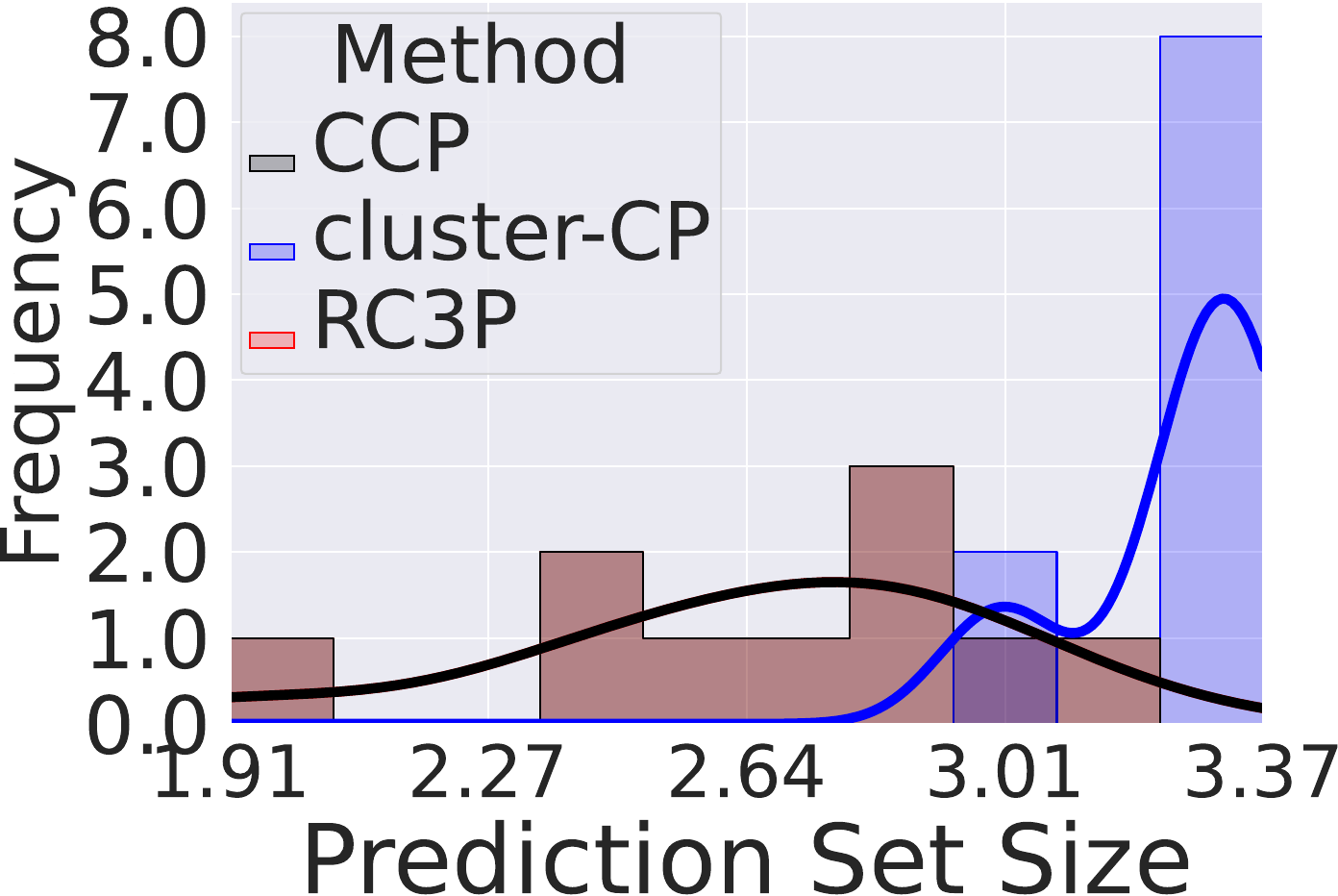}
        \end{minipage}
    }
    \subfigure{
        \begin{minipage}{0.23\linewidth}
            \includegraphics[width=\linewidth]{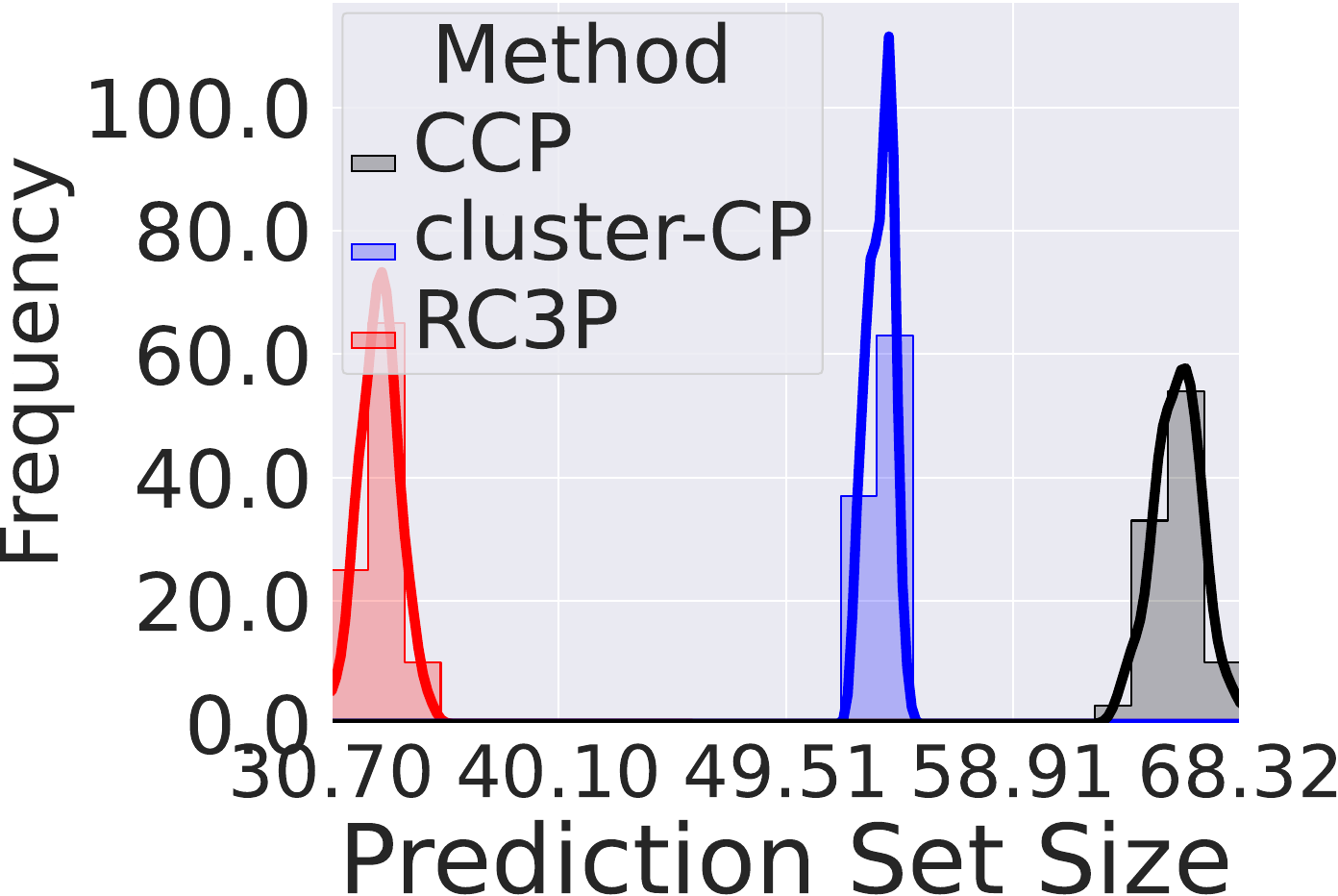}
        \end{minipage}
    }
    \subfigure{
        \begin{minipage}{0.23\linewidth}
            \includegraphics[width=\linewidth]{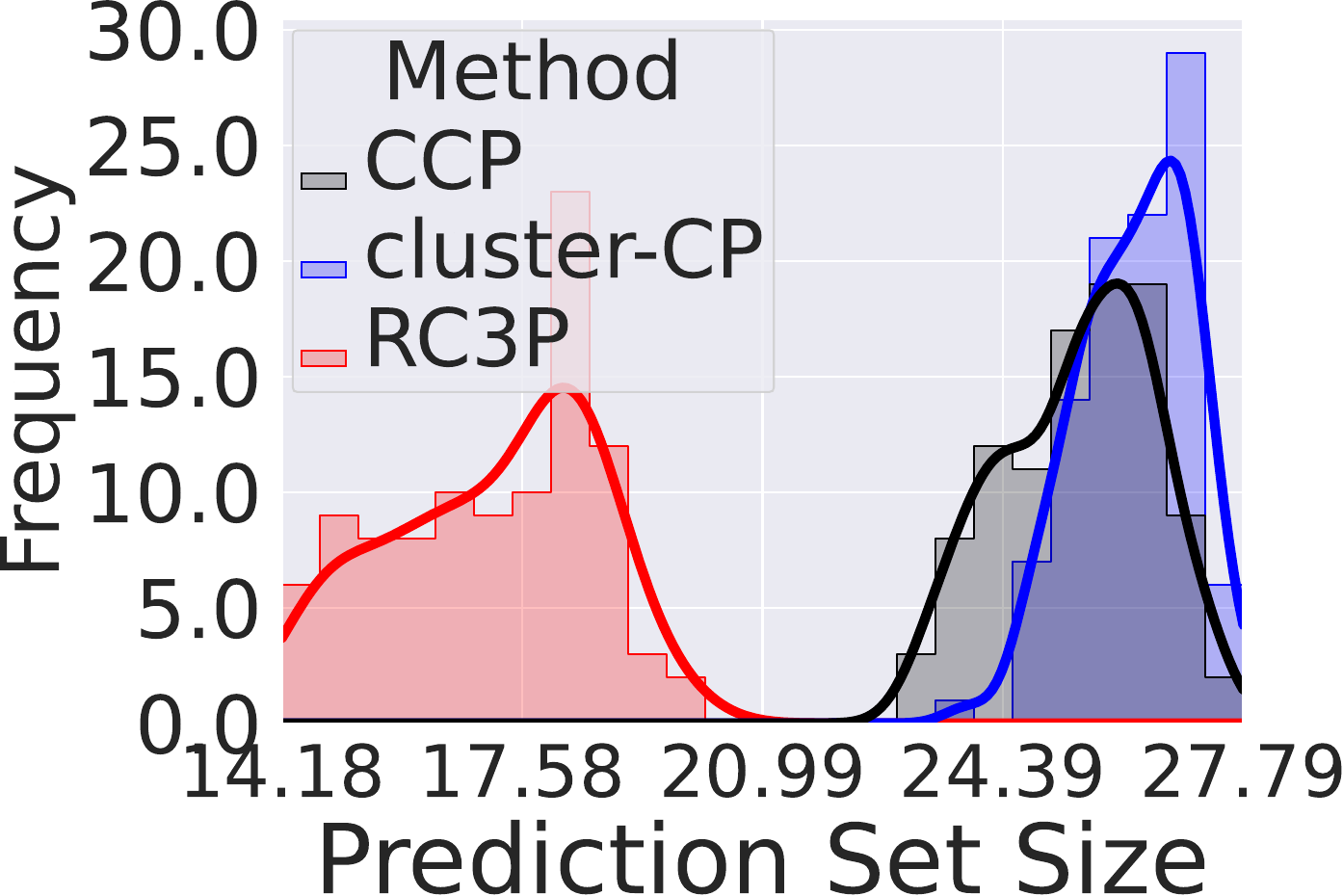}
        \end{minipage}
    }
    \subfigure{
        \begin{minipage}{0.23\linewidth}
            \includegraphics[width=\linewidth]{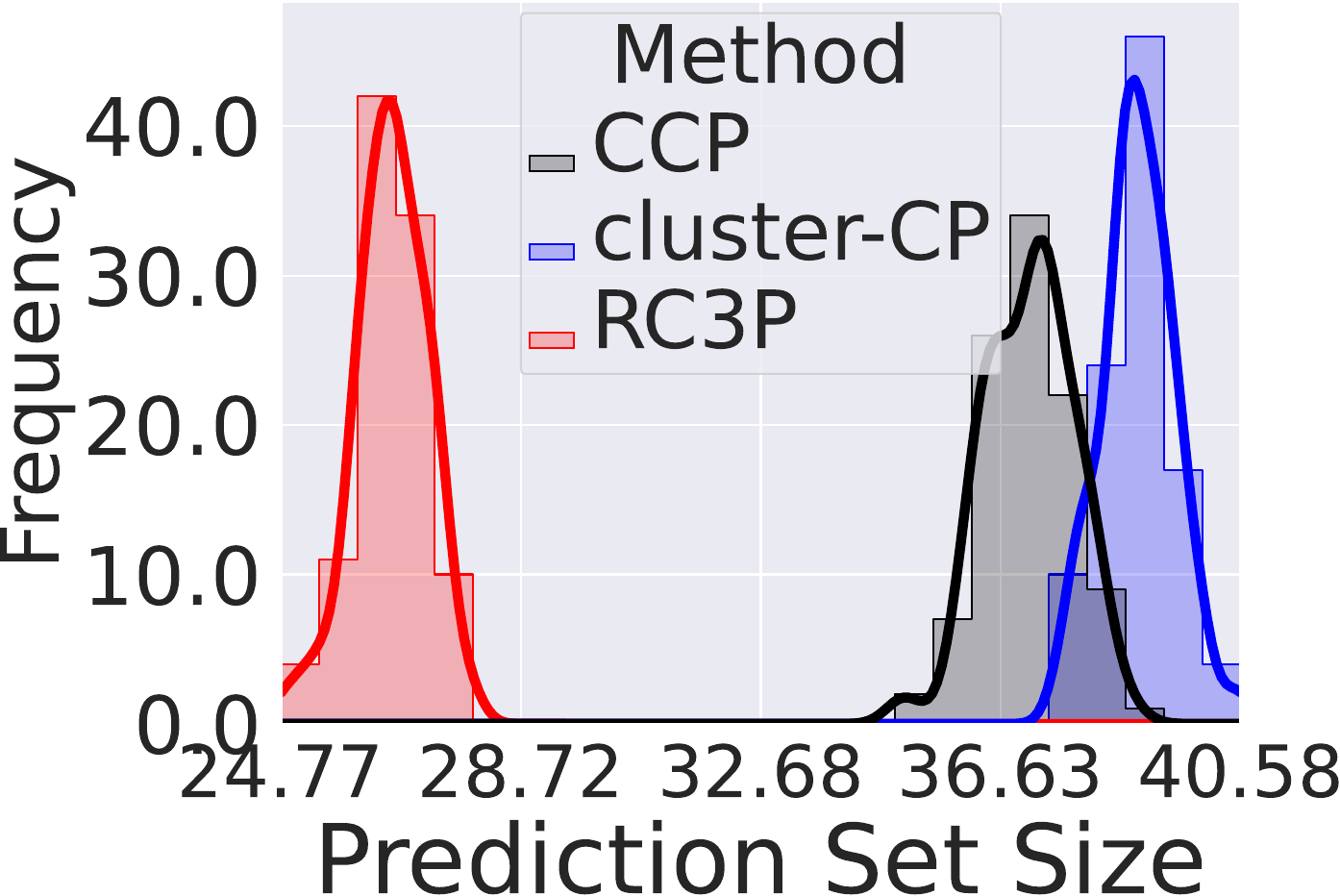}
        \end{minipage}
    }
    \caption{
    Class-conditional coverage (Top row) and prediction set size (Bottom row) achieved by \texttt{CCP}, \texttt{Cluster-CP}, and \texttt{\newCP} methods when $\alpha = 0.1$ on CIFAR-10, CIFAR-100, mini-ImageNet, and Food-101 datasets with imbalance type \MAJ~for imbalance ratio $\rho=0.1$.
    We clarify that \texttt{\newCP} overlaps with \texttt{CCP} on CIFAR-10.
    It is clear that \texttt{\newCP} has more densely distributed class-conditional coverage above $0.9$ (the target $1-\alpha$ class-conditional coverage) than \texttt{CCP} and \texttt{Cluster-CP} with significantly smaller prediction sets on CIFAR-100, mini-ImageNet and Food-101.
    }
    \label{fig:overall_comparison_four_datasets_maj_0.1}
\end{figure*}

\begin{figure*}[!ht]
    \centering
    \begin{minipage}{.24\textwidth}
        \centering
        (a) CIFAR-10
    \end{minipage}%
    \begin{minipage}{.24\textwidth}
        \centering
        (b) CIFAR-100
    \end{minipage}%
    \begin{minipage}{.24\textwidth}
        \centering
        (c) mini-ImageNet
    \end{minipage}%
    \begin{minipage}{.24\textwidth}
        \centering
        (d) Food-101
    \end{minipage}

    \subfigure{
        \begin{minipage}{0.23\linewidth}
            \includegraphics[width=\linewidth]{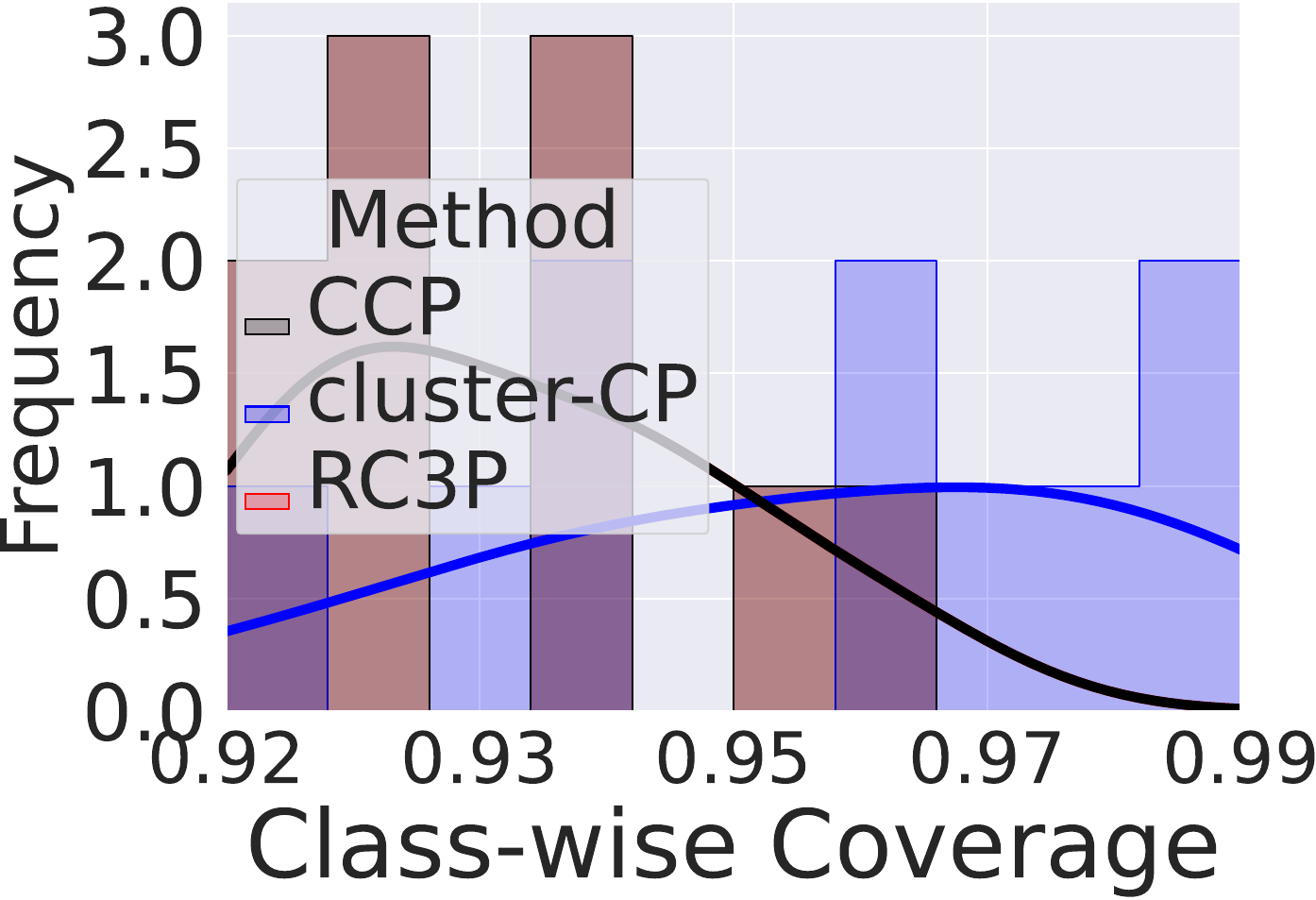}
        \end{minipage}
    }
    \subfigure{
        \begin{minipage}{0.23\linewidth}
            \includegraphics[width=\linewidth]{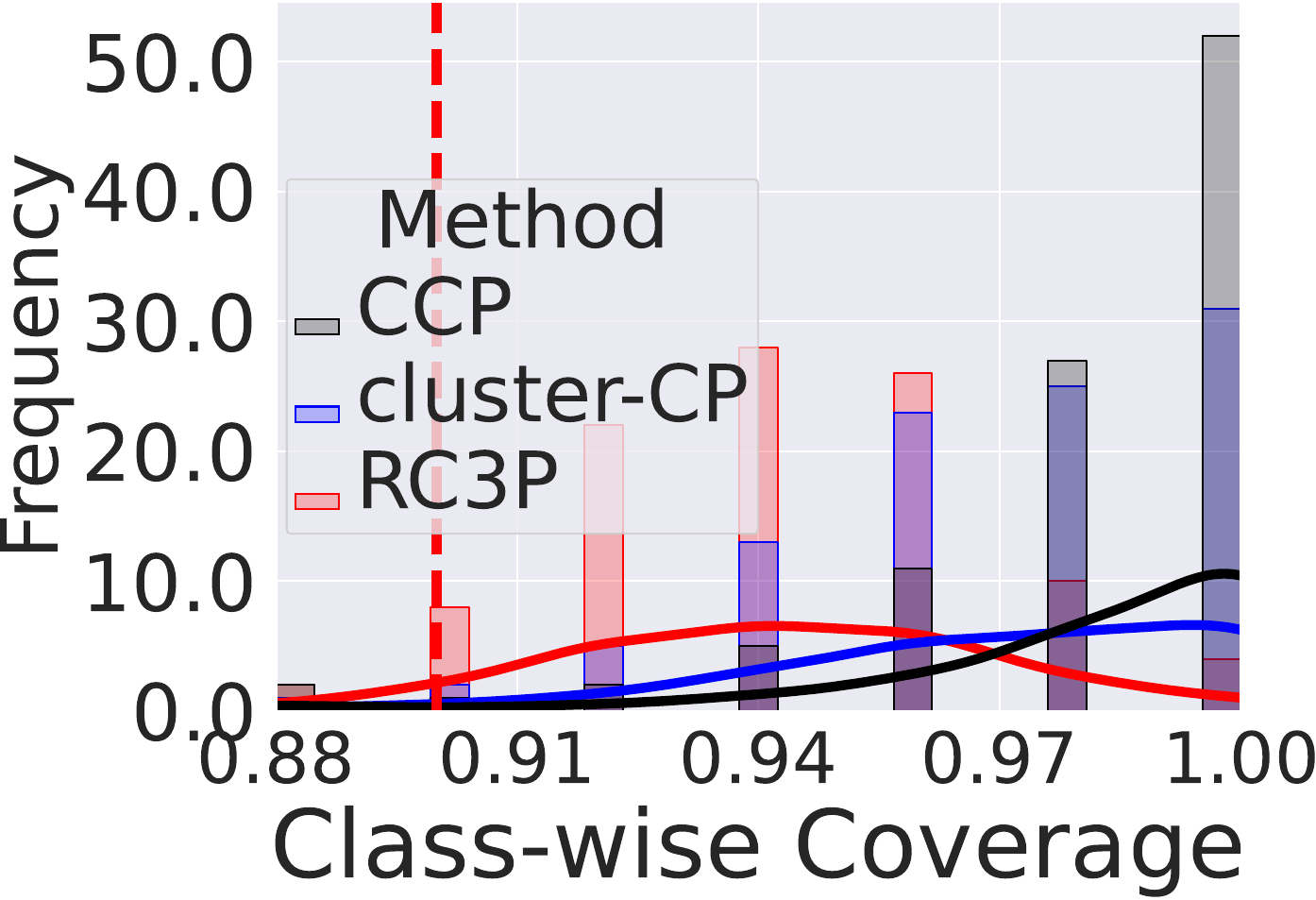}
        \end{minipage}
    }
    \subfigure{
        \begin{minipage}{0.23\linewidth}
            \includegraphics[width=\linewidth]{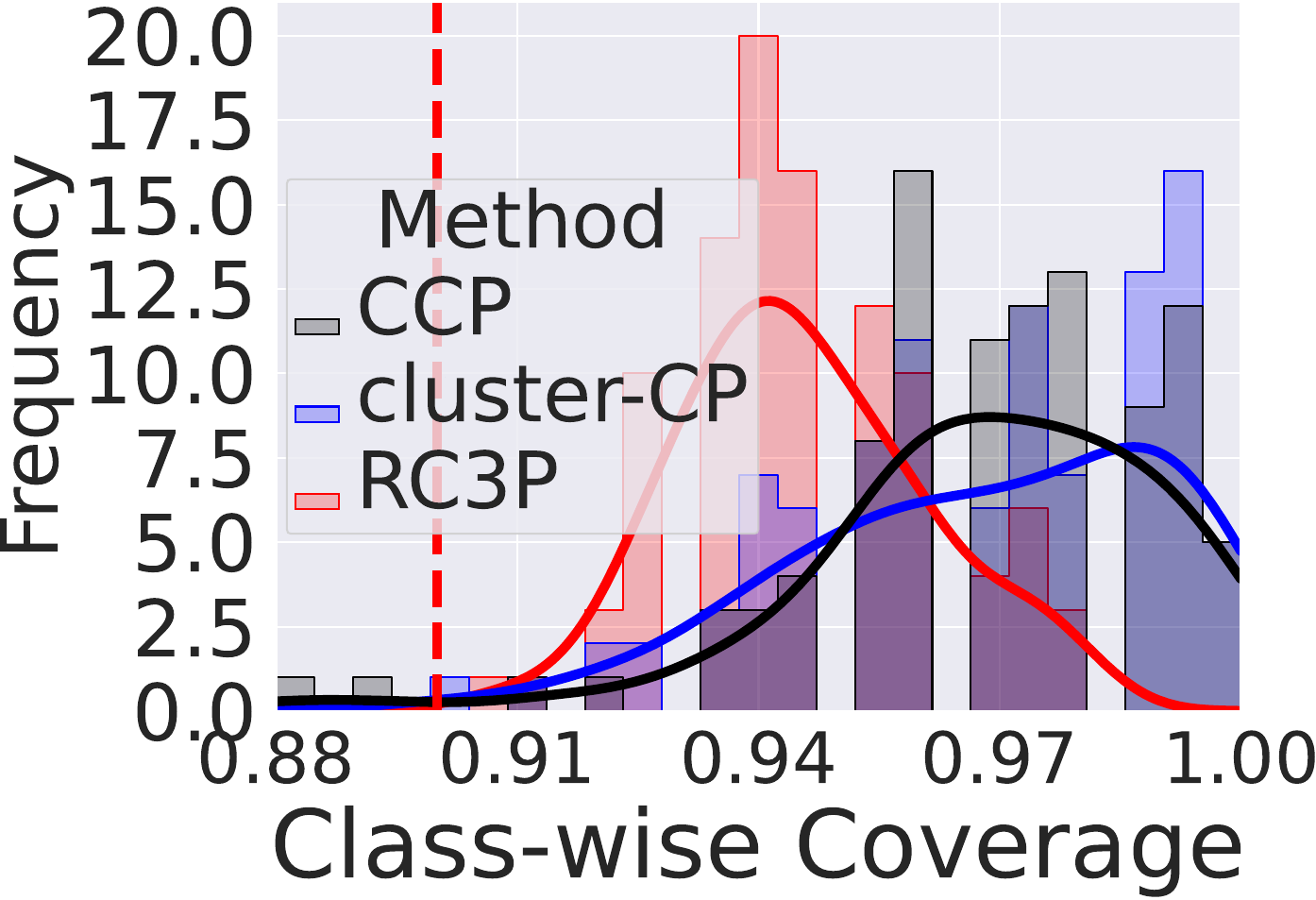}
        \end{minipage}
    }
    \subfigure{
        \begin{minipage}{0.23\linewidth}
            \includegraphics[width=\linewidth]{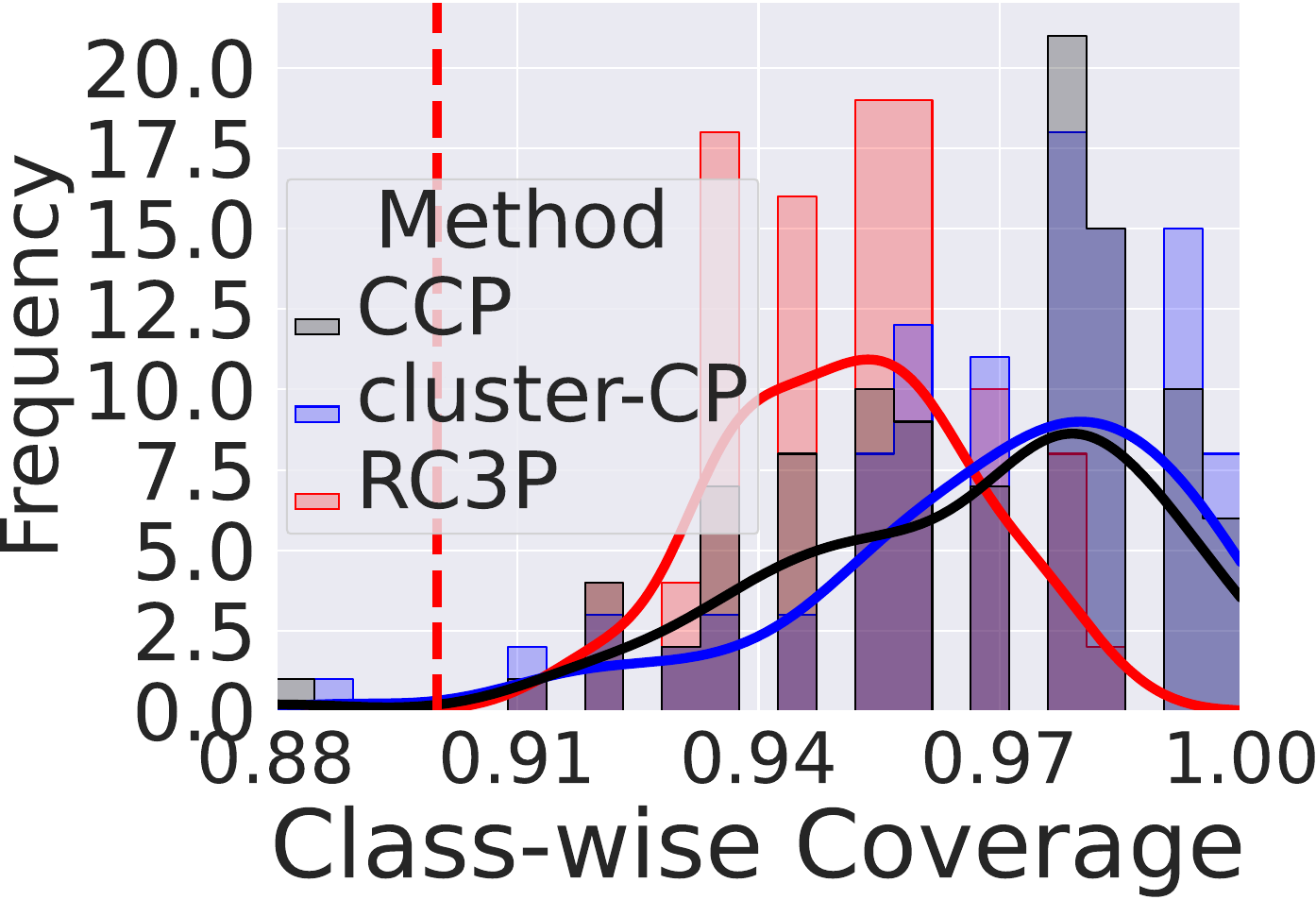}
        \end{minipage}
    }
    \subfigure{
        \begin{minipage}{0.23\linewidth}
            \includegraphics[width=\linewidth]{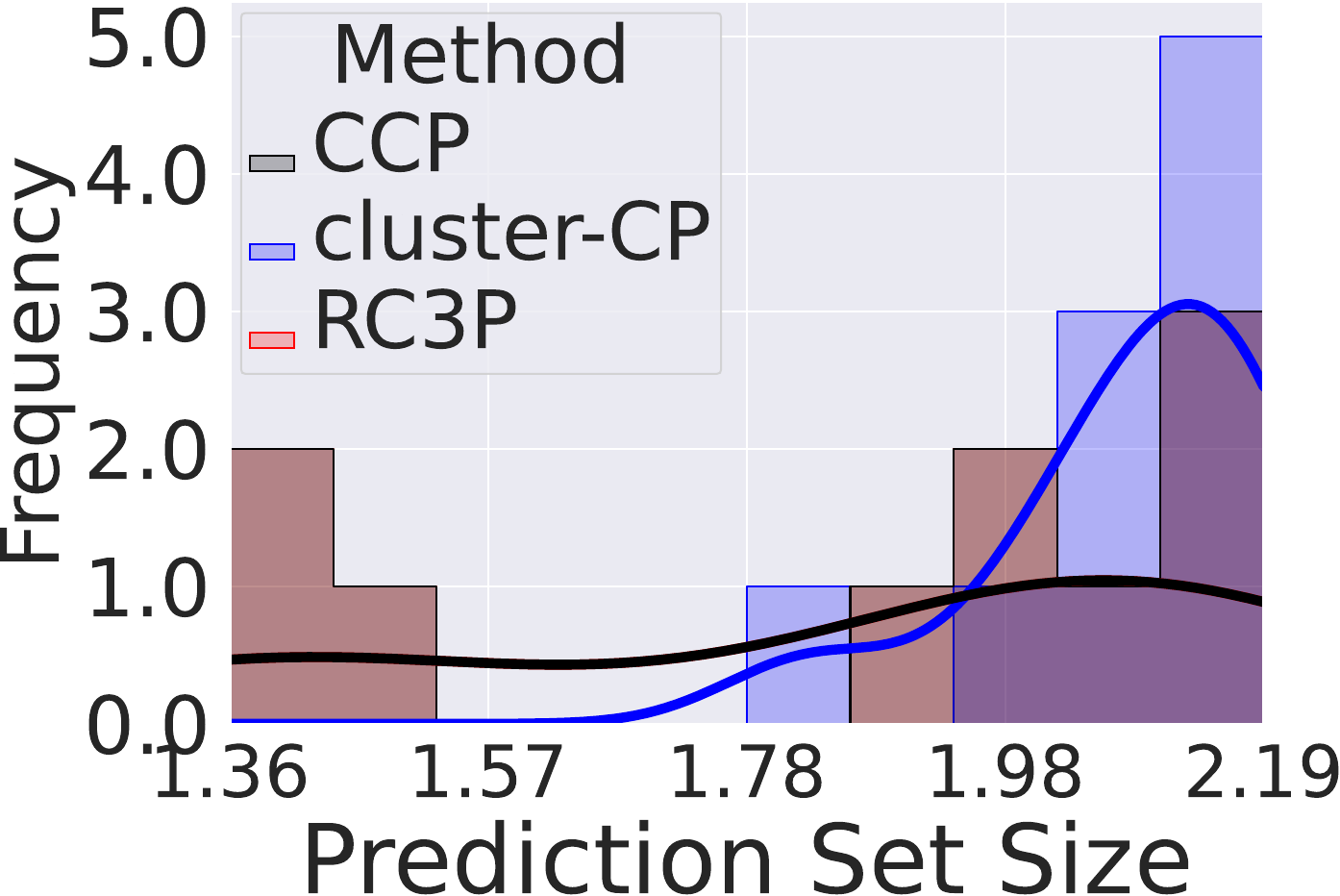}
        \end{minipage}
    }
    \subfigure{
        \begin{minipage}{0.23\linewidth}
            \includegraphics[width=\linewidth]{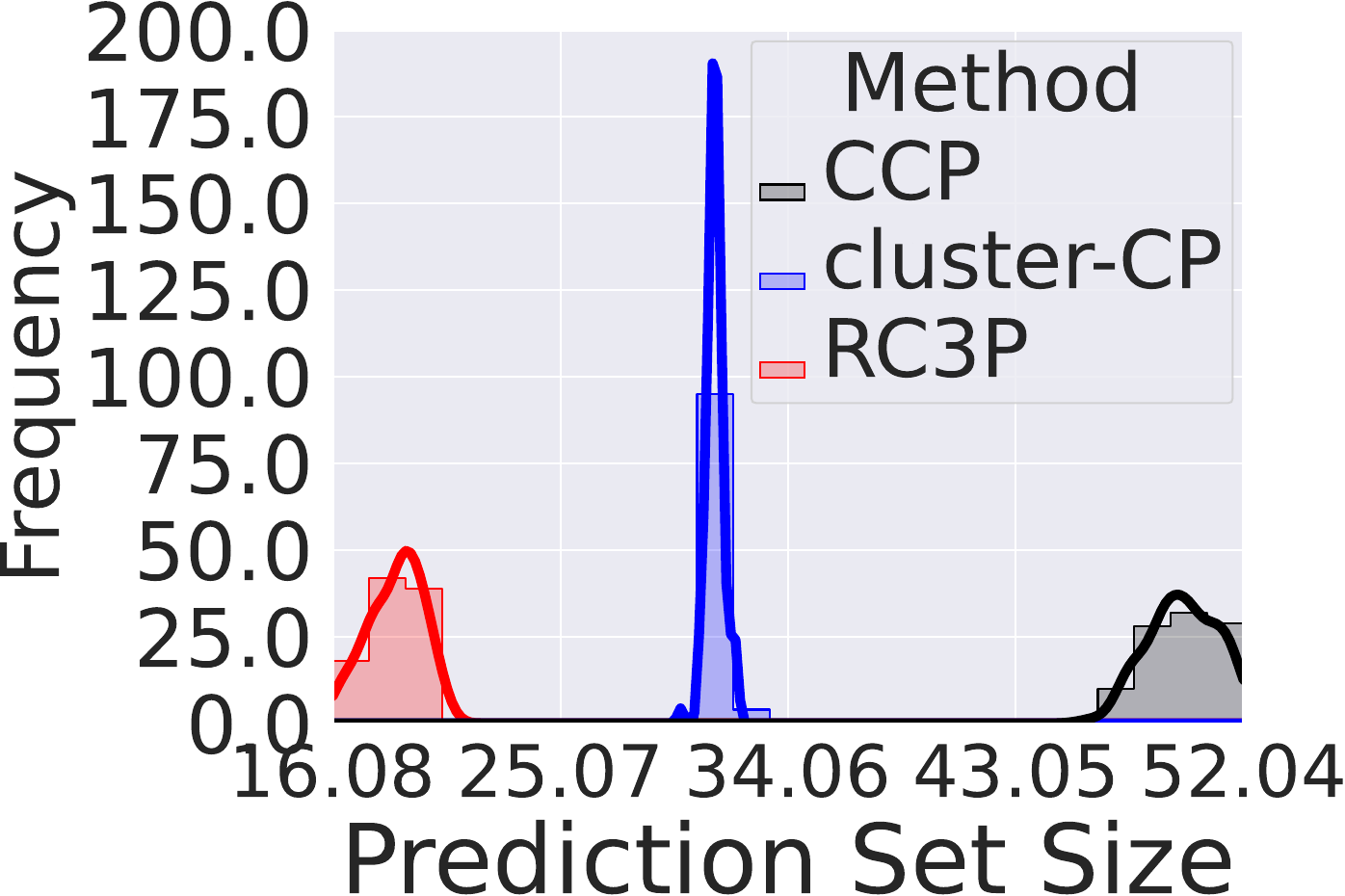}
        \end{minipage}
    }
    \subfigure{
        \begin{minipage}{0.23\linewidth}
            \includegraphics[width=\linewidth]{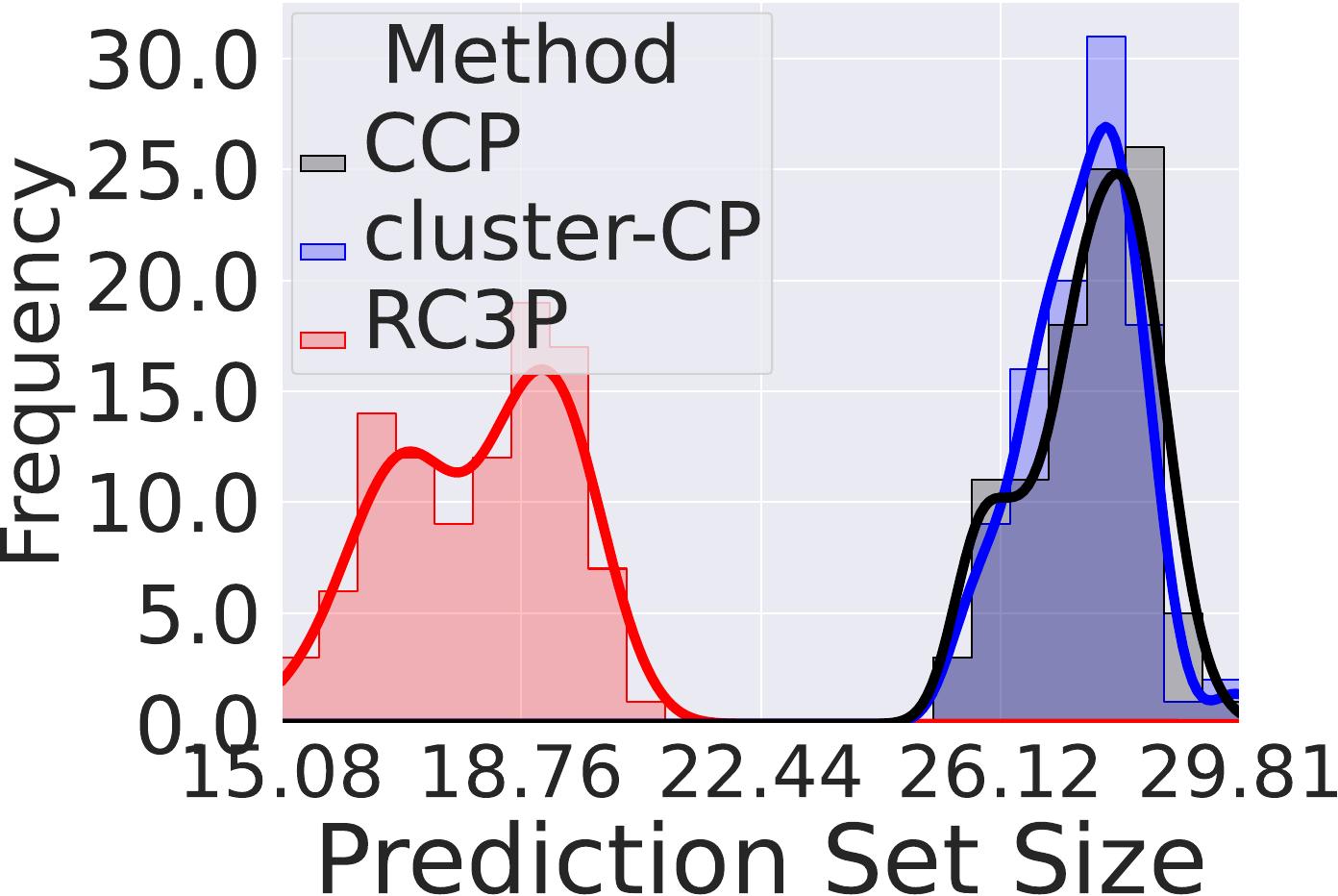}
        \end{minipage}
    }
    \subfigure{
        \begin{minipage}{0.23\linewidth}
            \includegraphics[width=\linewidth]{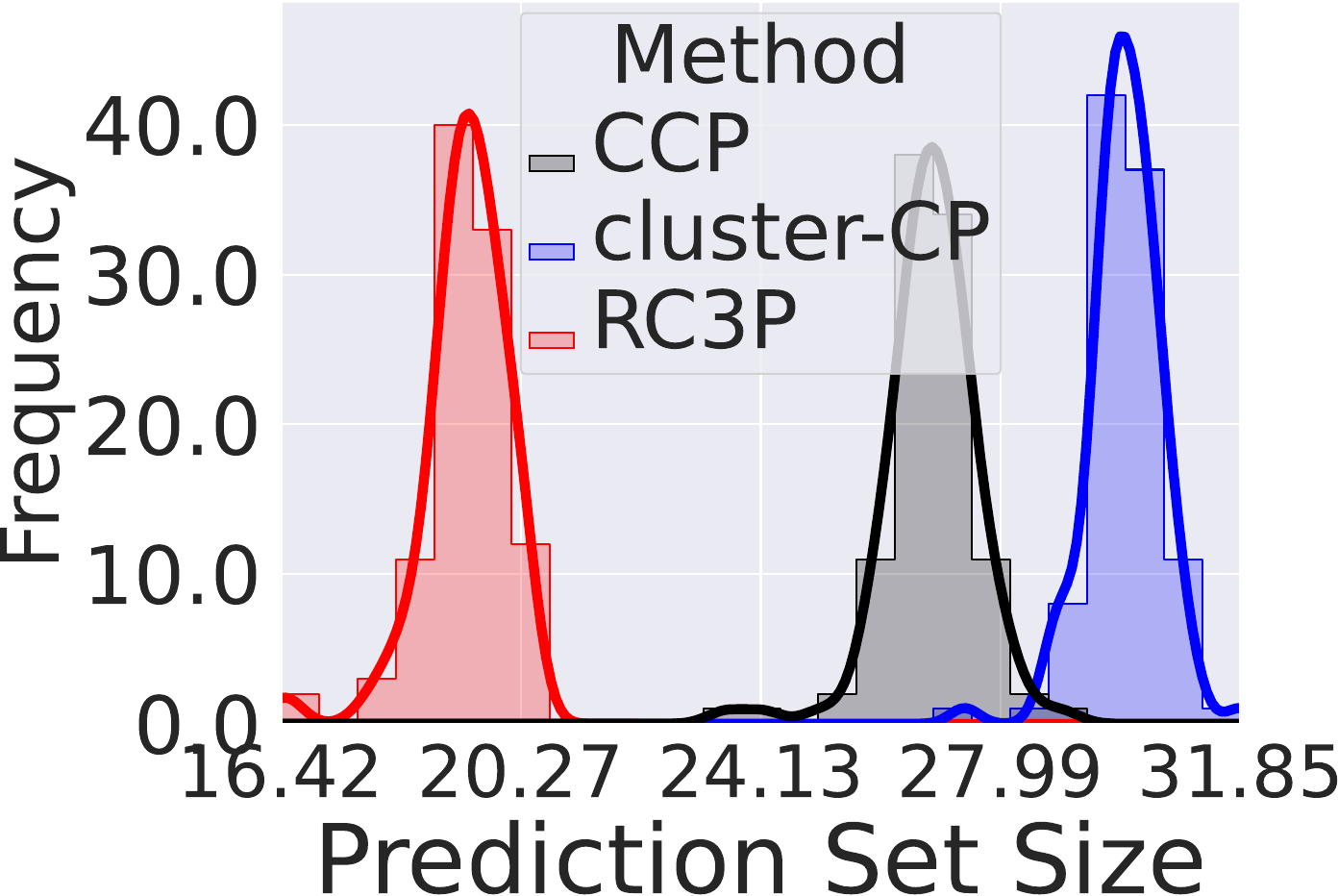}
        \end{minipage}
    }
    \caption{
    Class-conditional coverage (Top row) and prediction set size (Bottom row) achieved by \texttt{CCP}, \texttt{Cluster-CP}, and \texttt{\newCP} methods when $\alpha = 0.1$ on CIFAR-10, CIFAR-100, mini-ImageNet, and Food-101 datasets with imbalance type \MAJ~for imbalance ratio $\rho=0.5$.
    We clarify that \texttt{\newCP} overlaps with \texttt{CCP} on CIFAR-10.
    It is clear that \texttt{\newCP} has more densely distributed class-conditional coverage above $0.9$ (the target $1-\alpha$ class-conditional coverage) than \texttt{CCP} and \texttt{Cluster-CP} with significantly smaller prediction sets on CIFAR-100, mini-ImageNet and Food-101.
    }
    \label{fig:overall_comparison_four_datasets_maj_0.5}
\end{figure*}

\begin{figure}[!ht]
    \centering
    \begin{minipage}{.24\textwidth}
        \centering
        (a) CIFAR-10
    \end{minipage}%
    \begin{minipage}{.24\textwidth}
        \centering
        (b) CIFAR-100
    \end{minipage}%
    \begin{minipage}{.24\textwidth}
        \centering
        (c) mini-ImageNet
    \end{minipage}%
    \begin{minipage}{.24\textwidth}
        \centering
        (d) Food-101
    \end{minipage}
    \subfigure{
        \begin{minipage}{0.23\linewidth}
            \includegraphics[width=\linewidth]{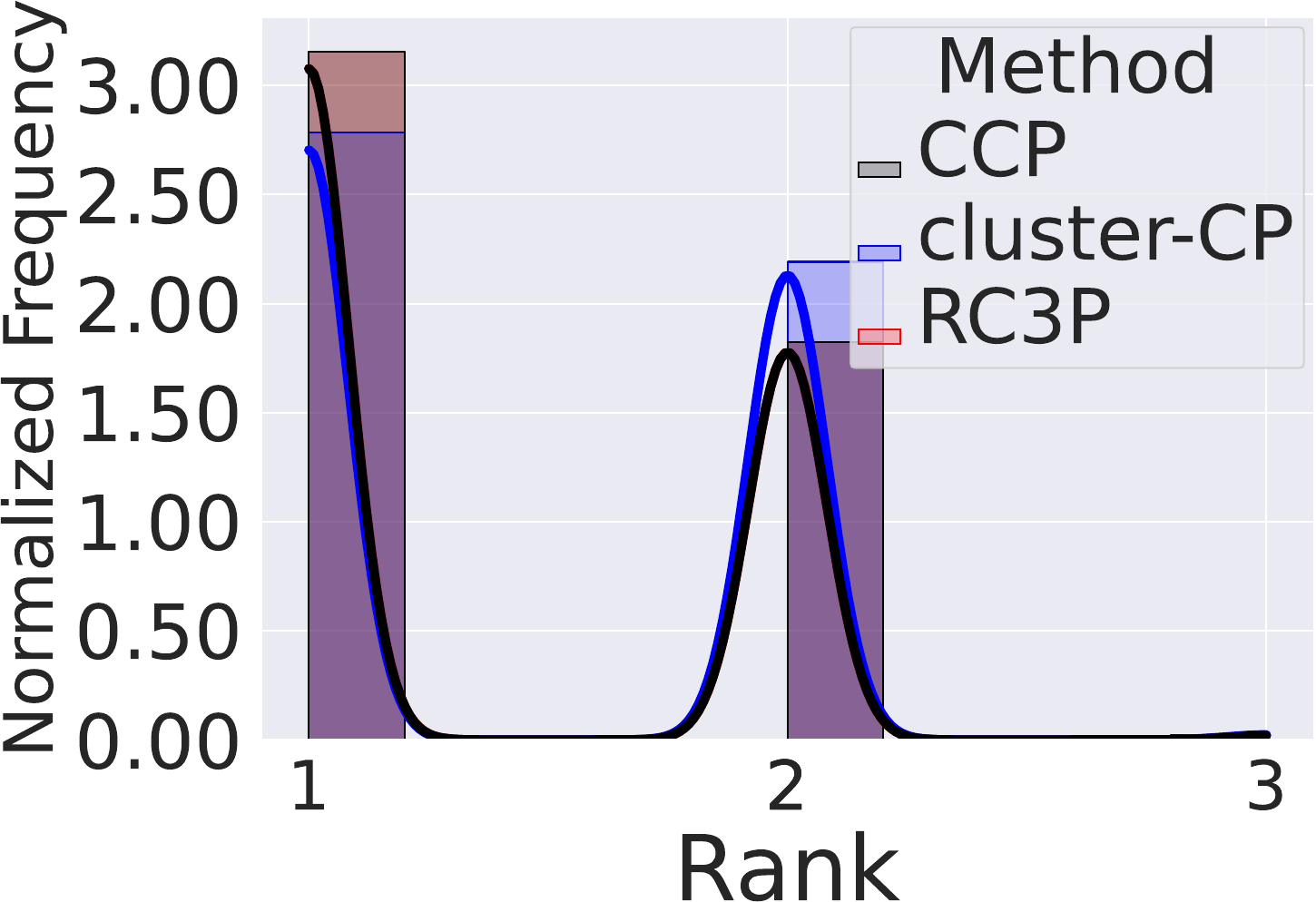}
        \end{minipage}
    }
    \subfigure{
        \begin{minipage}{0.23\linewidth}
            \includegraphics[width=\linewidth]{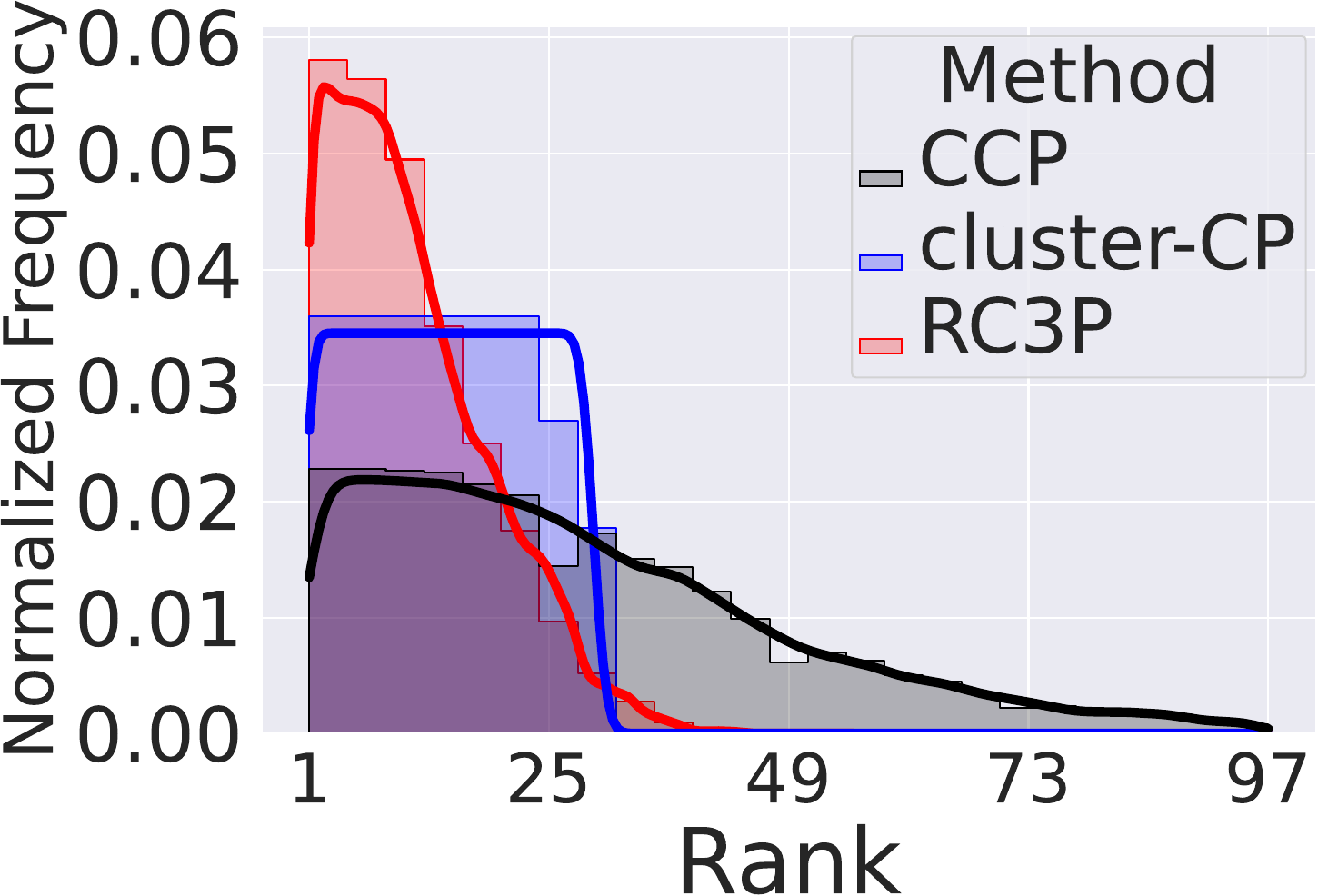}
        \end{minipage}
    }
    \subfigure{
        \begin{minipage}{0.23\linewidth}
            \includegraphics[width=\linewidth]{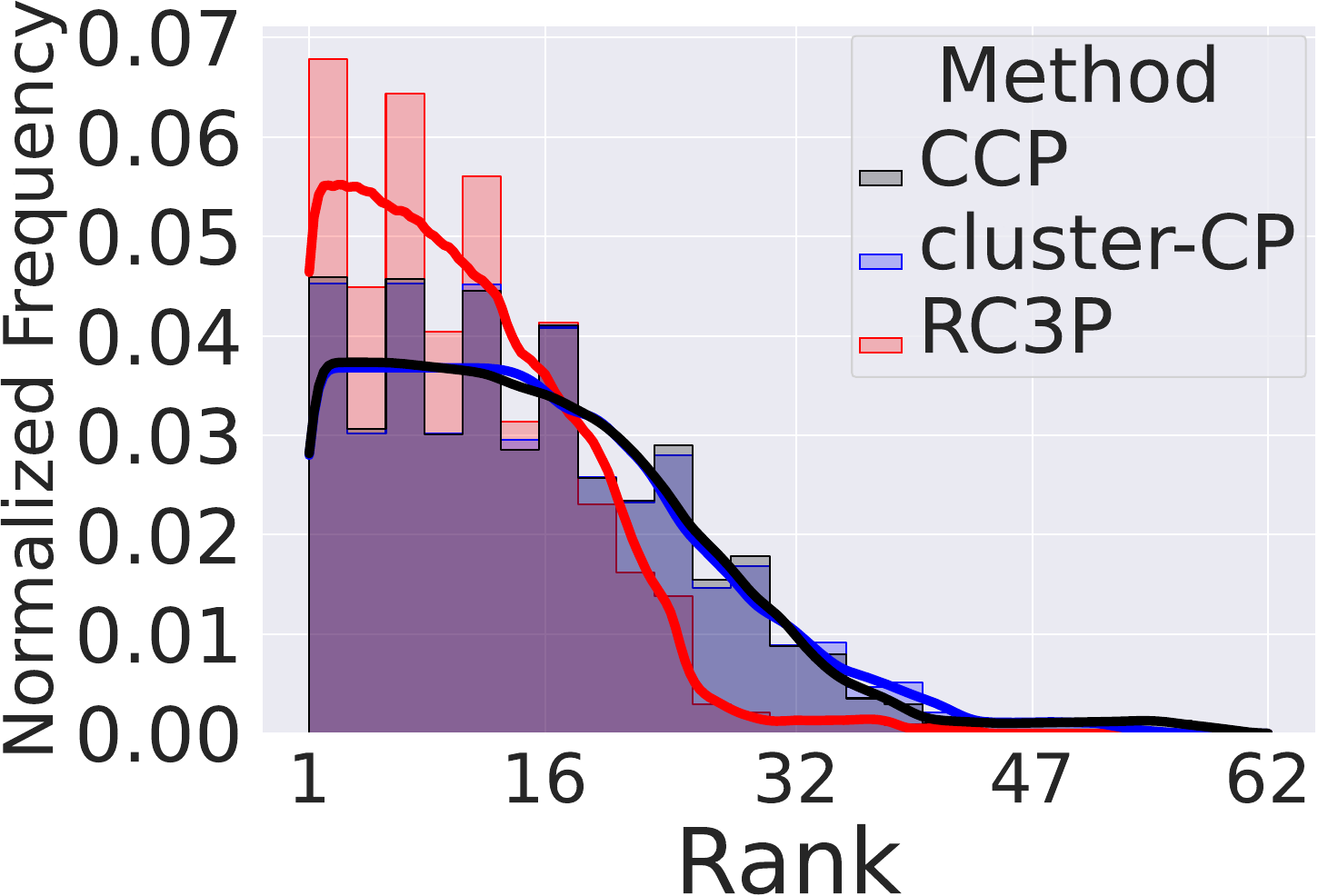}
        \end{minipage}
    }
    \subfigure{
        \begin{minipage}{0.23\linewidth}
            \includegraphics[width=\linewidth]{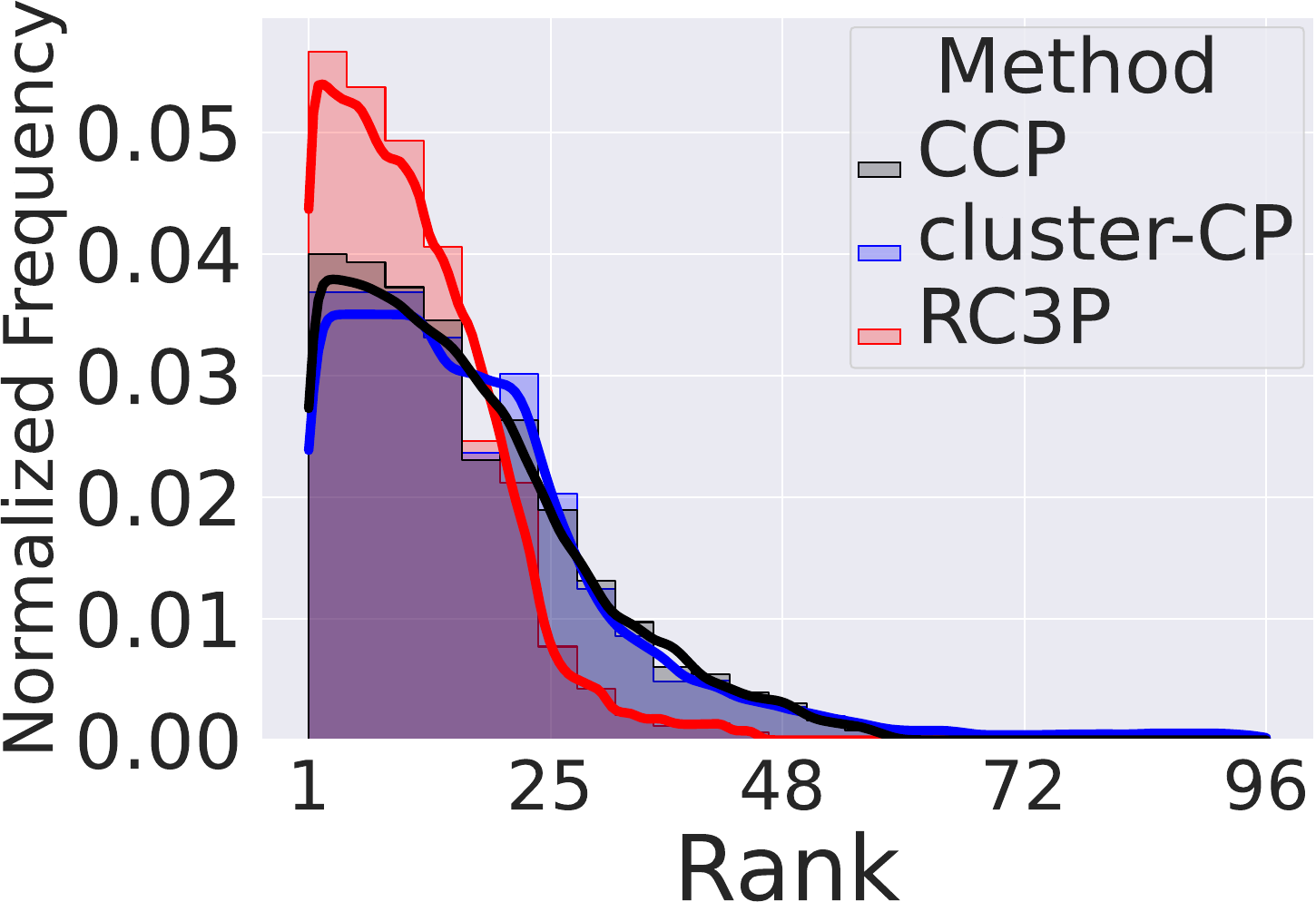}
        \end{minipage}
    }
    \caption{
    Visualization for the normalized frequency distribution of label ranks included in the prediction set of \texttt{CCP}, \texttt{Cluster-CP}, and \texttt{\newCP} with $\rho=0.5$ \EXP~when $\alpha = 0.1$.
    It is clear that the distribution of normalized frequency generated by \texttt{\newCP} tends to be lower compared to those produced by \texttt{CCP} and \texttt{Cluster-CP}.  
    Furthermore, the probability density function tail for label ranks in the \texttt{\newCP} prediction set is notably shorter than that of other methods.
    }
    \label{fig:condition_number_rank_exp_0.5}
\end{figure}

\begin{figure}[!ht]
    \centering
    \begin{minipage}{.24\textwidth}
        \centering
        (a) CIFAR-10
    \end{minipage}%
    \begin{minipage}{.24\textwidth}
        \centering
        (b) CIFAR-100
    \end{minipage}%
    \begin{minipage}{.24\textwidth}
        \centering
        (c) mini-ImageNet
    \end{minipage}%
    \begin{minipage}{.24\textwidth}
        \centering
        (d) Food-101
    \end{minipage}
    \subfigure{
        \begin{minipage}{0.23\linewidth}
            \includegraphics[width=\linewidth]{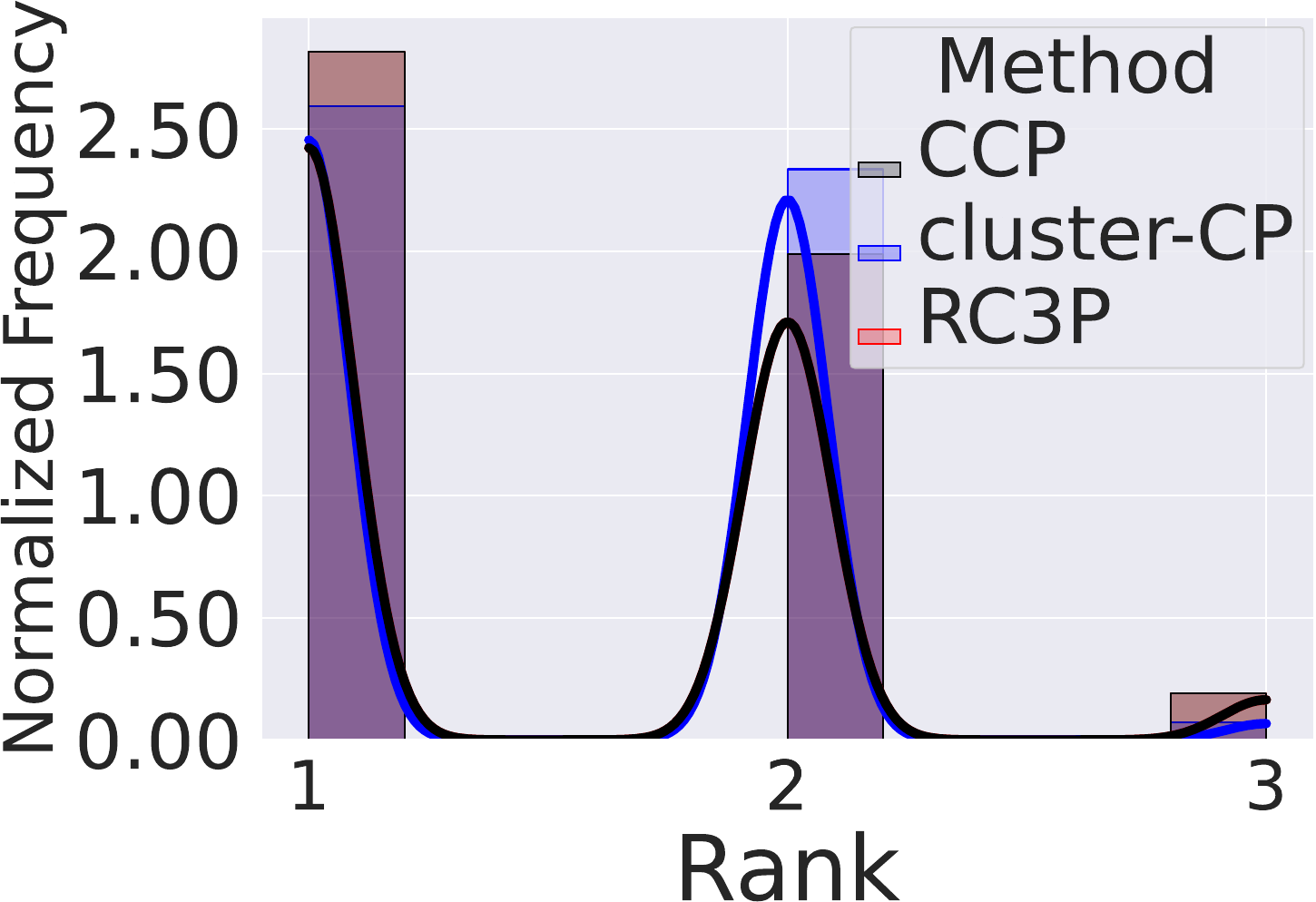}
        \end{minipage}
    }
    \subfigure{
        \begin{minipage}{0.23\linewidth}
            \includegraphics[width=\linewidth]{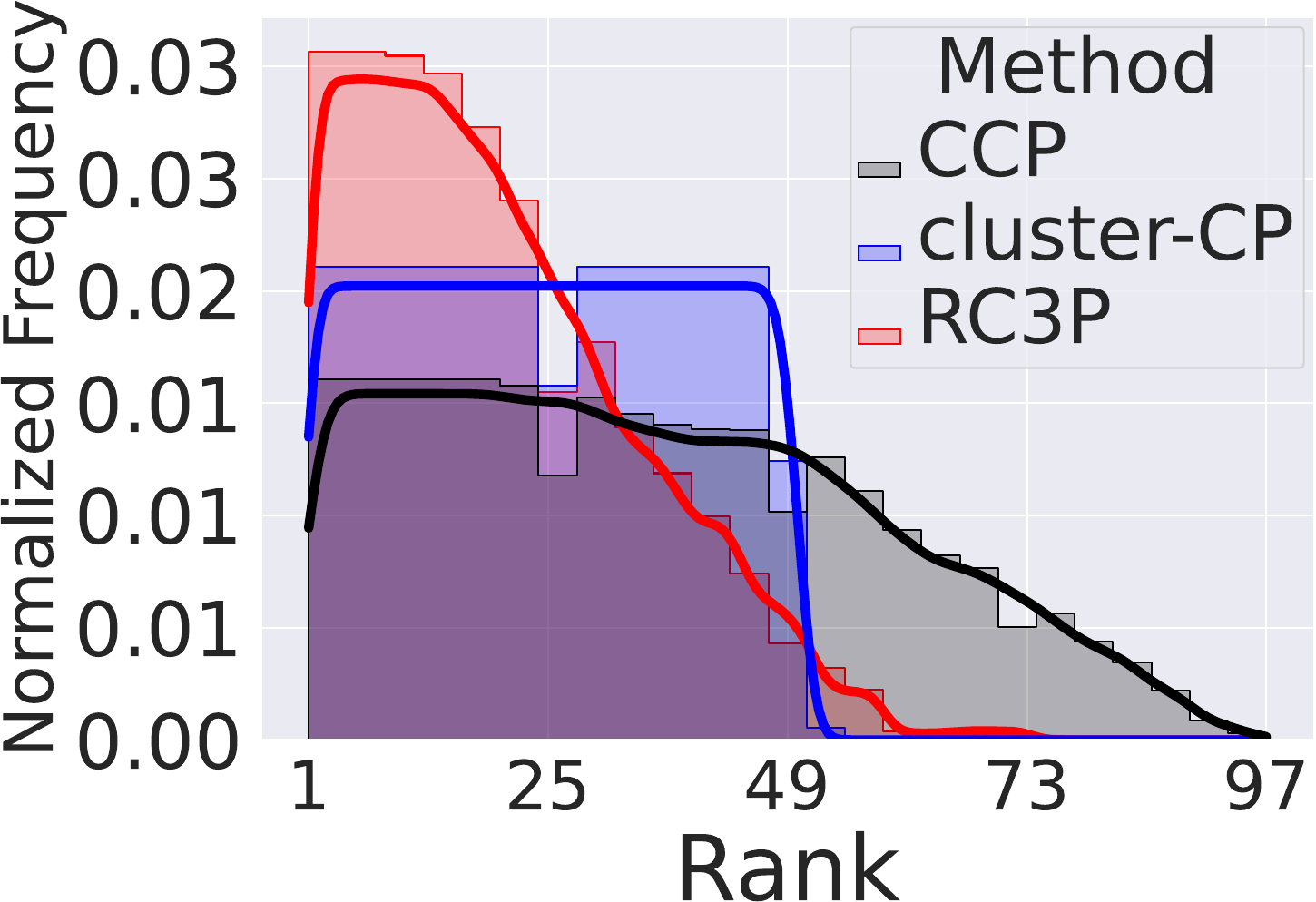}
        \end{minipage}
    }
    \subfigure{
        \begin{minipage}{0.23\linewidth}
            \includegraphics[width=\linewidth]{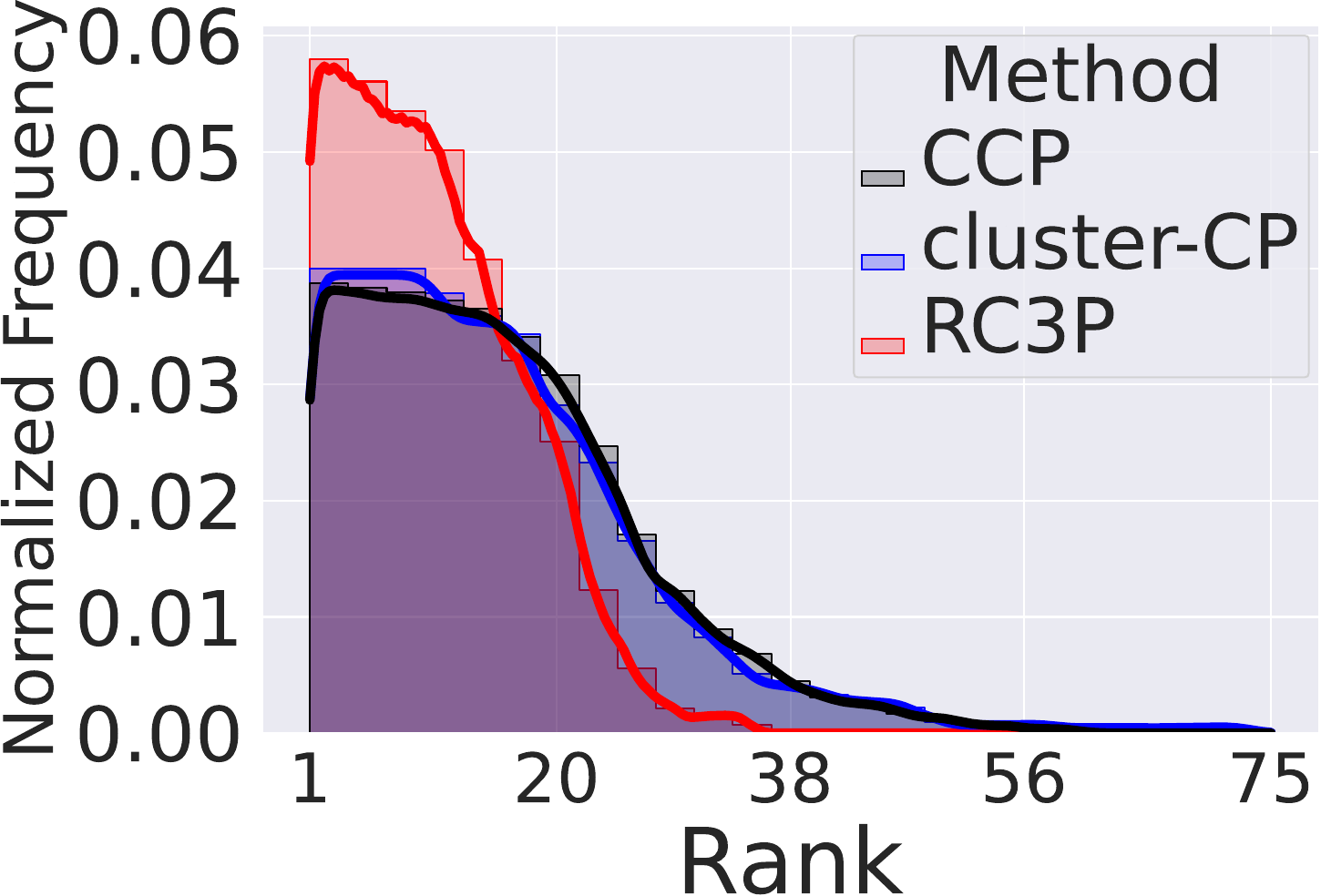}
        \end{minipage}
    }
    \subfigure{
        \begin{minipage}{0.23\linewidth}
            \includegraphics[width=\linewidth]{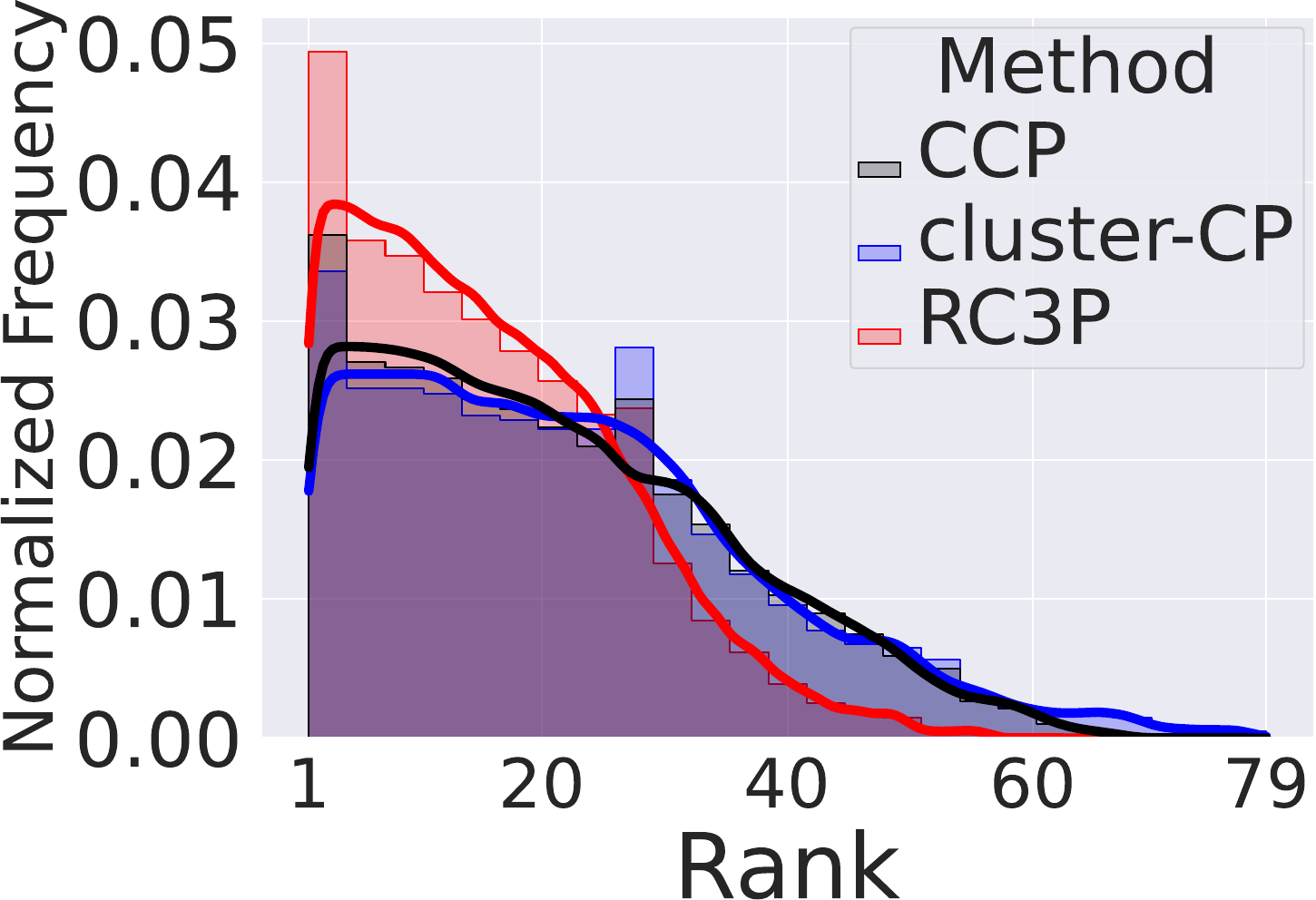}
        \end{minipage}
    }
    \caption{
    Visualization for the normalized frequency distribution of label ranks included in the prediction set of \texttt{CCP}, \texttt{Cluster-CP}, and \texttt{\newCP} with $\rho=0.1$ \POLY~when $\alpha = 0.1$.
    It is clear that the distribution of normalized frequency generated by \texttt{\newCP} tends to be lower compared to those produced by \texttt{CCP} and \texttt{Cluster-CP}.  
    Furthermore, the probability density function tail for label ranks in the \texttt{\newCP} prediction set is notably shorter than that of other methods.
    }
    \label{fig:condition_number_rank_poly_0.1}
\end{figure}

\begin{figure}[!ht]
    \centering
    \begin{minipage}{.24\textwidth}
        \centering
        (a) CIFAR-10
    \end{minipage}%
    \begin{minipage}{.24\textwidth}
        \centering
        (b) CIFAR-100
    \end{minipage}%
    \begin{minipage}{.24\textwidth}
        \centering
        (c) mini-ImageNet
    \end{minipage}%
    \begin{minipage}{.24\textwidth}
        \centering
        (d) Food-101
    \end{minipage}
    \subfigure{
        \begin{minipage}{0.23\linewidth}
            \includegraphics[width=\linewidth]{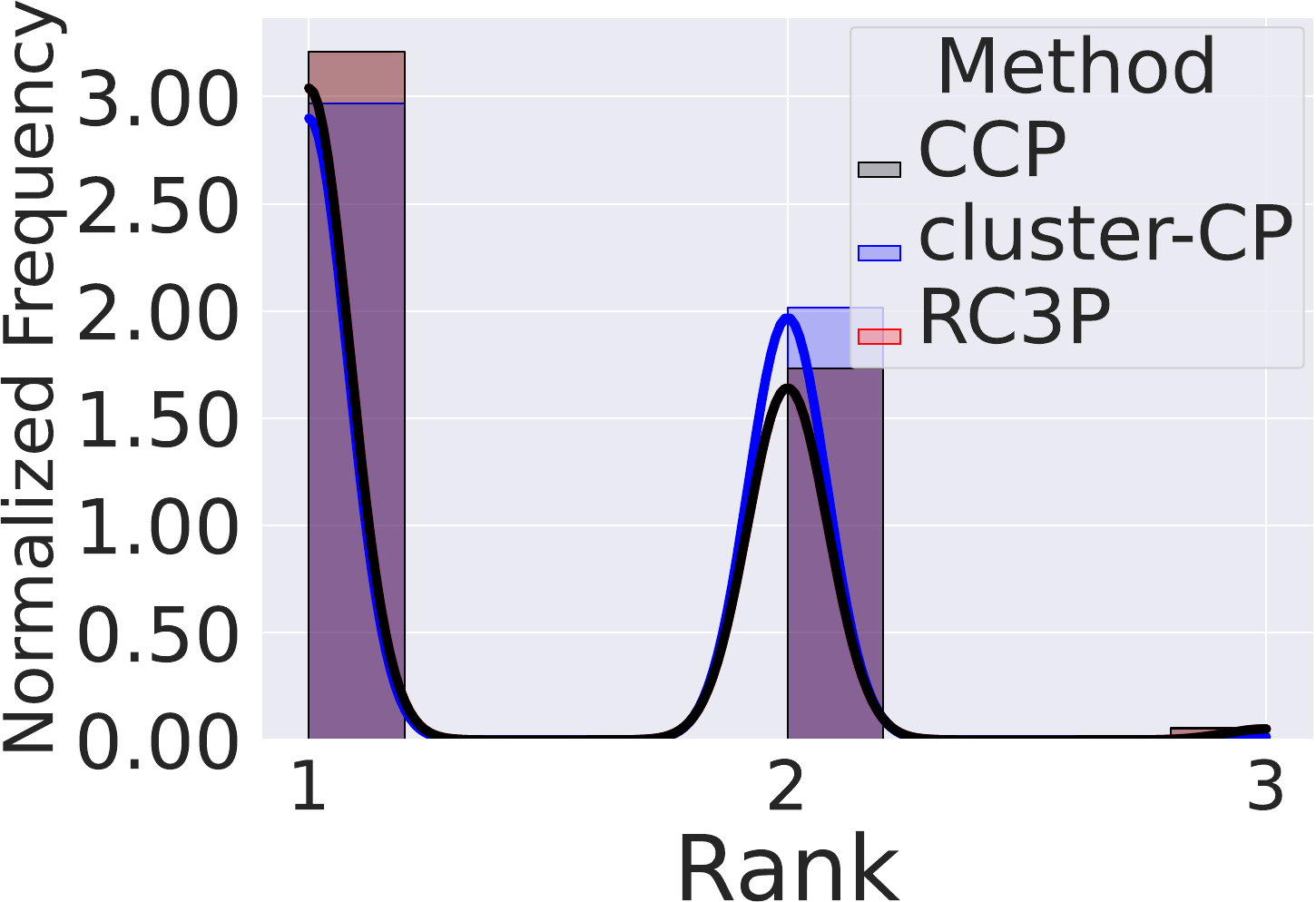}
        \end{minipage}
    }
    \subfigure{
        \begin{minipage}{0.23\linewidth}
            \includegraphics[width=\linewidth]{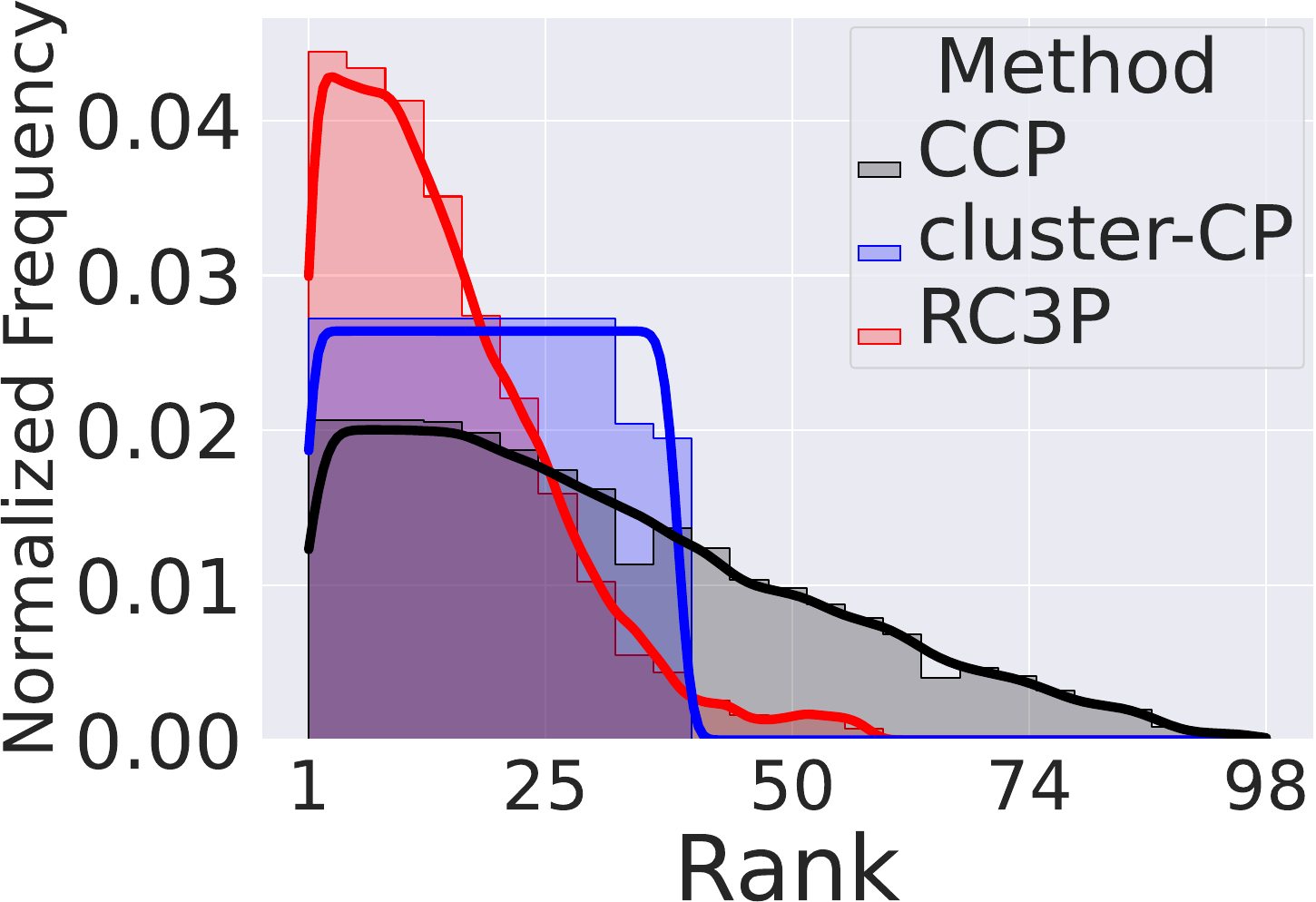}
        \end{minipage}
    }
    \subfigure{
        \begin{minipage}{0.23\linewidth}
            \includegraphics[width=\linewidth]{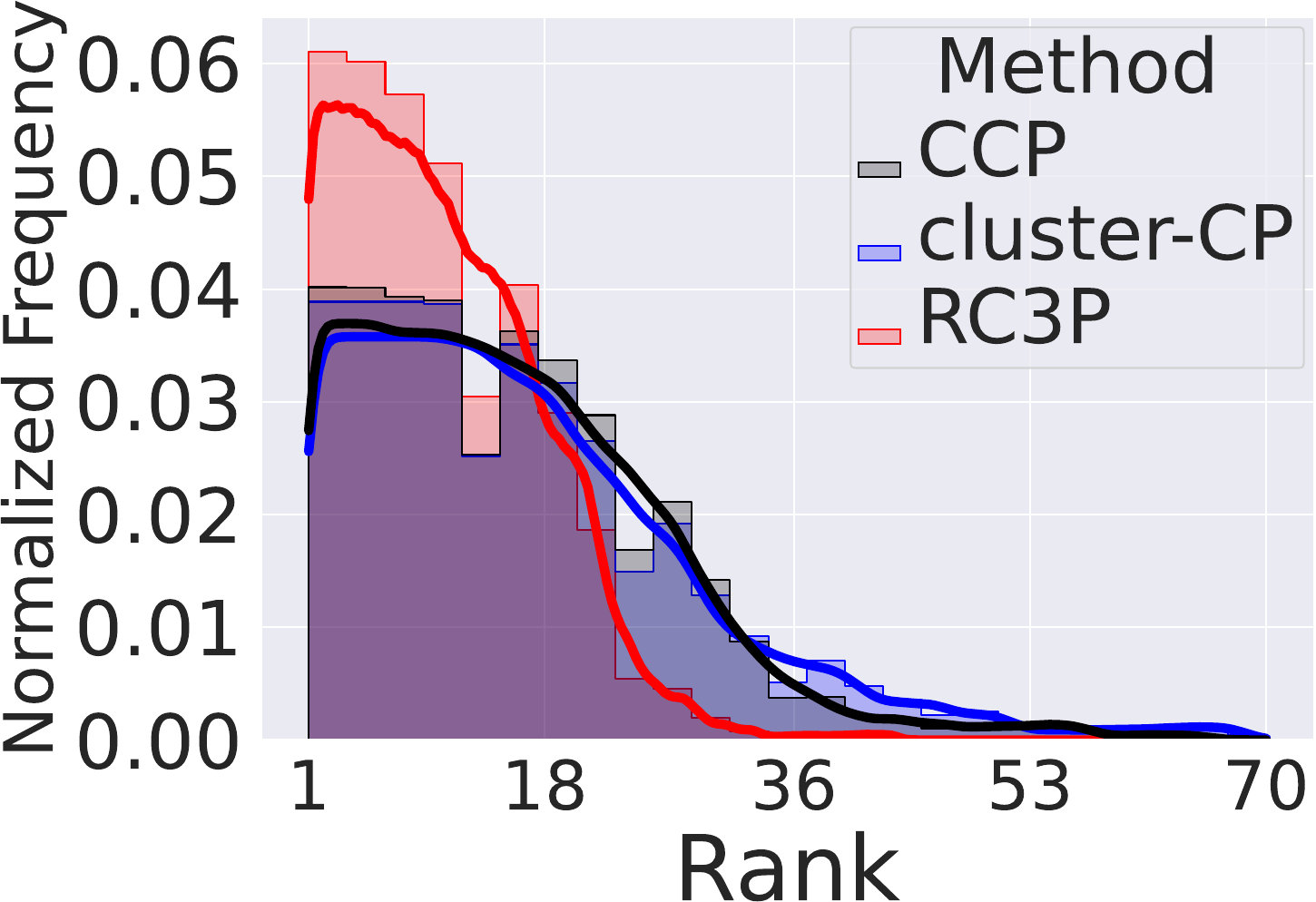}
        \end{minipage}
    }
    \subfigure{
        \begin{minipage}{0.23\linewidth}
            \includegraphics[width=\linewidth]{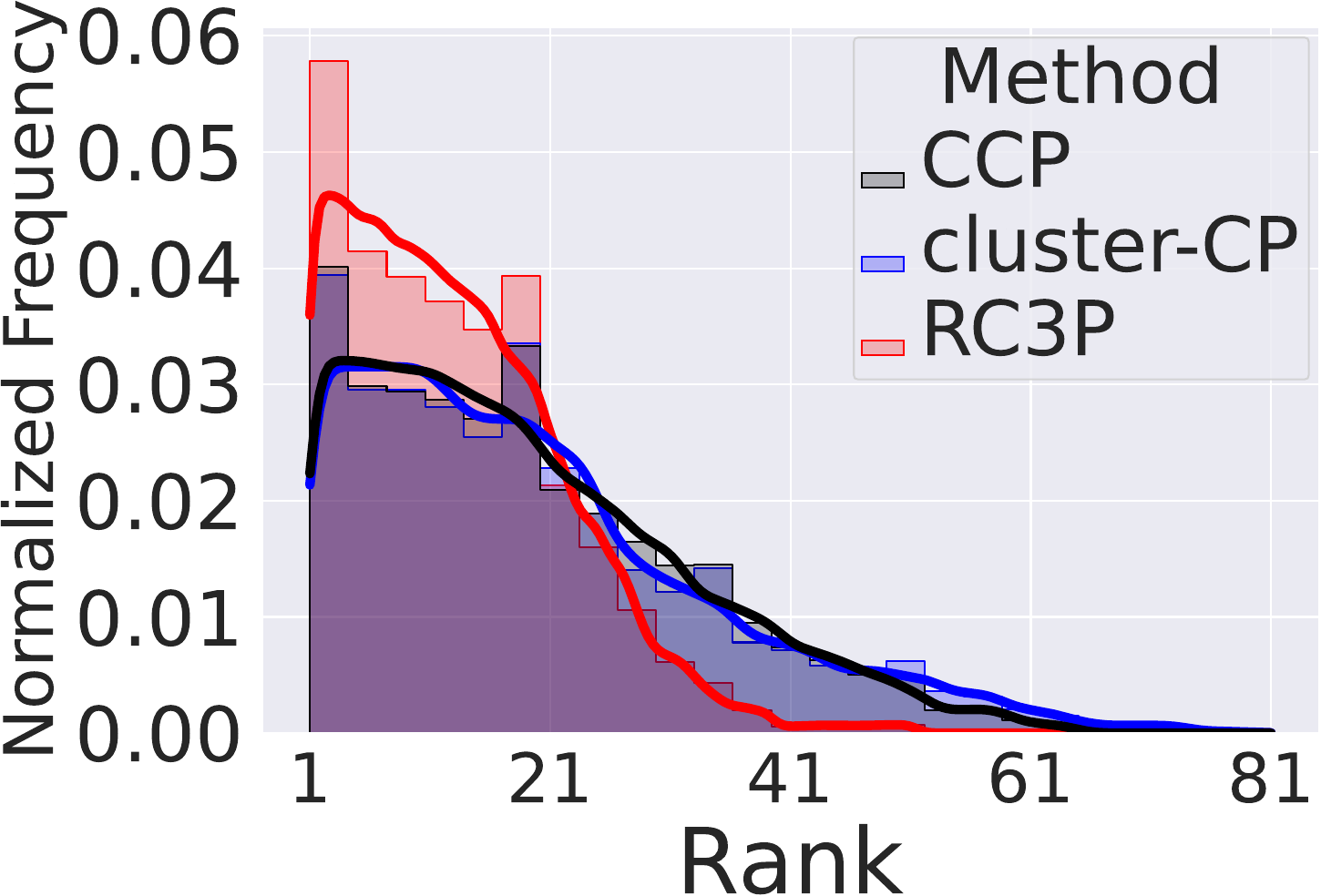}
        \end{minipage}
    }
    \caption{
    Visualization for the normalized frequency distribution of label ranks included in the prediction set of \texttt{CCP}, \texttt{Cluster-CP}, and \texttt{\newCP} with $\rho=0.5$ \POLY~when $\alpha = 0.1$.
    It is clear that the distribution of normalized frequency generated by \texttt{\newCP} tends to be lower compared to those produced by \texttt{CCP} and \texttt{Cluster-CP}.  
    Furthermore, the probability density function tail for label ranks in the \texttt{\newCP} prediction set is notably shorter than that of other methods.
    }
    \label{fig:condition_number_rank_poly_0.5}
\end{figure}

\begin{figure}[!ht]
    \centering
    \begin{minipage}{.24\textwidth}
        \centering
        (a) CIFAR-10
    \end{minipage}%
    \begin{minipage}{.24\textwidth}
        \centering
        (b) CIFAR-100
    \end{minipage}%
    \begin{minipage}{.24\textwidth}
        \centering
        (c) mini-ImageNet
    \end{minipage}%
    \begin{minipage}{.24\textwidth}
        \centering
        (d) Food-101
    \end{minipage}
    \subfigure{
        \begin{minipage}{0.23\linewidth}
            \includegraphics[width=\linewidth]{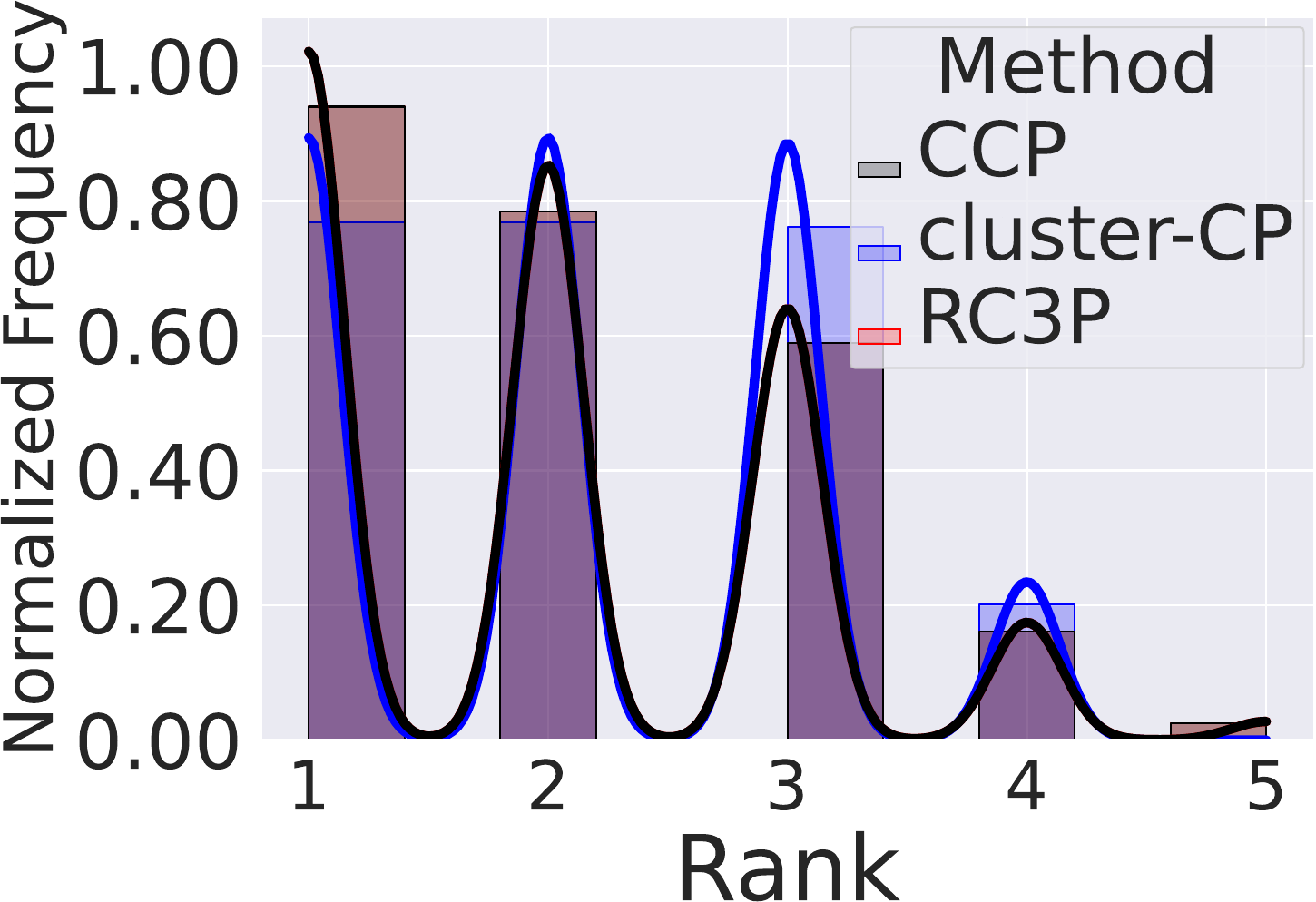}
        \end{minipage}
    }
    \subfigure{
        \begin{minipage}{0.23\linewidth}
            \includegraphics[width=\linewidth]{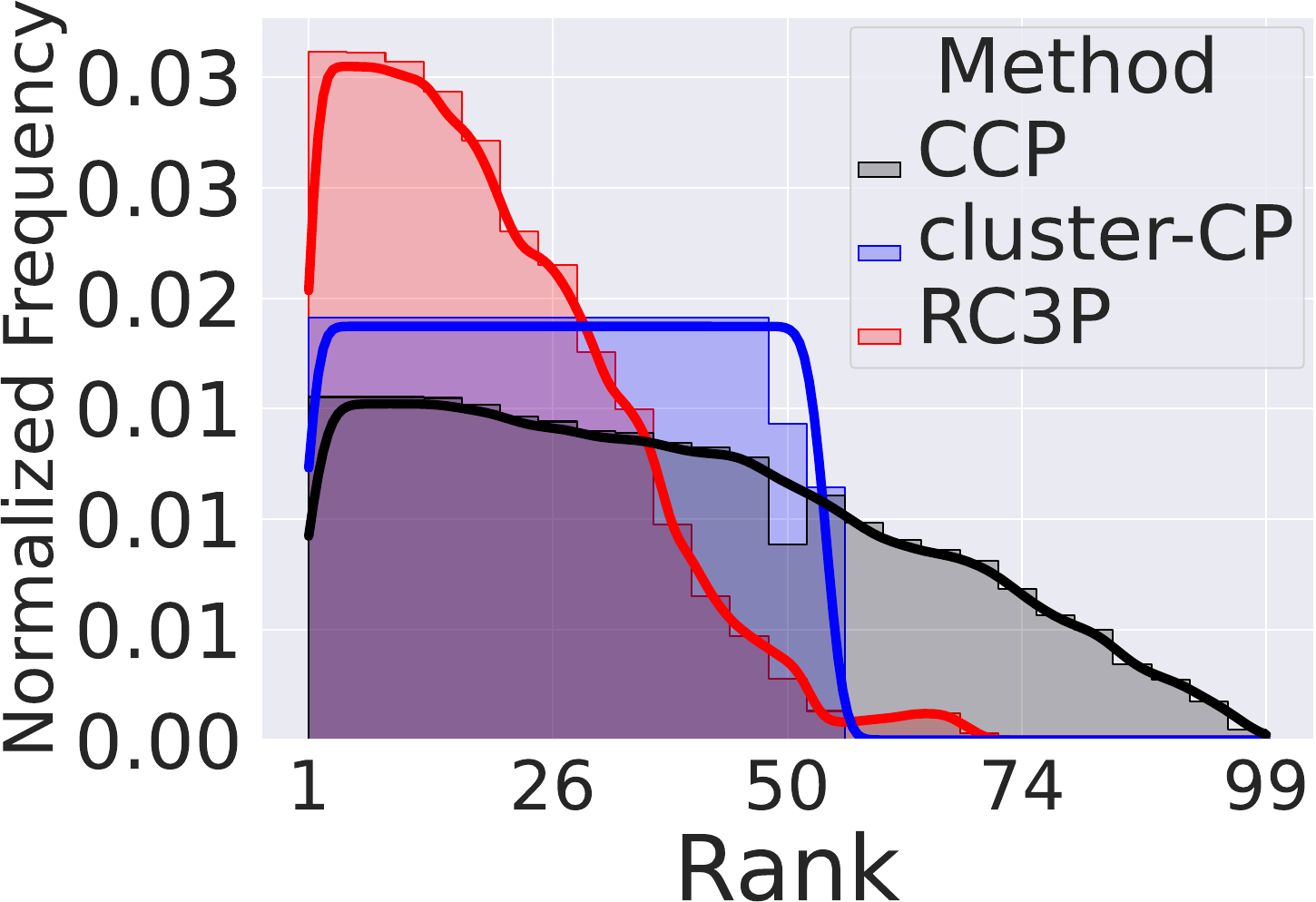}
        \end{minipage}
    }
    \subfigure{
        \begin{minipage}{0.23\linewidth}
            \includegraphics[width=\linewidth]{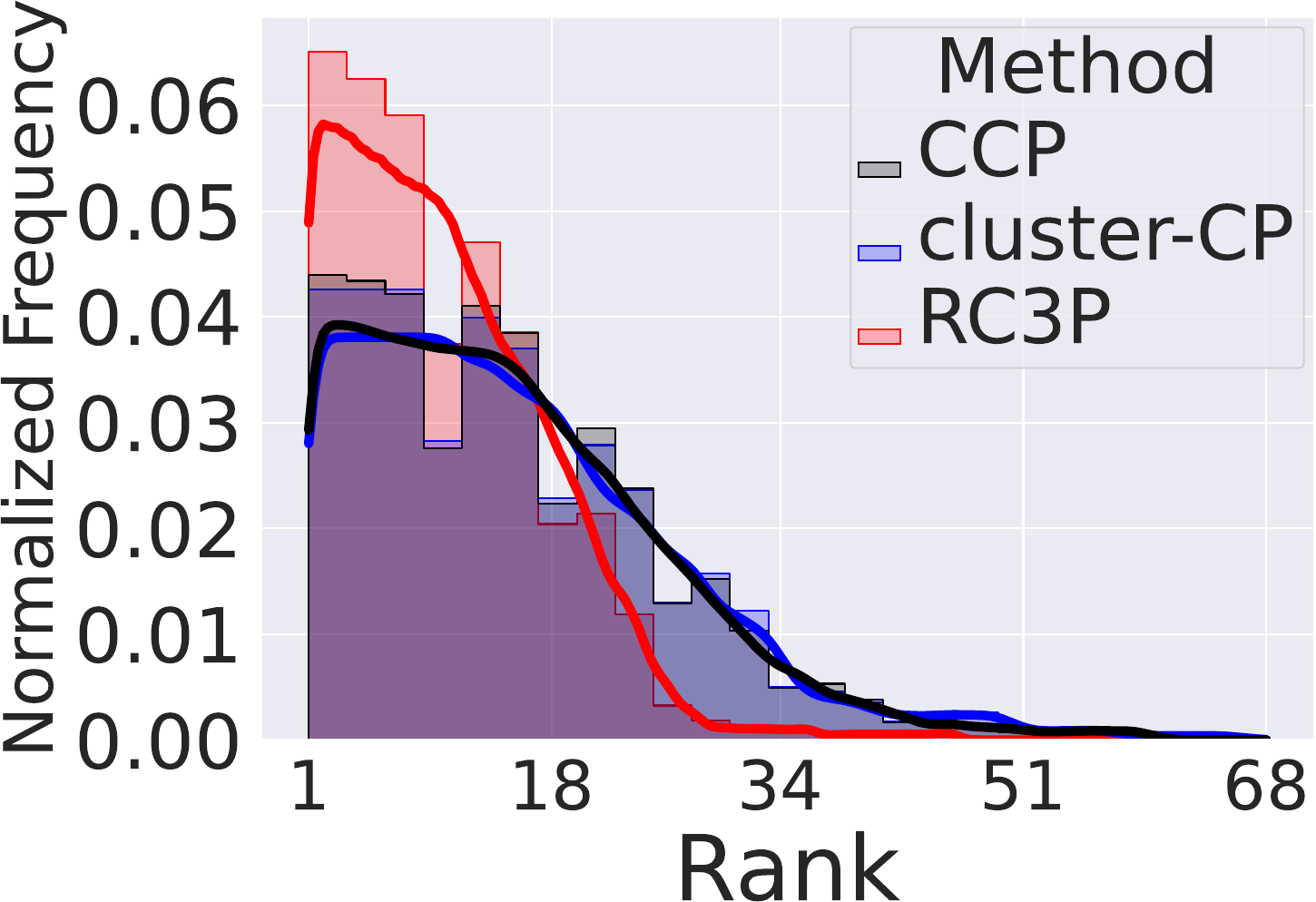}
        \end{minipage}
    }
    \subfigure{
        \begin{minipage}{0.23\linewidth}
            \includegraphics[width=\linewidth]{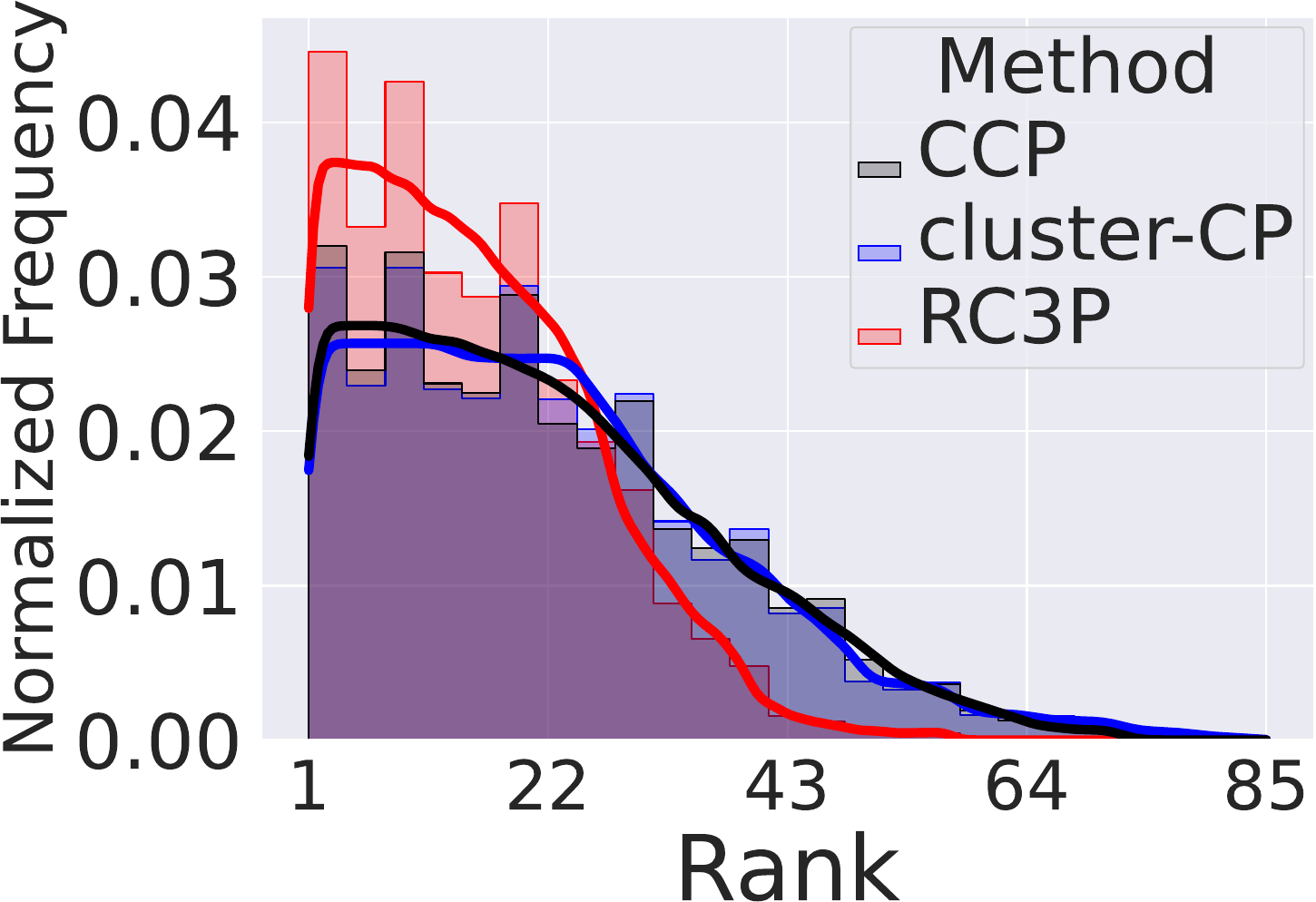}
        \end{minipage}
    }
    \caption{
    Visualization for the normalized frequency distribution of label ranks included in the prediction set of \texttt{CCP}, \texttt{Cluster-CP}, and \texttt{\newCP} with $\rho=0.1$ \MAJ~when $\alpha = 0.1$.
    It is clear that the distribution of normalized frequency generated by \texttt{\newCP} tends to be lower compared to those produced by \texttt{CCP} and \texttt{Cluster-CP}.  
    Furthermore, the probability density function tail for label ranks in the \texttt{\newCP} prediction set is notably shorter than that of other methods.
    }
    \label{fig:condition_number_rank_maj_0.1}
\end{figure}

\begin{figure}[!ht]
    \centering
    \begin{minipage}{.24\textwidth}
        \centering
        (a) CIFAR-10
    \end{minipage}%
    \begin{minipage}{.24\textwidth}
        \centering
        (b) CIFAR-100
    \end{minipage}%
    \begin{minipage}{.24\textwidth}
        \centering
        (c) mini-ImageNet
    \end{minipage}%
    \begin{minipage}{.24\textwidth}
        \centering
        (d) Food-101
    \end{minipage}
    \subfigure{
        \begin{minipage}{0.23\linewidth}
            \includegraphics[width=\linewidth]{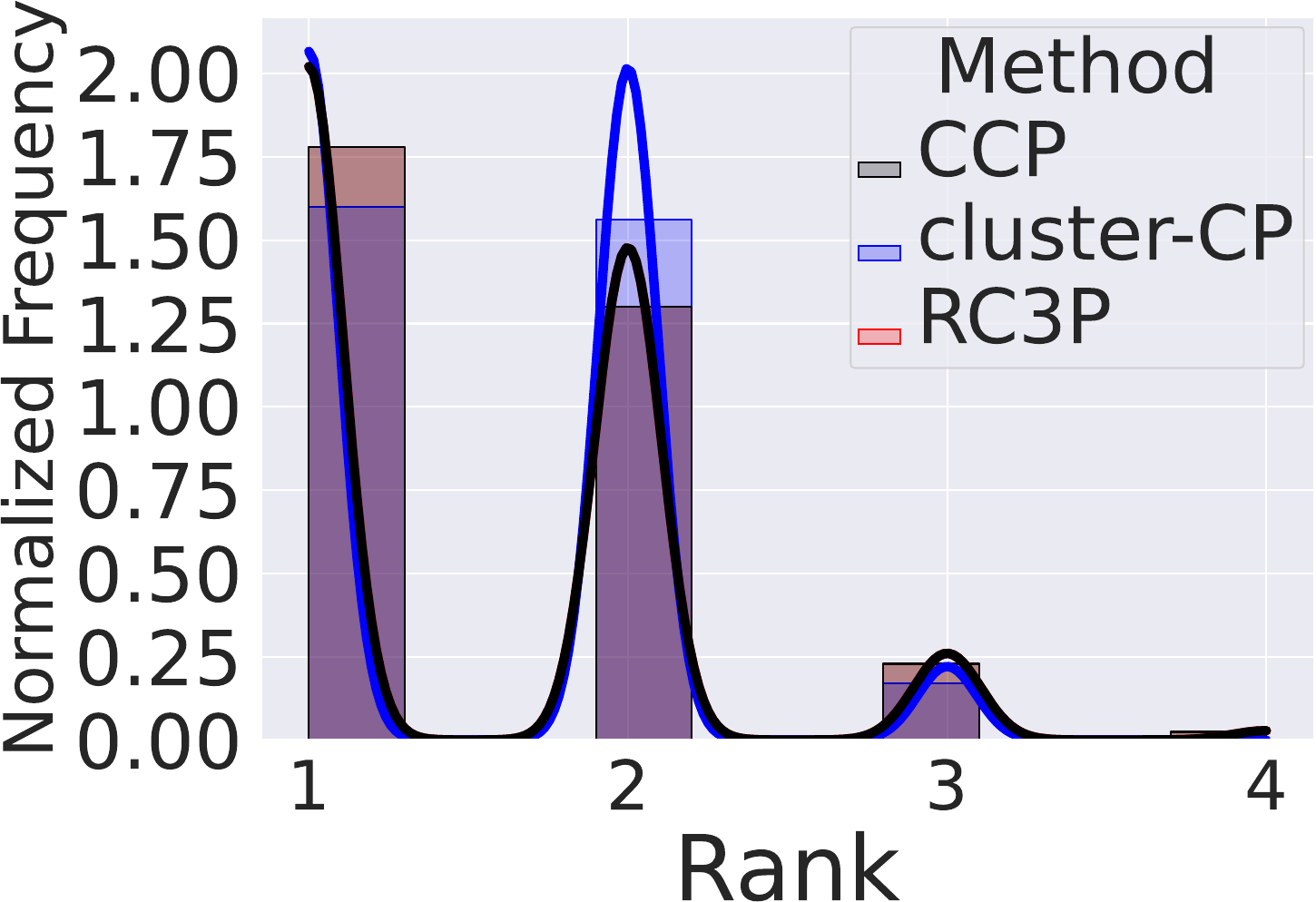}
        \end{minipage}
    }
    \subfigure{
        \begin{minipage}{0.23\linewidth}
            \includegraphics[width=\linewidth]{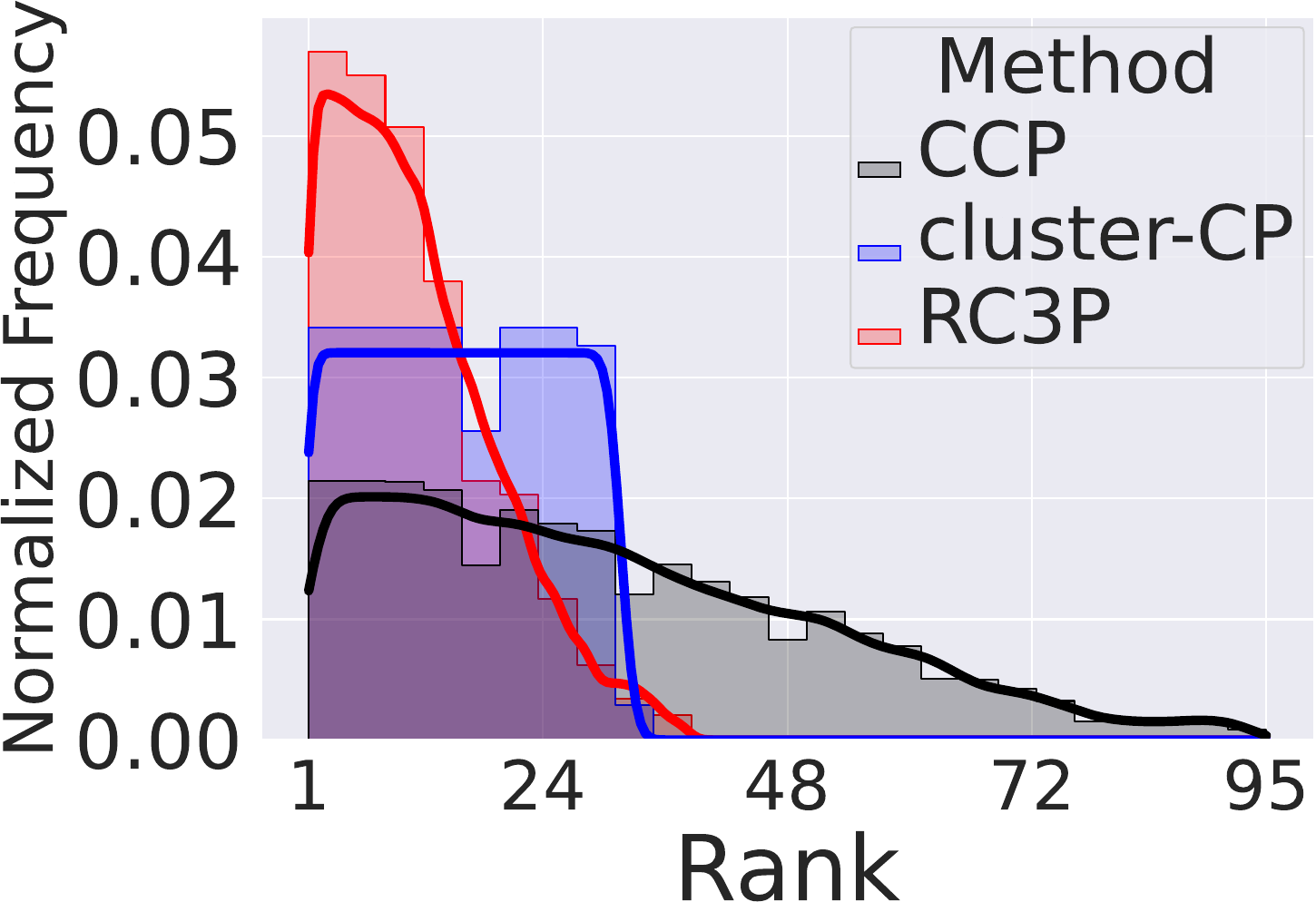}
        \end{minipage}
    }
    \subfigure{
        \begin{minipage}{0.23\linewidth}
            \includegraphics[width=\linewidth]{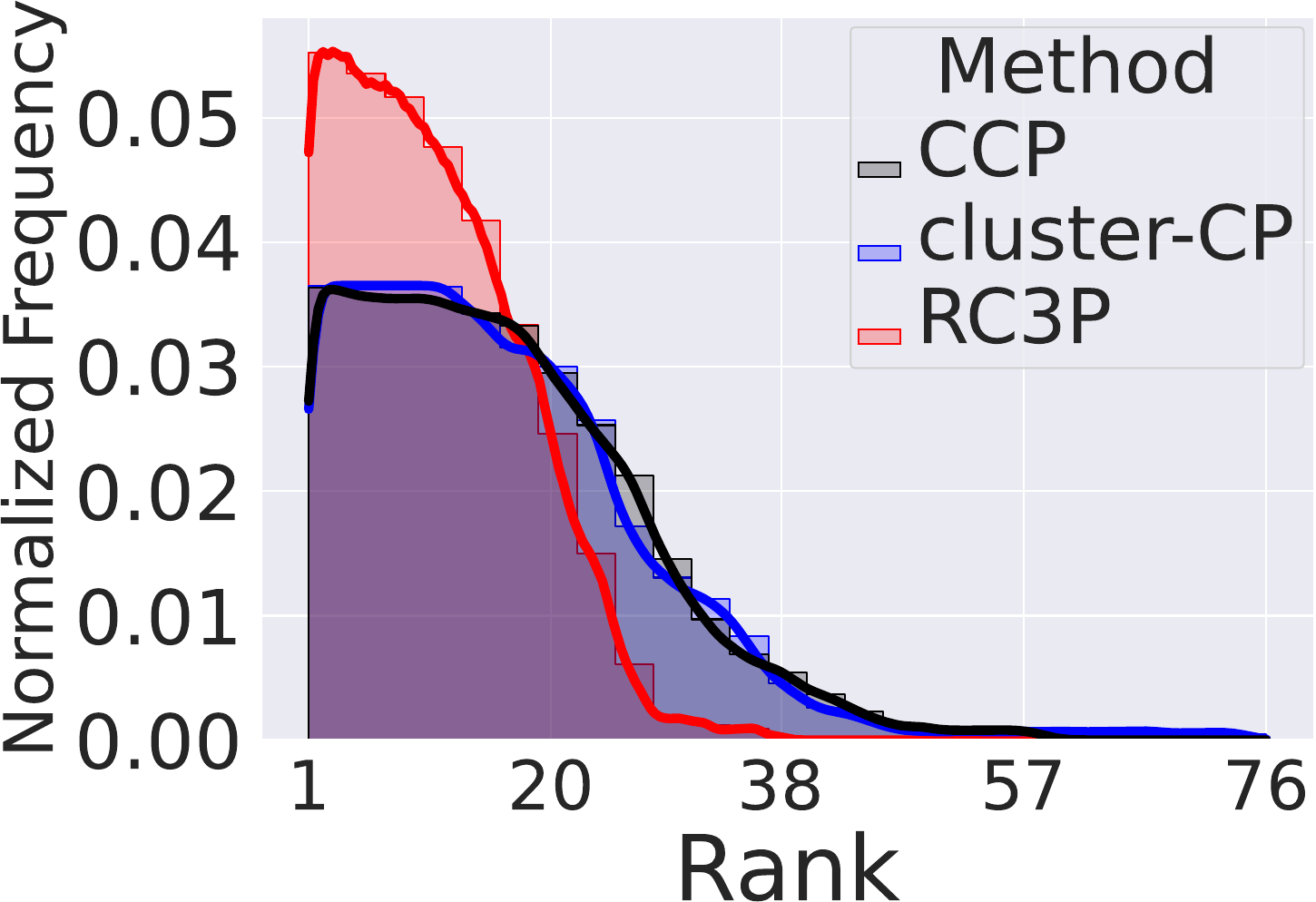}
        \end{minipage}
    }
    \subfigure{
        \begin{minipage}{0.23\linewidth}
            \includegraphics[width=\linewidth]{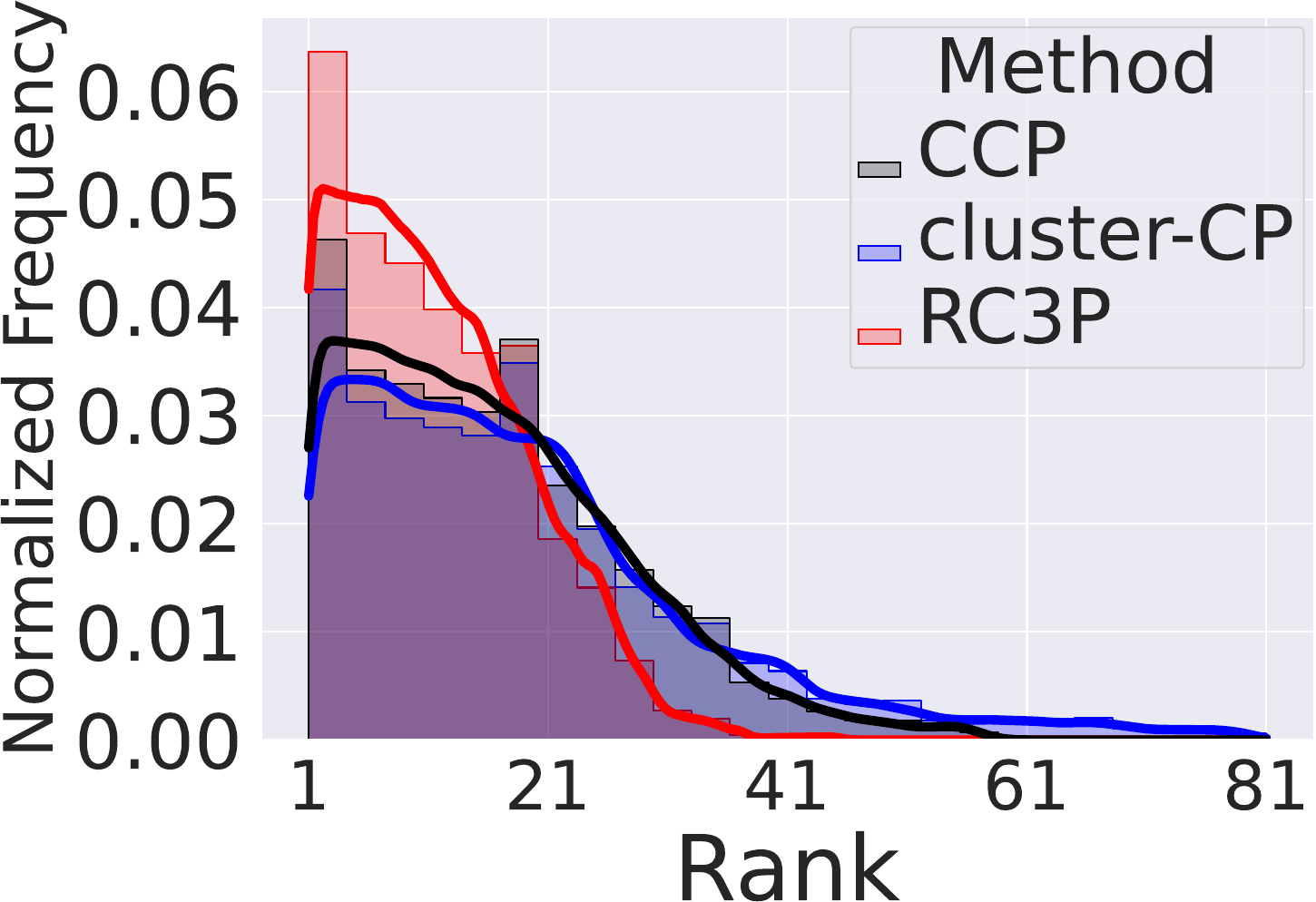}
        \end{minipage}
    }
    \caption{
    Visualization for the normalized frequency distribution of label ranks included in the prediction set of \texttt{CCP}, \texttt{Cluster-CP}, and \texttt{\newCP} with $\rho=0.5$ \MAJ~when $\alpha = 0.1$.
    It is clear that the distribution of normalized frequency generated by \texttt{\newCP} tends to be lower compared to those produced by \texttt{CCP} and \texttt{Cluster-CP}.  
    Furthermore, the probability density function tail for label ranks in the \texttt{\newCP} prediction set is notably shorter than that of other methods.
    }
    \label{fig:condition_number_rank_maj_0.5}
\end{figure}

\begin{figure}[!ht]
    \centering
    \begin{minipage}{.24\textwidth}
        \centering
        (a) CIFAR-10
    \end{minipage}%
    \begin{minipage}{.24\textwidth}
        \centering
        (b) CIFAR-100
    \end{minipage}%
    \begin{minipage}{.24\textwidth}
        \centering
        (c) mini-ImageNet
    \end{minipage}%
    \begin{minipage}{.24\textwidth}
        \centering
        (d) Food-101
    \end{minipage}
    \subfigure{
        \begin{minipage}{0.23\linewidth}
            \includegraphics[width=\linewidth]{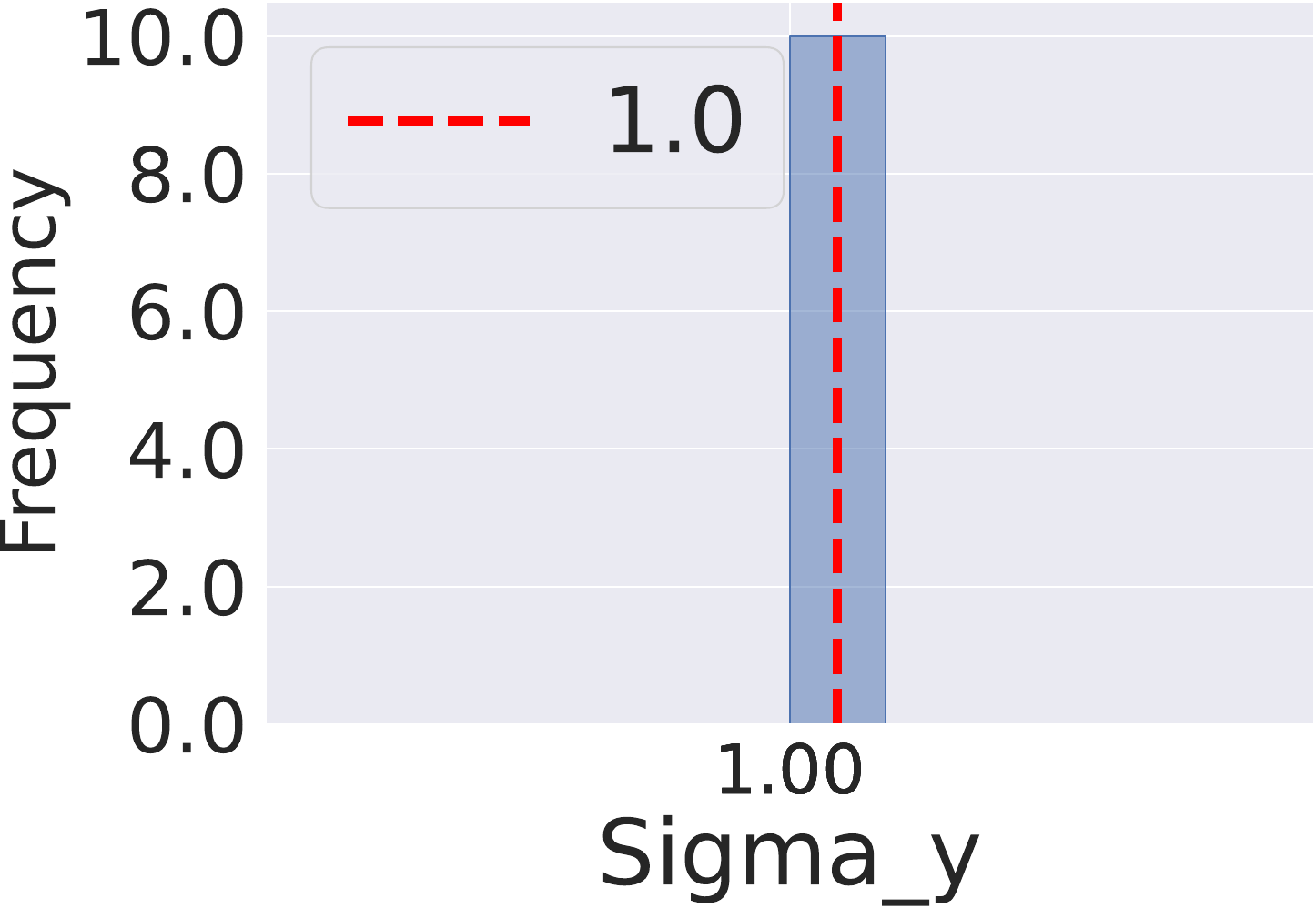}
        \end{minipage}
    }
    \subfigure{
        \begin{minipage}{0.23\linewidth}
            \includegraphics[width=\linewidth]{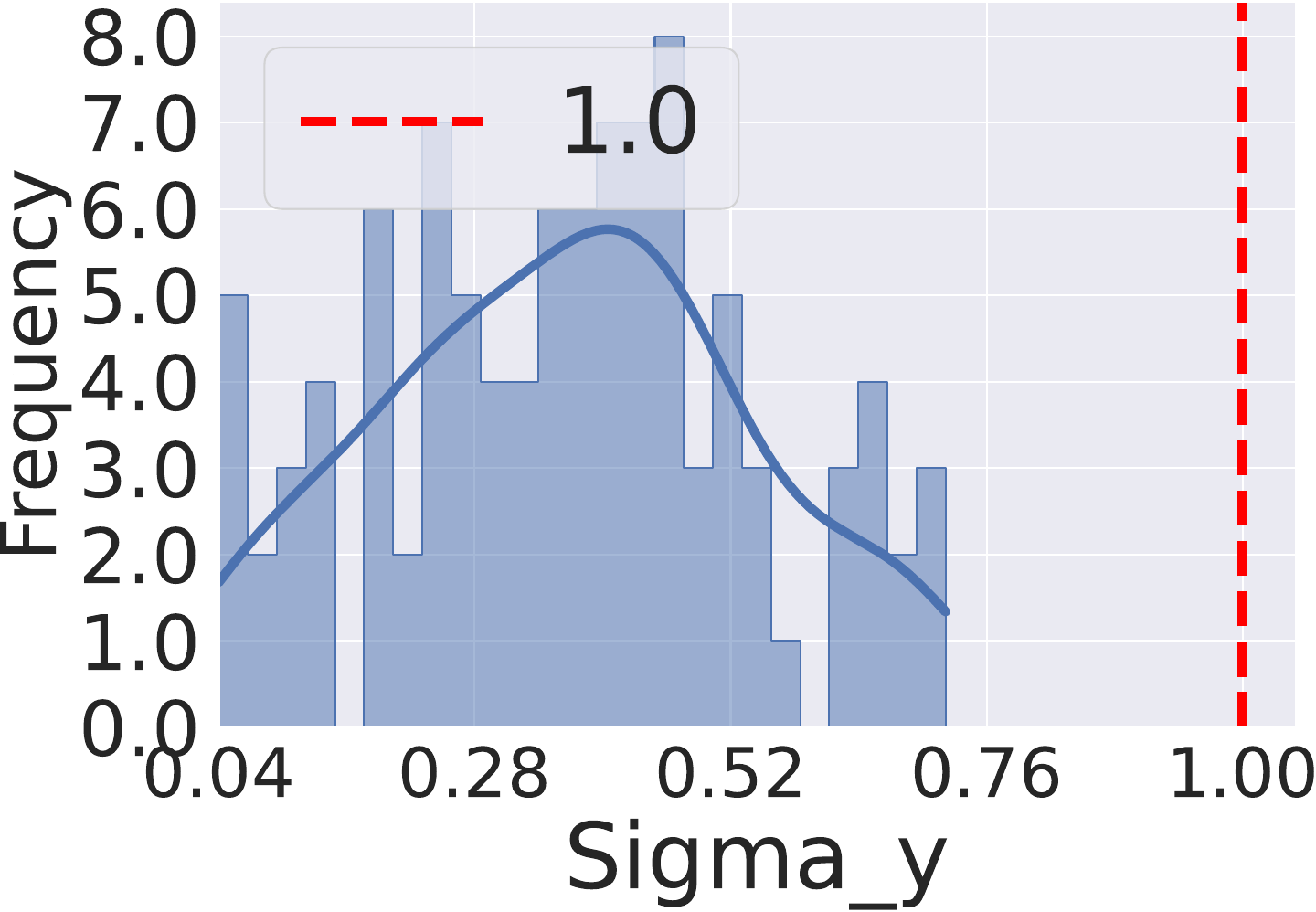}
        \end{minipage}
    }
    \subfigure{
        \begin{minipage}{0.23\linewidth}
            \includegraphics[width=\linewidth]{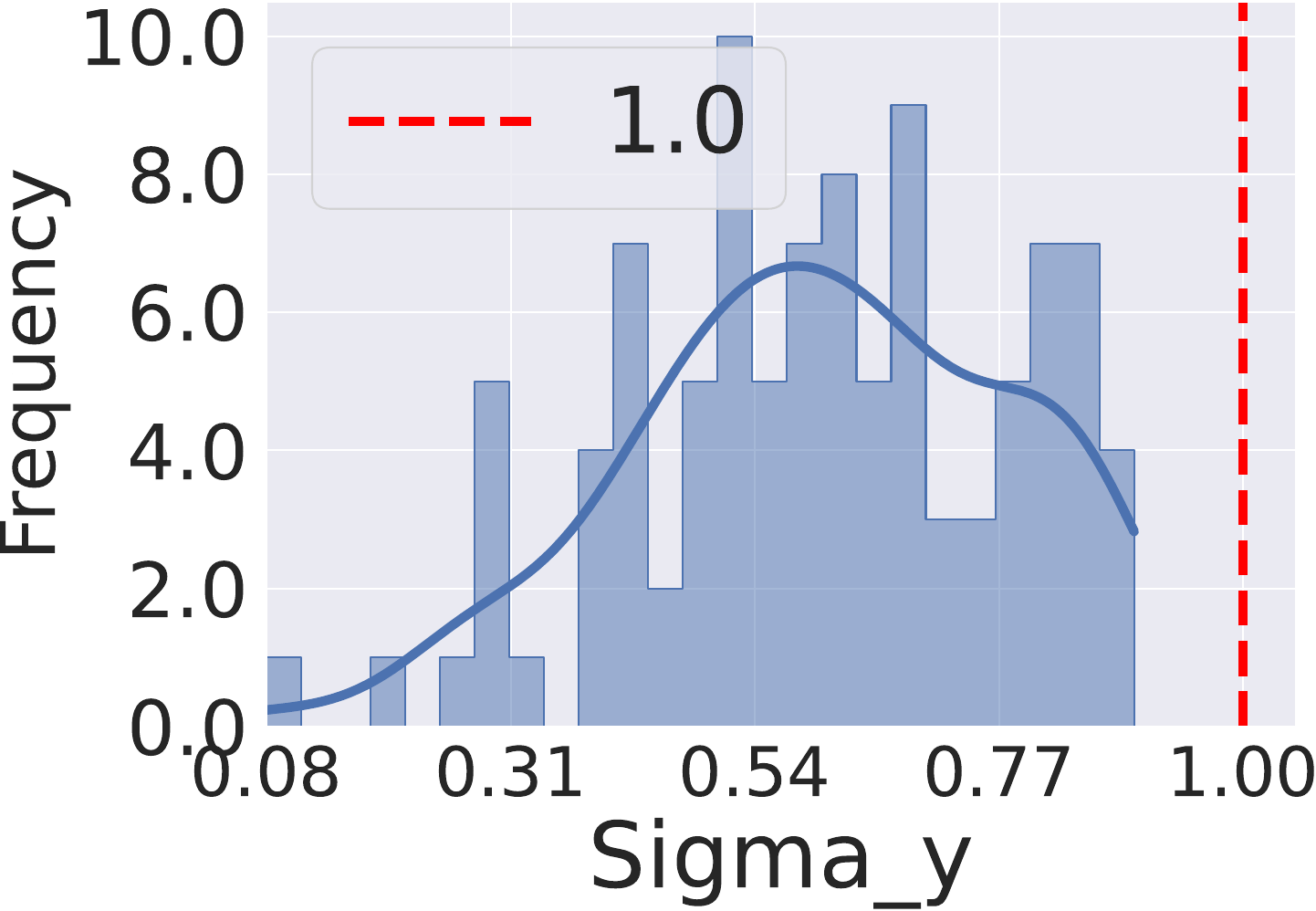}
        \end{minipage}
    }
    \subfigure{
        \begin{minipage}{0.23\linewidth}
            \includegraphics[width=\linewidth]{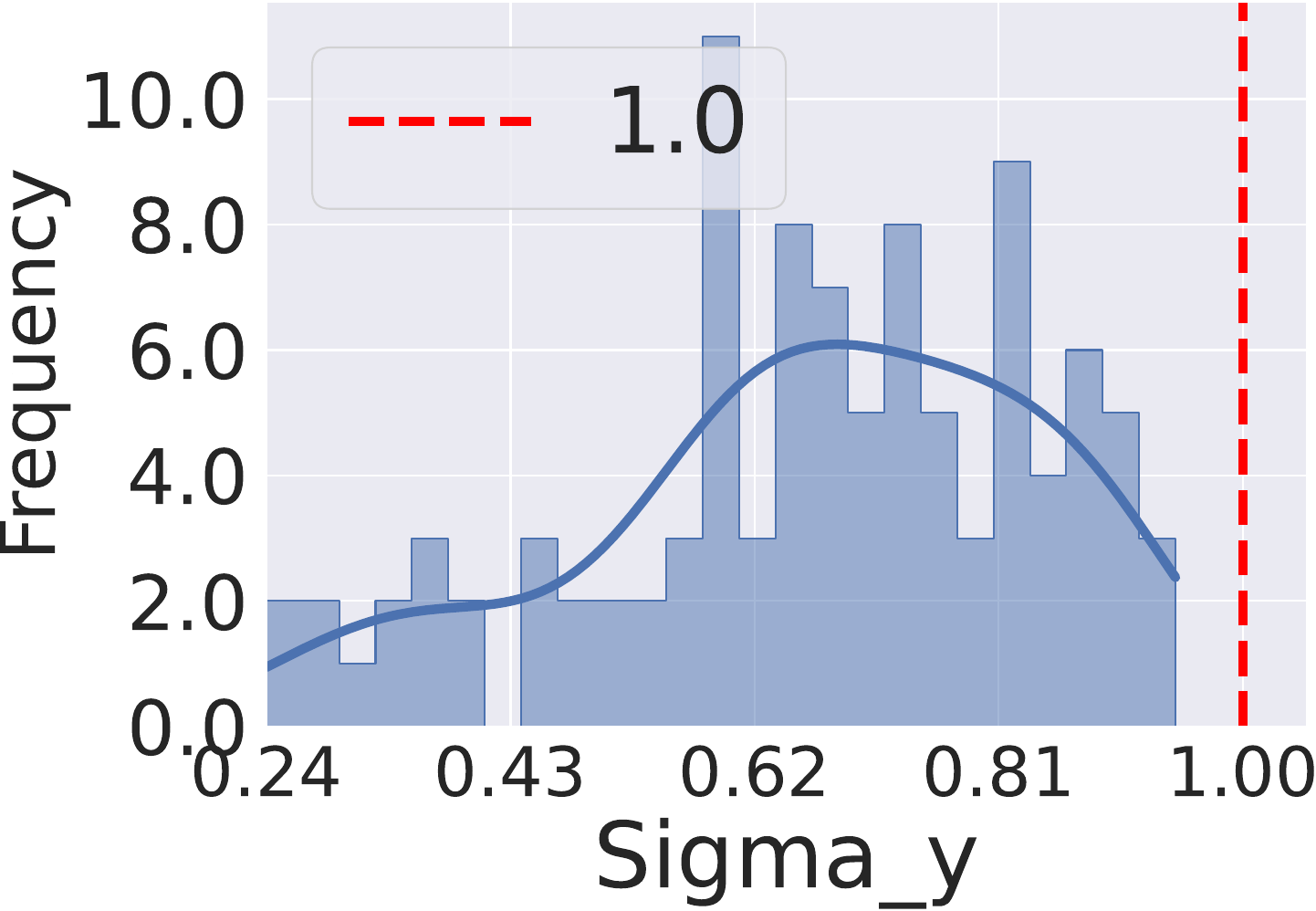}
        \end{minipage}
    }
    \caption{
    Verification of condition numbers $\{\sigma_y\}_{y=1}^C$ of Equation \ref{eq:sigma_y_defination} when epoch $=200$ and $\alpha = 0.1$ with $\rho=0.5$ \EXP.
    Vertical dashed lines represent the value $1$, and we observe that all the condition numbers are smaller than $1$.
    This verifies the validity of the condition for Lemma \ref{lemma:RC3P_improved_efficiency}, and thus confirms that \texttt{\newCP}produces smaller prediction sets than \texttt{CCP} by the optimized trade-off between calibration on non-conformity scores and calibrated label ranks.
    }
    \label{fig:condition_number_sigma_y_exp_0.5}
\end{figure}

\begin{figure}[!ht]
    \centering
    \begin{minipage}{.24\textwidth}
        \centering
        (a) CIFAR-10
    \end{minipage}%
    \begin{minipage}{.24\textwidth}
        \centering
        (b) CIFAR-100
    \end{minipage}%
    \begin{minipage}{.24\textwidth}
        \centering
        (c) mini-ImageNet
    \end{minipage}%
    \begin{minipage}{.24\textwidth}
        \centering
        (d) Food-101
    \end{minipage}
    \subfigure{
        \begin{minipage}{0.23\linewidth}
            \includegraphics[width=\linewidth]{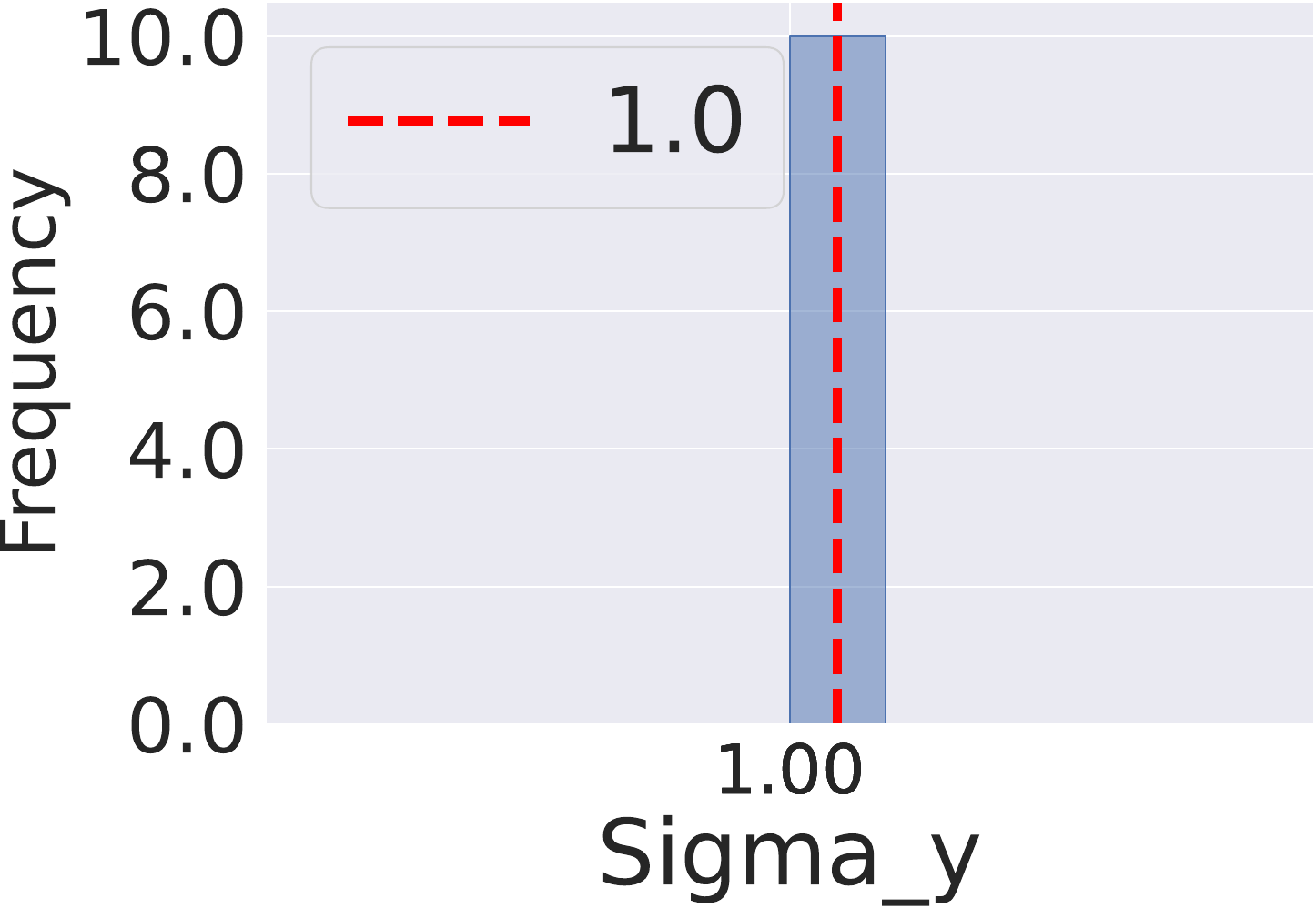}
        \end{minipage}
    }
    \subfigure{
        \begin{minipage}{0.23\linewidth}
            \includegraphics[width=\linewidth]{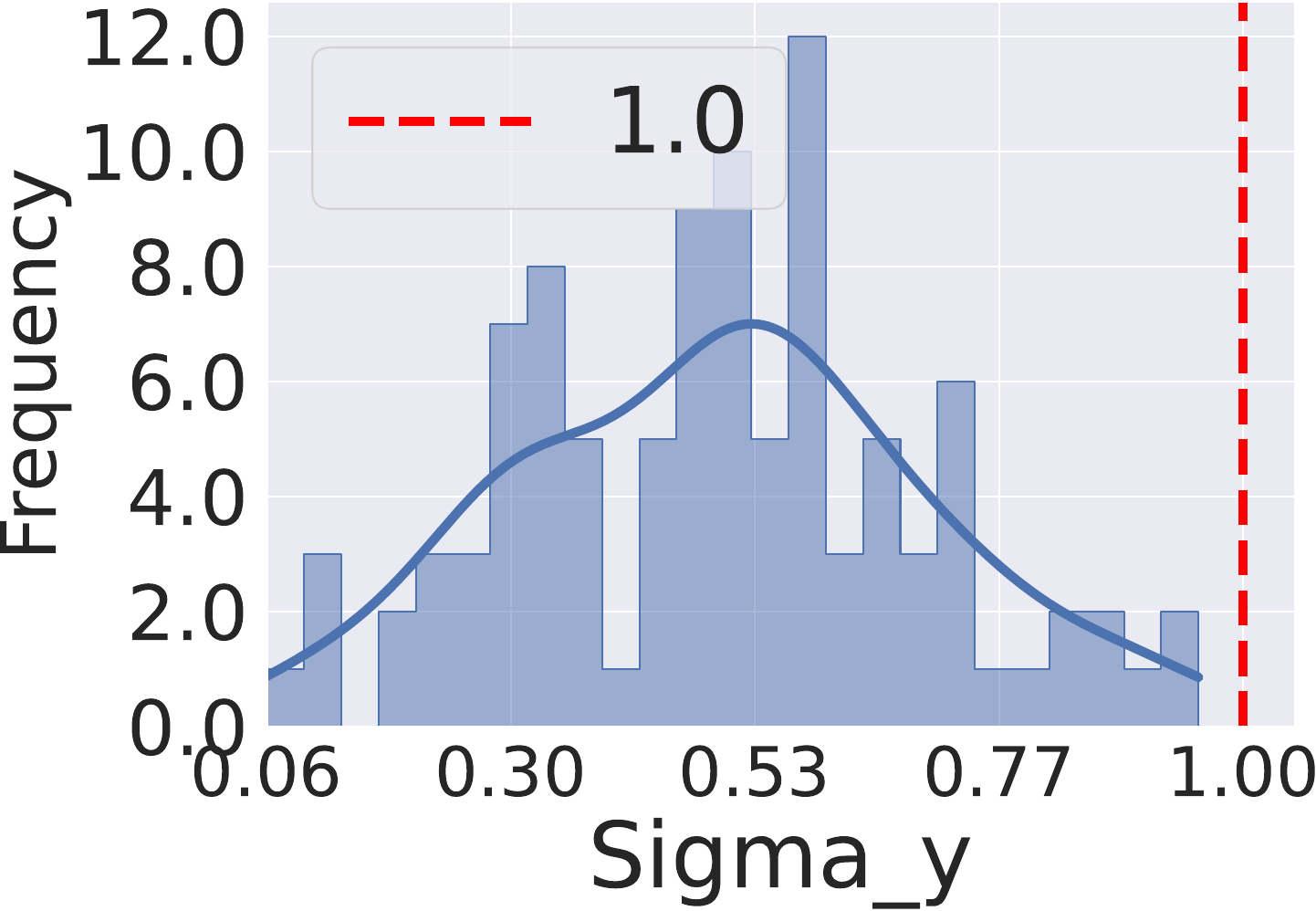}
        \end{minipage}
    }
    \subfigure{
        \begin{minipage}{0.23\linewidth}
            \includegraphics[width=\linewidth]{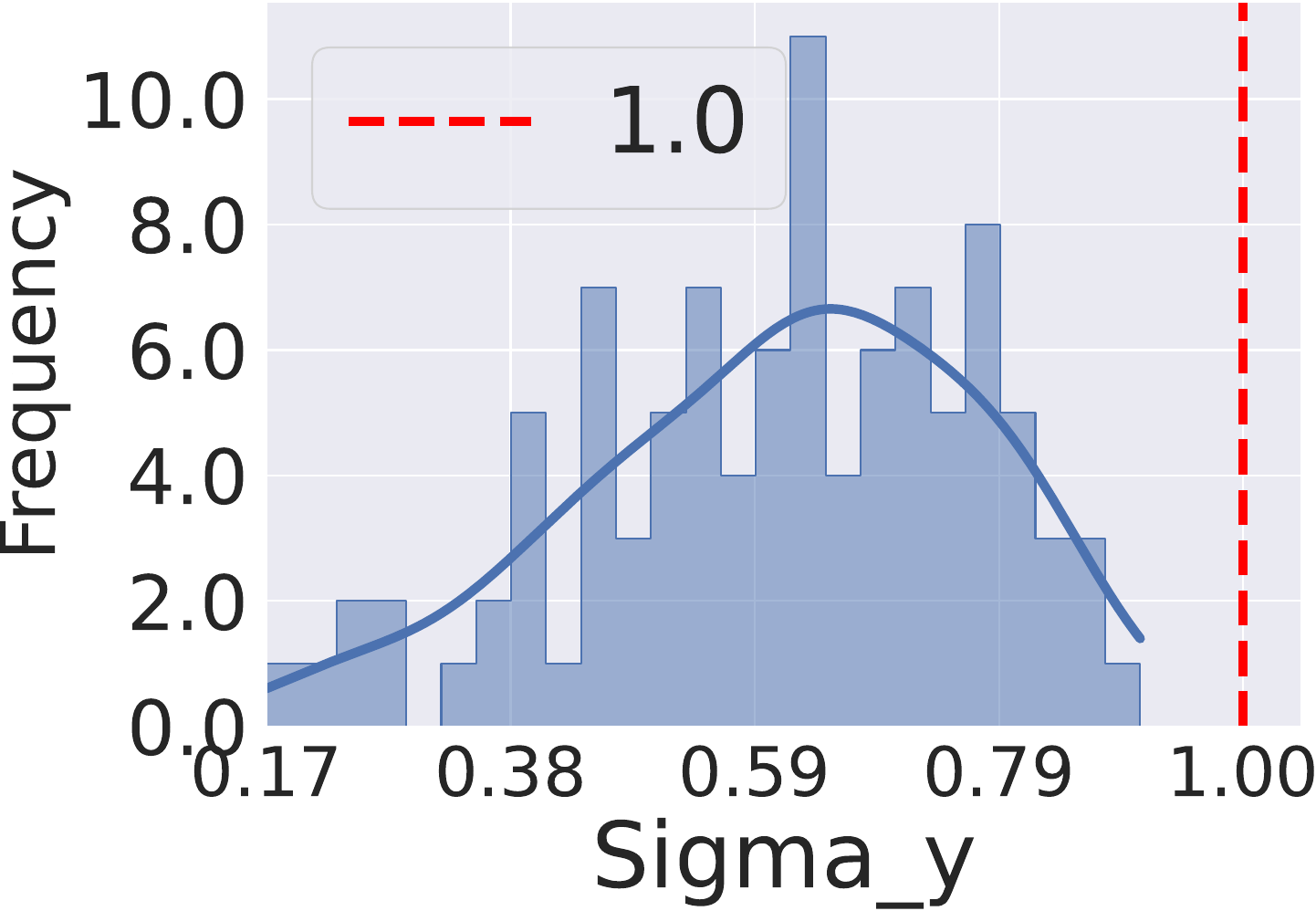}
        \end{minipage}
    }
    \subfigure{
        \begin{minipage}{0.23\linewidth}
            \includegraphics[width=\linewidth]{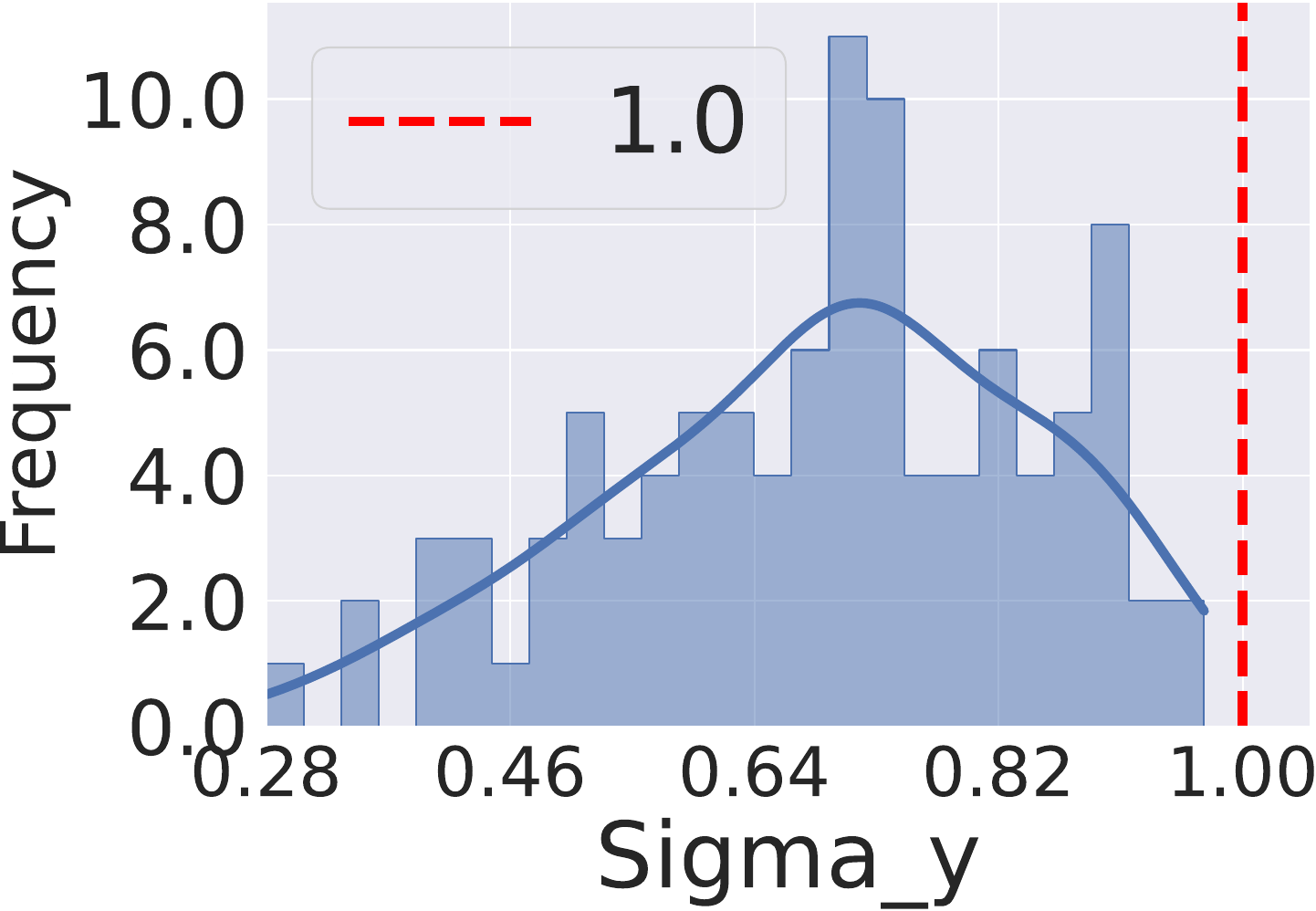}
        \end{minipage}
    }
    \caption{
    Verification of condition numbers $\{\sigma_y\}_{y=1}^C$ of Equation \ref{eq:sigma_y_defination} when epoch $=200$ and $\alpha = 0.1$ with $\rho=0.1$ \POLY.
    Vertical dashed lines represent the value $1$, and we observe that all the condition numbers are smaller than $1$.
    This verifies the validity of the condition for Lemma \ref{lemma:RC3P_improved_efficiency}, and thus confirms that \texttt{\newCP}produces smaller prediction sets than \texttt{CCP} by the optimized trade-off between calibration on non-conformity scores and calibrated label ranks.
    }
    \label{fig:condition_number_sigma_y_poly_0.1}
\end{figure}

\begin{figure}[!ht]
    \centering
    \begin{minipage}{.24\textwidth}
        \centering
        (a) CIFAR-10
    \end{minipage}%
    \begin{minipage}{.24\textwidth}
        \centering
        (b) CIFAR-100
    \end{minipage}%
    \begin{minipage}{.24\textwidth}
        \centering
        (c) mini-ImageNet
    \end{minipage}%
    \begin{minipage}{.24\textwidth}
        \centering
        (d) Food-101
    \end{minipage}
    \subfigure{
        \begin{minipage}{0.23\linewidth}
            \includegraphics[width=\linewidth]{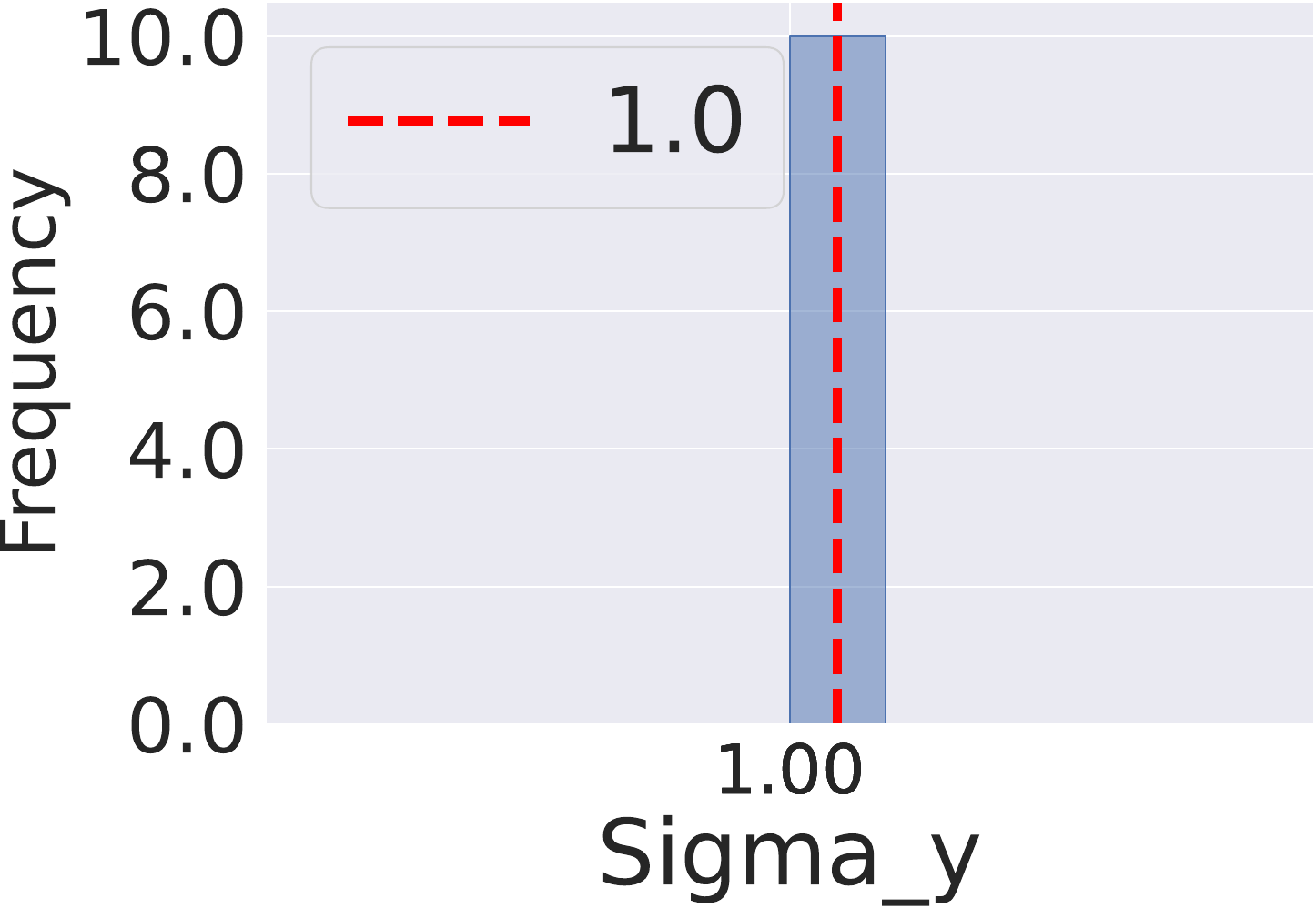}
        \end{minipage}
    }
    \subfigure{
        \begin{minipage}{0.23\linewidth}
            \includegraphics[width=\linewidth]{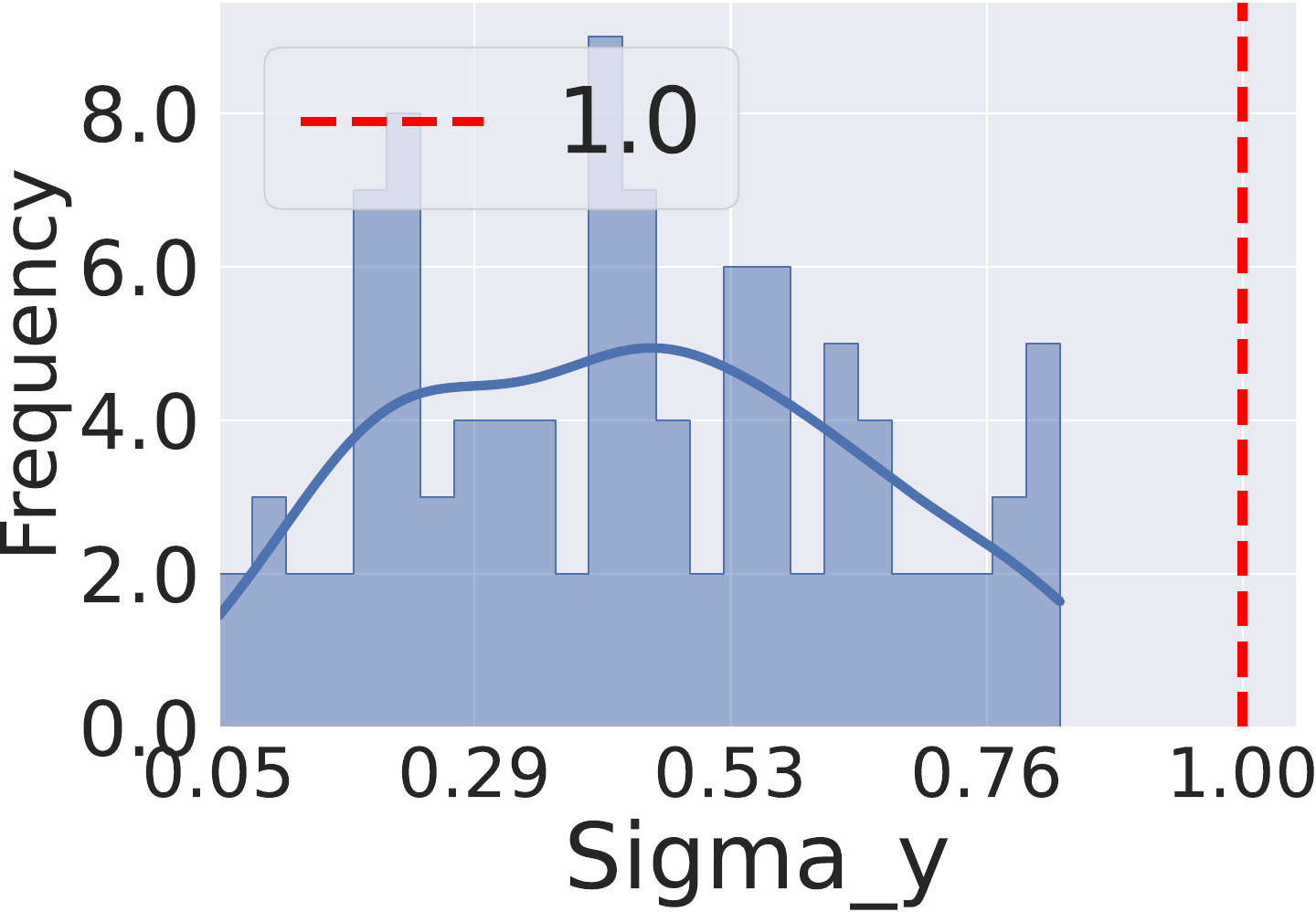}
        \end{minipage}
    }
    \subfigure{
        \begin{minipage}{0.23\linewidth}
            \includegraphics[width=\linewidth]{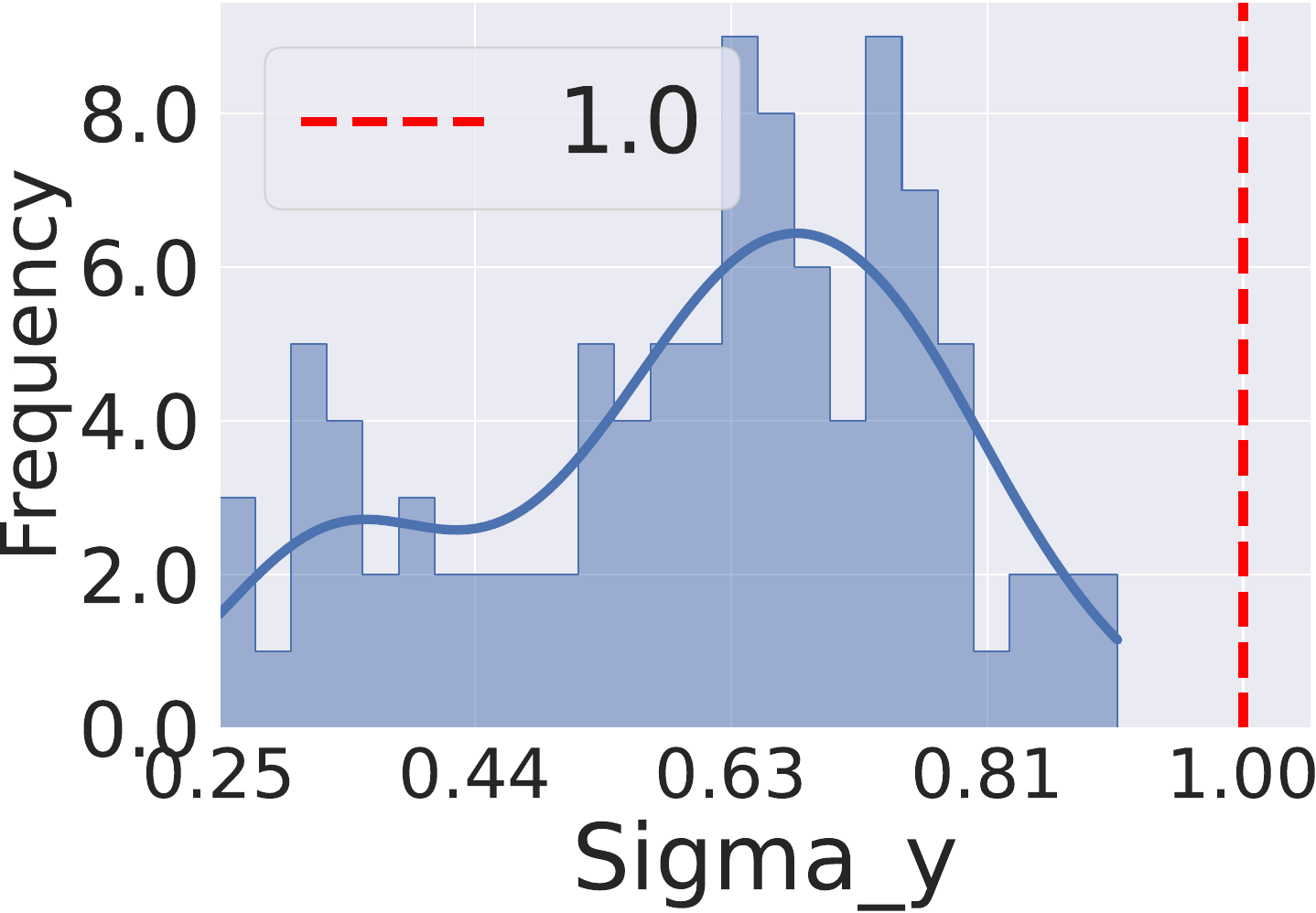}
        \end{minipage}
    }
    \subfigure{
        \begin{minipage}{0.23\linewidth}
            \includegraphics[width=\linewidth]{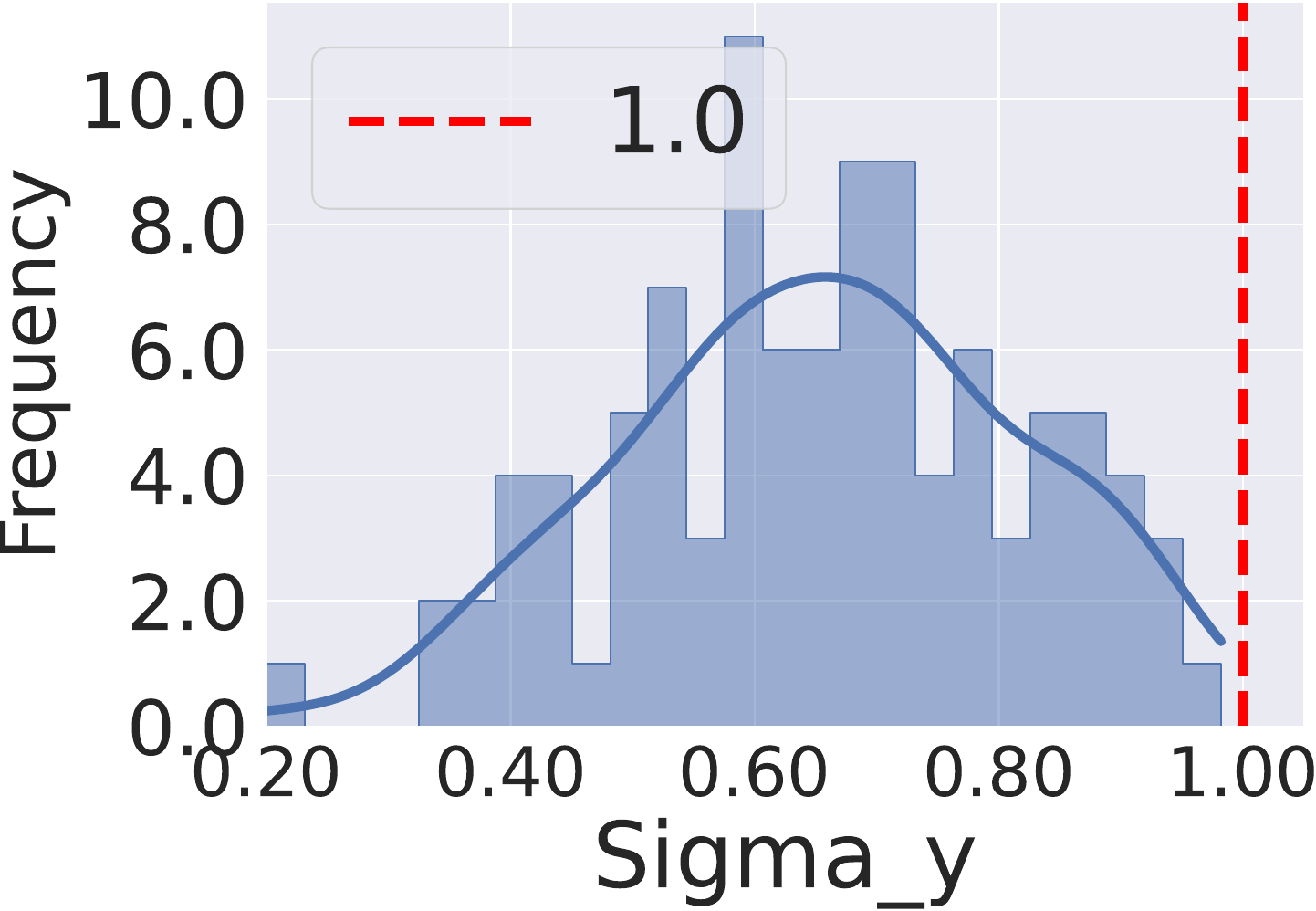}
        \end{minipage}
    }
    \caption{
    Verification of condition numbers $\{\sigma_y\}_{y=1}^C$ of Equation \ref{eq:sigma_y_defination} when epoch $=200$ and $\alpha = 0.1$ with $\rho=0.5$ \POLY.
    Vertical dashed lines represent the value $1$, and we observe that all the condition numbers are smaller than $1$.
    This verifies the validity of the condition for Lemma \ref{lemma:RC3P_improved_efficiency}, and thus confirms that \texttt{\newCP}produces smaller prediction sets than \texttt{CCP} by the optimized trade-off between calibration on non-conformity scores and calibrated label ranks.
    }
    \label{fig:condition_number_sigma_y_poly_0.5}
\end{figure}

\begin{figure}[!ht]
    \centering
    \begin{minipage}{.24\textwidth}
        \centering
        (a) CIFAR-10
    \end{minipage}%
    \begin{minipage}{.24\textwidth}
        \centering
        (b) CIFAR-100
    \end{minipage}%
    \begin{minipage}{.24\textwidth}
        \centering
        (c) mini-ImageNet
    \end{minipage}%
    \begin{minipage}{.24\textwidth}
        \centering
        (d) Food-101
    \end{minipage}
    \subfigure{
        \begin{minipage}{0.23\linewidth}
            \includegraphics[width=\linewidth]{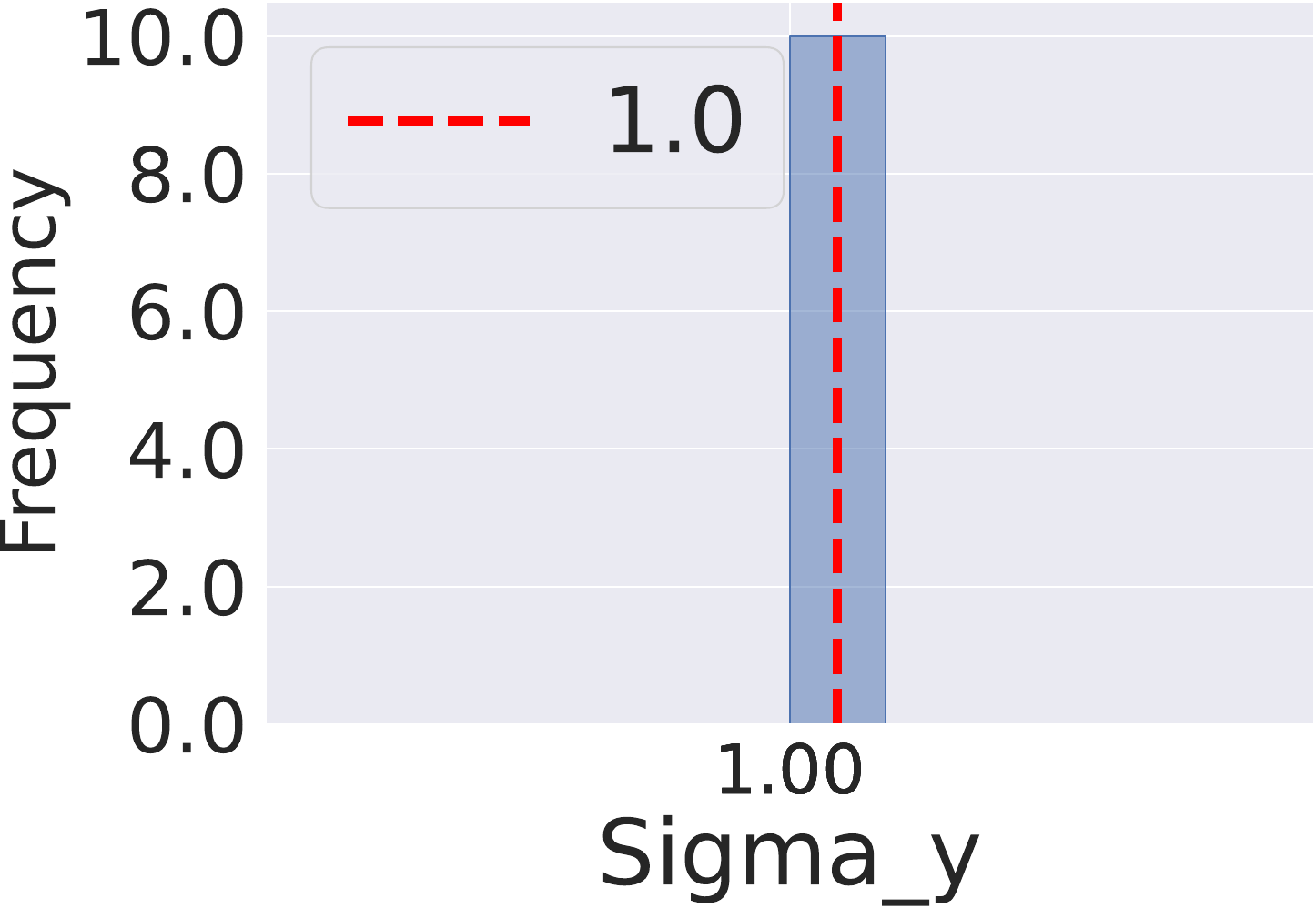}
        \end{minipage}
    }
    \subfigure{
        \begin{minipage}{0.23\linewidth}
            \includegraphics[width=\linewidth]{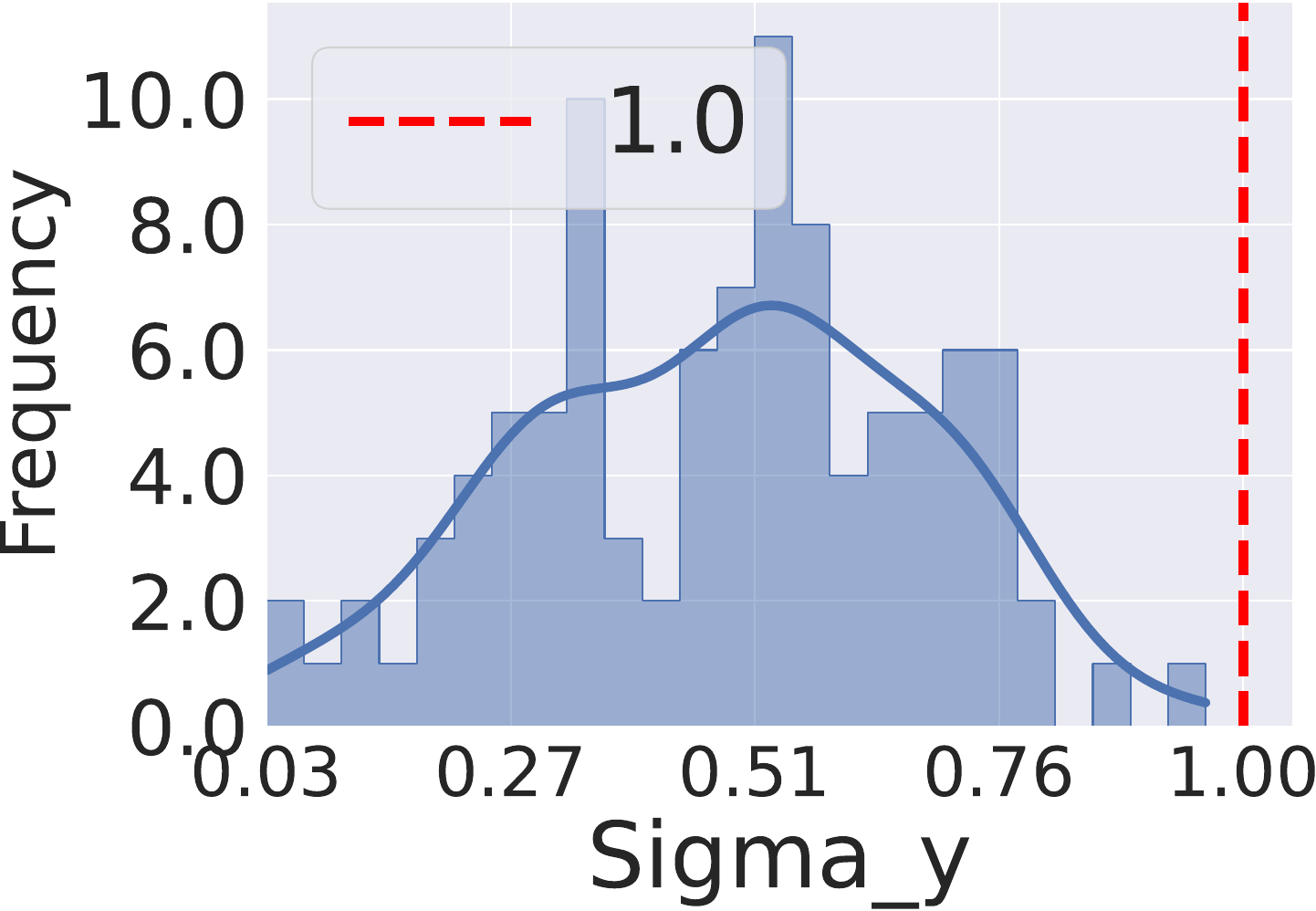}
        \end{minipage}
    }
    \subfigure{
        \begin{minipage}{0.23\linewidth}
            \includegraphics[width=\linewidth]{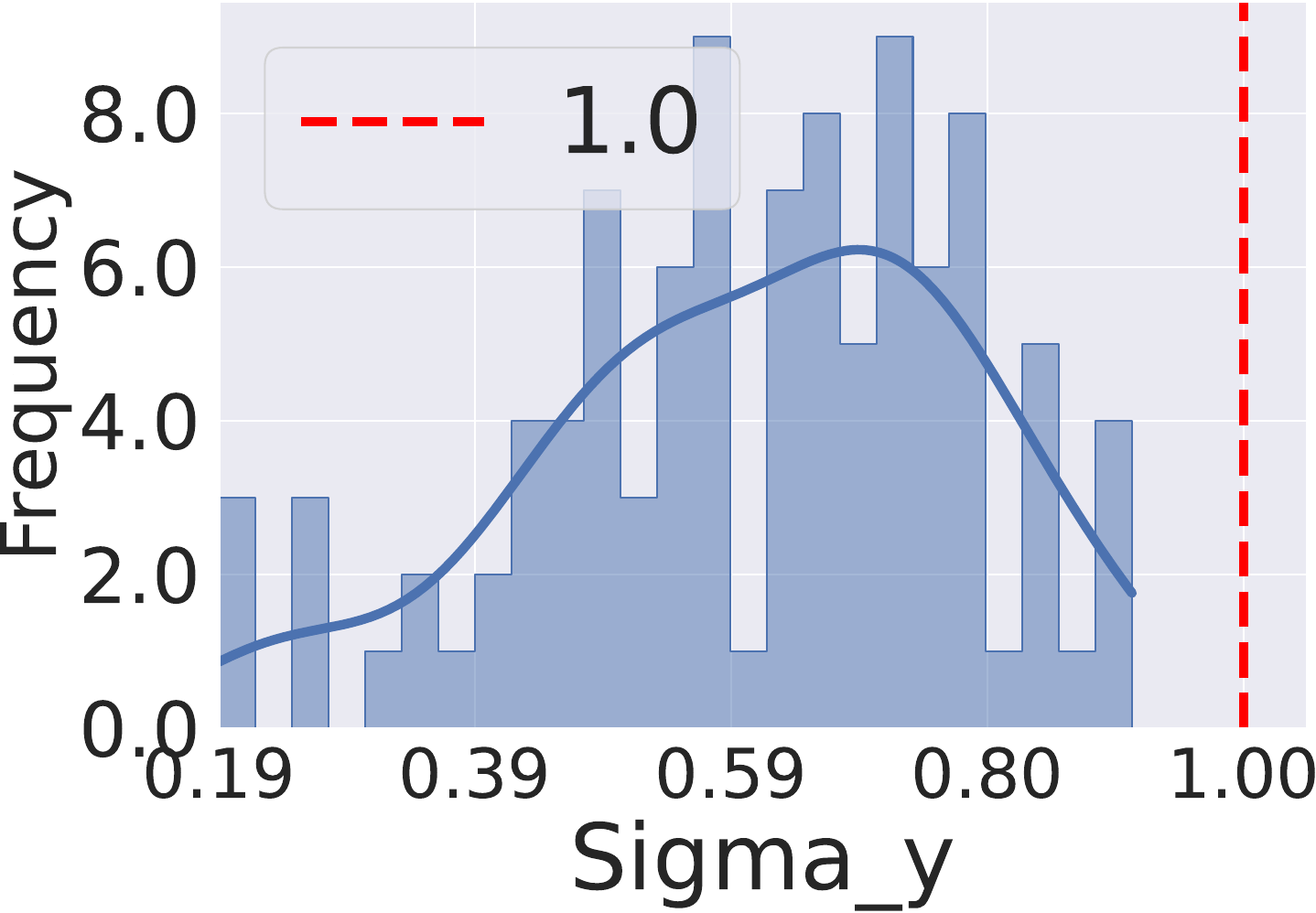}
        \end{minipage}
    }
    \subfigure{
        \begin{minipage}{0.23\linewidth}
            \includegraphics[width=\linewidth]{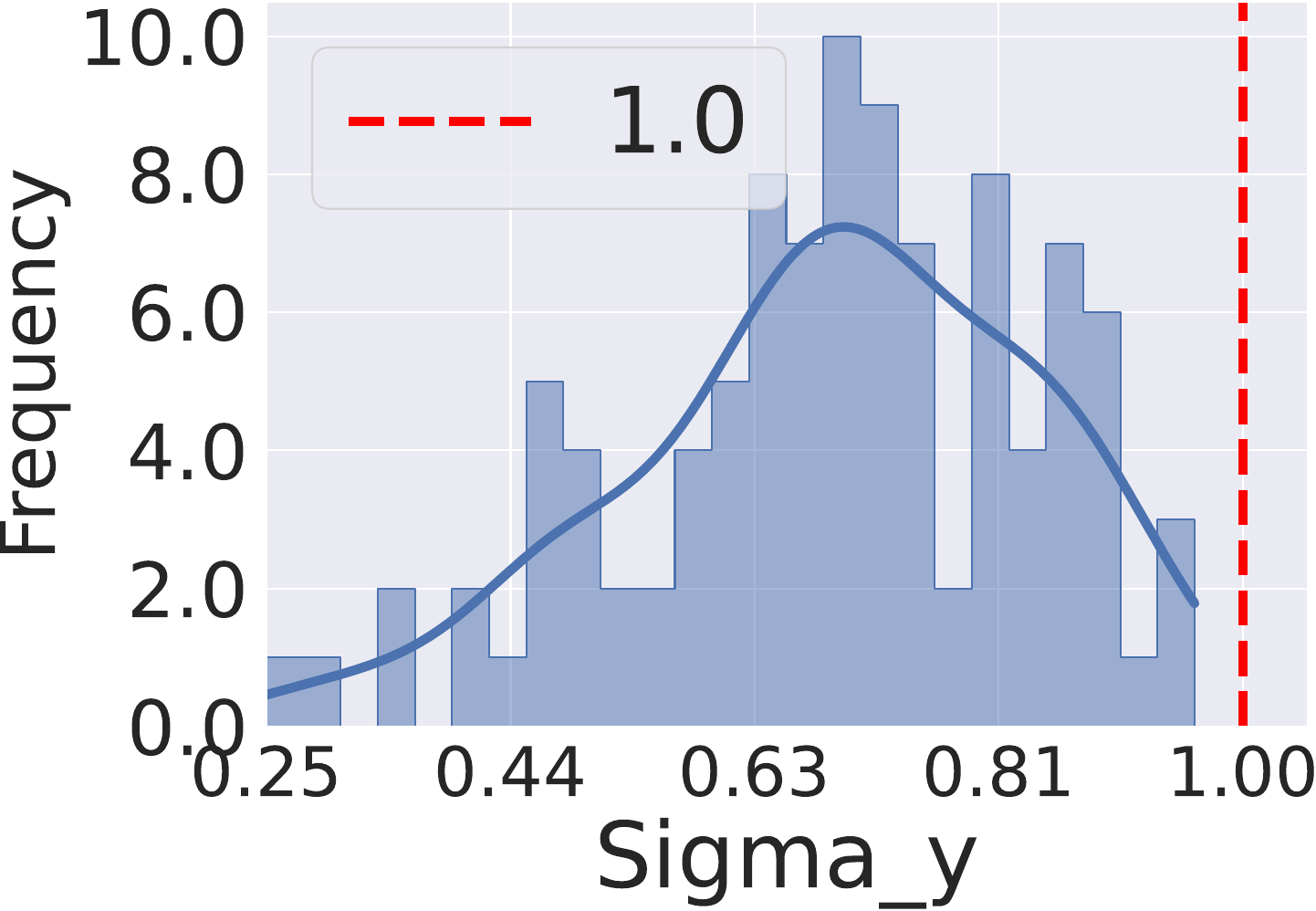}
        \end{minipage}
    }
    \caption{
    Verification of condition numbers $\{\sigma_y\}_{y=1}^C$ of Equation \ref{eq:sigma_y_defination} when epoch $=200$ and $\alpha = 0.1$ with $\rho=0.1$ \MAJ.
    Vertical dashed lines represent the value $1$, and we observe that all the condition numbers are smaller than $1$.
    This verifies the validity of the condition for Lemma \ref{lemma:RC3P_improved_efficiency}, and thus confirms that \texttt{\newCP}produces smaller prediction sets than \texttt{CCP} by the optimized trade-off between calibration on non-conformity scores and calibrated label ranks.
    }
    \label{fig:condition_number_sigma_y_maj_0.1}
\end{figure}

\begin{figure}[!ht]
    \centering
    \begin{minipage}{.24\textwidth}
        \centering
        (a) CIFAR-10
    \end{minipage}%
    \begin{minipage}{.24\textwidth}
        \centering
        (b) CIFAR-100
    \end{minipage}%
    \begin{minipage}{.24\textwidth}
        \centering
        (c) mini-ImageNet
    \end{minipage}%
    \begin{minipage}{.24\textwidth}
        \centering
        (d) Food-101
    \end{minipage}
    \subfigure{
        \begin{minipage}{0.23\linewidth}
            \includegraphics[width=\linewidth]{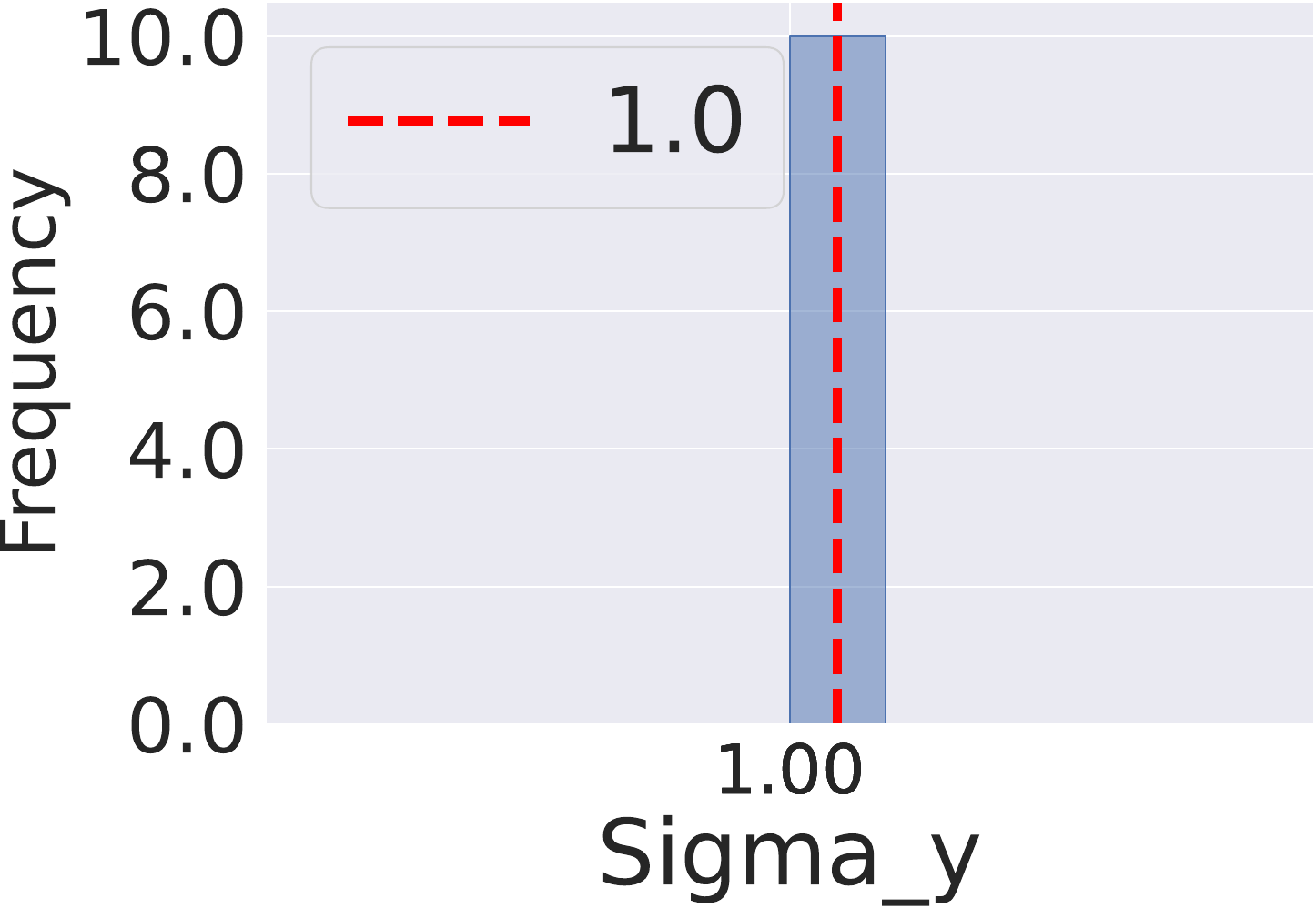}
        \end{minipage}
    }
    \subfigure{
        \begin{minipage}{0.23\linewidth}
            \includegraphics[width=\linewidth]{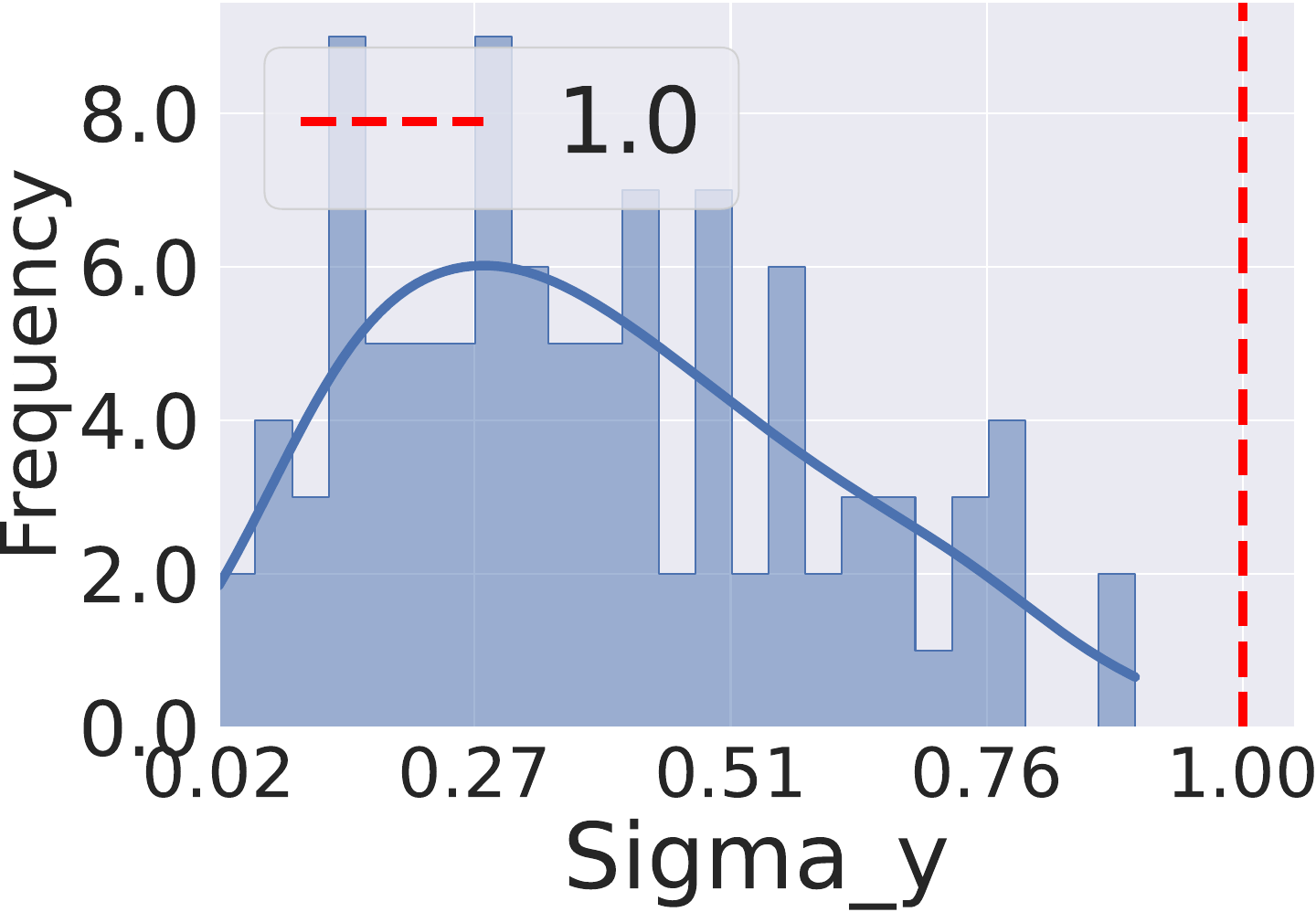}
        \end{minipage}
    }
    \subfigure{
        \begin{minipage}{0.23\linewidth}
            \includegraphics[width=\linewidth]{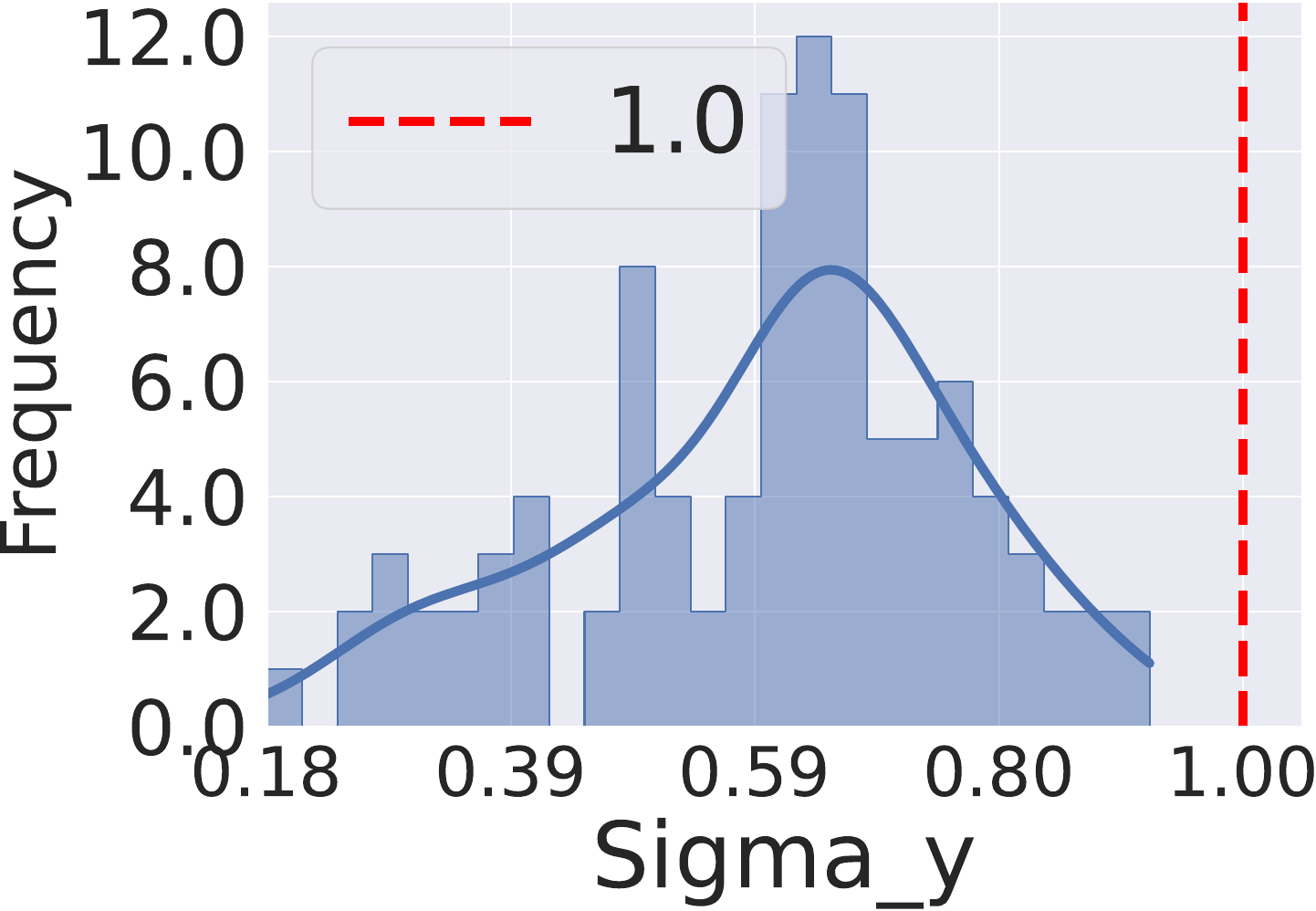}
        \end{minipage}
    }
    \subfigure{
        \begin{minipage}{0.23\linewidth}
            \includegraphics[width=\linewidth]{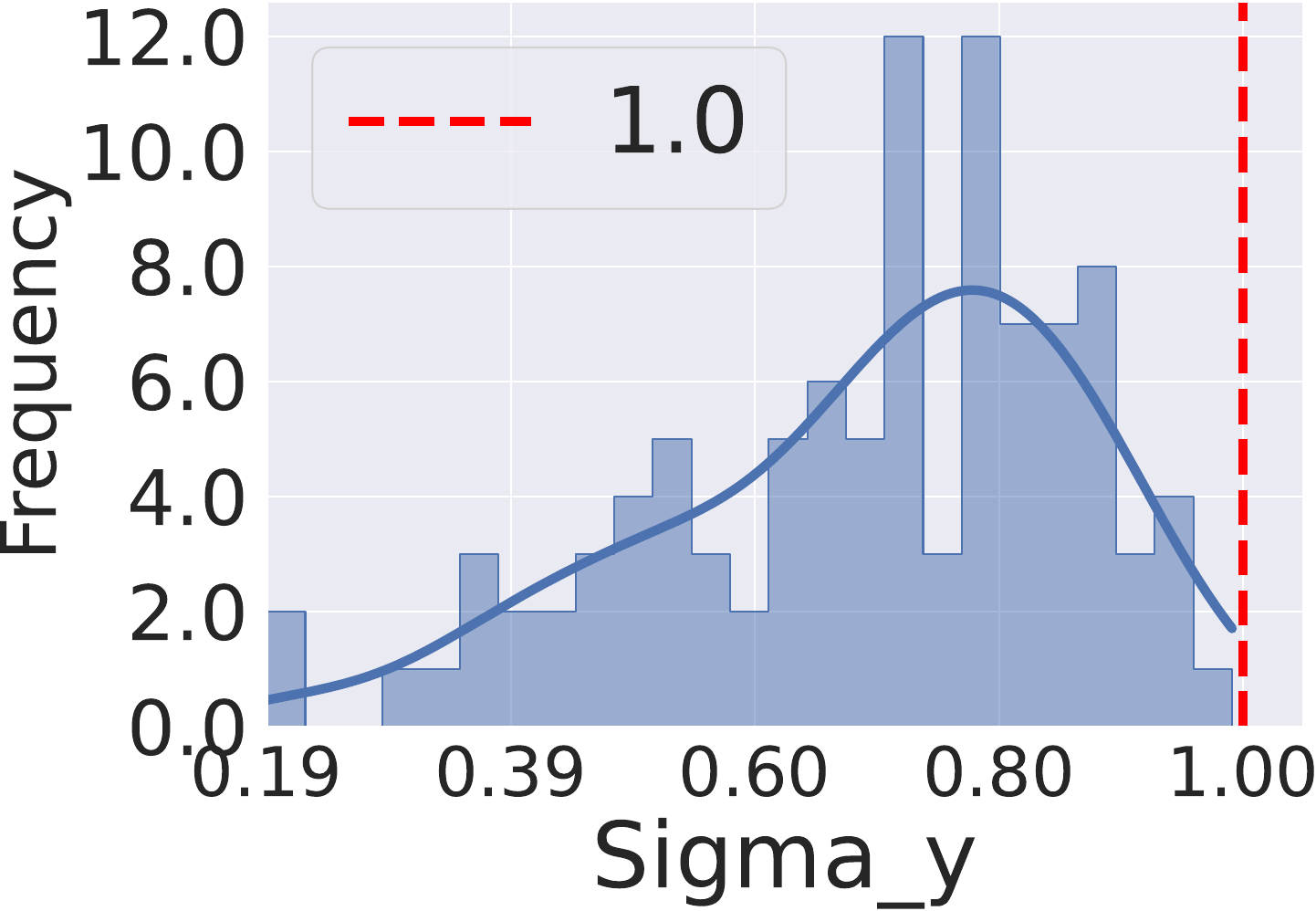}
        \end{minipage}
    }
    \caption{
    Verification of condition numbers $\{\sigma_y\}_{y=1}^C$ of Equation \ref{eq:sigma_y_defination} when epoch $=200$ and $\alpha = 0.1$ with $\rho=0.5$ \MAJ.
    Vertical dashed lines represent the value $1$, and we observe that all the condition numbers are smaller than $1$.
    This verifies the validity of the condition for Lemma \ref{lemma:RC3P_improved_efficiency}, and thus confirms that \texttt{\newCP}produces smaller prediction sets than \texttt{CCP} by the optimized trade-off between calibration on non-conformity scores and calibrated label ranks.
    }
    \label{fig:condition_number_sigma_y_maj_0.5}
\end{figure}

\begin{figure}[!ht]
    \centering
    \begin{minipage}{.24\textwidth}
        \centering
        (a) CIFAR-10
    \end{minipage}%
    \begin{minipage}{.24\textwidth}
        \centering
        (b) CIFAR-100
    \end{minipage}%
    \begin{minipage}{.24\textwidth}
        \centering
        (c) mini-ImageNet
    \end{minipage}%
    \begin{minipage}{.24\textwidth}
        \centering
        (d) Food-101
    \end{minipage}
    \subfigure{
        \begin{minipage}{0.23\linewidth}
            \includegraphics[width=\linewidth]{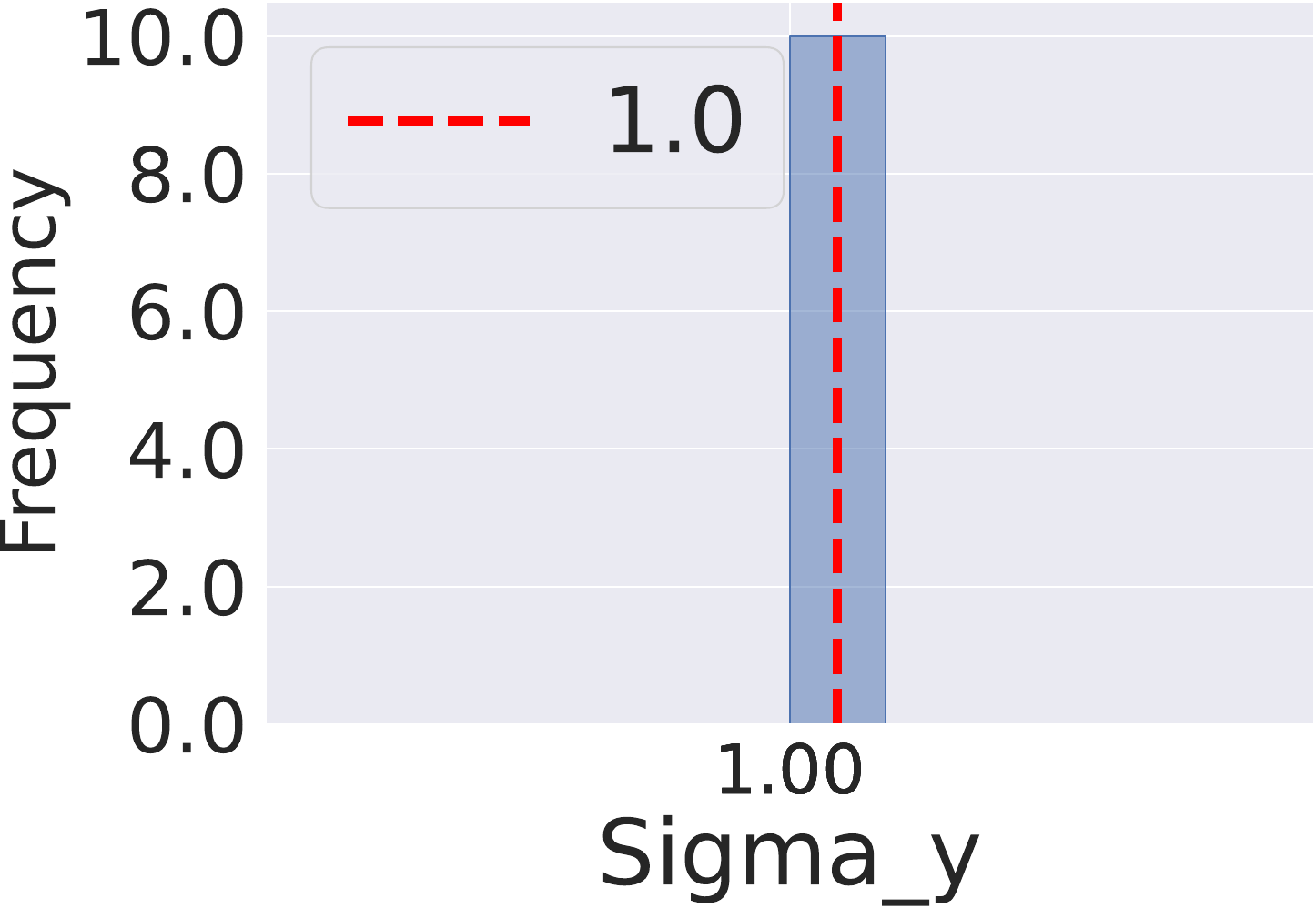}
        \end{minipage}
    }
    \subfigure{
        \begin{minipage}{0.23\linewidth}
            \includegraphics[width=\linewidth]{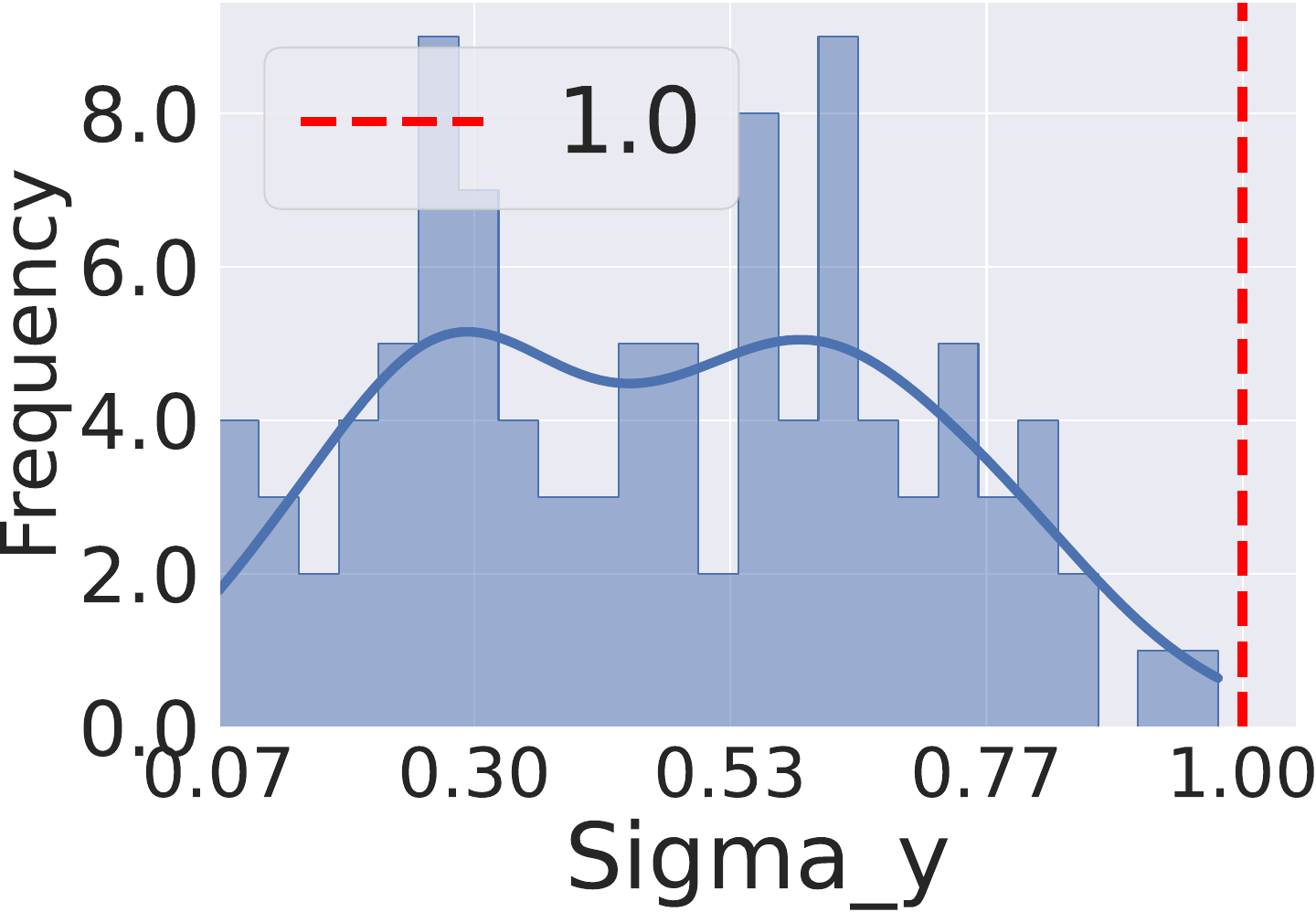}
        \end{minipage}
    }
    \subfigure{
        \begin{minipage}{0.23\linewidth}
            \includegraphics[width=\linewidth]{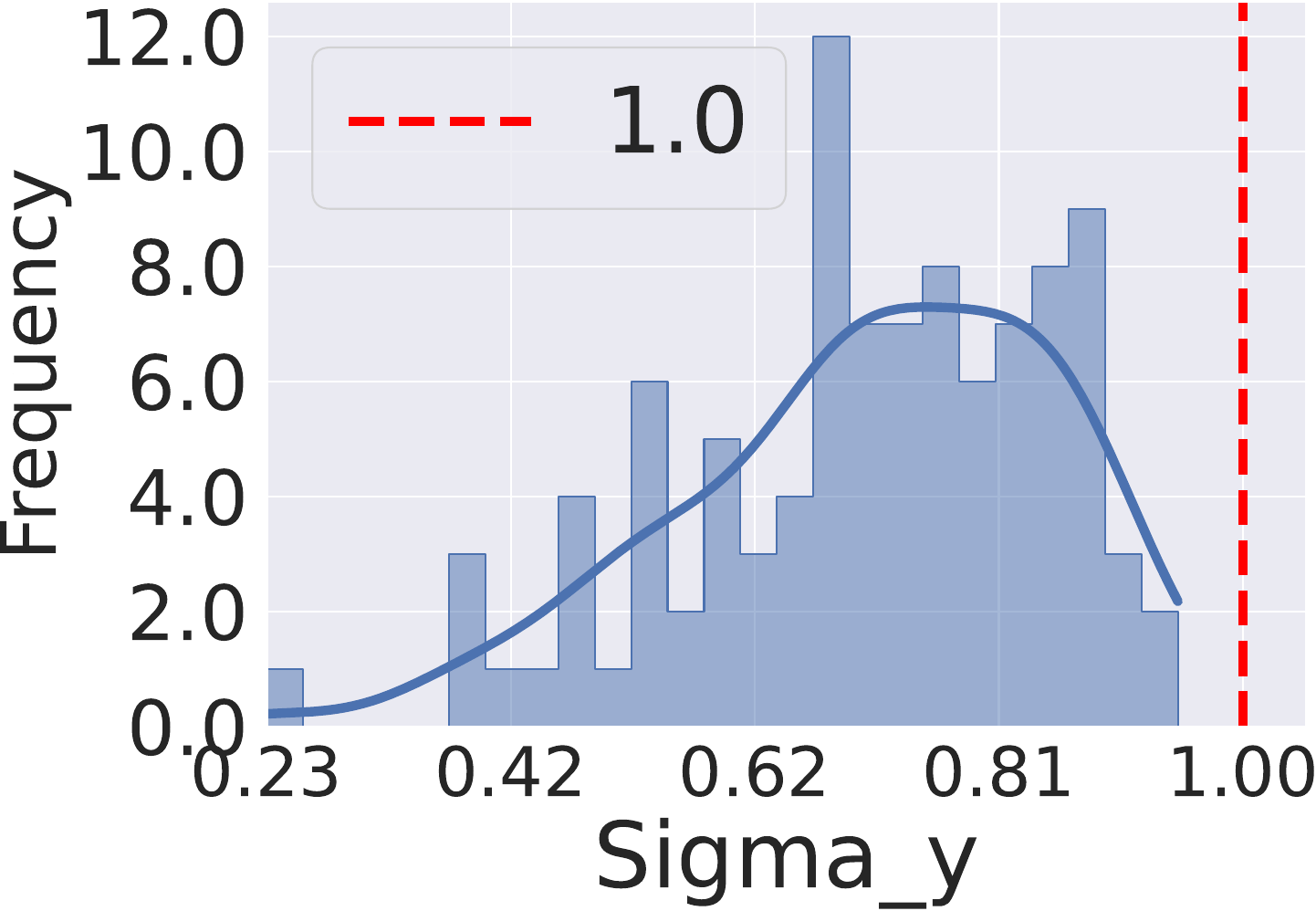}
        \end{minipage}
    }
    \subfigure{
        \begin{minipage}{0.23\linewidth}
            \includegraphics[width=\linewidth]{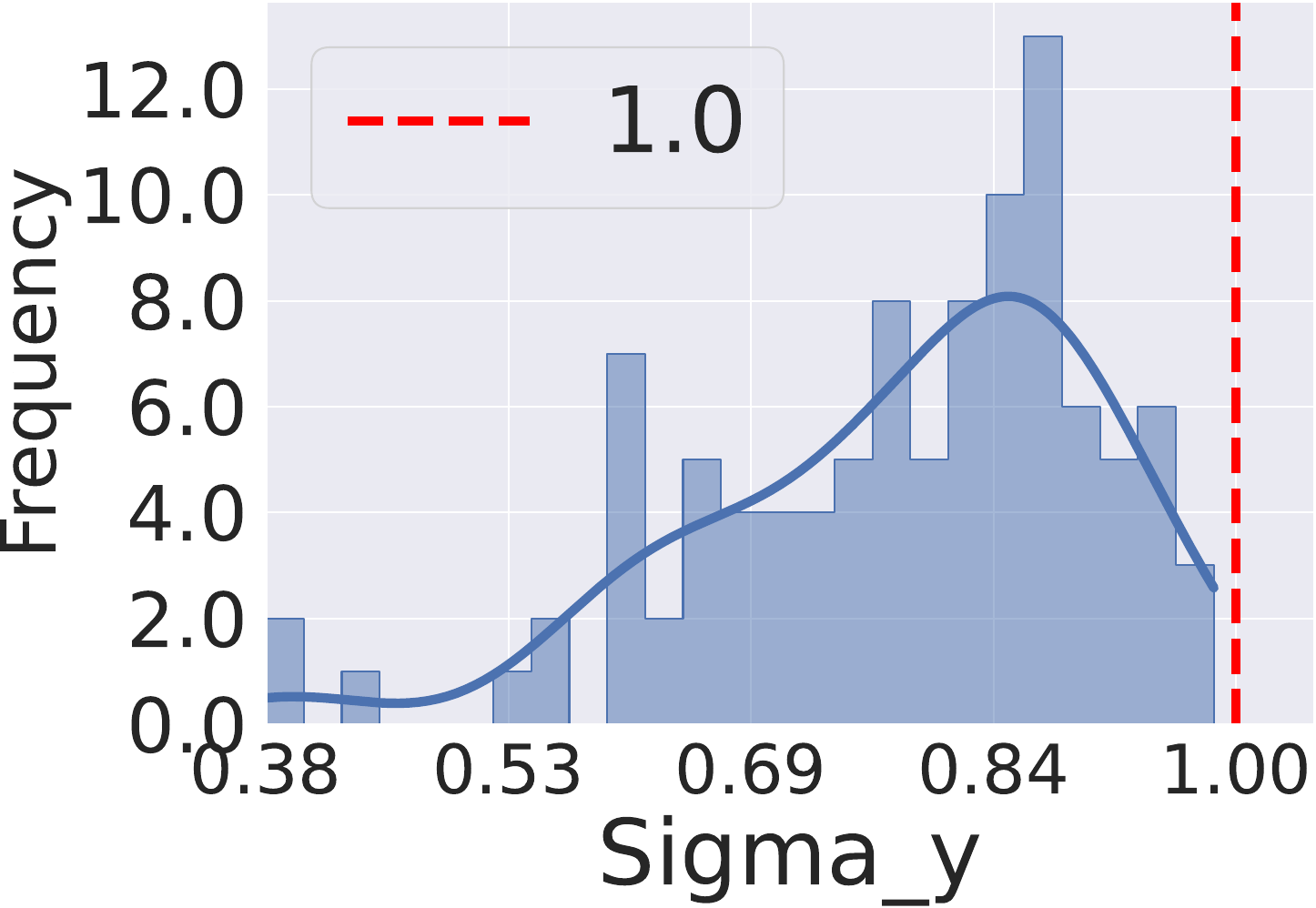}
        \end{minipage}
    }
    \caption{
    Verification of condition numbers $\{\sigma_y\}_{y=1}^C$ of Equation \ref{eq:sigma_y_defination} when epoch $=50$ and $\alpha = 0.1$ with $\rho=0.1$ \EXP.
    Vertical dashed lines represent the value $1$, and we observe that all the condition numbers are smaller than $1$.
    This verifies the validity of the condition for Lemma \ref{lemma:RC3P_improved_efficiency}, and thus confirms that \texttt{\newCP}produces smaller prediction sets than \texttt{CCP} by the optimized trade-off between calibration on non-conformity scores and calibrated label ranks.
    }
    \label{fig:condition_number_sigma_y_exp_0.1_50}
\end{figure}

\begin{figure}[!ht]
    \centering
    \begin{minipage}{.24\textwidth}
        \centering
        (a) CIFAR-10
    \end{minipage}%
    \begin{minipage}{.24\textwidth}
        \centering
        (b) CIFAR-100
    \end{minipage}%
    \begin{minipage}{.24\textwidth}
        \centering
        (c) mini-ImageNet
    \end{minipage}%
    \begin{minipage}{.24\textwidth}
        \centering
        (d) Food-101
    \end{minipage}
    \subfigure{
        \begin{minipage}{0.23\linewidth}
            \includegraphics[width=\linewidth]{Neurips_figure/cifar10_exp_0.5_SigHist_seed_9_tgap_0.5_cgap_0.5_cal.pdf}
        \end{minipage}
    }
    \subfigure{
        \begin{minipage}{0.23\linewidth}
            \includegraphics[width=\linewidth]{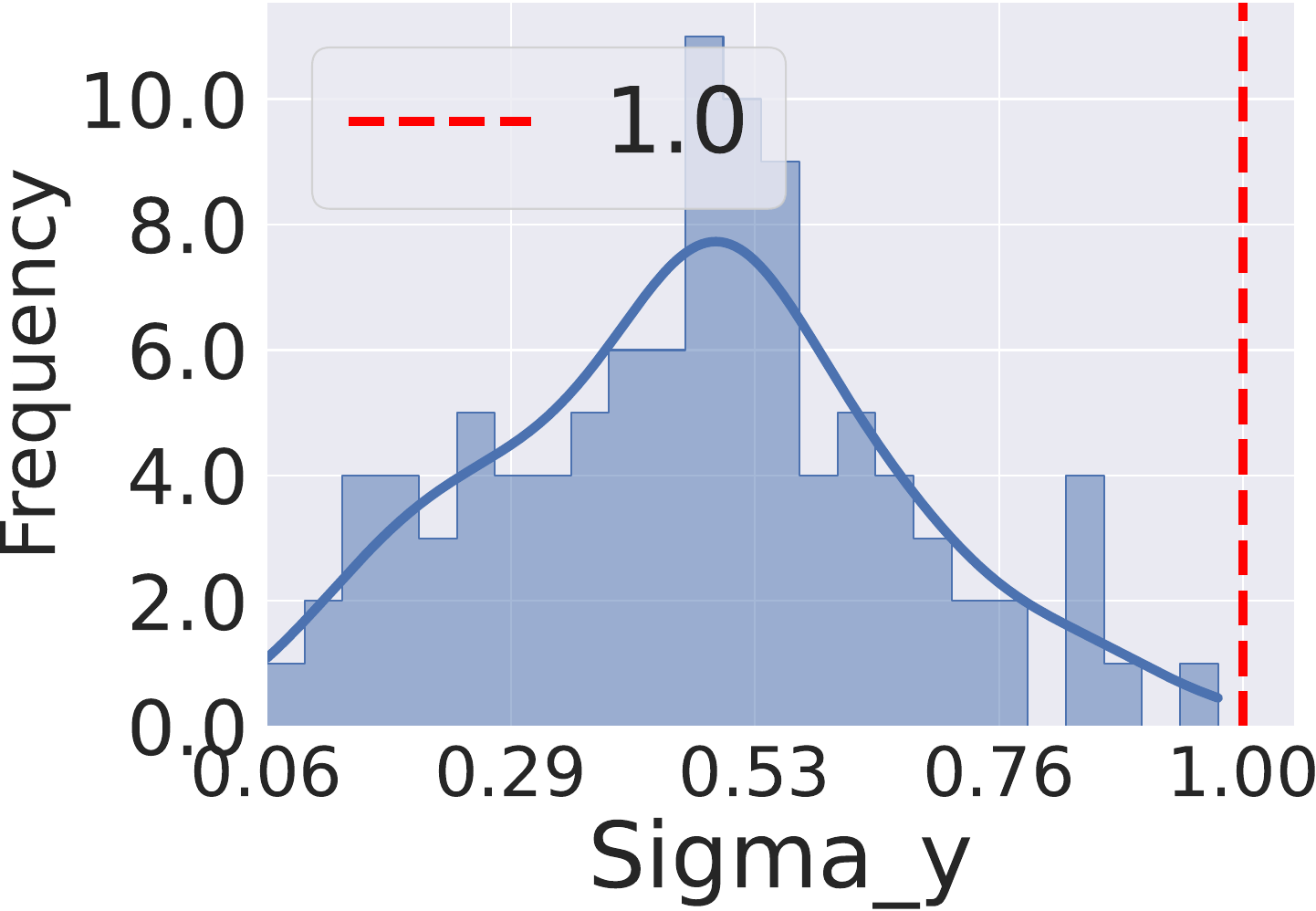}
        \end{minipage}
    }
    \subfigure{
        \begin{minipage}{0.23\linewidth}
            \includegraphics[width=\linewidth]{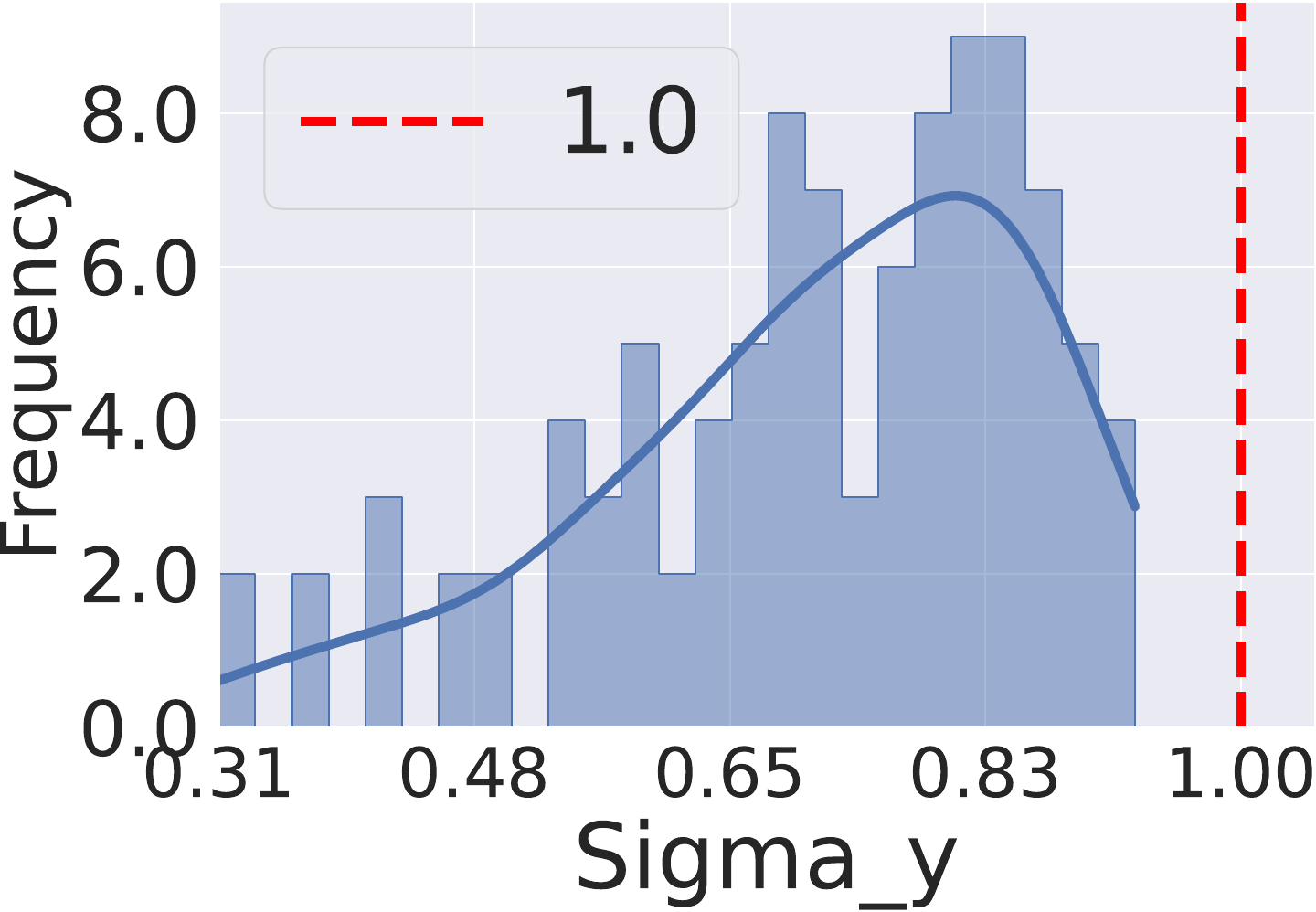}
        \end{minipage}
    }
    \subfigure{
        \begin{minipage}{0.23\linewidth}
            \includegraphics[width=\linewidth]{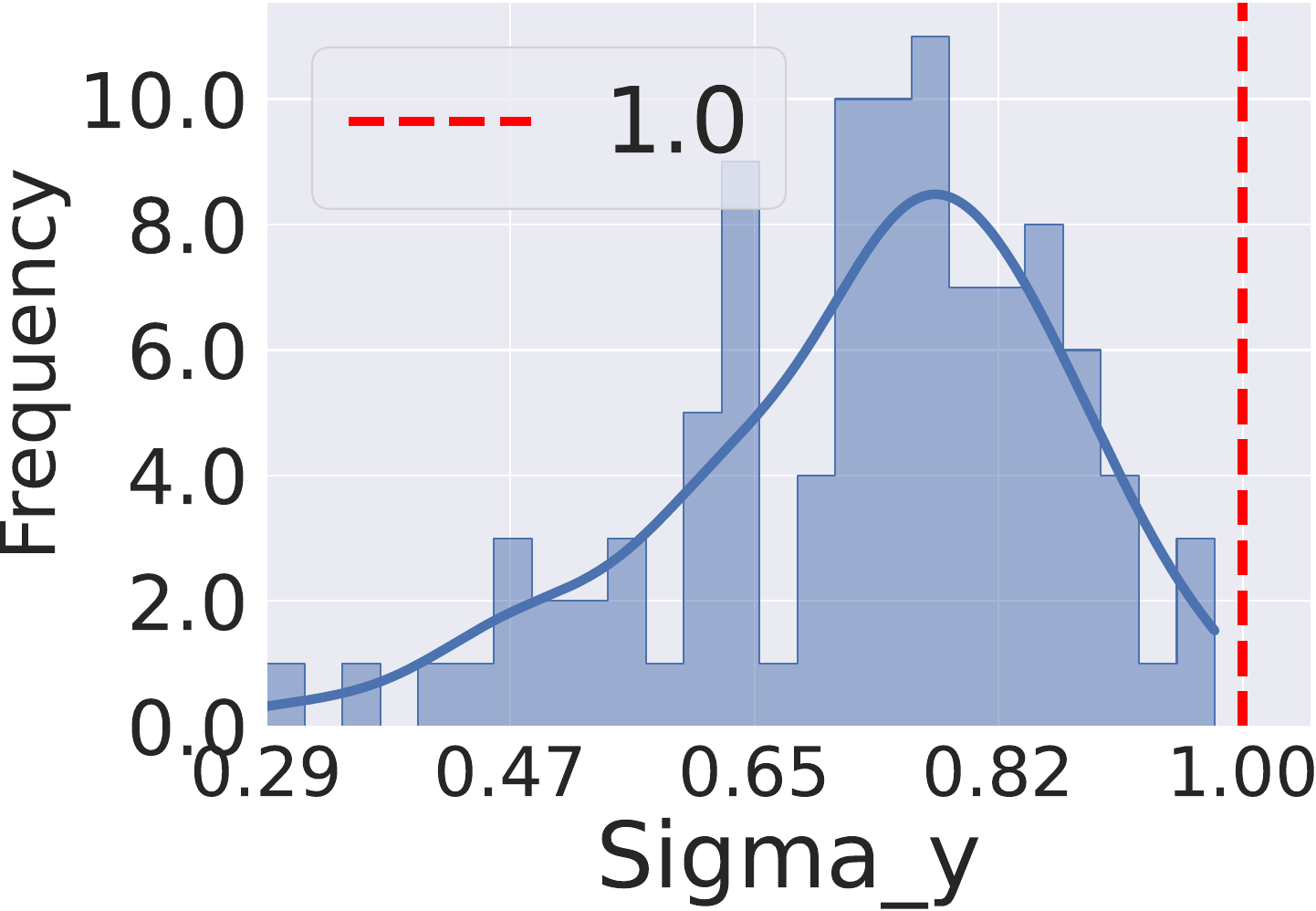}
        \end{minipage}
    }
    \caption{
    Verification of condition numbers $\{\sigma_y\}_{y=1}^C$ of Equation \ref{eq:sigma_y_defination} when epoch $=50$ and $\alpha = 0.1$ with $\rho=0.5$ \EXP.
    Vertical dashed lines represent the value $1$, and we observe that all the condition numbers are smaller than $1$.
    This verifies the validity of the condition for Lemma \ref{lemma:RC3P_improved_efficiency}, and thus confirms that \texttt{\newCP}produces smaller prediction sets than \texttt{CCP} by the optimized trade-off between calibration on non-conformity scores and calibrated label ranks.
    }
    \label{fig:condition_number_sigma_y_exp_0.5_50}
\end{figure}

\begin{figure}[!ht]
    \centering
    \begin{minipage}{.24\textwidth}
        \centering
        (a) CIFAR-10
    \end{minipage}%
    \begin{minipage}{.24\textwidth}
        \centering
        (b) CIFAR-100
    \end{minipage}%
    \begin{minipage}{.24\textwidth}
        \centering
        (c) mini-ImageNet
    \end{minipage}%
    \begin{minipage}{.24\textwidth}
        \centering
        (d) Food-101
    \end{minipage}
    \subfigure{
        \begin{minipage}{0.23\linewidth}
            \includegraphics[width=\linewidth]{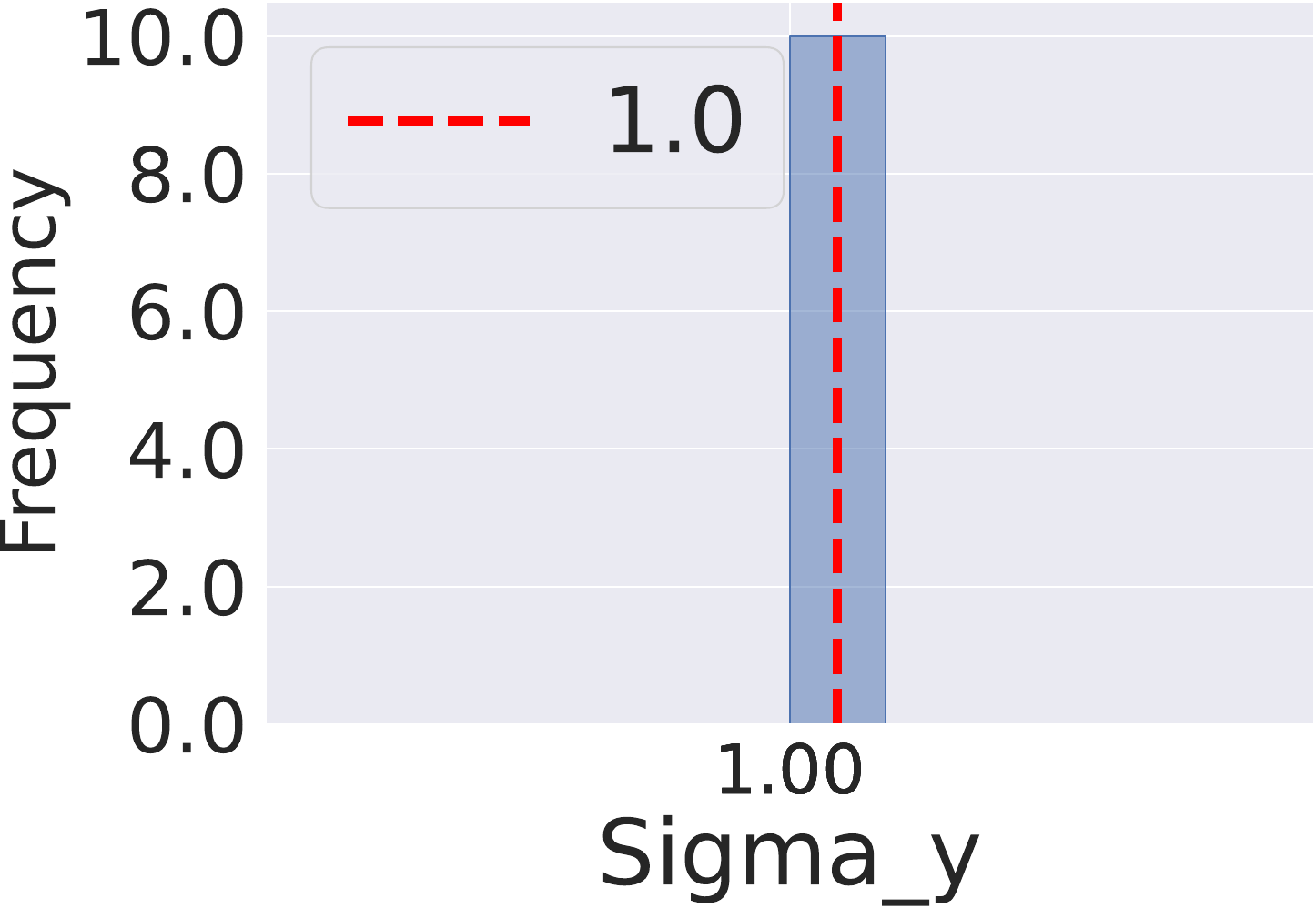}
        \end{minipage}
    }
    \subfigure{
        \begin{minipage}{0.23\linewidth}
            \includegraphics[width=\linewidth]{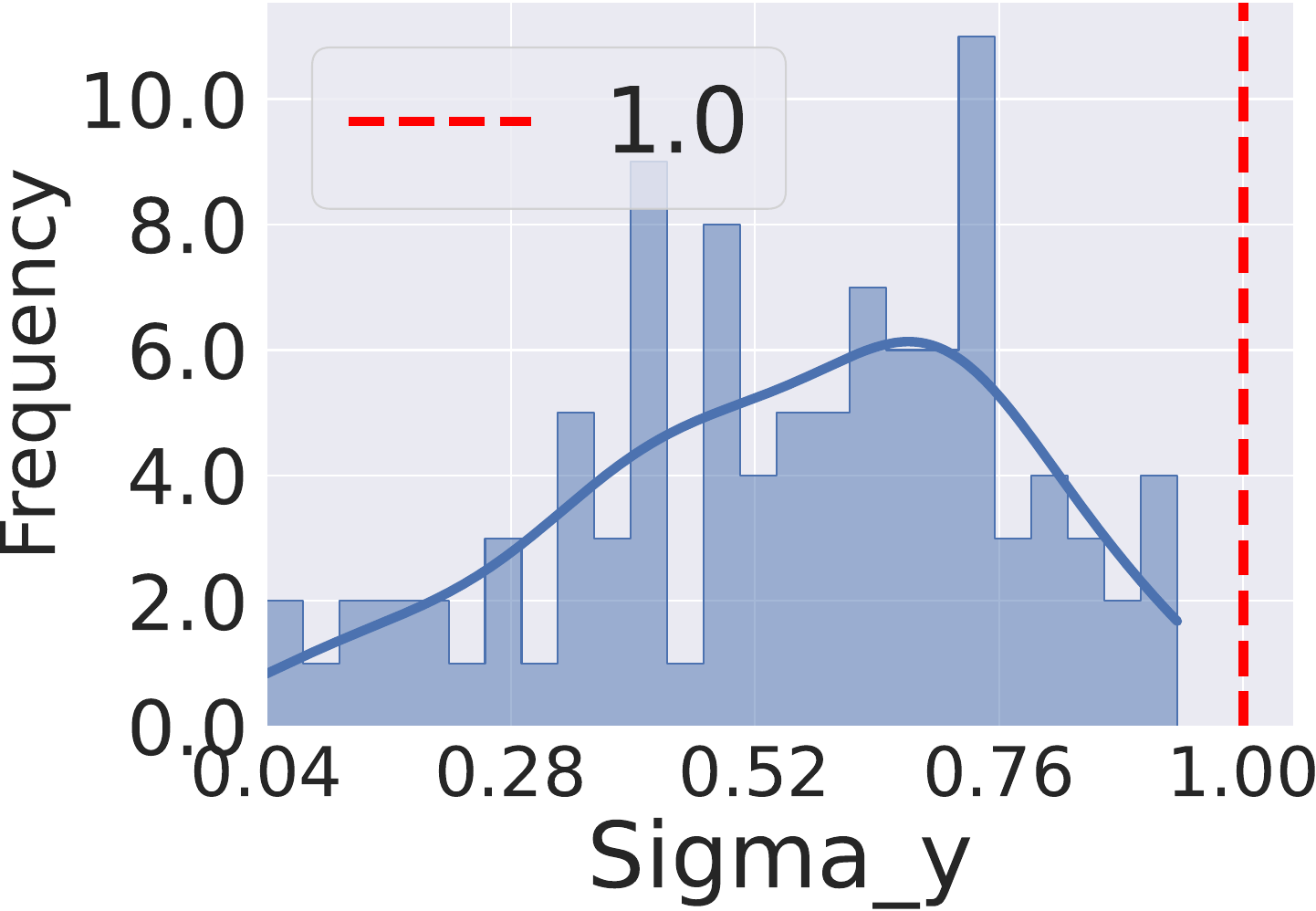}
        \end{minipage}
    }
    \subfigure{
        \begin{minipage}{0.23\linewidth}
            \includegraphics[width=\linewidth]{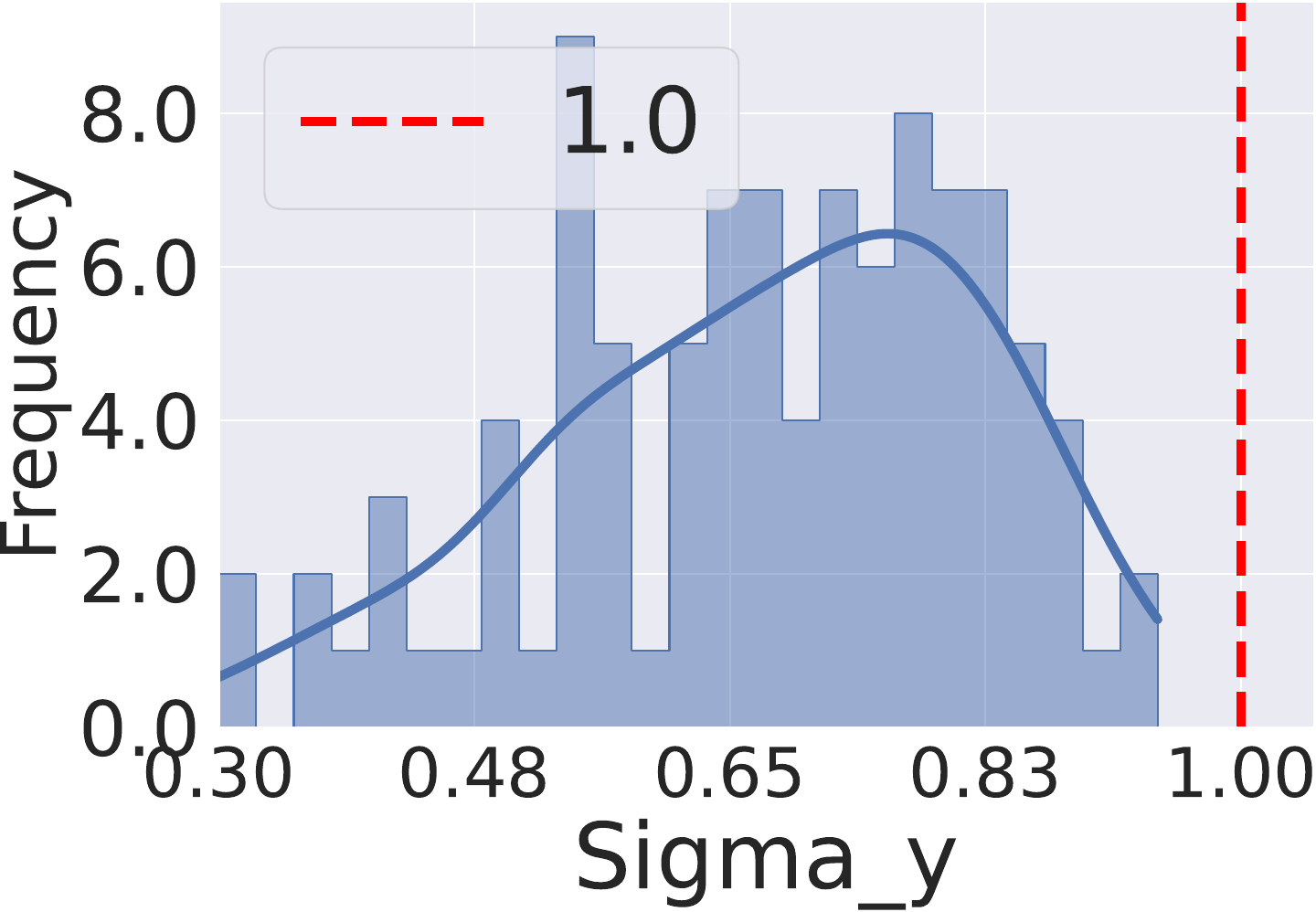}
        \end{minipage}
    }
    \subfigure{
        \begin{minipage}{0.23\linewidth}
            \includegraphics[width=\linewidth]{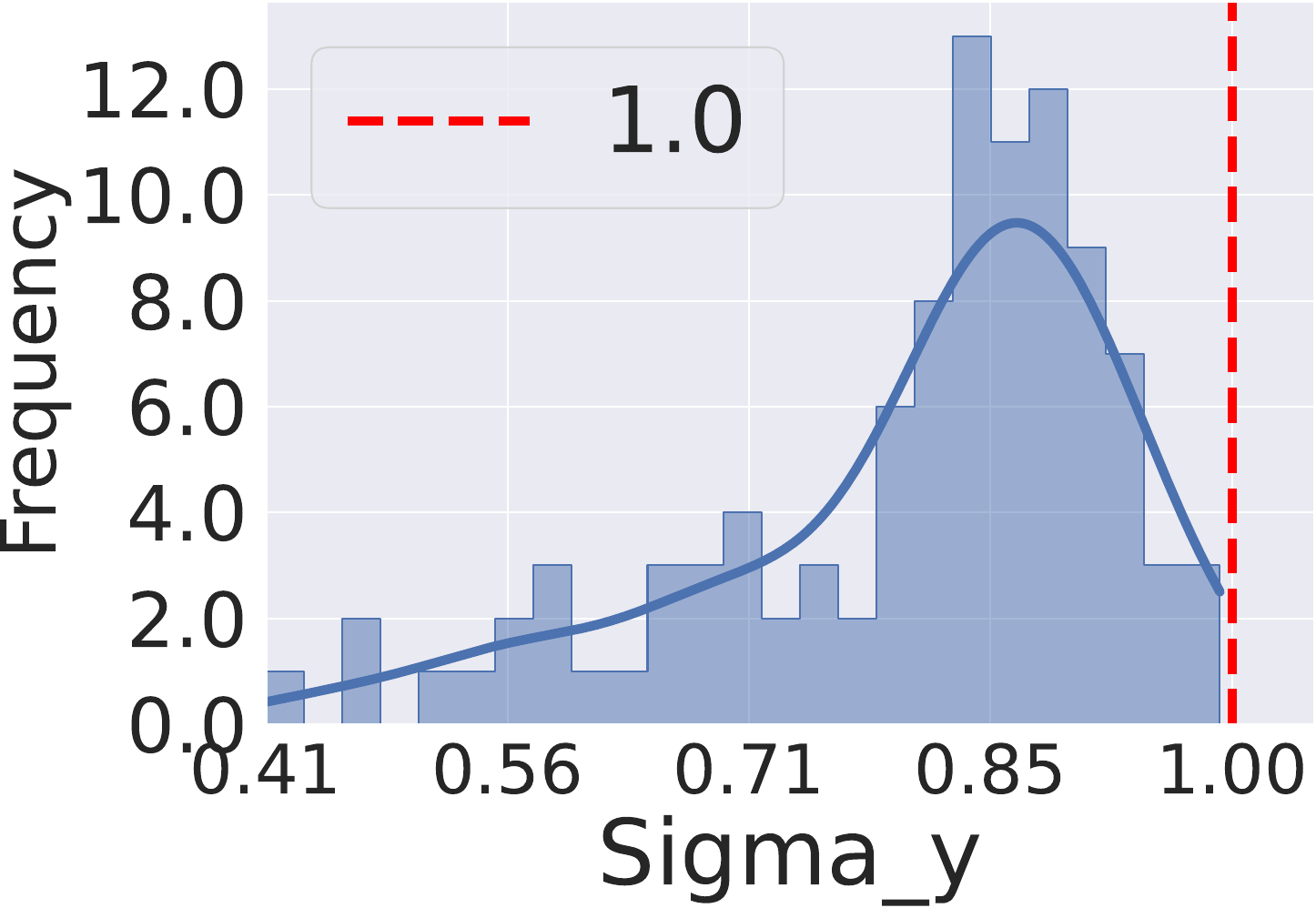}
        \end{minipage}
    }
    \caption{
    Verification of condition numbers $\{\sigma_y\}_{y=1}^C$ of Equation \ref{eq:sigma_y_defination} when epoch $=50$ and $\alpha = 0.1$ with $\rho=0.1$ \POLY.
    Vertical dashed lines represent the value $1$, and we observe that all the condition numbers are smaller than $1$.
    This verifies the validity of the condition for Lemma \ref{lemma:RC3P_improved_efficiency}, and thus confirms that \texttt{\newCP}produces smaller prediction sets than \texttt{CCP} by the optimized trade-off between calibration on non-conformity scores and calibrated label ranks.
    }
    \label{fig:condition_number_sigma_y_poly_0.1_50}
\end{figure}

\begin{figure}[!ht]
    \centering
    \begin{minipage}{.24\textwidth}
        \centering
        (a) CIFAR-10
    \end{minipage}%
    \begin{minipage}{.24\textwidth}
        \centering
        (b) CIFAR-100
    \end{minipage}%
    \begin{minipage}{.24\textwidth}
        \centering
        (c) mini-ImageNet
    \end{minipage}%
    \begin{minipage}{.24\textwidth}
        \centering
        (d) Food-101
    \end{minipage}
    \subfigure{
        \begin{minipage}{0.23\linewidth}
            \includegraphics[width=\linewidth]{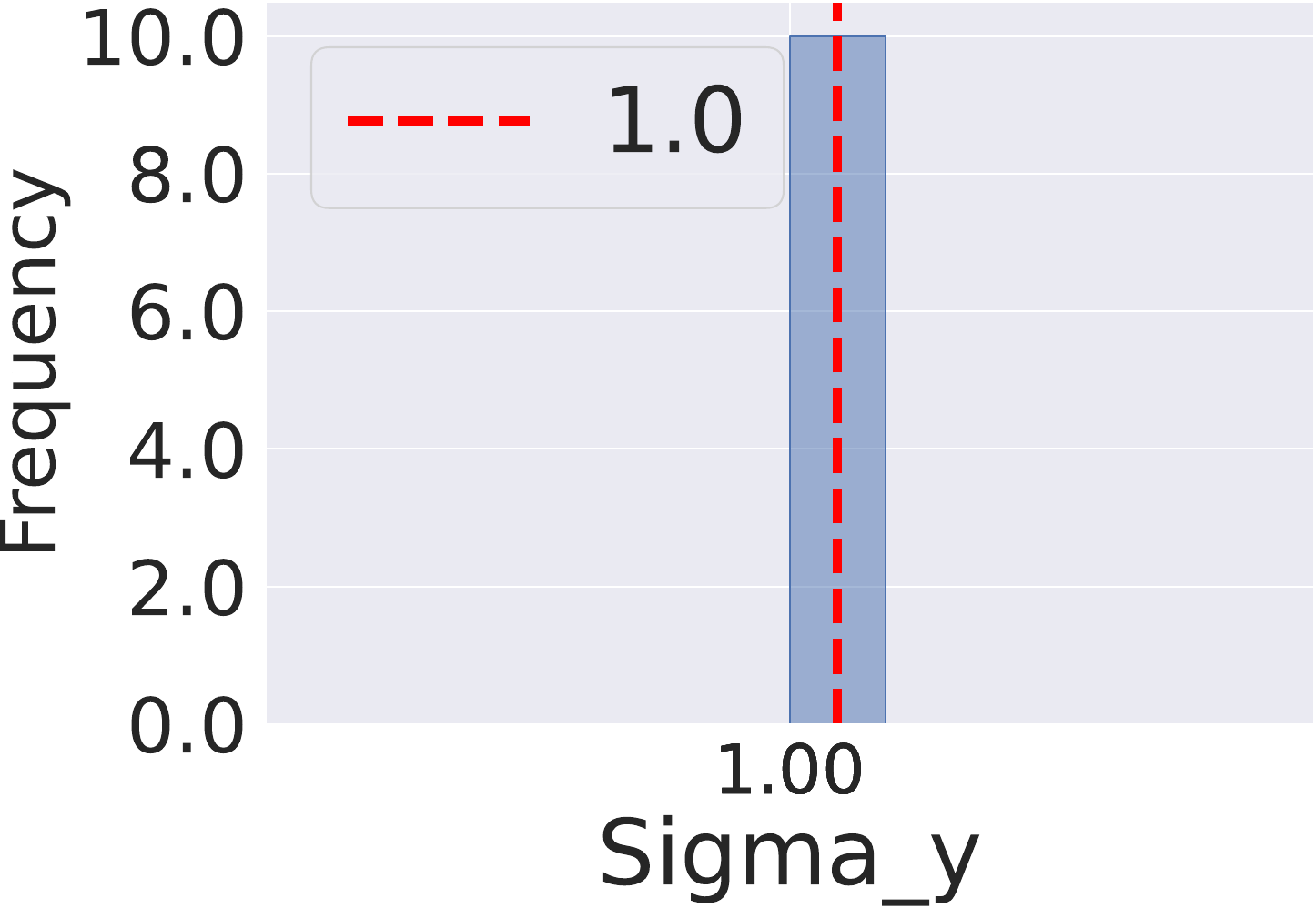}
        \end{minipage}
    }
    \subfigure{
        \begin{minipage}{0.23\linewidth}
            \includegraphics[width=\linewidth]{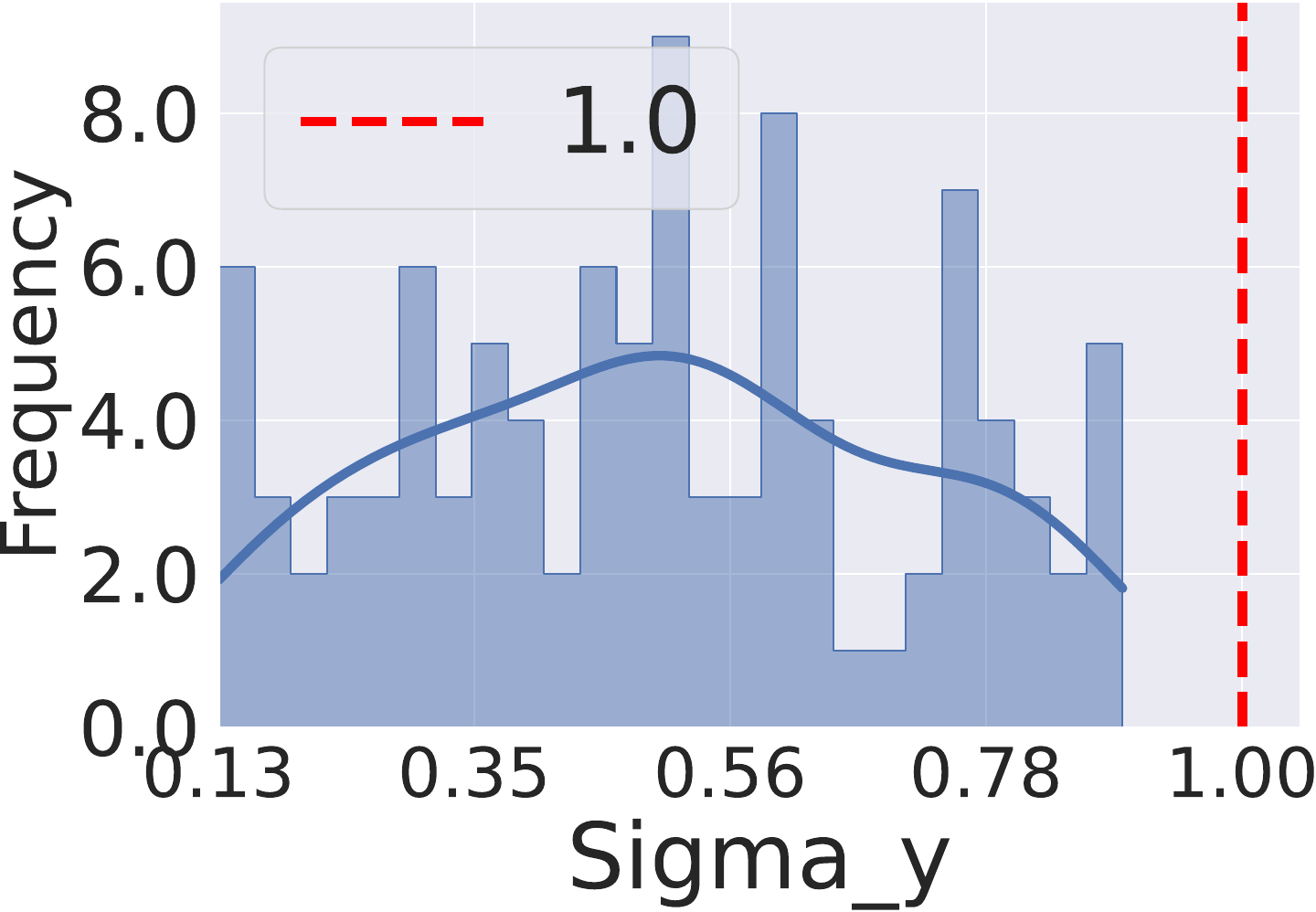}
        \end{minipage}
    }
    \subfigure{
        \begin{minipage}{0.23\linewidth}
            \includegraphics[width=\linewidth]{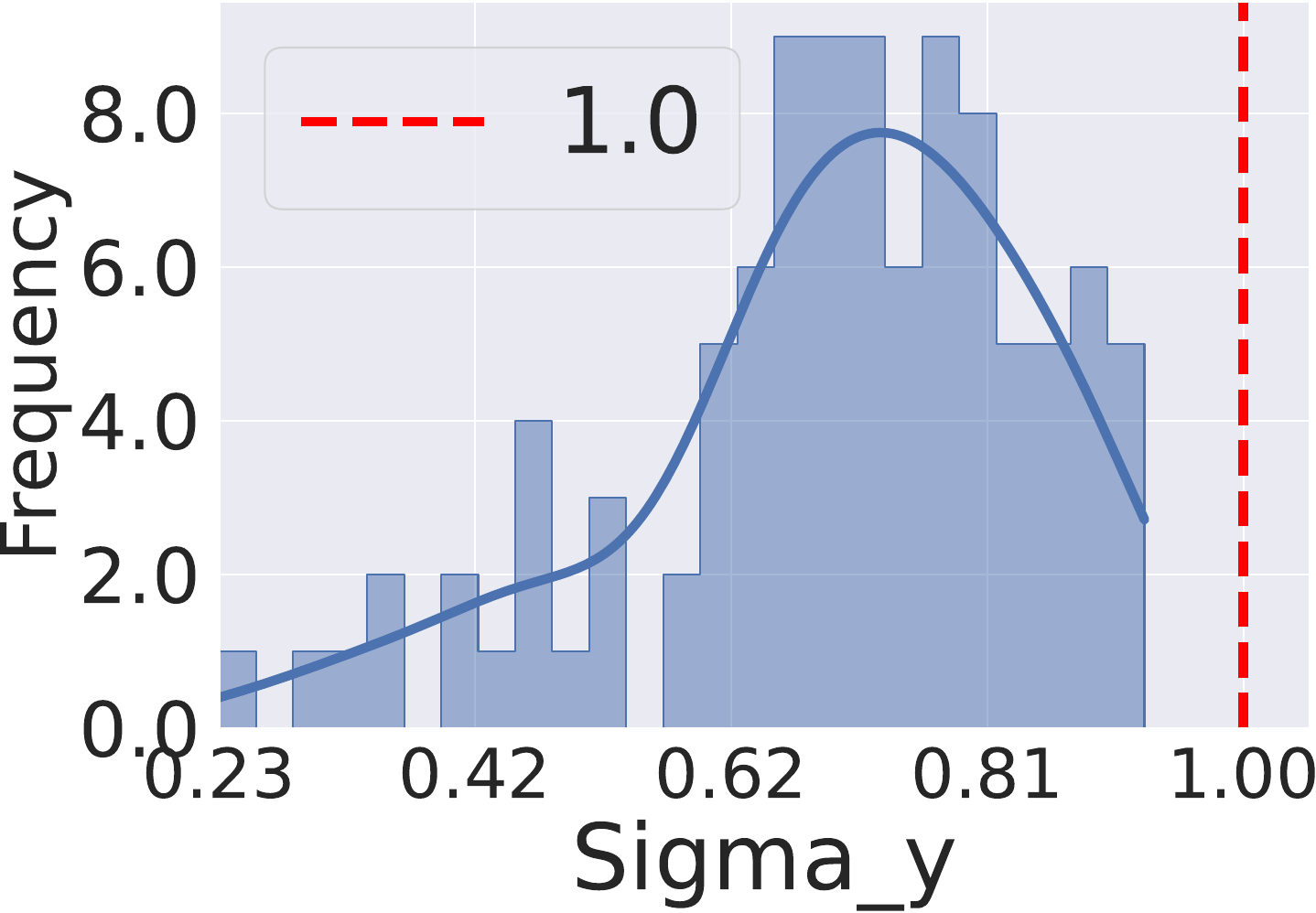}
        \end{minipage}
    }
    \subfigure{
        \begin{minipage}{0.23\linewidth}
            \includegraphics[width=\linewidth]{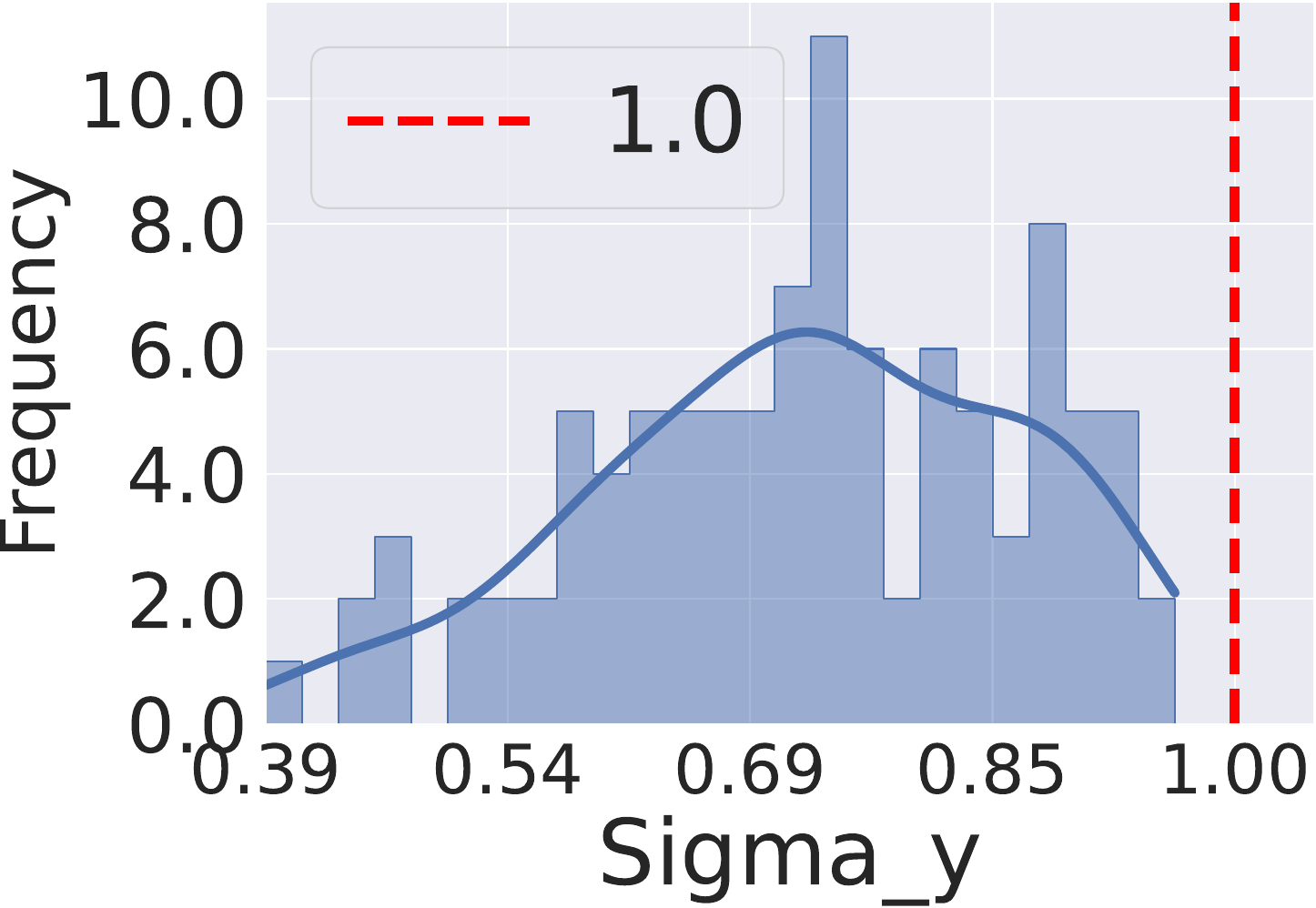}
        \end{minipage}
    }
    \caption{
    Verification of condition numbers $\{\sigma_y\}_{y=1}^C$ of Equation \ref{eq:sigma_y_defination} when epoch $=50$ and $\alpha = 0.1$ with $\rho=0.5$ \POLY.
    Vertical dashed lines represent the value $1$, and we observe that all the condition numbers are smaller than $1$.
    This verifies the validity of the condition for Lemma \ref{lemma:RC3P_improved_efficiency}, and thus confirms that \texttt{\newCP}produces smaller prediction sets than \texttt{CCP} by the optimized trade-off between calibration on non-conformity scores and calibrated label ranks.
    }
    \label{fig:condition_number_sigma_y_poly_0.5_50}
\end{figure}

\begin{figure}[!ht]
    \centering
    \begin{minipage}{.24\textwidth}
        \centering
        (a) CIFAR-10
    \end{minipage}%
    \begin{minipage}{.24\textwidth}
        \centering
        (b) CIFAR-100
    \end{minipage}%
    \begin{minipage}{.24\textwidth}
        \centering
        (c) mini-ImageNet
    \end{minipage}%
    \begin{minipage}{.24\textwidth}
        \centering
        (d) Food-101
    \end{minipage}
    \subfigure{
        \begin{minipage}{0.23\linewidth}
            \includegraphics[width=\linewidth]{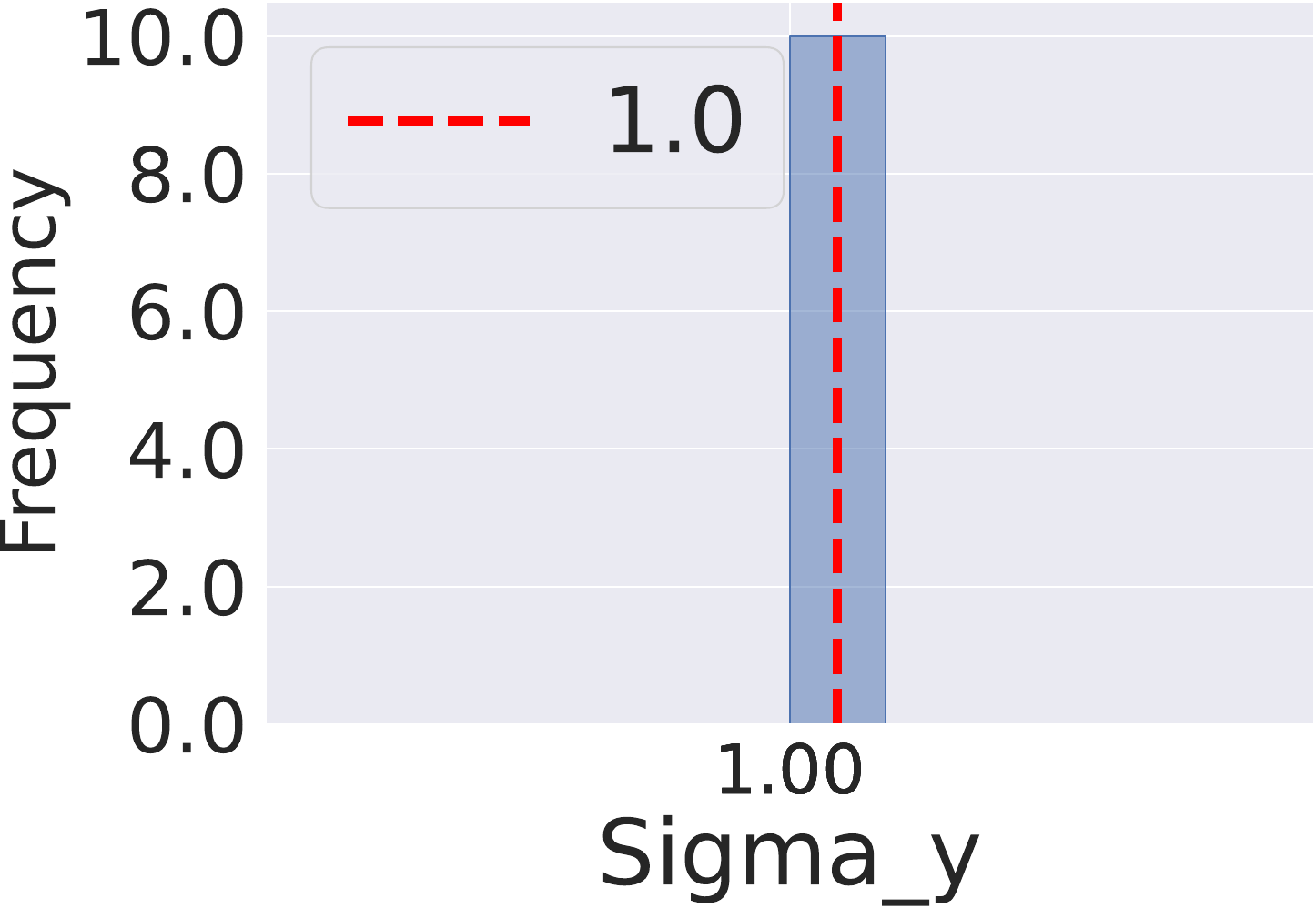}
        \end{minipage}
    }
    \subfigure{
        \begin{minipage}{0.23\linewidth}
            \includegraphics[width=\linewidth]{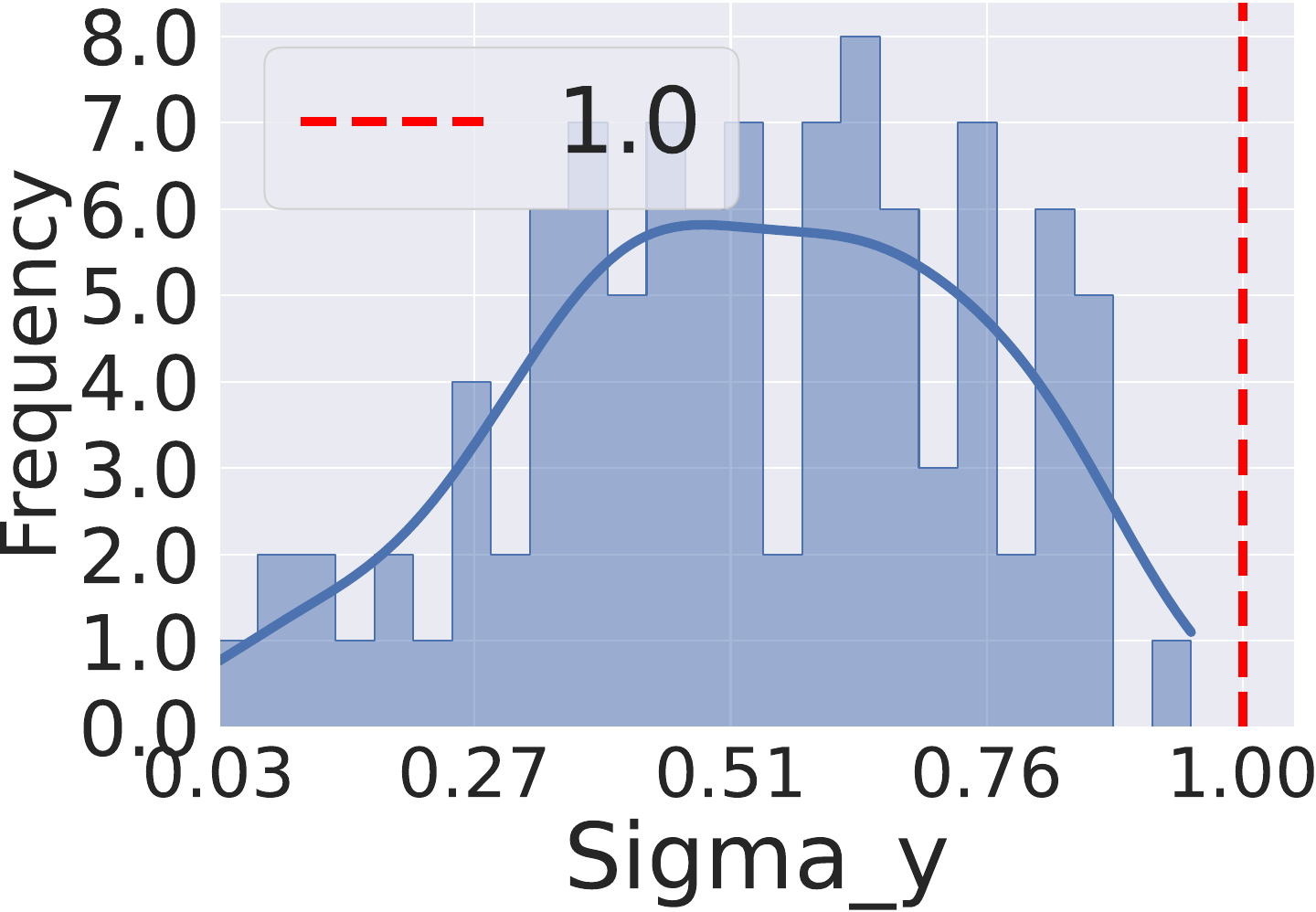}
        \end{minipage}
    }
    \subfigure{
        \begin{minipage}{0.23\linewidth}
            \includegraphics[width=\linewidth]{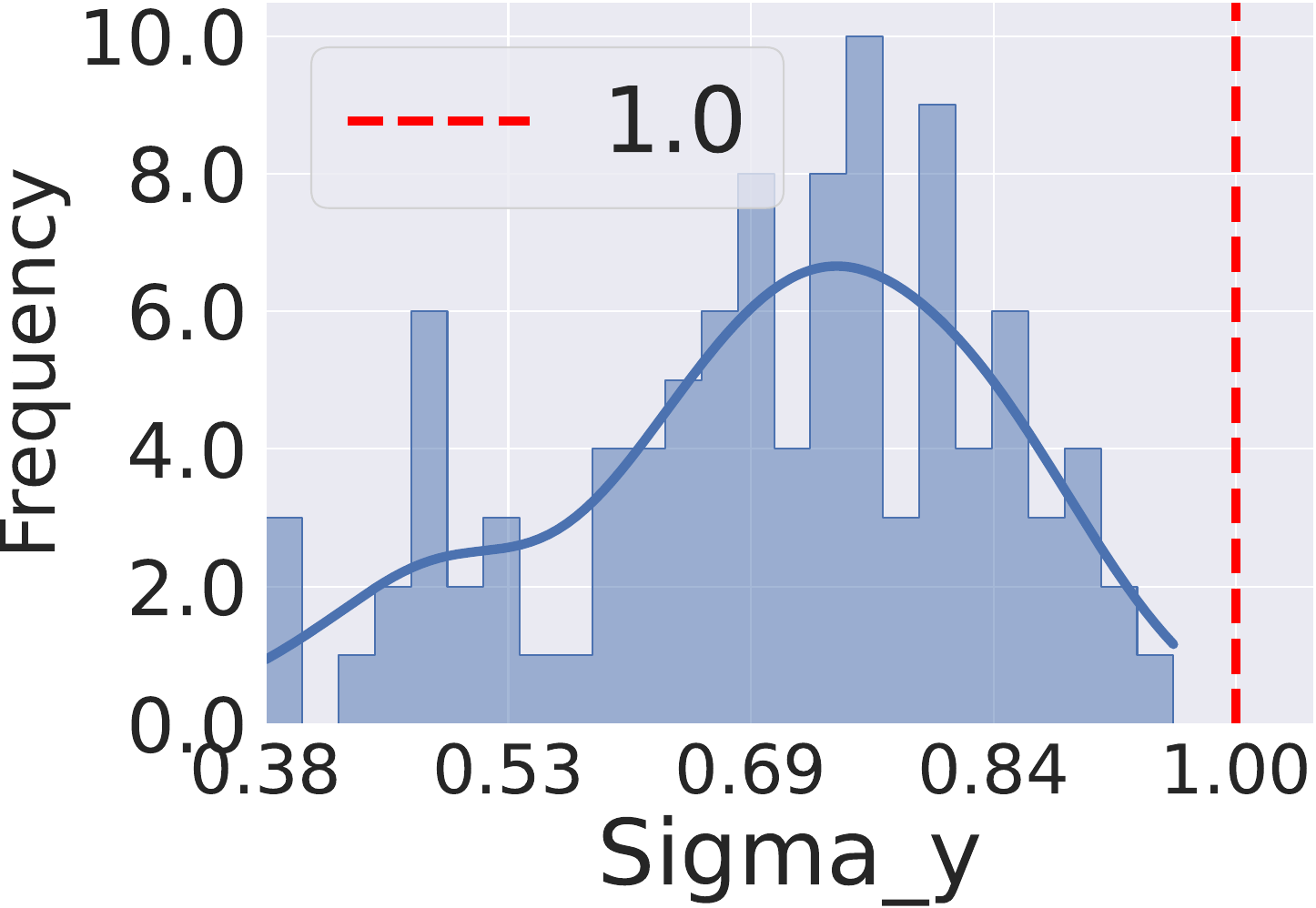}
        \end{minipage}
    }
    \subfigure{
        \begin{minipage}{0.23\linewidth}
            \includegraphics[width=\linewidth]{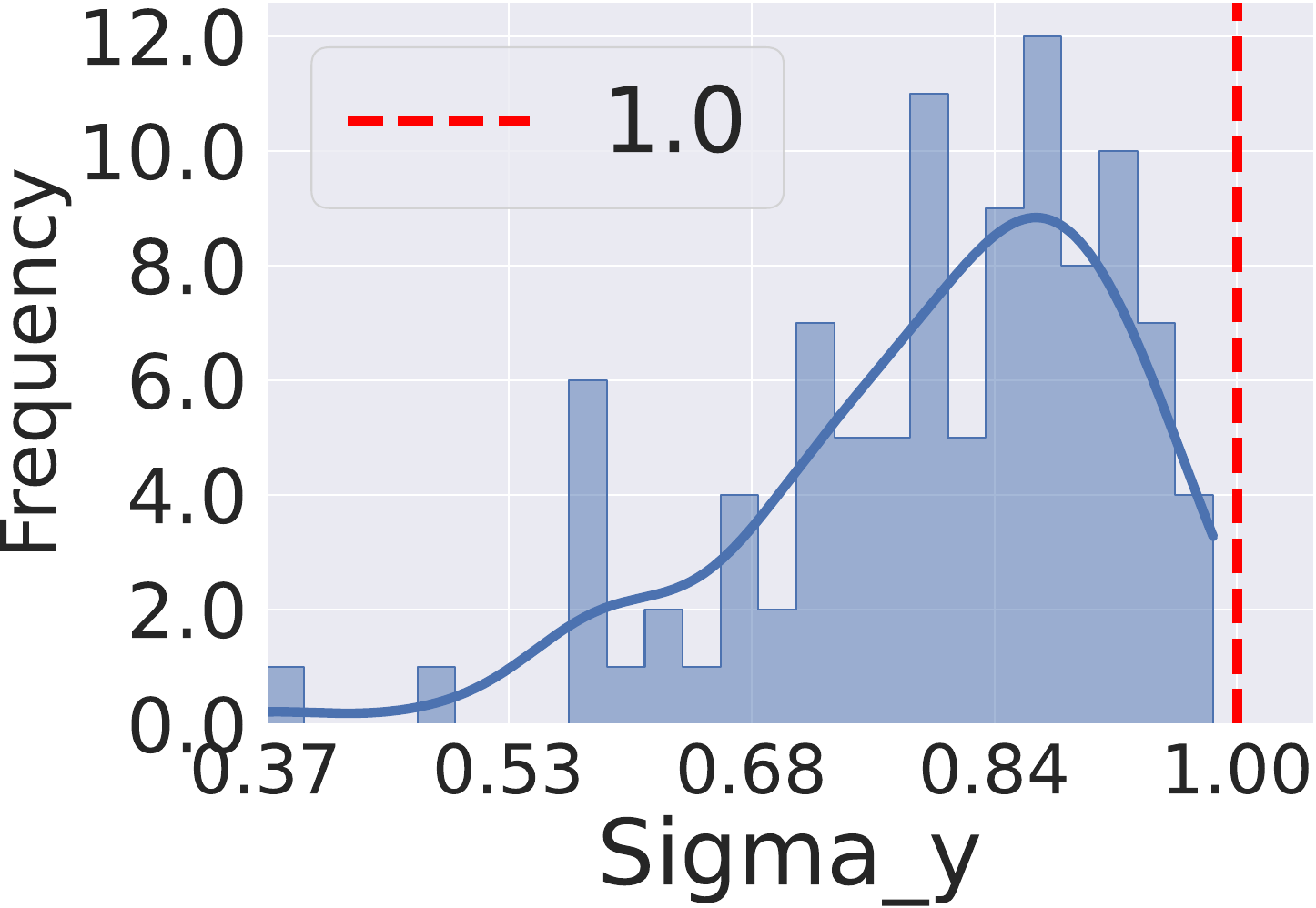}
        \end{minipage}
    }
    \caption{
    Verification of condition numbers $\{\sigma_y\}_{y=1}^C$ of Equation \ref{eq:sigma_y_defination} when epoch $=50$ and $\alpha = 0.1$ with $\rho=0.1$ \MAJ.
    Vertical dashed lines represent the value $1$, and we observe that all the condition numbers are smaller than $1$.
    This verifies the validity of the condition for Lemma \ref{lemma:RC3P_improved_efficiency}, and thus confirms that \texttt{\newCP}produces smaller prediction sets than \texttt{CCP} by the optimized trade-off between calibration on non-conformity scores and calibrated label ranks.
    }
    \label{fig:condition_number_sigma_y_maj_0.1_50}
\end{figure}

\begin{figure}[!ht]
    \centering
    \begin{minipage}{.24\textwidth}
        \centering
        (a) CIFAR-10
    \end{minipage}%
    \begin{minipage}{.24\textwidth}
        \centering
        (b) CIFAR-100
    \end{minipage}%
    \begin{minipage}{.24\textwidth}
        \centering
        (c) mini-ImageNet
    \end{minipage}%
    \begin{minipage}{.24\textwidth}
        \centering
        (d) Food-101
    \end{minipage}
    \subfigure{
        \begin{minipage}{0.23\linewidth}
            \includegraphics[width=\linewidth]{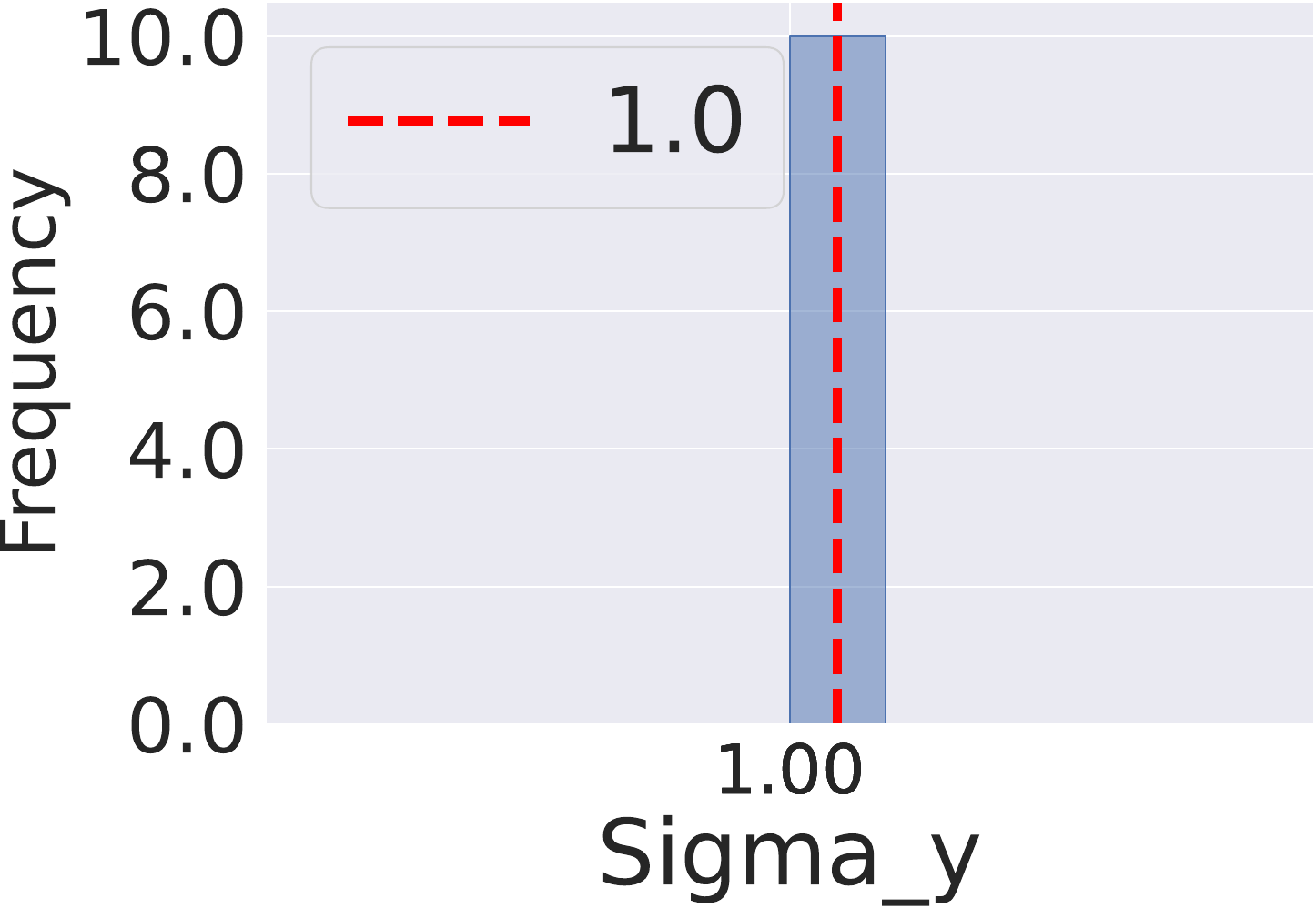}
        \end{minipage}
    }
    \subfigure{
        \begin{minipage}{0.23\linewidth}
            \includegraphics[width=\linewidth]{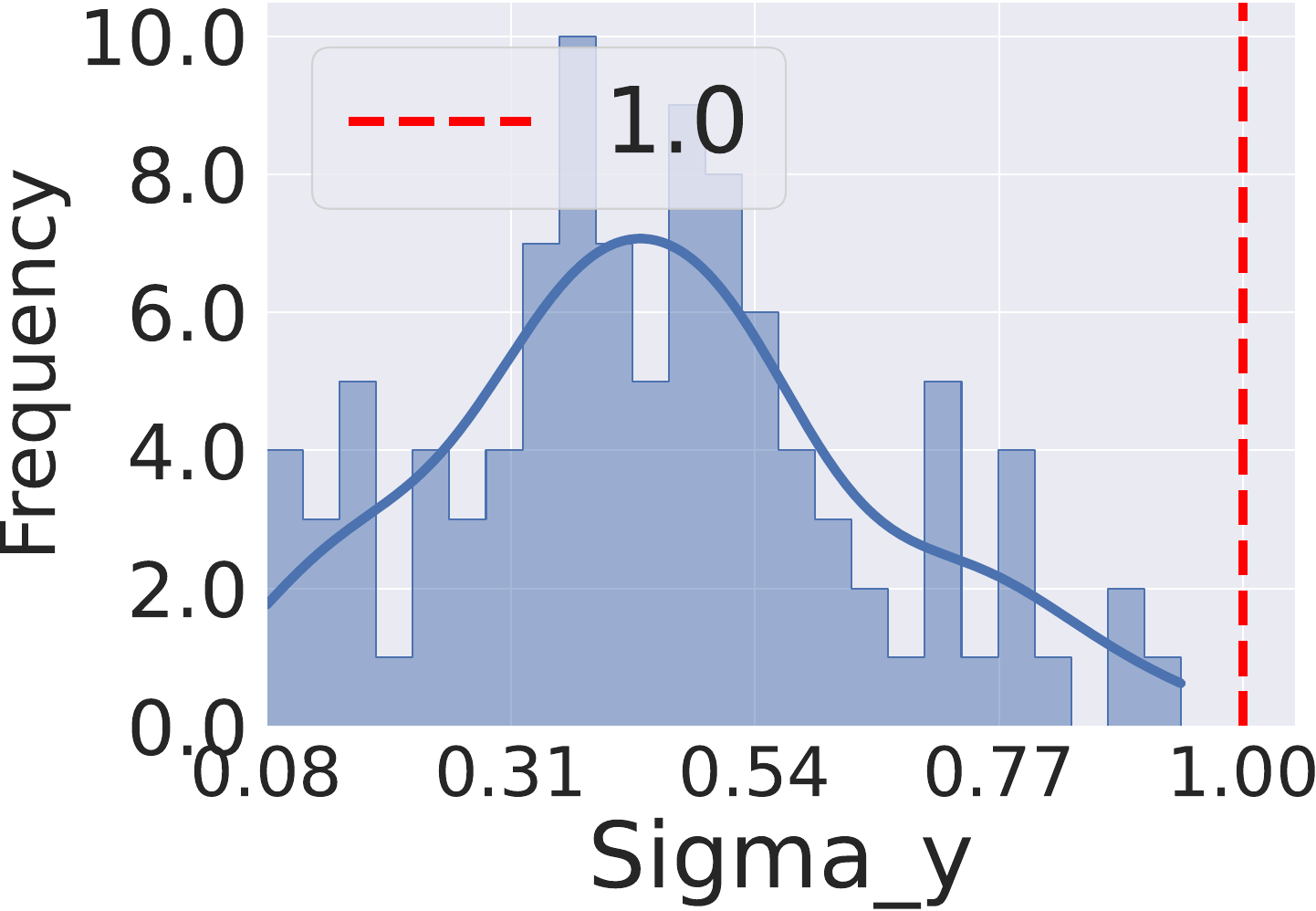}
        \end{minipage}
    }
    \subfigure{
        \begin{minipage}{0.23\linewidth}
            \includegraphics[width=\linewidth]{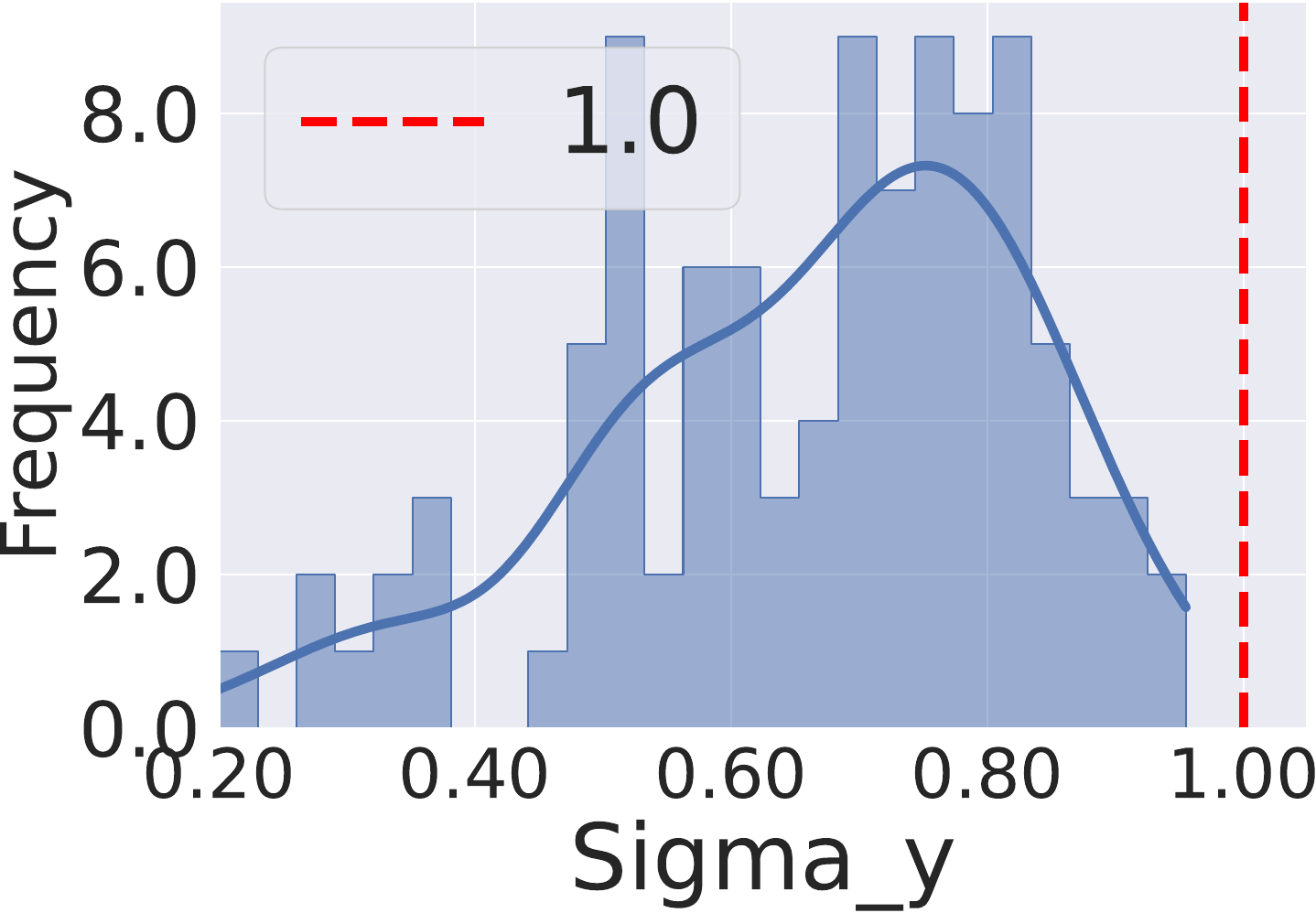}
        \end{minipage}
    }
    \subfigure{
        \begin{minipage}{0.23\linewidth}
            \includegraphics[width=\linewidth]{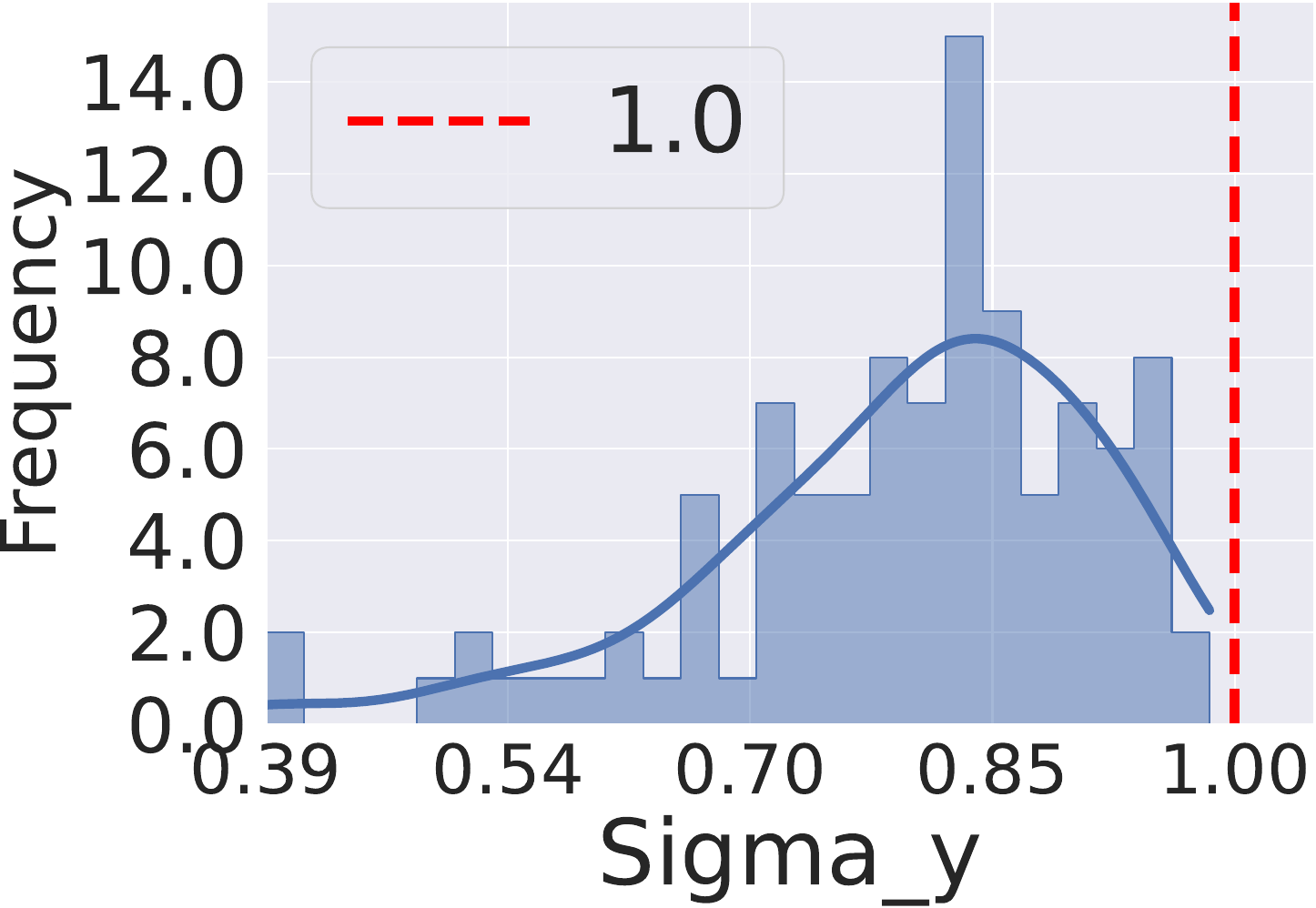}
        \end{minipage}
    }
    \caption{
    Verification of condition numbers $\{\sigma_y\}_{y=1}^C$ of Equation \ref{eq:sigma_y_defination} when epoch $=50$ and $\alpha = 0.1$ with $\rho=0.5$ \MAJ.
    Vertical dashed lines represent the value $1$, and we observe that all the condition numbers are smaller than $1$.
    This verifies the validity of the condition for Lemma \ref{lemma:RC3P_improved_efficiency}, and thus confirms that \texttt{\newCP}produces smaller prediction sets than \texttt{CCP} by the optimized trade-off between calibration on non-conformity scores and calibrated label ranks.
    }
    \label{fig:condition_number_sigma_y_maj_0.5_50}
\end{figure}

\clearpage
\newpage

\subsection{ Complete Experiment Results on Balanced Classification Datasets}
\label{subsection:appendix:complete experiment_results_balanced}

In this subsection, we report complete experimental results over four balanced datasets and $\alpha = 0.1$.
Specifically, 
Figure \ref{fig:overall_comparison_four_balanced_datasets}
shows the class-conditional coverage and the corresponding prediction set sizes. 
From the first row of Fig \ref{fig:overall_comparison_four_balanced_datasets}, the class-wise coverage bars of \texttt{CCP} and \texttt{\newCP}~distribute on the right-hand side of the target probability $1-\alpha$ (red dashed line).
Second, \texttt{\newCP}~outperforms \texttt{CCP} and \texttt{Cluster-CP} with $24.47\%$ (on four datasets) or $32.63\%$ (excluding CIFAR-10) on imbalanced datasets and $32.63\%$ on balanced datasets decrease in terms of average prediction set size the same class-wise coverage.
The second row of Figure \ref{fig:overall_comparison_four_balanced_datasets} shows
(i) \texttt{\newCP}~has more concentrated class-wise coverage distribution than \texttt{CCP} and \texttt{Cluster-CP};
(ii) the distribution of prediction set sizes produced by \texttt{\newCP}~is globally smaller than that produced by \texttt{CCP} and \texttt{Cluster-CP}, which is justified by a better trade-off number of $\{\sigma_y\}_{y=1}^K$ as shown in Figure \ref{fig:condition_number_sigma_y_exp_0.1}.

Figure \ref{fig:condition_number_rank_balanced}
illustrates the normalized frequency distribution of label ranks included in the prediction sets on balanced datasets.
It is evident that the distribution of label ranks in the prediction set generated by \texttt{\newCP}~tends to be lower compared to those produced by \texttt{CCP} and \texttt{Cluster-CP}. 
Furthermore, the probability density function tail for label ranks in the \texttt{\newCP}~prediction set is notably shorter than that of other methods. This indicates that \texttt{\newCP}~more effectively incorporates lower-ranked labels into prediction sets, as a result of its augmented rank calibration scheme.

Figure \ref{fig:condition_number_sigma_balanced}
verifies the condition numbers $\sigma_y$ on balanced datasets.
This result verifies the validity of Lemma \ref{lemma:RC3P_improved_efficiency} and Equation \ref{eq:sigma_y_defination} and confirm that the optimized trade-off between the coverage with inflated quantile and the constraint with calibrated rank leads to smaller prediction sets.

\begin{figure*}[!ht]
    \centering
    \begin{minipage}{.24\textwidth}
        \centering
        (a) CIFAR-100
    \end{minipage}%
    \begin{minipage}{.24\textwidth}
        \centering
        (b) Places365
    \end{minipage}%
    \begin{minipage}{.24\textwidth}
        \centering
        (c) iNaturalist
    \end{minipage}%
    \begin{minipage}{.24\textwidth}
        \centering
        (d) ImageNet
    \end{minipage}

    \subfigure{
        \begin{minipage}{0.23\linewidth}
            \includegraphics[width=\linewidth]{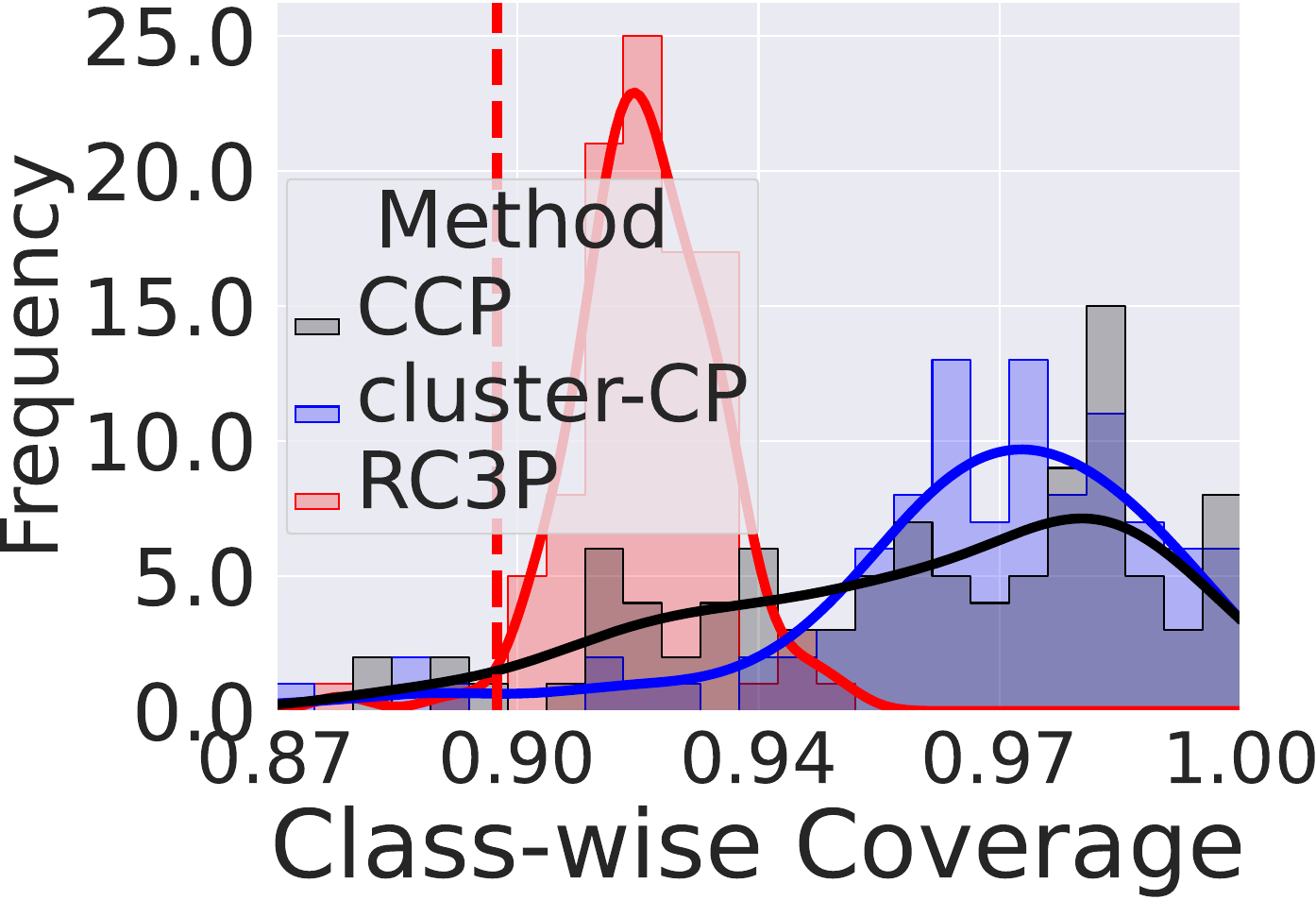}
        \end{minipage}
    }
    \subfigure{
        \begin{minipage}{0.23\linewidth}
            \includegraphics[width=\linewidth]{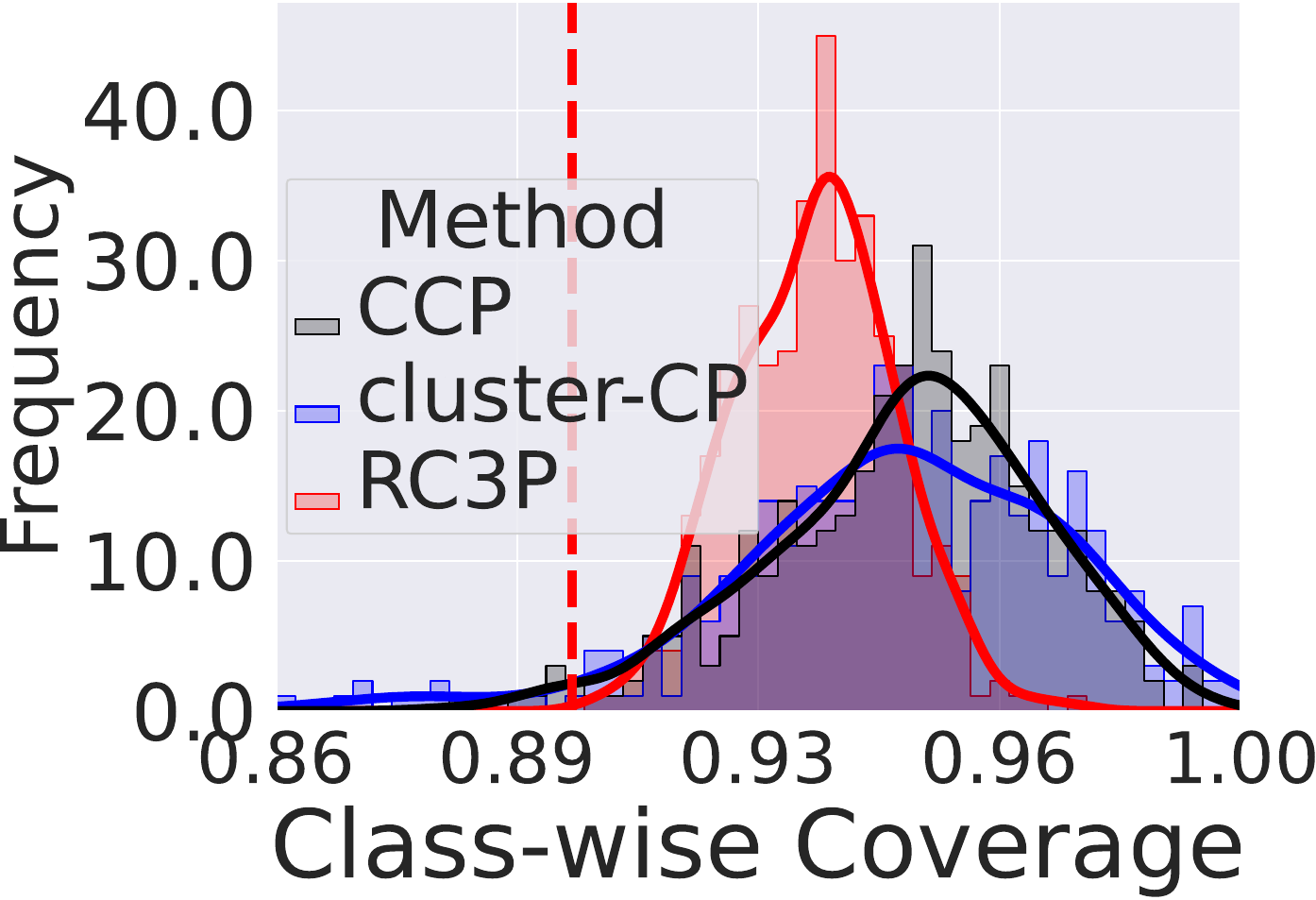}
        \end{minipage}
    }
    \subfigure{
        \begin{minipage}{0.23\linewidth}
            \includegraphics[width=\linewidth]{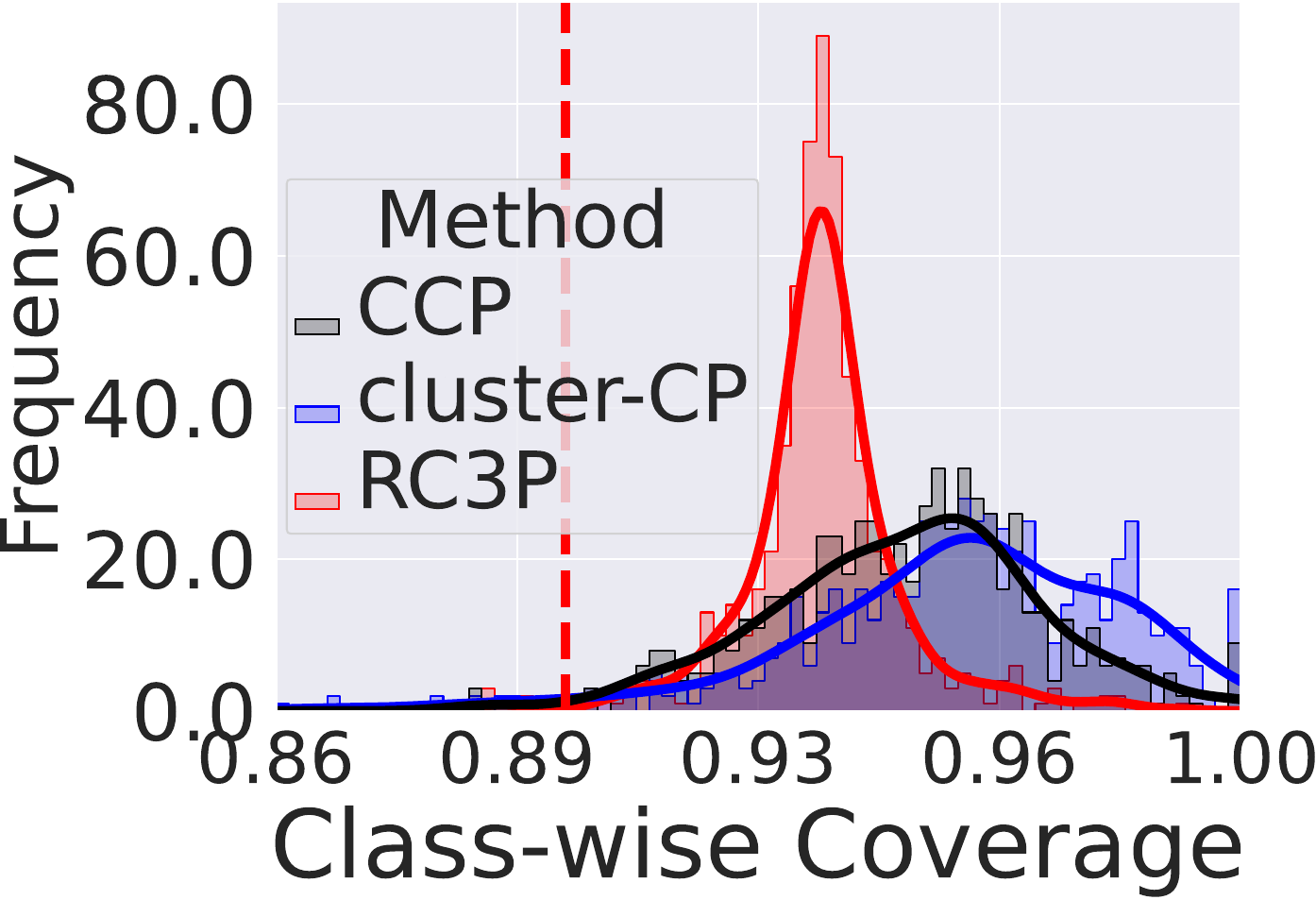}
        \end{minipage}
    }
    \subfigure{
        \begin{minipage}{0.23\linewidth}
            \includegraphics[width=\linewidth]{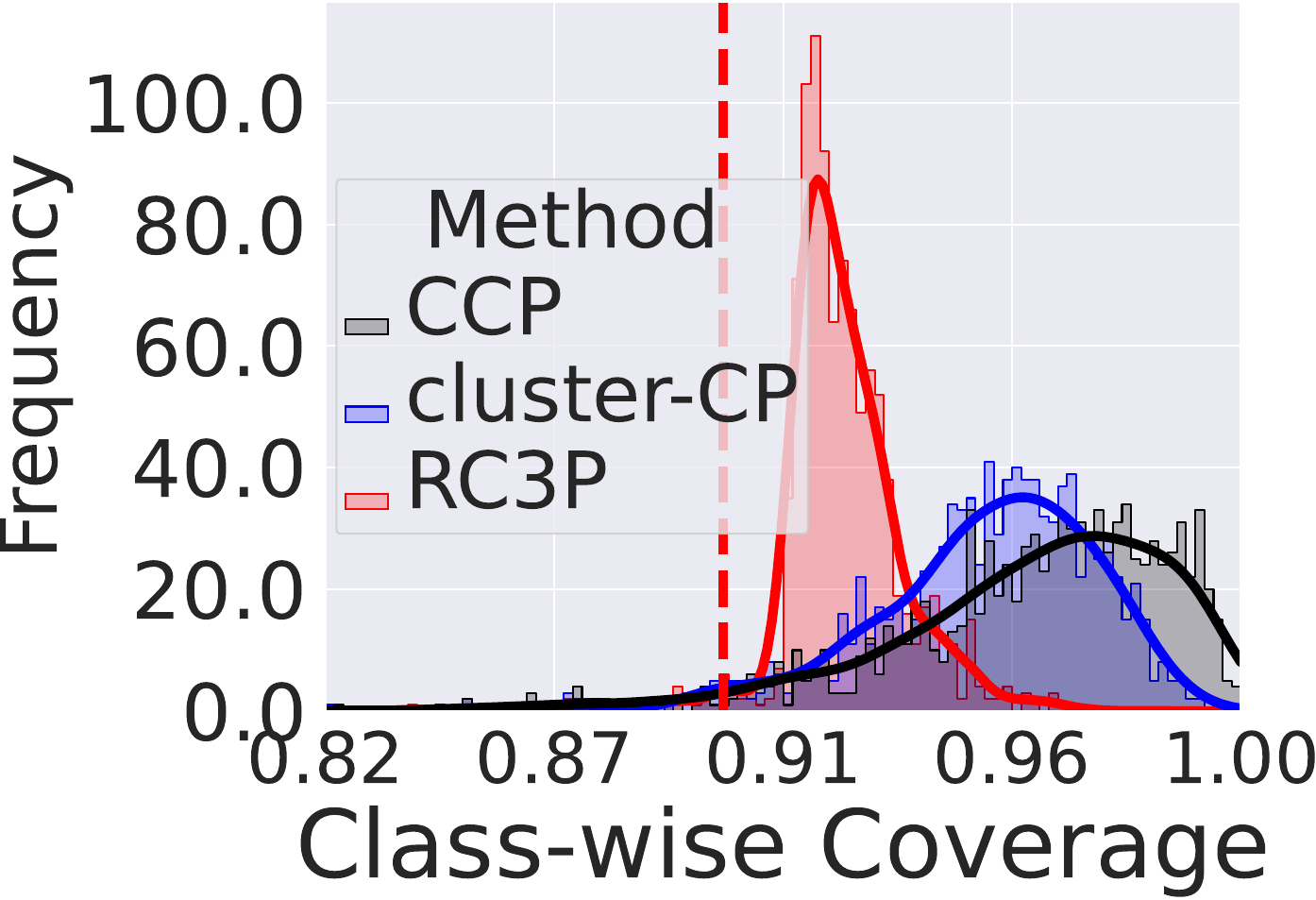}
        \end{minipage}
    }
    \subfigure{
        \begin{minipage}{0.23\linewidth}
            \includegraphics[width=\linewidth]{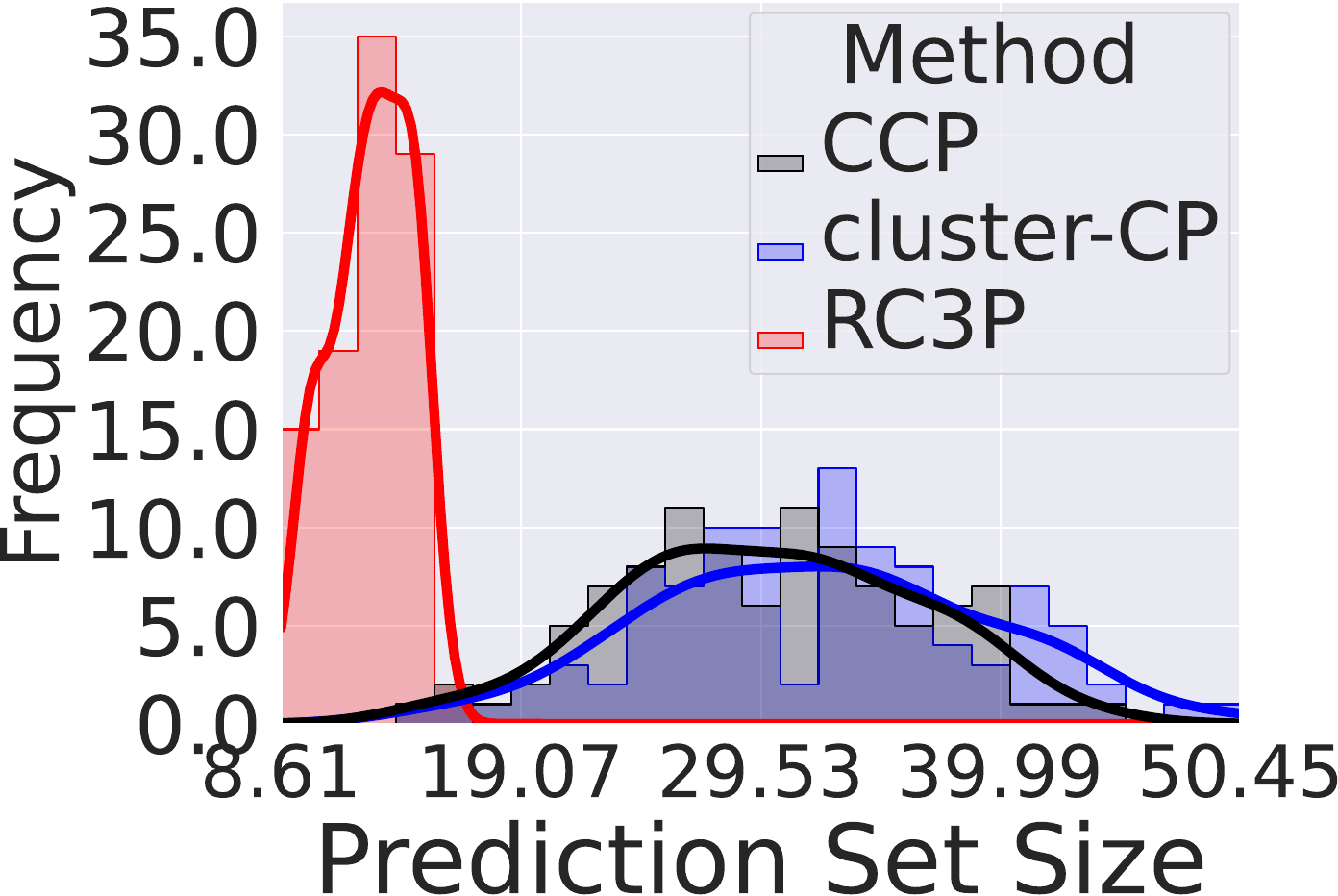}
        \end{minipage}
    }
    \subfigure{
        \begin{minipage}{0.23\linewidth}
            \includegraphics[width=\linewidth]{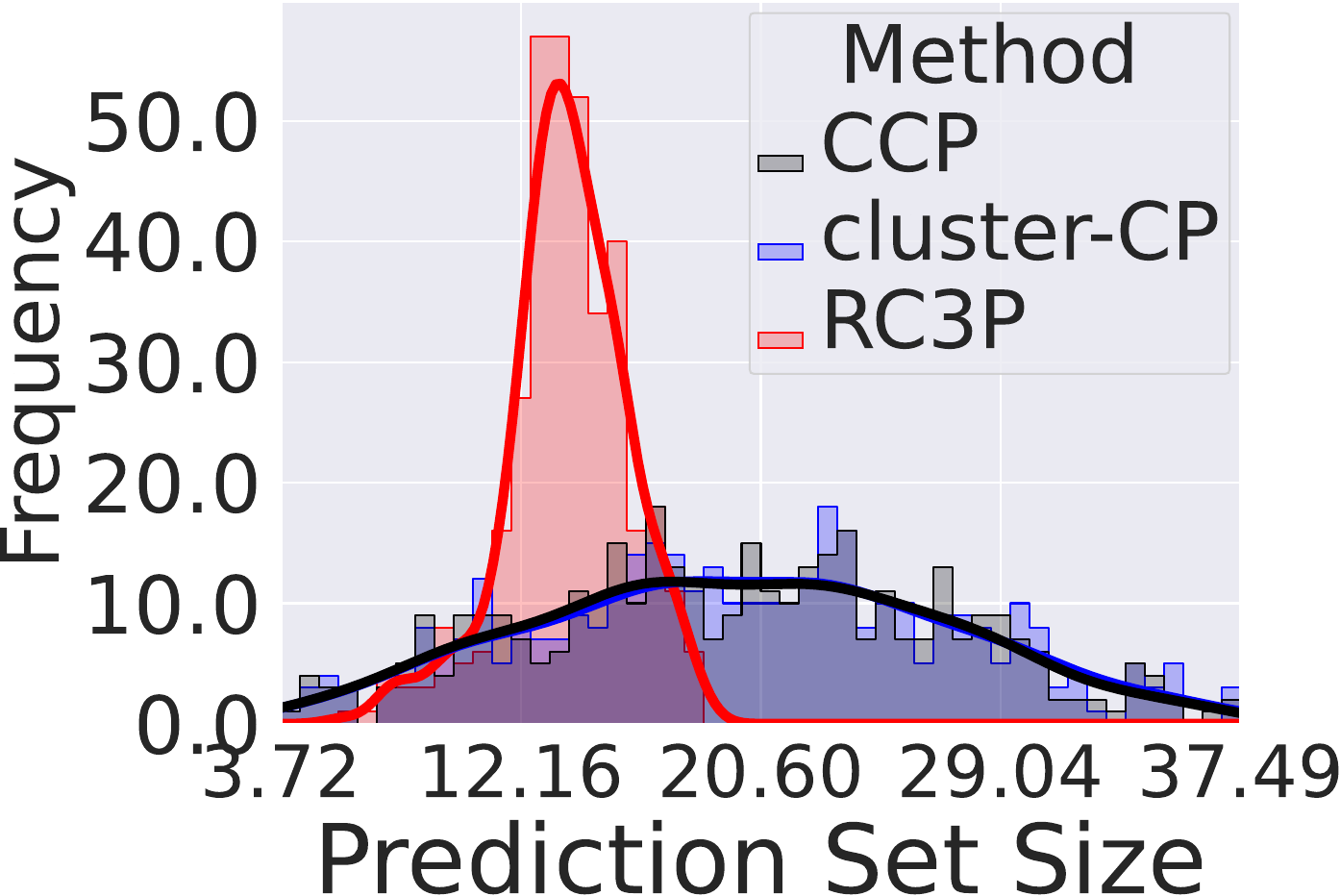}
        \end{minipage}
    }
    \subfigure{
        \begin{minipage}{0.23\linewidth}
            \includegraphics[width=\linewidth]{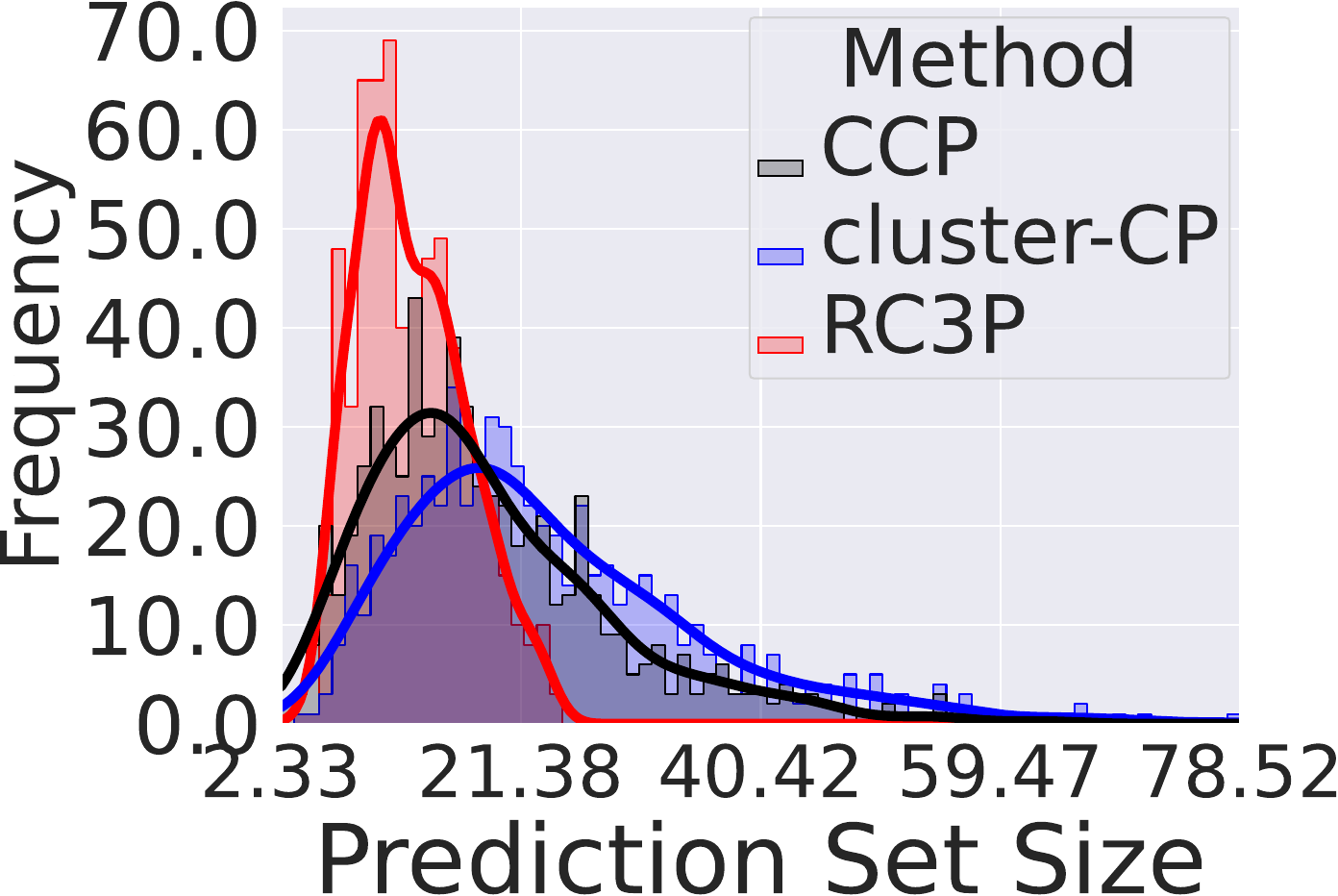}
        \end{minipage}
    }
    \subfigure{
        \begin{minipage}{0.23\linewidth}
            \includegraphics[width=\linewidth]{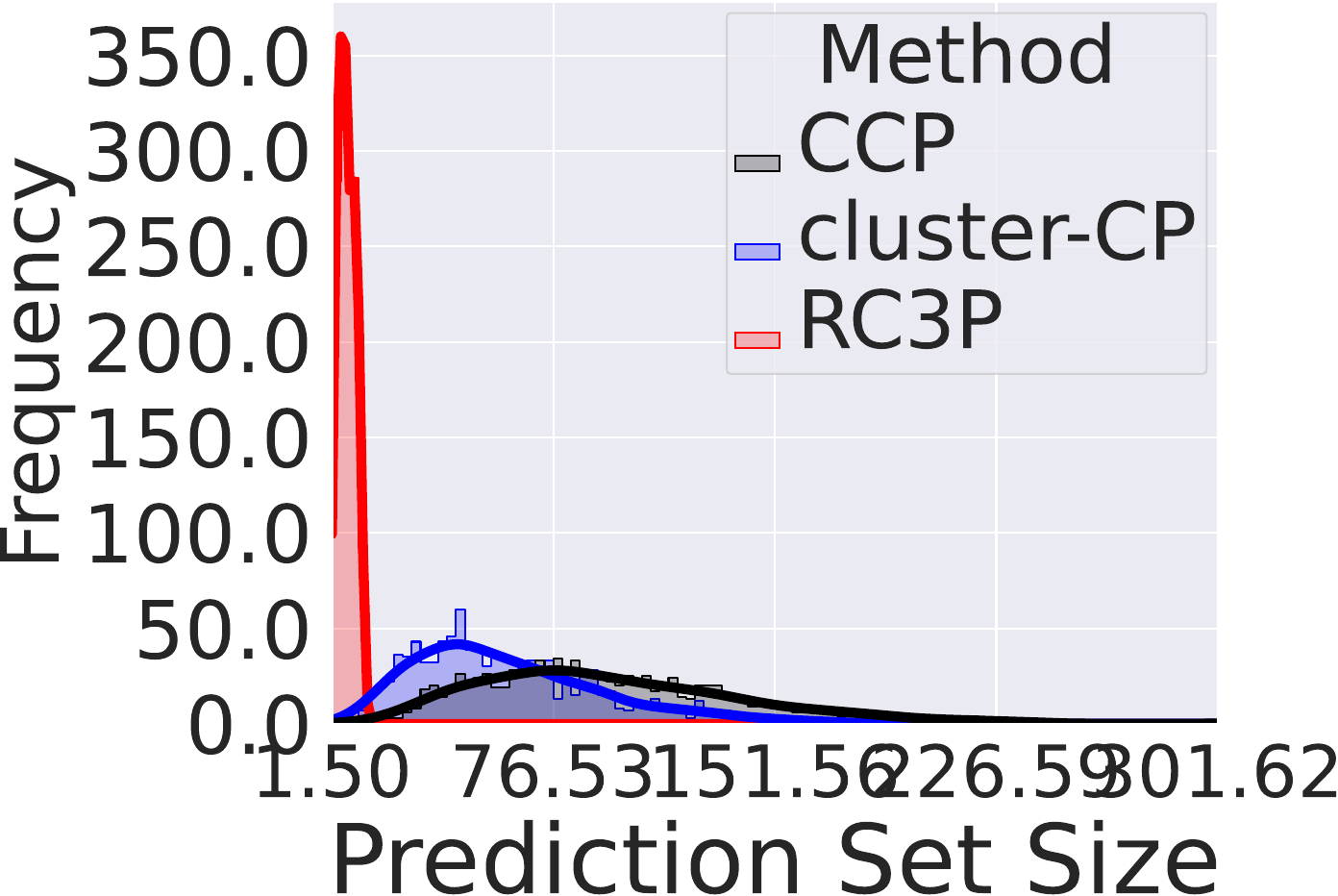}
        \end{minipage}
    }
    \caption{
    Class-conditional coverage (Top row) and prediction set size (Bottom row) achieved by \texttt{CCP}, \texttt{Cluster-CP}, and \texttt{\newCP} methods when $\alpha = 0.1$ on four balanced datasets.
    It is clear that \texttt{\newCP} has more densely distributed class-conditional coverage above $0.9$ (the target $1-\alpha$ class-conditional coverage) than \texttt{CCP} and \texttt{Cluster-CP} with significantly smaller prediction sets on all datasets.
    }
    \label{fig:overall_comparison_four_balanced_datasets}
\end{figure*}

\begin{figure*}[!ht]
    \centering
    \begin{minipage}{.24\textwidth}
        \centering
        (a) CIFAR-100
    \end{minipage}%
    \begin{minipage}{.24\textwidth}
        \centering
        (b) Places365
    \end{minipage}%
    \begin{minipage}{.24\textwidth}
        \centering
        (c) iNaturalist
    \end{minipage}%
    \begin{minipage}{.24\textwidth}
        \centering
        (d) ImageNet
    \end{minipage}
    \subfigure{
        \begin{minipage}{0.23\linewidth}
            \includegraphics[width=\linewidth]{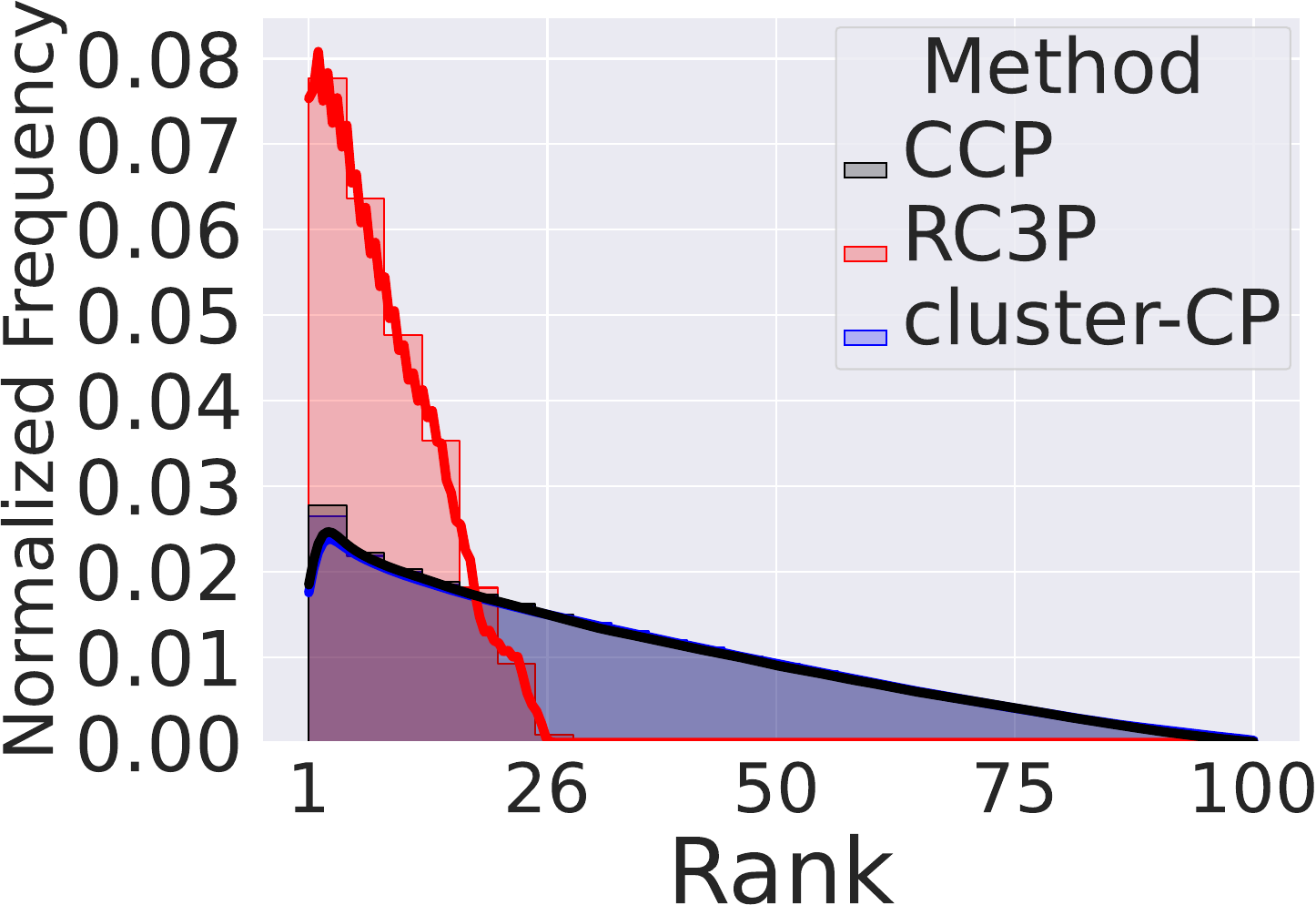}
        \end{minipage}
    }
    \subfigure{
        \begin{minipage}{0.23\linewidth}
            \includegraphics[width=\linewidth]{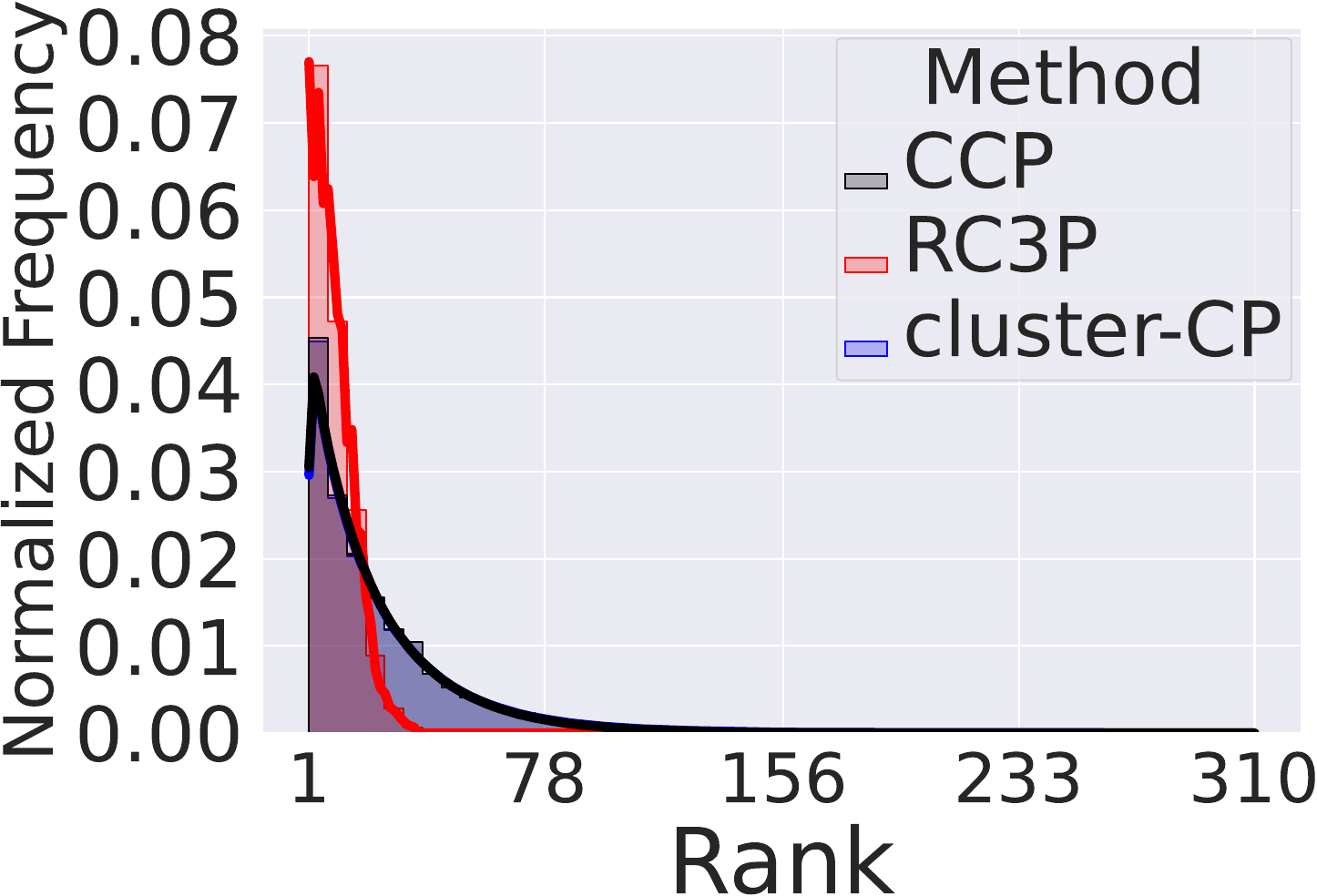}
        \end{minipage}
    }
    \subfigure{
        \begin{minipage}{0.23\linewidth}
            \includegraphics[width=\linewidth]{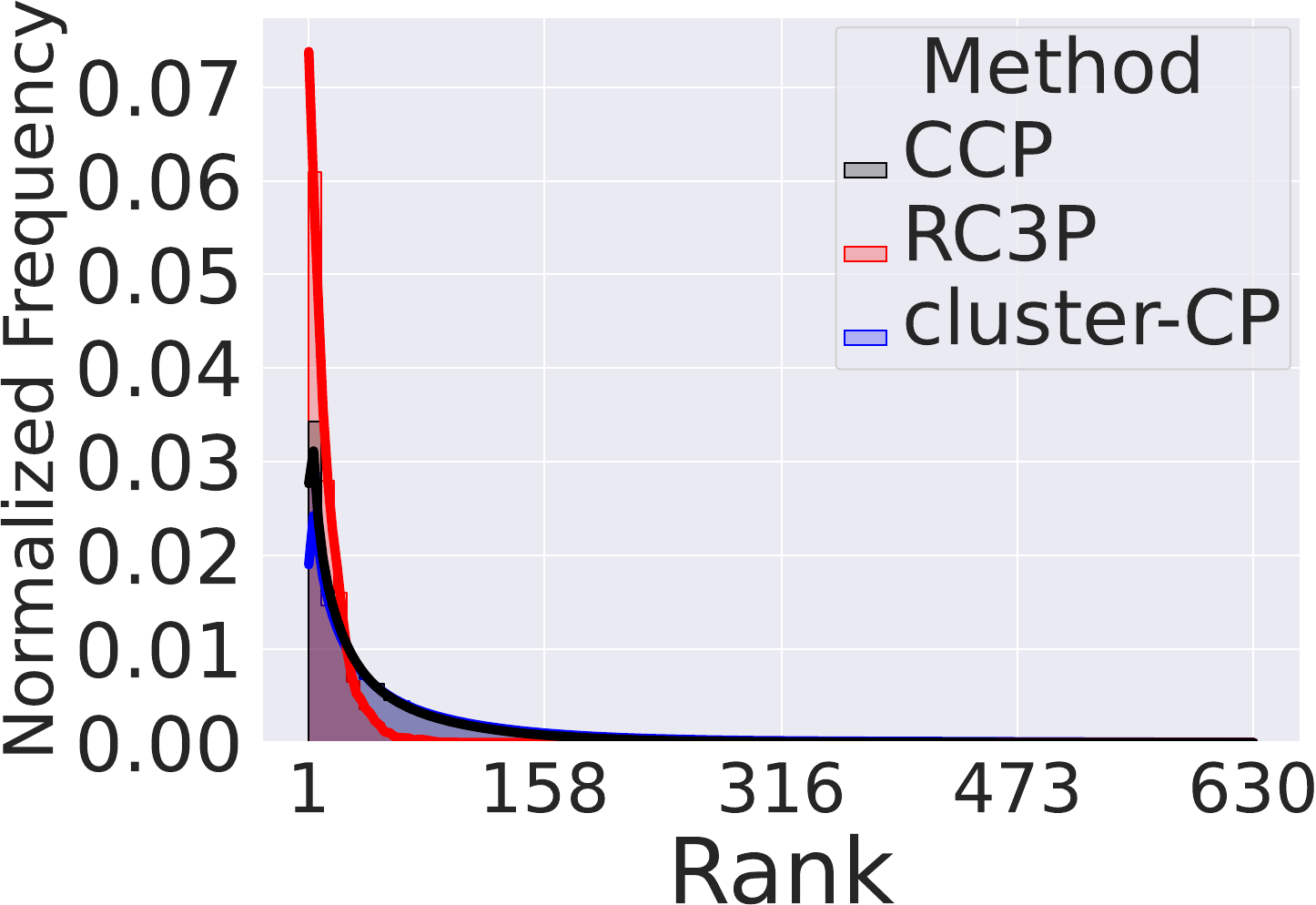}
        \end{minipage}
    }
    \subfigure{
        \begin{minipage}{0.23\linewidth}
            \includegraphics[width=\linewidth]{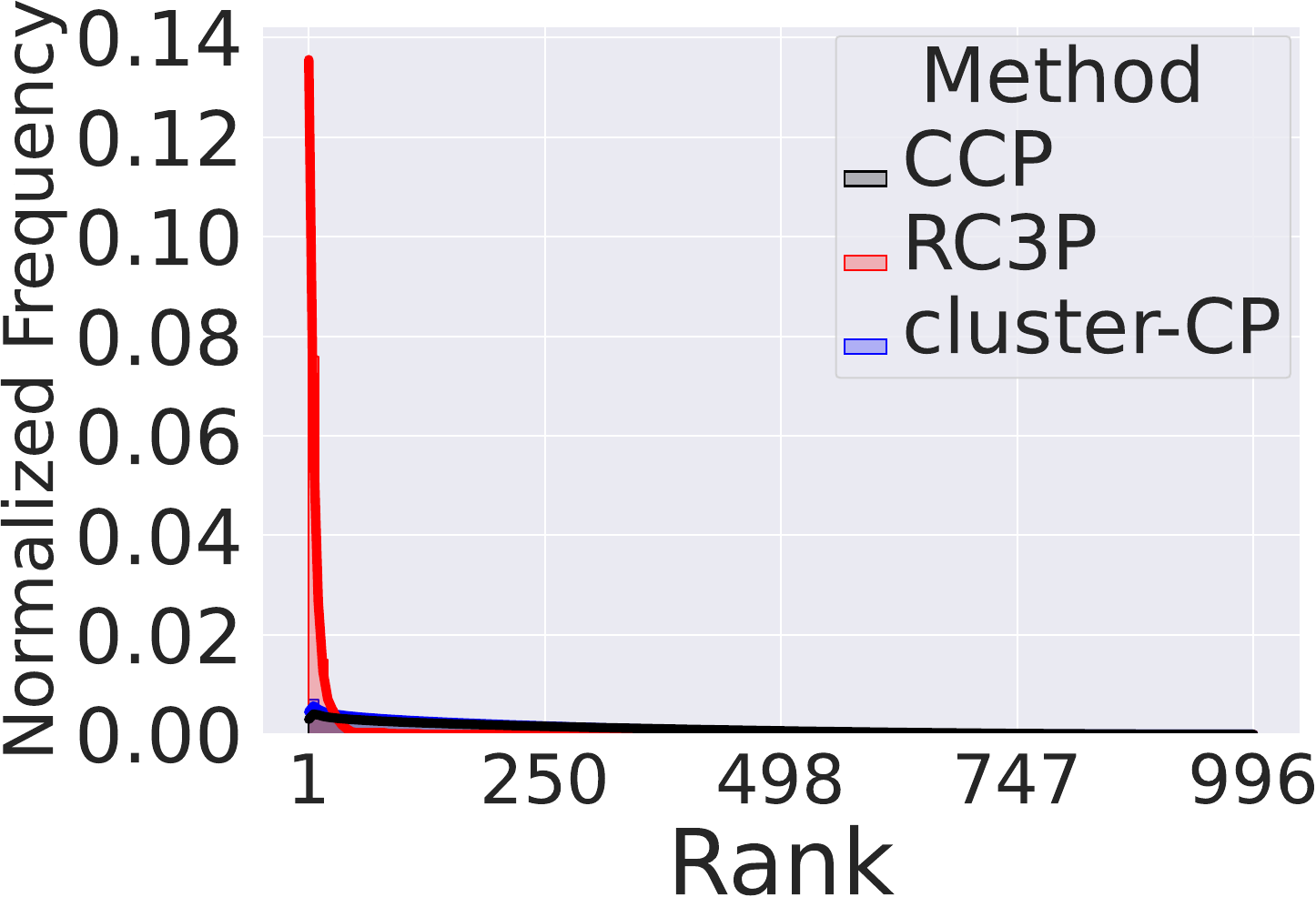}
        \end{minipage}
    }
    \caption{
    Visualization for the normalized frequency distribution of label ranks included in the prediction set of \texttt{CCP}, \texttt{Cluster-CP}, and \texttt{\newCP} with $\rho=0.1$ on balanced datasets.
    It is clear that the distribution of normalized frequency generated by \texttt{\newCP} tends to be lower compared to those produced by \texttt{CCP} and \texttt{Cluster-CP}.  
    Furthermore, the probability density function tail for label ranks in the \texttt{\newCP} prediction set is notably shorter than that of other methods.
    }
    \label{fig:condition_number_rank_balanced}
\end{figure*}

\begin{figure*}[!ht]
    \centering
    \begin{minipage}{.24\textwidth}
        \centering
        (a) CIFAR-100
    \end{minipage}%
    \begin{minipage}{.24\textwidth}
        \centering
        (b) Places365
    \end{minipage}%
    \begin{minipage}{.24\textwidth}
        \centering
        (c) iNaturalist
    \end{minipage}%
    \begin{minipage}{.24\textwidth}
        \centering
        (d) ImageNet
    \end{minipage}
    \subfigure{
        \begin{minipage}{0.23\linewidth}
            \includegraphics[width=\linewidth]{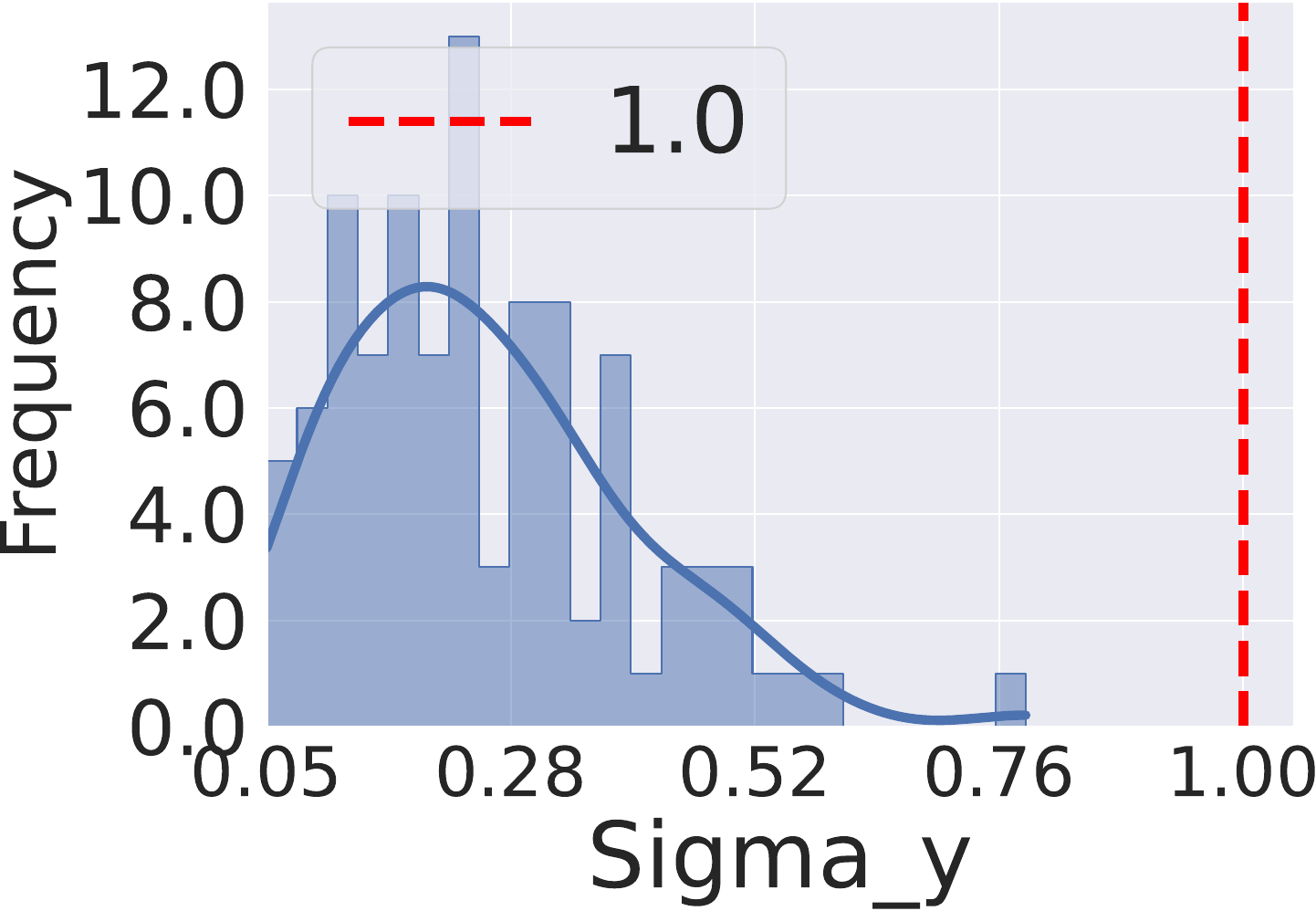}
        \end{minipage}
    }
    \subfigure{
        \begin{minipage}{0.23\linewidth}
            \includegraphics[width=\linewidth]{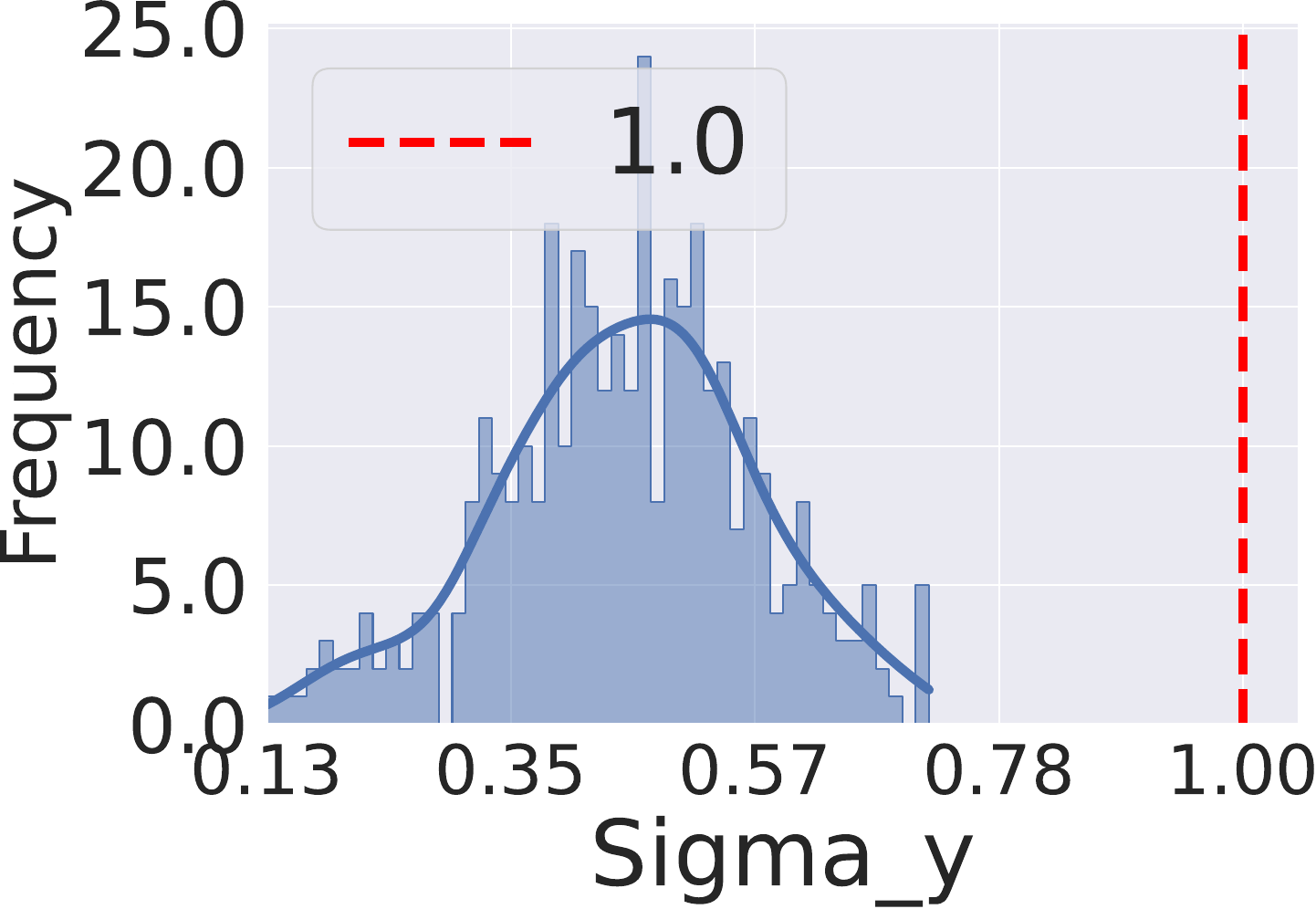}
        \end{minipage}
    }
    \subfigure{
        \begin{minipage}{0.23\linewidth}
            \includegraphics[width=\linewidth]{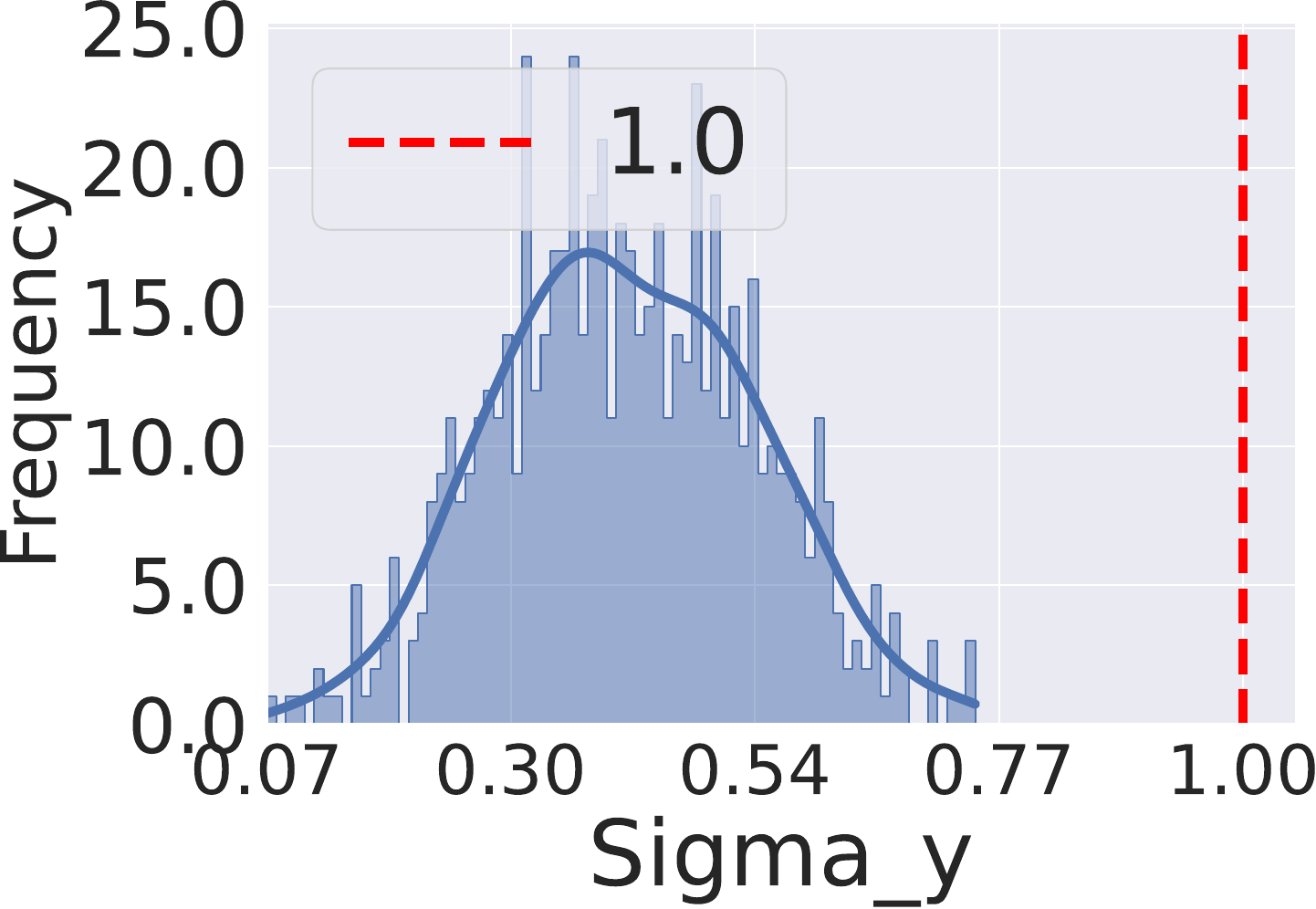}
        \end{minipage}
    }
    \subfigure{
        \begin{minipage}{0.23\linewidth}
            \includegraphics[width=\linewidth]{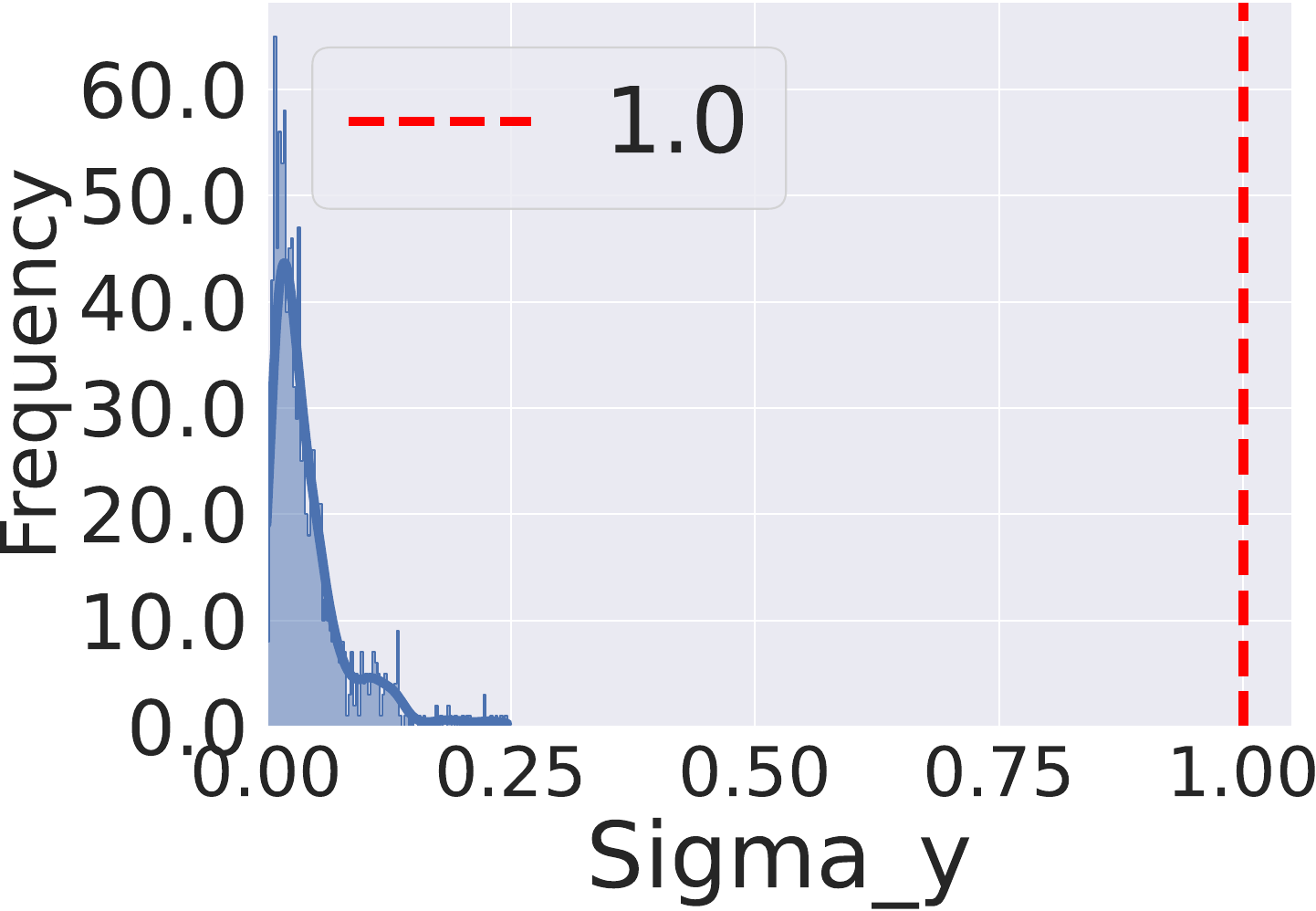}
        \end{minipage}
    }
    \caption{
    Verification of condition numbers $\{\sigma_y\}_{y=1}^K$ in Equation \ref{eq:sigma_y_defination} on balanced datasets.
    Vertical dashed lines represent the value $1$, and we observe that all the condition numbers are smaller than $1$.
    This verifies the validity of the condition for Lemma \ref{lemma:RC3P_improved_efficiency}, and thus confirms that \texttt{\newCP}~produces smaller prediction sets than \texttt{CCP} using calibration on both non-conformity scores and label ranks.
    }
    \label{fig:condition_number_sigma_balanced}
\end{figure*}

\clearpage
\newpage

\clearpage

\end{document}